\def\eqref#1{equation~\ref{#1}}
\def\1{\bm{1}}
\def\rf{{\textnormal{f}}}
\DeclareMathAlphabet{\mathsfit}{\encodingdefault}{\sfdefault}{m}{sl}
\SetMathAlphabet{\mathsfit}{bold}{\encodingdefault}{\sfdefault}{bx}{n}
\newcommand{\E}{\mathbb{E}}
\newcommand{\R}{\mathbb{R}}
\DeclareMathOperator{\sign}{sign}
\definecolor{darkgreen}{rgb}{0.0,0.5,0.0}
\renewcommand*{\backrefalt}[4]{%
    \ifcase #1 \footnotesize{(Not cited.)}%
    \or        \footnotesize{(Cited on page~#2)}%
    \else      \footnotesize{(Cited on pages~#2)}%
    \fi}
\theoremstyle{plain}
\newtheorem{theorem}{Theorem}[section]
\newtheorem{proposition}[theorem]{Proposition}
\newtheorem{lemma}[theorem]{Lemma}
\newtheorem{corollary}[theorem]{Corollary}
\theoremstyle{definition}
\newtheorem{definition}[theorem]{Definition}
\newtheorem{assumption}[theorem]{Assumption}
\theoremstyle{remark}
\newtheorem{remark}[theorem]{Remark}
\newcommand{\tr}{\text{Tr}}
\DeclareMathOperator{\diag}{diag}
\def \lf{\left\lfloor}   
\def \rf{\right\rfloor}
\newcommand{\eqdef}{\coloneqq}
\newtcolorbox{mybox}[2][]{
  colframe = white, % Set the frame color to white (transparent)
  colback  = gray!7,
  #1
}
\def\toptitlebar{\hrule height1pt \vskip .25in} 
\def\bottomtitlebar{\vskip .22in \hrule height1pt \vskip .3in}
\title{
\toptitlebar
{{\center\baselineskip 18pt
                      {\Large\bf Unbiased and Sign Compression in Distributed Learning: Comparing Noise Resilience via SDEs}}
} 
\bottomtitlebar}
\date{}
\author[1]{\hspace{2.0cm} Enea Monzio Compagnoni}
\author[1]{Rustem Islamov}
\author[2]{\newline  Frank Norbert Proske}
\author[1]{Aurelien Lucchi}
\affil[1]{University of Basel, Switzerland}
\affil[2]{University of Oslo, Norway}
\begin{document}

\maketitle

\begin{abstract}

\noindent
Distributed methods are essential for handling machine learning pipelines comprising large-scale models and datasets. However, their benefits often come at the cost of increased communication overhead between the central server and agents, which can become the main bottleneck, making training costly or even unfeasible in such systems. Compression methods such as quantization and sparsification can alleviate this issue. Still, their robustness to large and heavy-tailed gradient noise, a phenomenon sometimes observed in language modeling, remains poorly understood. This work addresses this gap by analyzing Distributed Compressed SGD (DCSGD) and Distributed SignSGD (DSignSGD) using stochastic differential equations (SDEs). Our results show that DCSGD with unbiased compression is more vulnerable to noise in stochastic gradients, while DSignSGD remains robust, even under large and heavy-tailed noise. Additionally, we propose new scaling rules for hyperparameter tuning to mitigate performance degradation due to compression. These findings are empirically validated across multiple deep learning architectures and datasets, providing practical recommendations for distributed optimization.

\end{abstract}

%%%%%%%%%%%%%%%%%%%%%%%%%%%%%%%%%%%%%%%%%%%%%%%%%%%%%%%%%%%%%%%%%%%%%%%%%%%%
\section{Introduction} \label{sec:intro}
%%%%%%%%%%%%%%%%%%%%%%%%%%%%%%%%%%%%%%%%%%%%%%%%%%%%%%%%%%%%%%%%%%%%%%%%%%%%

Recent advancements in deep learning have been fueled by the development of larger, increasingly complex models on constantly growing datasets. However, this progress comes at the expense of extended training times and resources. Therefore, distributed training has gained popularity as a way to reduce training time \citep{dean2012large, abadi2016tensorflow}. In this framework, the data is distributed among several agents or machines that collaboratively train a model being orchestrated by a server. The objective function can be expressed as an average of $N$ functions: $ \min_{x\in\R^d} \left[f(x) \eqdef \frac{1}{N}\sum_{i=1}^N f_i(x)\right]$, where $f_i$ represents a loss over the local data of the $i$-th agent, and $x$ are the trainable parameters. Although computational resources are rapidly improving \citep{jouppi2017datacenter},  the synchronization between the server and agents is still a critical performance bottleneck and can significantly increase training time \citep{sapio2021scaling}. Among others, approaches such as communication compression \citep{seide20141, alistarh2018convergence, mishchenko2024distributed}, local computations \citep{gorbunov2021local, koloskova2020unified}, and asynchronous communication \citep{islamov2024asgrad, mishchenko2022asynchronous} are designed to boost the efficacy of distributed training.

We focus on algorithms that utilize lossy compression: They trade off some precision in the communication for reduced bandwidth usage, thereby speeding up the overall learning process. Despite the loss in precision, many compression algorithms are designed to ensure that the learning process converges to an optimal solution, often with guarantees on the convergence rate \citep{mishchenko2024distributed, richtarik2021ef21, gao2023econtrol}. Compression operators can be divided into two main categories: \textit{unbiased} (e.g., sparsification \citep{khirirat2018distributed} and quantization \citep{horvoth2022natural}) and \textit{biased} (e.g., sign \citep{bernstein2018signsgd, safaryan2021signsgd}, Top-$k$ \citep{alistarh2018convergence, beznosikov2023biased}, and low rank \citep{vogels2019powersgd, safaryan2021fednl, islamov2023distributed, qian2021basis}). While the first class is theoretically better understood \citep{khirirat2018distributed, horvath2023stochastic, mishchenko2024distributed, gorbunov2020unified, condat2024locodl} and the latter is often empirically
superior \citep{seide20141}, a theoretical understanding of how these two categories differ fundamentally remains unclear, particularly regarding their behavior w.r.t. the hyperparameters of the optimizer, or their robustness to large or heavy-tailed noise.

In this work, we address these questions by comparing \textit{unbiased} Distributed Compressed SGD (DCSGD) against Distributed SignSGD (DSignSGD), a popular \textit{biased} compression optimizer. While the class of unbiased compressors is widely used in the literature, we specifically focus on biased \textit{sign} compression due to its reported practical superiority \citep{chen2024symbolic, kunstner2024heavy}, communication efficiency \citep{bernstein2018signsgd} and connection to Adam \citep{balles2018dissecting}. As stochastic differential equations (SDEs) have become more and more successful in offering valuable insights into the dynamics of optimization algorithms \citep{li2017stochastic, jastrzkebski2017three,liu2018diffusion,hu2019global,bercher2020weak,zhu2020sharp,cui2020momentum,maulen2021continuous,wang2020asymptotic,lanconelli2022note,ayadi2021stochastic,soto2022sde,li2022uniform,wang2022two,bardi2022deep,chen2022approximation,kunin2023limiting,zhang2023stochastic,sun2023scheduling,li2023fast, gess2024stochastic,dambrine2024stochastic,maulen2024stochastic}, we utilize these continuous-time models to pursue our objective. SDEs can effectively model the dynamics of discrete-time stochastic optimizers in continuous time (see Figure \ref{fig:SDEs} for a graphical representation). Crucially, using SDEs allows us to leverage powerful results from Itô calculus, facilitating the derivation of convergence bounds, stationary distributions, and scaling rules \emph{with minimal mathematical effort}. This approach is especially useful for analyzing the intricate interactions between the optimization landscape, stochastic noise, and compression. Finally, another significant advantage of SDEs is that they enable a direct comparison between optimizers by making explicit the impact of hyperparameters and landscape features on their behavior \citep{compagnoni2024sde,li2017stochastic,Malladi2022AdamSDE,orvieto2019continuous}.

\begin{figure}[ht]
   \centering
   \vspace{-0.4cm}
    \subfloat{{\includegraphics[width=0.45\linewidth]{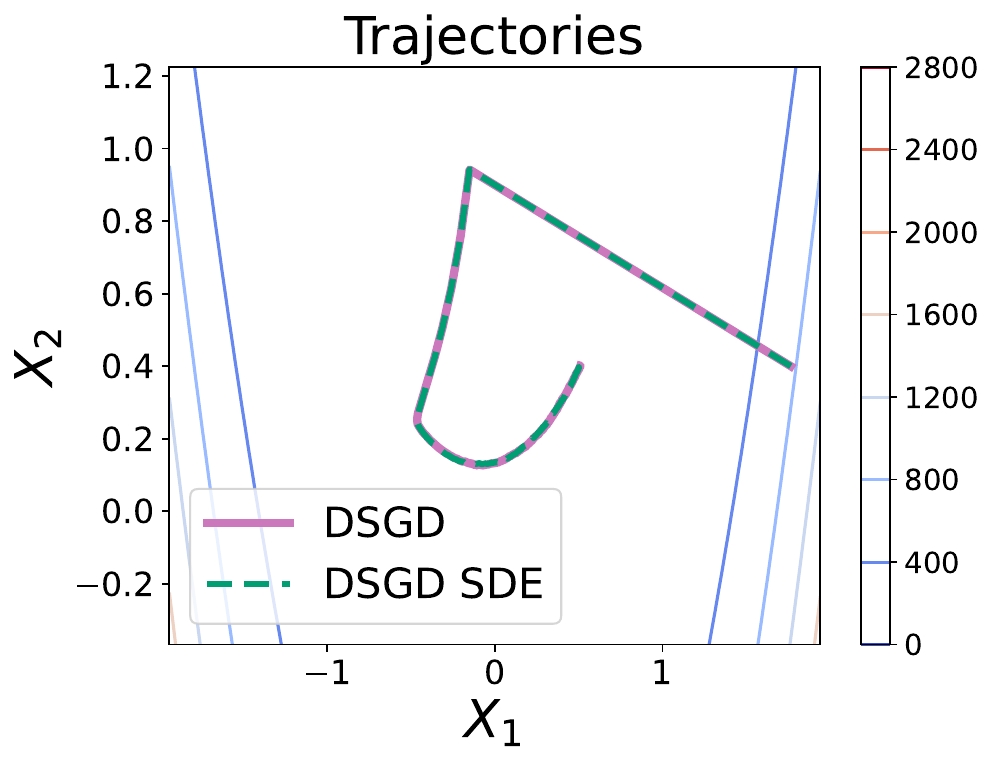} }}%
    \hspace{1.cm}\subfloat{{\includegraphics[width=0.45\linewidth]{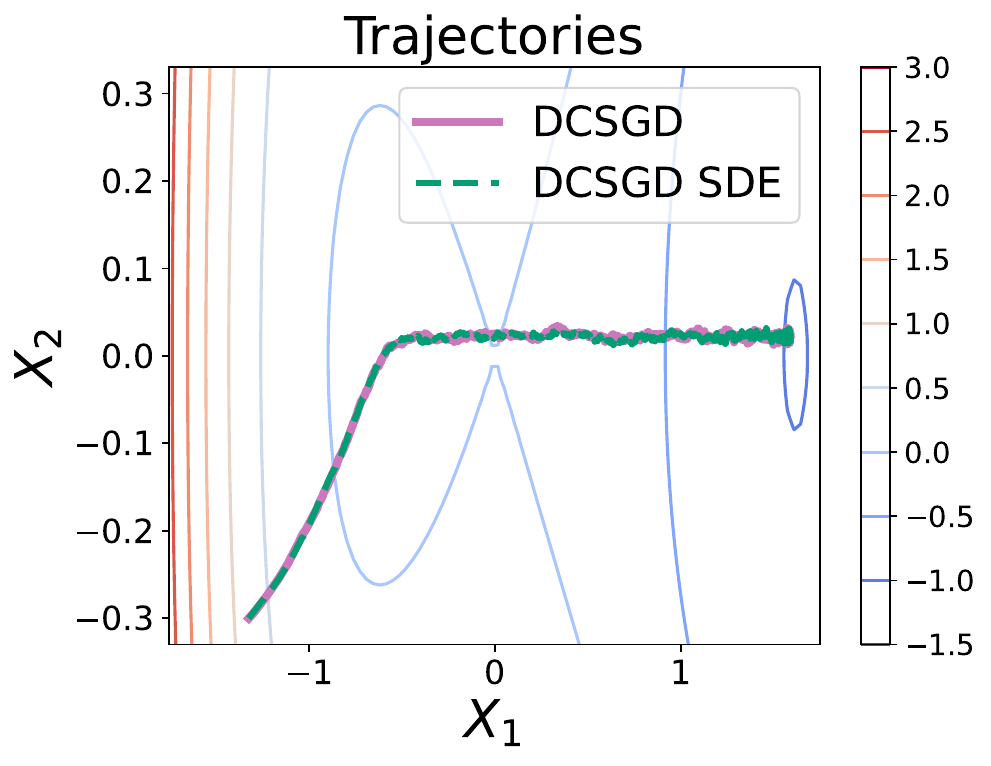} }} \\
    \subfloat{{\includegraphics[width=0.45\linewidth]{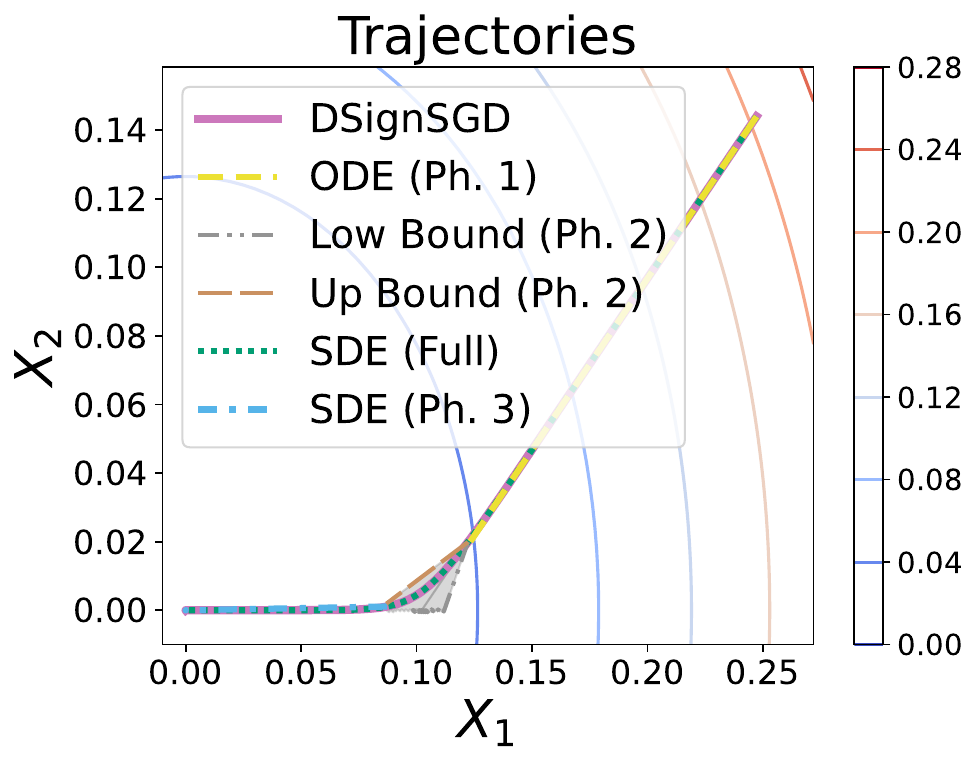} }}%
    \hspace{1.cm}\subfloat{{\includegraphics[width=0.45\linewidth]{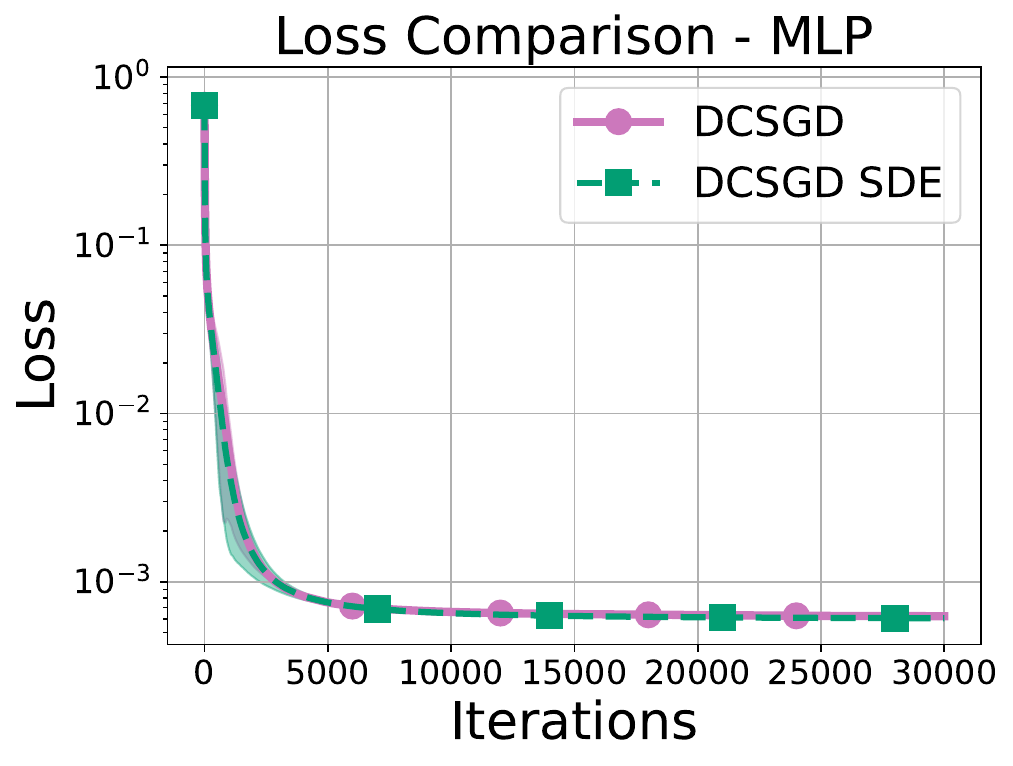} }}%
    \caption{Empirical validation that the trajectories of the SDEs match those of the respective algorithm averaged over 500 runs - DSGD (Theorem~\ref{thm:DSGD_Theorem}) on a Rosenbrock function ({\bf Upper-Left}); DCSGD (Theorem~\ref{thm:DCSGD_Theorem}) with Rand-$k$ on an Embedded Saddle ({\bf Upper-Right}); DSignSGD on a Convex Quadratic: As per Theorem~\ref{thm:DSignSGD_Theorem}, the dynamics of DSignSGD can be partitioned into three phases --- Not only the ``Full'' SDE is a faithful model for DSignSGD through the whole dynamics, but so are the ODE of Phase~1 and the SDE of Phase~3 in their respective phases. Importantly, the bound that characterizes Phase~2 captures the dynamics as prescribed ({\bf Bottom-Left}); The SDEs and the optimizers move at the same speed — DCSGD on an MLP ({\bf Bottom-Right}). For details, see Appendix \ref{app:Exper}.}
    \label{fig:SDEs}%
\end{figure}

\textbf{Contributions.} We identify the following as key ones:
\begin{enumerate}[leftmargin=*, topsep=0.3pt,, itemsep=0.3pt]
    \item We derive the first SDEs for DSGD, DCSGD, and DSignSGD under general assumptions and compare their behavior in terms of expected loss, expected gradient norm, and stationary distribution. Importantly, we discover that \textit{sign} and \textit{unbiased} compression interact differently with gradient noise;
    \item For SignSGD, we prove that \textit{sign} compression causes the noise level, e.g. standard deviation or scale, to inversely affect the convergence rate of both the loss and the iterates. This is in contrast with DCSGD for which the noise level plays no role. Additionally, the noise level has a linear impact on the asymptotic expected loss and covariance of the iterates while this is quadratic for DCSGD;
    \item We show that heavy-tailed noise marginally affects the performance of DSignSGD, which remains robust even to noise of infinite expected value. Under the same conditions, DCSGD fails to converge;
    \item We derive novel scaling rules for DCSGD and DSignSGD: These rules provide intuitive and actionable guidelines to adjust hyperparameters, e.g. to contrast the performance degradation due to compression, or adapt to newly available hardware;
    \item We empirically verify every theoretical insight and prediction. Our experiments are conducted on a variety of deep learning architectures (MLP, ResNet, ViT, GPT2) and datasets (Breast Cancer Wisconsin, MNIST, CIFAR-10, FineWeb-Edu).
\end{enumerate}

%%%%%%%%%%%%%%%%%%%%%%%%%%%%%%%%%%%%%%%%%%%%%%%%%%%%%%%%%%%%%%%%%%%%%%%%%%%%
\section{Related work}
\label{sec:RelatedWorks}

%%%%%%%%%%%%%%%%%%%%%%%%%%%%%%%%%%%%%%%%%%%%%%%%%%%%%%%%%%%%%%%%%%%%%%%%%%%%

\textbf{SDE Approximations and Applications.} In \citep{li2017stochastic}, a formal theoretical framework was proposed to derive SDEs that accurately capture the stochastic nature inherent in optimization methods commonly used in machine learning. Since then, SDEs have been applied in various areas of machine learning, including \emph{stochastic optimal control} to optimally adjust both stepsize \citep{li2017stochastic,li2019stochastic} and batch size \citep{zhao2022batch}. Importantly, they have been used to characterize \emph{convergence bounds} and \emph{stationary distributions} \citep{compagnoni2023sde,compagnoni2024sde,compagnoni2025adaptive}, \emph{scaling rules} \citep{jastrzkebski2017three,Malladi2022AdamSDE,compagnoni2025adaptive}, and provided insights in the context of \emph{implicit regularization} \citep{smith2021origin,compagnoni2023sde}.

\textbf{Two Classes of Compression.} The current theory focuses \textit{either} on unbiased \citep{condat2022ef,PD2025, mishchenko2024distributed, islamov2021distributed} \textit{or} biased \citep{gao2023econtrol, fatkhullin2024momentum} compression without discussing the conceptual differences between the two classes. However, biased compressors typically outperform their unbiased counterparts in practical applications \citep{seide20141}. Therefore, there is a gap between theory and practice which we aim to reduce in this paper.

\textbf{Heavy-tailed Noise.} Recent empirical evidence suggests that the noise in several deep learning setups follows a heavy-tailed distribution \citep{simsekli2019tail, zhang2020adaptive, gurbuzbalaban2021heavy, kunstner2024heavy}. In contrast, previous studies mostly focused on more restricted bounded variance assumptions. Therefore, there is a growing interest in analyzing the convergence of algorithms under heavy-tailed noise  \citep{devlin2018bert, sun2023distributed, yang2022taming, gorbunov2023high}. Earlier works \citep{khirirat2023clip21, li2023convergenceandprivacy, yu2023smoothed} combined compression and clipping to make the algorithm communication-efficient and robust to heavy-tailed noise: We show that the \textit{sign} compressor alone effectively addresses both issues without introducing additional hyperparameters.

%%%%%%%%%%%%%%%%%%%%%%%%%%%%%%%%%%%%%%%%%%%%%%%%%%%%%%%%%%%%%%%%%%%%%%%%%%%%
\section{Formal Statements \& Insights Through SDEs}\label{sec:Insights}
%%%%%%%%%%%%%%%%%%%%%%%%%

\vspace{0.2cm}

This section provides the formulations of the SDEs of DSGD (Theorem \ref{thm:DSGD_Theorem}), DCSGD (Theorem \ref{thm:DCSGD_Theorem}) and DSignSGD (Theorem \ref{thm:DSignSGD_Theorem}): We use these models to derive convergence rates, scaling rules, and stationary distributions of the respective optimizers. Given the technical complexity of the analysis, the formal statements and proofs are provided in the appendix for further reference.

\vspace{0.1cm}

\textbf{Assumptions and Notation.} In this section, we assume that the stochastic gradient of the $i$-th agent is given by $\nabla f_{\gamma_i}(x) = \nabla f(x) + Z_i(x)$, where $Z_i(x)$ denotes the gradient noise and $Z_i(x)$ is independent of $Z_j(x)$ for $i \neq j$. If $Z_i(x) \in L^1(\R^d)$, we assume $\E[Z_i(x)] = 0$, and if $Z_i(x) \in L^2(\R^d)$, we assume $Cov(Z_i(x)) = \Sigma_i(x)$\footnote{We omit the size of the batch $\gamma$ unless relevant.} s.t. $\sqrt{\Sigma_i(x)}$ is bounded, Lipschitz, satisfies affine growth, and together with its derivatives, it grows at most polynomially fast (Definition 2.5 in \cite{Malladi2022AdamSDE}). Importantly, we assume that all $Z_i(x)$ have a smooth and bounded probability density function whose derivatives are all integrable: A common assumption in the literature is for $Z_i(x)$ to be Gaussian\footnote{See \cite{jastrzkebski2017three} for the justification why this might be the case.} \cite{ahn2012bayesian,chen2014stochastic,mandt2016variational,stephan2017stochastic,zhu2019anisotropic,wu2020noisy,Xie2021} while our assumption allows for heavy-tailed distributions such as the Student's t. To derive the stationary distribution near the optimum, we approximate the loss function as a quadratic convex function $f(x) = \frac{1}{2} x^{\top} H x$, a standard approach in the literature \citep{ge2015escaping,levy2016power,jin2017escape,poggio2017theory,mandt2017stochastic,compagnoni2023sde}. 

About notation, $n_i$ is the number of data points in the local dataset of the $i$-th agent, $\eta > 0$ is the learning rate, and the batches $\{ \gamma_k \}$ have size $B \geq 1$ and are modeled as i.i.d.~random variables uniformly distributed over $\{ 1, \dots, n_i \}$. Finally, $W_t$ is the Brownian motion.

The following definition formalizes in which sense an SDE can be ``reliable'' in modeling an optimizer.

\vspace{0.1cm}

\begin{definition}[Weak Approximation]\label{def:weak_approximation}
A continuous-time stochastic process $\{X_t\}_{ t \in [0, T]}$ is an order $\alpha$ weak approximation of a discrete stochastic process $\{x_k\}_{k=0}^{\lf T/\eta \rf}$ if for every polynomial growth function $g$, there exists a positive constant $C$, independent of the stepsize $\eta$, such that $$ \max _{k=0, \ldots, \lf T/\eta \rf}\left|\E g\left(x_k\right)-\E g\left(X_{k \eta}\right)\right| \leq C \eta^\alpha.$$
\end{definition}

Rooted in the numerical analysis of {\small SDEs \cite{mil1986weak}} this definition quantifies the discrepancy between the continuous-time model {\small $X_t$} and discrete-time process {\small $x_k$}. 

%%%%%%%%%%%%%%%%%%%%%%%%%%%%%%%%%%%%%%%%%%%%%%%%%%%%%%%%%%%%%%%%%%%%%%%%%%%%
\subsection{Distributed SGD}\label{sec:DSGD}
%%%%%%%%%%%%%%%%%%%%%%%%%

In this section, we derive an SDE model for Distributed SGD whose update rule is

\begin{equation} \label{eq:DUSGD_Discr_Update}
    x_{k+1} = x_{k} - \frac{\eta}{N}  \sum_{i=1}^{ N} \nabla f_{\gamma_i} (x_k).
\end{equation}

Though not surprising, the following results serve as a reference point for analyzing other optimizers in the subsequent sections.
The first shows the SDE of DSGD which we validate in Figure~\ref{fig:SDEs} on a simple landscape.
\begin{theorem}[Informal Statement of Theorem \ref{thm:DSGD_SDE}] 
\label{thm:DSGD_Theorem}
The SDE of DSGD is
    \begin{equation}
    d X_t = - \nabla f(X_t) dt + \sqrt{\frac{\eta}{N B}} \sqrt{\hat{\Sigma}(X_t)} dW_t,
\end{equation}
where $\hat{\Sigma}(x)\eqdef  \frac{1}{N} \sum_{i=1}^{N} \Sigma_i(x)$ is the average of the covariance matrices of the $N$ agents.
\end{theorem}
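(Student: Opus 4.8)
The strategy is to invoke the standard weak-approximation theorem for stochastic modified equations (\cite{mil1986weak}; see also the formulation in \cite{li2017stochastic}), which reduces the claim to a one-step moment comparison together with global-regularity and moment-stability bookkeeping. Write $\Delta_k \eqdef x_{k+1} - x_k = -\tfrac{\eta}{N}\sum_{i=1}^N \nabla f_{\gamma_i}(x_k)$ and decompose each agent's batch gradient as $\nabla f_{\gamma_i}(x_k) = \nabla f(x_k) + \bar Z_i(x_k)$, where $\bar Z_i$ is the batch-averaged noise with $\E[\bar Z_i(x_k) \mid x_k] = 0$ and $\mathrm{Cov}(\bar Z_i(x_k)\mid x_k) = \tfrac1B \Sigma_i(x_k)$. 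Because the $\bar Z_i$ are independent across agents, the aggregate noise $\tfrac1N\sum_{i=1}^N \bar Z_i(x_k)$ is centered with covariance $\tfrac{1}{N^2 B}\sum_{i=1}^N \Sigma_i(x_k) = \tfrac{1}{NB}\hat\Sigma(x_k)$ --- this is the only place where the averaging over agents enters, and it is what produces the $\tfrac{1}{NB}$ prefactor in the diffusion.

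The core of the argument is the one-step comparison. Conditionally on $x_k$, $\E[\Delta_k \mid x_k] = -\eta\nabla f(x_k)$ exactly, which matches the Itô--Taylor expansion $\E[X_{t+\eta} - X_t \mid X_t = x_k] = -\eta\nabla f(x_k) + O(\eta^2)$ to the admissible order, with a polynomial-growth remainder by the smoothness and growth assumptions on $f$. For the second moment, $\E[\Delta_k\Delta_k^\top \mid x_k] = \eta^2\,\nabla f(x_k)\nabla f(x_k)^\top + \tfrac{\eta^2}{NB}\hat\Sigma(x_k)$, while the candidate SDE gives $\E[(X_{t+\eta}-X_t)(X_{t+\eta}-X_t)^\top \mid X_t=x_k] = \eta^2\,\nabla f(x_k)\nabla f(x_k)^\top + \tfrac{\eta^2}{NB}\hat\Sigma(x_k) + O(\eta^3)$; the two agree to order $\eta^2$, and this is exactly the computation that pins the diffusion coefficient down to $\sqrt{\eta/(NB)}\,\sqrt{\hat\Sigma}$, since the $\sqrt\eta$ scaling is what makes the continuous-time noise contribute at the same $O(\eta^2)$ order as the discrete gradient noise over a step of length $\eta$. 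Finally one checks that $\E[\|\Delta_k\|^3 \mid x_k]$ and every mixed third-order conditional moment of $\Delta_k$ are $O(\eta^3)$, with polynomial-growth prefactors, using $\E[\bar Z_i\mid x_k]=0$ together with the boundedness and polynomial growth of $\sqrt{\Sigma_i}$ and the assumed integrability of the noise; the analogous $O(\eta^3)$ bound for the SDE increment is immediate. These estimates are precisely the hypotheses of the weak-approximation lemma, which then yields the order-$1$ bound of Definition~\ref{def:weak_approximation}.

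What remains beyond this calculation is verifying the lemma's standing hypotheses: $\nabla f$ smooth with derivatives of at most polynomial growth, $\sqrt{\hat\Sigma}$ bounded, Lipschitz and polynomially growing (given in the Assumptions), and the uniform-in-$\eta$ moment bounds $\sup_{0\le k\le T/\eta}\E\|x_k\|^{2m} < \infty$ together with the analogous bound for $\{X_t\}$. I expect these moment-stability bounds to be the main obstacle: they do not follow from the stated growth conditions alone and require either assuming $\nabla f$ globally Lipschitz (so both processes automatically have bounded moments), a Lyapunov/dissipativity argument, or a stopping-time truncation, and they are also where the tail of $Z_i$ genuinely matters --- so one must track which noise moments are actually needed (finiteness through order three, say, which is compatible with the $L^2$ setting of this theorem once the noise assumption is strengthened accordingly). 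Specializing afterwards to the quadratic surrogate $f(x) = \tfrac12 x^\top H x$ is immediate, since then $\nabla f(x) = Hx$ is linear and all growth hypotheses hold trivially.
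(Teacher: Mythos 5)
Your proposal is correct and follows essentially the same route as the paper: compute the conditional one-step mean ($-\eta\nabla f(x_k)$) and covariance (which by independence across agents collapses to $\tfrac{\eta^2}{NB}\hat\Sigma(x_k)$), then invoke the Milstein/Li--Tai--E one-step moment-matching criterion to conclude order-$1$ weak approximation. The moment-stability and regularity issues you flag as the "main obstacle" are exactly what the paper's standing assumptions are designed to dispose of ($\nabla f$ globally Lipschitz with linear growth, $\sqrt{\Sigma_i}$ bounded and Lipschitz), so they do not require an additional Lyapunov or truncation argument here.
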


Effectively, the SDE above extends the single-node case presented by \cite{li2017stochastic}, where the batch size $B$ is replaced by $B N$. Using the SDE established in Theorem~\ref{thm:DSGD_Theorem}, we derive the convergence rate of DSGD for smooth functions that satisfy the PL condition \cite{karimi2016linear}.

\begin{theorem}\label{thm:DSGD_Convergence}
    If $f$ is $\mu$-PL, $L$-smooth, $ \tr(\Sigma_i(x)) < \mathcal{L}_{\sigma_i}$, $\overline{\mathcal{L}}_{\sigma} \eqdef \frac{1}{N} \sum_{i=1}^{N} \mathcal{L}_{\sigma_i}$, and $S_t:=f(X_t) - f(X_*)$
    \begin{equation}
        \E \left[ S_t \right] \leq  S_0 e^{- 2\mu t}  + (1 - e^{- 2\mu t}) \frac{\eta L \overline{\mathcal{L}}_{\sigma}} {4 \mu B N} .
    \end{equation}
\end{theorem}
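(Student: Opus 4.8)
The plan is to run the classical Lyapunov argument in continuous time: apply It\^o's formula to the scalar process $S_t = f(X_t) - f(X_*)$ along the DSGD diffusion of Theorem~\ref{thm:DSGD_Theorem}, take expectations so the stochastic integral drops, bound the drift using $\mu$-PL and $L$-smoothness, and close with a Gr\"onwall/integrating-factor step. Concretely, since $dX_t = -\nabla f(X_t)\,dt + \sqrt{\eta/(NB)}\,\sqrt{\hat\Sigma(X_t)}\,dW_t$ and $\sqrt{\hat\Sigma}\,\sqrt{\hat\Sigma}^{\top} = \hat\Sigma$, It\^o's lemma gives
\[
d f(X_t) = -\|\nabla f(X_t)\|^2\,dt + \frac{\eta}{2NB}\,\tr\!\big(\hat\Sigma(X_t)\nabla^2 f(X_t)\big)\,dt + \sqrt{\tfrac{\eta}{NB}}\;\nabla f(X_t)^{\top}\sqrt{\hat\Sigma(X_t)}\;dW_t .
\]
Taking expectations and using that the stochastic integral is a martingale (justified below) yields $\tfrac{d}{dt}\E[S_t] = -\E\|\nabla f(X_t)\|^2 + \tfrac{\eta}{2NB}\,\E\,\tr(\hat\Sigma(X_t)\nabla^2 f(X_t))$.

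Next I would invoke the two structural hypotheses. The $\mu$-PL inequality gives $\|\nabla f(x)\|^2 \ge 2\mu\,(f(x)-f(X_*))$, hence $-\E\|\nabla f(X_t)\|^2 \le -2\mu\,\E[S_t]$. For the second-order term, $L$-smoothness implies $\nabla^2 f(x)\preceq L\,I$, and since $\hat\Sigma(x)=\tfrac1N\sum_{i=1}^N\Sigma_i(x)\succeq 0$ we have $\tr(\hat\Sigma(x)\nabla^2 f(x))\le L\,\tr(\hat\Sigma(x)) = \tfrac{L}{N}\sum_{i=1}^N\tr(\Sigma_i(x)) \le L\,\overline{\mathcal{L}}_{\sigma}$ using the trace bound $\tr(\Sigma_i(x))<\mathcal{L}_{\sigma_i}$ and the definition of $\overline{\mathcal{L}}_{\sigma}$. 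Combining, $\tfrac{d}{dt}\E[S_t]\le -2\mu\,\E[S_t] + \tfrac{\eta L\,\overline{\mathcal{L}}_{\sigma}}{2NB}$. Multiplying by the integrating factor $e^{2\mu t}$ so that $\tfrac{d}{dt}\!\big(e^{2\mu t}\E[S_t]\big)\le \tfrac{\eta L\,\overline{\mathcal{L}}_{\sigma}}{2NB}\,e^{2\mu t}$, integrating on $[0,t]$, and rearranging gives exactly $\E[S_t]\le S_0 e^{-2\mu t} + (1-e^{-2\mu t})\tfrac{\eta L\,\overline{\mathcal{L}}_{\sigma}}{4\mu BN}$.

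The main obstacle is the analytic bookkeeping rather than the algebra: one must ensure $\int_0^t \nabla f(X_s)^{\top}\sqrt{\hat\Sigma(X_s)}\,dW_s$ is a genuine martingale (not only a local one), which requires an a priori moment bound on $\nabla f(X_t)$ along the trajectory. I would obtain this by the standard localization argument — stop at exit times $\tau_R$ from balls of radius $R$, apply the differential inequality above to the stopped process $S_{t\wedge\tau_R}$ (for which the stochastic integral is a true martingale), and use the boundedness and affine-growth hypotheses on $\sqrt{\Sigma_i}$ together with $L$-smoothness to get a bound on $\E[S_{t\wedge\tau_R}]$ uniform in $R$, then pass $R\to\infty$ by monotone/Fatou. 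A minor point: if $f$ is taken $C^1$ with Lipschitz gradient rather than $C^2$, the term $\tr(\hat\Sigma\nabla^2 f)$ should be interpreted through the generator via a mollification argument, which leaves the final constant unchanged.
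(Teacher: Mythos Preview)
Your argument is correct and follows essentially the same route as the paper's proof: apply It\^o's lemma to $f(X_t)-f(X_*)$ along the SDE of Theorem~\ref{thm:DSGD_Theorem}, bound the drift via the PL inequality and the second-order term via $L$-smoothness together with $\tr(\Sigma_i(x))<\mathcal{L}_{\sigma_i}$, then close with Gr\"onwall. The paper's proof simply writes the noise term as $\mathcal{O}(\text{Noise})$ and asserts the conclusion, whereas you additionally spell out the localization argument for the martingale property and flag the mollification point if $f$ is only $C^{1,1}$---these are welcome refinements but not a different strategy.
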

For the general smooth non-convex functions the convergence guarantees are given in the next theorem.
\begin{theorem}\label{thm:DSGD_Convergence_Smooth}
    If $f$ is $L$-smooth, we use a learning rate scheduler $\eta_t$ such that $\phi^i_t = \int_0^t (\eta_s)^i ds$, $\phi^1_t \overset{t \rightarrow \infty}{\rightarrow} \infty$, $\frac{\phi^2_t}{\phi^1_t} \overset{t \rightarrow \infty}{\rightarrow} 0$, $\overline{\mathcal{L}}_{\sigma}\eqdef \frac{1}{N} \sum_{i=1}^{N} \mathcal{L}_{\sigma_i}$, and $\tilde{t}$ distributed as $\frac{\eta_{\tilde{t}}}{\phi^1_t}$,
    {\small 
    \begin{equation}
        \E \left[ \lVert \nabla f(X_{\tilde{t}}) \rVert_2^2\right] \leq \frac{f(X_0) - f(X_*)}{\phi^1_t}  + \frac{\eta L \overline{\mathcal{L}}_{\sigma} }{2 B  N} \frac{\phi^2_t}{\phi^1_t} \overset{t \rightarrow \infty}{\rightarrow} 0.
    \end{equation}}
\end{theorem}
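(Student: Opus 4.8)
This is the continuous-time analogue of the standard descent-lemma bound for nonconvex SGD: use $f$ itself as a Lyapunov function and apply It\^o's formula where the discrete proof would Taylor-expand. With the scheduler in force, the SDE of Theorem~\ref{thm:DSGD_Theorem} has drift $-\eta_t\nabla f(X_t)$ and diffusion $\eta_t\sqrt{\eta/(NB)}\,\sqrt{\hat{\Sigma}(X_t)}$ (reducing to Theorem~\ref{thm:DSGD_Theorem} at $\eta_t\equiv1$). The plan is: It\^o on $f(X_t)$; take expectations to kill the martingale term; bound the It\^o correction using $L$-smoothness and the trace bound $\tr(\Sigma_i)<\mathcal{L}_{\sigma_i}$; integrate in time; rearrange using $f\ge f(X_*)$; and recognise the resulting time-average as an expectation over $\tilde t$.

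\textbf{Main steps.} It\^o's formula applied to $f$ (which is $C^2$ under the smoothness hypothesis) gives
\[ df(X_t) = -\eta_t\|\nabla f(X_t)\|^2\,dt + \frac{\eta\,\eta_t^2}{2NB}\,\tr\!\big(\nabla^2 f(X_t)\,\hat{\Sigma}(X_t)\big)\,dt + \eta_t\sqrt{\tfrac{\eta}{NB}}\,\nabla f(X_t)^\top\sqrt{\hat{\Sigma}(X_t)}\,dW_t . \]
Taking expectations removes the last term (true martingale --- see below). Since $\hat{\Sigma}(x)\succeq 0$ and $L$-smoothness gives $\nabla^2 f(x)\preceq LI$, the conjugation identity $\tr(\nabla^2 f\,\hat{\Sigma})=\tr(\hat{\Sigma}^{1/2}\nabla^2 f\,\hat{\Sigma}^{1/2})\le L\,\tr(\hat{\Sigma})=\tfrac{L}{N}\sum_i\tr(\Sigma_i)<L\,\overline{\mathcal{L}}_\sigma$ controls the correction, so $\tfrac{d}{dt}\E[f(X_t)]\le-\eta_t\,\E\|\nabla f(X_t)\|^2+\tfrac{\eta L\overline{\mathcal{L}}_\sigma}{2NB}\,\eta_t^2$. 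Integrating on $[0,t]$, using $\phi^1_t=\int_0^t\eta_s\,ds$ and $\phi^2_t=\int_0^t\eta_s^2\,ds$, together with $\E[f(X_t)]\ge f(X_*)=\inf f$, gives
\[ \int_0^t\eta_s\,\E\|\nabla f(X_s)\|^2\,ds \;\le\; f(X_0)-f(X_*)+\frac{\eta L\overline{\mathcal{L}}_\sigma}{2NB}\,\phi^2_t . \]
Dividing by $\phi^1_t$ and noting that $s\mapsto\eta_s/\phi^1_t$ is a probability density on $[0,t]$, the left-hand side becomes $\E\|\nabla f(X_{\tilde t})\|^2$ with $\tilde t$ distributed as $\eta_{\tilde t}/\phi^1_t$, which is exactly the claimed inequality; the limit $\to 0$ is immediate from $\phi^1_t\to\infty$ and $\phi^2_t/\phi^1_t\to0$.

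\textbf{Where the care is.} The proof is short, so the only points needing attention are technical. First, the $dW_t$-integral must be a genuine (not merely local) martingale so that its expectation vanishes; this is painless here because $L$-smoothness makes the drift $-\eta_t\nabla f$ globally Lipschitz and the diffusion is bounded and Lipschitz, so the SDE is globally well posed with finite moments of all orders --- alternatively one runs the computation up to $\tau_R=\inf\{s:\|X_s\|\ge R\}$, uses $f\ge f(X_*)$ pointwise and $\phi^2_{t\wedge\tau_R}\le\phi^2_t$, and passes $R\to\infty$ by monotone convergence, which needs only non-explosion. Second, since $\nabla^2 f$ is merely operator-norm bounded ($-LI\preceq\nabla^2 f\preceq LI$) rather than sign-definite, the trace estimate must go through the symmetric conjugate $\hat{\Sigma}^{1/2}\nabla^2 f\,\hat{\Sigma}^{1/2}\preceq L\hat{\Sigma}$, not a naive factorisation. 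Everything else is bookkeeping.
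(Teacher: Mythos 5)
Your proposal is correct and follows essentially the same route as the paper's proof: Itô's formula on $f$, the trace bound $\tr(\nabla^2 f\,\hat{\Sigma})\le L\,\tr(\hat{\Sigma})<L\overline{\mathcal{L}}_{\sigma}$, integration in time, and the reinterpretation of the weighted time-average as $\E\lVert\nabla f(X_{\tilde t})\rVert_2^2$ via the density $\eta_s/\phi^1_t$. The only difference is that you spell out the technical points (true-martingale property of the stochastic integral and the symmetric conjugation in the trace estimate) that the paper absorbs into its $\mathcal{O}(\text{Noise})$ shorthand.
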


\vspace{-0.1cm}

\paragraph{Observations on convergence guarantees:}
\begin{enumerate}
    \item In Theorem~\ref{thm:DSGD_Convergence}, the decay is $e^{-2 \mu t}$, as in SGD;
    \item In Theorem~\ref{thm:DSGD_Convergence}, the asymptotic expected loss scales inversely to $N$, i.e. DCSGD attains a linear speedup with the number of agents $N$. Moreover, the stochastic term is proportional to the condition number $\frac{L}{\mu}$ of the Hessian of the loss, and scales with the average variance $\overline{\mathcal{L}}_{\sigma}$ of the gradient noise which is also observed in earlier works \citep{garrigos2023handbook};
    \item Analogous conclusions hold in Theorem~\ref{thm:DSGD_Convergence_Smooth}.
\end{enumerate}

\vspace{-0.1cm}

\subsection*{Scaling Rules: Preserving DSGD Performance}

After an extensive hyperparameter tuning phase of a machine learning model, it is common to need adjustments to the hyperparameters to accommodate new scenarios. For instance, when training LLMs, larger batch sizes may be desirable to fully utilize newly available and larger GPUs, without the need to repeat the fine-tuning process. Scaling rules offer theoretically grounded formulas that allow changes to some hyperparameters while adjusting others to maintain specific performance metrics. These rules are not \textit{strict} but serve as guidelines to \textit{narrow down} the hyperparameter search space. Common scaling rules include the \textit{linear} rule for SGD \citep{jastrzkebski2017three} that prescribes that the ratio of learning rate and batch size must be kept constant and the far more complex \textit{square-root} rule of Adam \citep{Malladi2022AdamSDE}. As we demonstrate next, we extend the \textit{linear} scaling rule of SGD to the distributed setting, thereby incorporating the number of agents.
To establish this scaling rule, we seek a functional relationship between the hyperparameters, ensuring that DSGD with a learning rate of $\eta$, batch size $B$, and $N$ agents achieves the same performance as with a learning rate of $\kappa \eta$, batch size $\delta B$, and $\alpha N$ agents. The next result shows the exact dependencies. 

\begin{proposition}\label{prop:DSGD_ScalingLaws}
   The scaling rule to preserve the performance independently of $\delta$, $\kappa$, and $\alpha$ is $\frac{\kappa}{\alpha \delta} = 1$.
\end{proposition}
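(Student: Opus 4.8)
The plan is to exploit the structure of the DSGD SDE in Theorem~\ref{thm:DSGD_Theorem}: the hyperparameters $\eta$, $B$, $N$ enter \emph{only} through the scalar prefactor $\sqrt{\eta/(NB)}$ multiplying the diffusion term, whereas the drift $-\nabla f(X_t)$ is entirely hyperparameter-free. Concretely, I would first apply the substitution $(\eta, B, N) \mapsto (\kappa\eta, \delta B, \alpha N)$ to the SDE, obtaining $dX_t = -\nabla f(X_t)\,dt + \sqrt{\tfrac{\kappa\eta}{\alpha\delta NB}}\,\sqrt{\hat{\Sigma}(X_t)}\,dW_t$, and observe that this coincides with the original SDE if and only if the two diffusion prefactors agree, i.e. $\tfrac{\kappa\eta}{\alpha\delta NB} = \tfrac{\eta}{NB}$, which is exactly $\tfrac{\kappa}{\alpha\delta} = 1$. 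In other words, the SDE depends on $(\eta,B,N)$ solely through the dimensionless group $\eta/(NB)$, and the scaling rule is the condition that this group be invariant.

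For the sufficiency direction I would invoke the regularity assumptions on $\sqrt{\hat{\Sigma}}$ stated in the ``Assumptions and Notation'' paragraph (boundedness, Lipschitzness, affine growth), which guarantee existence and uniqueness of a strong solution whose law on path space is determined by the SDE coefficients. Hence, when $\kappa/(\alpha\delta)=1$, the two hyperparameter choices induce processes with identical law, so every performance metric that is a functional of the trajectory --- the expected loss $\E[f(X_t)]$, the expected squared gradient norm $\E[\lVert \nabla f(X_t)\rVert_2^2]$, and the stationary distribution --- is preserved at every continuous time $t$, independently of the individual values of $\delta$, $\kappa$, $\alpha$.

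For the necessity (i.e.\ that no other relation works ``independently of $\delta,\kappa,\alpha$''), I would exhibit a single landscape on which some metric changes whenever $\kappa/(\alpha\delta)\neq 1$. The quadratic model $f(x)=\tfrac12 x^\top H x$ already used in the paper is the natural witness: the stationary covariance solves the Lyapunov equation $H\Sigma_\infty + \Sigma_\infty H = \tfrac{\eta}{NB}\hat{\Sigma}$, so $\Sigma_\infty$ is strictly proportional to $\eta/(NB)$ and therefore changes as soon as the ratio changes; alternatively, the asymptotic expected loss $\tfrac{\eta L \overline{\mathcal{L}}_\sigma}{4\mu BN}$ from Theorem~\ref{thm:DSGD_Convergence} serves the same purpose. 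This pins down $\kappa/(\alpha\delta)=1$ as the unique rule.

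The only genuinely delicate point --- and the one I would flag as the main obstacle --- is making precise what ``preserve the performance'' means: the statement is really an invariance of the SDE itself, and one must be careful that this is the right object because the discrete-to-continuous correspondence $x_k \leftrightarrow X_{k\eta}$ rescales the number of steps with $\eta$. I would therefore state explicitly that the comparison is carried out at fixed continuous time $t$, so that Definition~\ref{def:weak_approximation} transfers the conclusion to the discrete iterates up to $\mathcal{O}(\eta)$; once that is set up, the argument reduces to the one-line coefficient matching above.
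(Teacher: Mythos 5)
Your proposal is correct, and it takes a mildly but genuinely different route from the paper. The paper's own proof proceeds in two stages: it re-derives the SDE with the scaled hyperparameters $(\kappa\eta,\delta B,\alpha N)$, then re-derives the PL convergence bound of Theorem~\ref{thm:DSGD_Convergence} for that SDE, and finally imposes that the resulting (asymptotic) bound be independent of $\kappa,\alpha,\delta$, which yields $\kappa/(\alpha\delta)=1$. You instead match the SDE coefficients directly: since $(\eta,B,N)$ enter only through the diffusion prefactor $\eta/(NB)$, invariance of that dimensionless group makes the two processes equal in law, so \emph{every} trajectory functional (expected loss, gradient norm, stationary distribution) is preserved, not just one particular upper bound. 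This is a strictly stronger conclusion, and your necessity argument via the Lyapunov equation on the quadratic model is a worthwhile addition that the paper omits. The trade-off is that exact law-level invariance is special to DSGD: for DCSGD and DSignSGD the compression term prevents the coefficients from matching exactly, so the paper's bound-matching strategy is the one that generalizes to the later scaling rules (Propositions~\ref{prop:DCSGD_RecoverLaws_Main} and~\ref{prop:DSignSGD_ScalingLaws_Main}). Your handling of the time-parameterization issue is also the right one to flag: under the paper's convention (visible in the proof of Proposition~\ref{prop:DCSGD_RecoverLaws}) the scaled drift is $-\kappa\nabla f$ and the transient decay becomes $e^{-2\mu\kappa t}$, so the two runs agree only after the time change $t\mapsto\kappa t$; fixing the comparison at matched continuous time, as you do, is what makes the full statement clean.
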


\vspace{-0.1cm}

%%%%%%%%%%%%%%%%%%%%%%%%%%%%%%%%%%%%%%%%%%%%%%%%%%%%%%%%%%%%%%%%%%%%%%%%%%%%
\subsection{Distributed Compressed SGD}\label{sec:DCSGD}
%%%%%%%%%%%%%%%%%%%%%%%%%

Next, we study Distributed Compressed SGD whose update rule has a form as
\begin{equation} \label{eq:DCSGD_Discr_Update}
    x_{k+1} = x_{k} - \frac{\eta}{N}  \sum_{i=1}^{ N} \mathcal{C}_{\xi_i} \left( \nabla f_{\gamma_i} (x_k) \right),
\end{equation}
where the stochastic compressors $\mathcal{C}_{\xi_i}$ are independent for different $i$ and satisfy $(i)$ $\E_{\xi_i} \left[ \mathcal{C}_{\xi_i} (x) \right] = x$ and $(ii)$ $\E_{\xi_i} \left[ \lVert \mathcal{C}_{\xi_i} (x) - x \rVert_2^2 \right] \leq \omega_i  \lVert x \rVert_2^2$ for some compression rates $\omega_i\ge 0$. The following result shows the SDE of DCSGD, which we believe to be a novel addition to the literature and reveals the unique manner in which gradient noise and \textit{unbiased} compression influence the dynamics of DCSGD --- See Figure~\ref{fig:SDEs} for its validation on a simple landscape and MLP training with \textit{Rand-$k$}.

\begin{theorem}[Informal Statement of Theorem \ref{thm:DCSGD_SDE}] 
\label{thm:DCSGD_Theorem}
The SDE of DCSGD is
\begin{equation}
    d X_t = - \nabla f(X_t) dt + \sqrt{\frac{\eta}{N}} \sqrt{\Tilde{\Sigma}(X_t)} dW_t,
\end{equation}
where for $\Phi_{\xi_i,\gamma_i}(x) := \mathcal{C}_{\xi_i} \left( \nabla f_{\gamma_i} (x) \right) - \nabla f_{\gamma_i}(x)$
\small 
\begin{equation}
    \Tilde{\Sigma}(x) = \frac{1}{ N} \sum_{i=1}^{ N} \left( \E_{\xi_i \gamma_i} \left[ \Phi_{\xi_i,\gamma_i}(x)\Phi_{\xi_i,\gamma_i}(x)^{\top} \right] + \Sigma_i(x) \right).
\end{equation}
\end{theorem}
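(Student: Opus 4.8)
The plan is to follow the now-standard weak-approximation recipe of \cite{li2017stochastic, Malladi2022AdamSDE}: identify the one-step conditional mean and covariance of the discrete update \eqref{eq:DCSGD_Discr_Update}, match them to the drift and diffusion coefficients of a candidate SDE up to the required order in $\eta$, and then invoke the general weak-approximation theorem (the moment-matching lemma in \cite{li2017stochastic}) to conclude that the candidate process is an order-$1$ weak approximation in the sense of Definition~\ref{def:weak_approximation}. Concretely, write the one-step increment $\Delta x_k = x_{k+1} - x_k = -\tfrac{\eta}{N}\sum_{i=1}^N \mathcal{C}_{\xi_i}(\nabla f_{\gamma_i}(x_k))$ and decompose each compressed gradient as $\mathcal{C}_{\xi_i}(\nabla f_{\gamma_i}(x_k)) = \nabla f_{\gamma_i}(x_k) + \Phi_{\xi_i,\gamma_i}(x_k)$, so that $\Delta x_k = -\tfrac{\eta}{N}\sum_i \nabla f(x_k) - \tfrac{\eta}{N}\sum_i Z_i(x_k) - \tfrac{\eta}{N}\sum_i \Phi_{\xi_i,\gamma_i}(x_k)$.

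First I would compute $\E[\Delta x_k \mid x_k]$. Using unbiasedness $(i)$ of the compressor, $\E_{\xi_i}[\Phi_{\xi_i,\gamma_i}(x_k)\mid \gamma_i] = 0$, hence $\E[\Phi_{\xi_i,\gamma_i}(x_k)\mid x_k]=0$; combined with $\E[Z_i(x_k)]=0$ this gives $\E[\Delta x_k\mid x_k] = -\eta\,\nabla f(x_k)$, matching a drift $-\nabla f$. Next I would compute the conditional second moment $\E[\Delta x_k \Delta x_k^\top \mid x_k]$. The leading term is $\eta^2$ times $\tfrac{1}{N^2}\sum_{i,j}\E[(Z_i+\Phi_i)(Z_j+\Phi_j)^\top\mid x_k]$ plus an $O(\eta^2)$ deterministic rank-one piece from the drift. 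By the independence of the noises $Z_i\perp Z_j$ and of the compressors $\xi_i\perp\xi_j$ across agents (stated in the hypotheses), and the fact that each has conditional mean zero, all cross terms $i\neq j$ vanish, as do the cross terms between $Z_i$ and $\Phi_i$ after one more application of $\E_{\xi_i}[\Phi_{\xi_i,\gamma_i}\mid\gamma_i]=0$ — note $\Phi_{\xi_i,\gamma_i}(x_k)$ depends on $\gamma_i$ but its $\xi_i$-conditional mean is zero, which kills $\E_{\xi_i,\gamma_i}[Z_i\Phi_i^\top]$ since $Z_i$ is $\gamma_i$-measurable. What survives is $\eta^2\cdot\tfrac{1}{N^2}\sum_{i=1}^N\bigl(\E_{\xi_i,\gamma_i}[\Phi_{\xi_i,\gamma_i}\Phi_{\xi_i,\gamma_i}^\top] + \Sigma_i(x_k)\bigr) = \tfrac{\eta^2}{N}\tilde\Sigma(x_k)$, which matches the diffusion coefficient $\sqrt{\tfrac{\eta}{N}}\sqrt{\tilde\Sigma}$ (so that $(\text{diffusion})(\text{diffusion})^\top\cdot\eta = \tfrac{\eta^2}{N}\tilde\Sigma$, consistent with the $\Delta t=\eta$ convention). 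Higher moments $\E[\|\Delta x_k\|^{2m}\mid x_k]$ are $O(\eta^{m+1})$ or smaller by the bounded-moment assumptions on $\sqrt{\Sigma_i}$, the second-moment compression bound $(ii)$, and smoothness of $f$, which is exactly the technical condition the weak-approximation theorem needs.

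The main obstacle, as usual in this line of work, is verifying the regularity hypotheses of the weak-approximation theorem for the new diffusion coefficient $\sqrt{\tilde\Sigma(x)}$ — namely that $\tilde\Sigma$ and its square root are smooth, of at most polynomial growth together with their derivatives, Lipschitz, and satisfy affine growth, so that the candidate SDE is well-posed and the moment bounds propagate. This is delicate because $\tilde\Sigma$ now contains the compression term $\E_{\xi_i,\gamma_i}[\Phi_{\xi_i,\gamma_i}(x)\Phi_{\xi_i,\gamma_i}(x)^\top]$, whose $x$-dependence is inherited from the (random) compressor applied to $\nabla f_{\gamma_i}(x)$; the bound $(ii)$ controls its trace by $\omega_i\|\nabla f_{\gamma_i}(x)\|_2^2$, but promoting this to the required Lipschitz/polynomial-growth control on $\sqrt{\tilde\Sigma}$ demands an extra structural assumption on the compressors (e.g. that the per-coordinate selection probabilities vary smoothly, or working with a fixed canonical compressor such as Rand-$k$) together with the smoothness of $x\mapsto\nabla f_{\gamma_i}(x)$ already assumed. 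I would handle this by stating the needed regularity as an explicit hypothesis in the formal Theorem~\ref{thm:DCSGD_SDE} (mirroring Definition~2.5 of \cite{Malladi2022AdamSDE}), verifying it for the standard compressors used in the experiments, and then the rest of the argument is the routine moment-matching bookkeeping sketched above. The quadratic-landscape specialization $f(x)=\tfrac12 x^\top H x$ used later for the stationary distribution makes all these conditions automatic, since then $\nabla f_{\gamma_i}$ is affine.
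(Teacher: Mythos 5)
Your proposal is correct and follows essentially the same route as the paper: decompose the update via $\Phi_{\xi_i,\gamma_i}$, use unbiasedness of the compressor and the tower property to get the drift $-\nabla f$, use independence across agents and the $\xi_i$-conditional-mean-zero property of $\Phi_{\xi_i,\gamma_i}$ to kill all cross terms in the one-step covariance (leaving $\tfrac{\eta^2}{N}\tilde\Sigma$), and then invoke the moment-matching theorems of \cite{li2017stochastic,mil1986weak}. The regularity issue you flag for $\sqrt{\tilde\Sigma}$ is exactly what the paper addresses through its stated noise-density and bounded-domain assumptions (Lemmas~\ref{NoiseCondition}--\ref{CorollaryNoise_DCSGD} and Remark~\ref{remark:DCSGD}), matching your suggestion to make the needed regularity an explicit hypothesis.
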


The covariance matrix for DCSGD consists of the covariance matrix of DSGD plus an additional component due to the compression, e.g. for $k$-Sparsification, { \small $\Tilde{\Sigma}(x) = \left( \frac{d}{k} - 1  \right)\left(\nabla f(x)\nabla f(x)^{\top} +  \hat{\Sigma}(x)\right) + \hat{\Sigma}(x)$}. 

We derive convergence rates for the loss value and gradient norm by leveraging the SDE derived in Theorem~\ref{thm:DCSGD_Theorem}: These recover the best known results in the literature \citep{khirirat2018distributed, li2020unified,PD2025}, thus testifying that SDEs provide the community with a powerful alternative technique that allows for a \textit{precise analysis} of optimizers. We start with the convergence guarantees for PL functions. 

\begin{theorem}\label{thm:DCSGD_Convergence}
    If $f$ is $\mu$-PL, $L$-smooth, $\overline{\omega} = \frac{\sum_{i=1}^{N} \omega_i}{N} $,  $ \tr(\Sigma_i(x)) < \mathcal{L}_{\sigma_i}$, $\overline{\mathcal{L}}_{\sigma} := \frac{ \sum_{i=1}^{N} \mathcal{L}_{\sigma_i}}{N}$, $\overline{\omega \mathcal{L}_\sigma}:=\frac{\sum_{i=1}^{N} \omega_i \mathcal{L}_{\sigma_i}}{N} $, $S_t:=f(X_t) - f(X_*)$, and $\Delta:= 1 -  \frac{\eta L^2 \overline{\omega}}{2 \mu N}$, then
\small
\begin{equation}
     \E \left[ S_t\right] \leq S_0 \textcolor{PineGreen}{e^{- 2 \mu \Delta t}} + \left(1 -  \textcolor{PineGreen}{e^{- 2 \mu \Delta t}}\right) \frac{\eta L \left( \overline{\mathcal{L}}_{\sigma} + \textcolor{purple}{\overline{\omega \mathcal{L}_\sigma}} \right)}{4 \mu B N \Delta}.
\end{equation}
\end{theorem}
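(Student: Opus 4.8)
The plan is to run a Lyapunov-type argument on the continuous-time surrogate $S_t := f(X_t) - f(X_*)$ driven by It\^o's formula along the SDE of Theorem~\ref{thm:DCSGD_Theorem}, whose diffusion matrix is $\tfrac{\eta}{N}\tilde{\Sigma}(X_t)$. It\^o's lemma gives
\[
dS_t = -\lVert \nabla f(X_t)\rVert_2^2\, dt + \tfrac{\eta}{2N}\,\Tr\big(\nabla^2 f(X_t)\,\tilde{\Sigma}(X_t)\big)\, dt + dM_t ,
\]
where $M_t$ is the stochastic integral against $dW_t$. The growth and boundedness hypotheses on $\sqrt{\Sigma_i}$ and the moment bound $(ii)$ on the compressors make $M_t$ a genuine martingale along the trajectory, so taking expectations eliminates it and leaves the scalar differential inequality
\[
\tfrac{d}{dt}\E[S_t] = -\E\big[\lVert\nabla f(X_t)\rVert_2^2\big] + \tfrac{\eta}{2N}\,\E\big[\Tr\big(\nabla^2 f(X_t)\,\tilde{\Sigma}(X_t)\big)\big].
\]

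Next I would bound the drift term $-\E[\lVert\nabla f\rVert_2^2]$ with the PL inequality $\lVert\nabla f(x)\rVert_2^2 \ge 2\mu\,S(x)$, and control the trace term with $L$-smoothness in operator form, $\nabla^2 f(x)\preceq L\,I$, combined with $\tilde{\Sigma}(x)\succeq 0$, so that $\Tr(\nabla^2 f\,\tilde{\Sigma})\le L\,\Tr(\tilde{\Sigma})$. I then split $\Tr(\tilde{\Sigma}(x))$ along the two summands of $\tilde{\Sigma}$: the noise part $\tfrac1N\sum_i\Sigma_i(x)$ contributes at most $\overline{\mathcal{L}}_{\sigma}$ (carrying the $1/B$ batch scaling that is explicit in the formal statement) via $\Tr(\Sigma_i(x))<\mathcal{L}_{\sigma_i}$; for the compression part, property $(ii)$ gives $\E_{\xi_i}\lVert\Phi_{\xi_i,\gamma_i}(x)\rVert_2^2\le\omega_i\,\lVert\nabla f_{\gamma_i}(x)\rVert_2^2$, and averaging over $\gamma_i$ (using $\E_{\gamma_i}[\nabla f_{\gamma_i}(x)]=\nabla f(x)$) turns the right-hand side into $\omega_i\big(\lVert\nabla f(x)\rVert_2^2 + \Tr(\Sigma_i(x))/B\big)$, so the compression contribution to $\Tr(\tilde{\Sigma})$ is at most $\overline{\omega}\,\lVert\nabla f(x)\rVert_2^2 + \overline{\omega \mathcal{L}_\sigma}/B$. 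The leftover $\lVert\nabla f(x)\rVert_2^2$ is absorbed back into $S$ by invoking $L$-smoothness a \emph{second} time, in gradient-domination form $\lVert\nabla f(x)\rVert_2^2\le 2L\, S(x)$; this is precisely the step that produces the $L^2$ in $\Delta$.

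Collecting the estimates yields $\tfrac{d}{dt}\E[S_t]\le -\big(2\mu-\tfrac{\eta L^2\overline{\omega}}{N}\big)\E[S_t] + \tfrac{\eta L(\overline{\mathcal{L}}_{\sigma}+\overline{\omega \mathcal{L}_\sigma})}{2BN}$, i.e. $\tfrac{d}{dt}\E[S_t]\le -2\mu\Delta\,\E[S_t] + \tfrac{\eta L(\overline{\mathcal{L}}_{\sigma}+\overline{\omega \mathcal{L}_\sigma})}{2BN}$ with $\Delta=1-\tfrac{\eta L^2\overline{\omega}}{2\mu N}$. Assuming $\eta$ is small enough that $\Delta>0$, Gr\"onwall's inequality with integrating factor $e^{2\mu\Delta t}$ gives $\E[S_t]\le S_0\, e^{-2\mu\Delta t} + (1-e^{-2\mu\Delta t})\,\tfrac{\eta L(\overline{\mathcal{L}}_{\sigma}+\overline{\omega \mathcal{L}_\sigma})}{4\mu BN\Delta}$, which is the claim. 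I expect the main obstacle to be bookkeeping rather than any single estimate: one must route the two distinct applications of $L$-smoothness so that the compression overhead $\overline{\omega}$ enters the \emph{contraction rate} through $\Delta$ (instead of merely inflating the asymptotic noise floor), and one must separately verify the mild regularity guaranteeing that $M_t$ is a true martingale and that $S_t$ and $\nabla^2 f(X_t)$ stay integrable along the flow — which is exactly where the quadratic/polynomial-growth assumptions on $\sqrt{\Sigma_i}$ are used.
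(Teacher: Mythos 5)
Your proposal is correct and follows essentially the same route as the paper: Itô's lemma on $S_t$, the PL inequality for the drift, $L$-smoothness to bound $\tr(\nabla^2 f\,\tilde{\Sigma})\le L\tr(\tilde{\Sigma})$, the compressor property $(ii)$ together with $\E_{\gamma_i}\lVert\nabla f_{\gamma_i}\rVert_2^2=\lVert\nabla f\rVert_2^2+\tr(\Sigma_i)/B$ to split the trace into the $\overline{\omega}\lVert\nabla f\rVert_2^2$ term (absorbed into $\Delta$ via $\lVert\nabla f\rVert_2^2\le 2LS$) and the $\overline{\mathcal{L}}_\sigma+\overline{\omega\mathcal{L}_\sigma}$ noise floor, and finally Grönwall. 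No substantive differences from the paper's argument.
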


The next theorem offers a \textit{new} and \textit{general} condition on the learning rate scheduler to ensure the convergence of DCSGD for the general non-convex case.
\begin{theorem}
    If $f$ is $L$-smooth and the learning rate scheduler $\eta_t$ is such that $\phi^i_t = \int_0^t (\eta_s)^i ds$, $\phi^1_t \overset{t \rightarrow \infty}{\rightarrow} \infty$, $\frac{\phi^2_t}{\phi^1_t} \overset{t \rightarrow \infty}{\rightarrow} 0$, $\eta_t < \frac{2 N}{\eta L \overline{\omega}}$, and $S_0:=f(X_0) - f(X_*)$ then, $\E \left[\lVert \nabla f(X_{\Tilde{t}}) \rVert_2^2 \right]$ is smaller than
\begin{equation}
    \frac{1}{ 1  - \frac{ \eta L \overline{\omega}}{2 N} \frac{\phi^2_t}{\phi^1_t} } \left( \frac{S_0}{\phi^1_t}  + \frac{\phi^2_t}{\phi^1_t} \frac{\eta L \left( \overline{\mathcal{L}}_{\sigma} +  \textcolor{purple}{\overline{\omega \mathcal{L}_\sigma}} \right)}{2 B N}  \right) \overset{t \rightarrow \infty}{\rightarrow} 0,
\end{equation}
where $\Tilde{t}$, is a random time with distribution $\frac{\eta_{\Tilde{t}} - \frac{\eta L \overline{\omega}}{2 N} (\eta_{\Tilde{t}})^2}{\phi^1_t - \frac{\eta L \overline{\omega}}{2 N} \phi^2_t}$.
\end{theorem}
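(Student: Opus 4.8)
The plan is to mimic the proof of Theorem~\ref{thm:DSGD_Convergence_Smooth} but carry along the extra compression term in $\tilde\Sigma$ from Theorem~\ref{thm:DCSGD_Theorem}. First I would apply Itô's formula to the function $t \mapsto f(X_t)$ along the DCSGD SDE with a learning-rate scheduler $\eta_t$ (so the drift is $-\eta_t \nabla f(X_t)$ and the diffusion coefficient is $\sqrt{\eta_t^2/N}\,\sqrt{\tilde\Sigma(X_t)}$, the usual rescaling that turns a stepsize-$\eta$ SDE into a scheduler SDE). This yields
\begin{equation}
  d f(X_t) = -\eta_t \lVert \nabla f(X_t) \rVert_2^2 \, dt + \tfrac{\eta_t^2}{2N} \Tr\!\left( \nabla^2 f(X_t)\, \tilde\Sigma(X_t) \right) dt + d(\text{martingale}).
\end{equation}
Taking expectations kills the martingale term. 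Then I would bound the trace term: by $L$-smoothness $\nabla^2 f \preceq L I$, so $\Tr(\nabla^2 f\,\tilde\Sigma) \le L\,\Tr(\tilde\Sigma)$, and from the expression for $\tilde\Sigma$ in Theorem~\ref{thm:DCSGD_Theorem} one has $\Tr(\tilde\Sigma(x)) = \frac1N\sum_i \big(\E\lVert\Phi_{\xi_i,\gamma_i}(x)\rVert_2^2 + \Tr(\Sigma_i(x))\big)$. Using the unbiasedness and the $\omega_i$-bound on the compressors together with $\E\lVert\nabla f_{\gamma_i}\rVert_2^2 = \lVert\nabla f\rVert_2^2 + \Tr(\Sigma_i/B)$ (batch averaging), $\E\lVert\Phi_{\xi_i,\gamma_i}\rVert_2^2 \le \omega_i(\lVert\nabla f\rVert_2^2 + \Tr(\Sigma_i)/B)$; so $\Tr(\tilde\Sigma(x)) \le \overline\omega \lVert\nabla f(x)\rVert_2^2 + \frac1B(\overline{\mathcal L}_\sigma + \overline{\omega\mathcal L_\sigma})$ after using $\Tr(\Sigma_i)<\mathcal L_{\sigma_i}$.

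Substituting back, the key inequality becomes
\begin{equation}
  \frac{d}{dt}\E f(X_t) \le -\left(\eta_t - \frac{\eta L \overline\omega}{2N}\eta_t^2\right)\E\lVert \nabla f(X_t)\rVert_2^2 + \frac{\eta_t^2 L}{2BN}\left(\overline{\mathcal L}_\sigma + \overline{\omega\mathcal L_\sigma}\right),
\end{equation}
where the spurious-looking $\eta$ inside $\frac{\eta L\overline\omega}{2N}$ is the fixed reference stepsize baked into the SDE's diffusion scale (consistent with how the statement is phrased). Integrating from $0$ to $t$, using $\int_0^t\eta_s\,ds = \phi^1_t$ and $\int_0^t\eta_s^2\,ds = \phi^2_t$ and $f(X_*)\le \E f(X_t)$, I would rearrange to isolate $\int_0^t(\eta_s - \frac{\eta L\overline\omega}{2N}\eta_s^2)\E\lVert\nabla f(X_s)\rVert_2^2\,ds$. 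The condition $\eta_t < \frac{2N}{\eta L\overline\omega}$ guarantees the coefficient $\eta_s - \frac{\eta L\overline\omega}{2N}\eta_s^2$ is positive, so dividing by $\phi^1_t - \frac{\eta L\overline\omega}{2N}\phi^2_t$ interprets the left side as $\E\lVert\nabla f(X_{\tilde t})\rVert_2^2$ for $\tilde t$ drawn with the stated density $\frac{\eta_{\tilde t} - \frac{\eta L\overline\omega}{2N}\eta_{\tilde t}^2}{\phi^1_t - \frac{\eta L\overline\omega}{2N}\phi^2_t}$. A final algebraic step — dividing numerator and denominator by $\phi^1_t$ — produces the displayed bound, and the limit claim follows from $\phi^1_t\to\infty$, $\phi^2_t/\phi^1_t\to 0$.

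I expect the main obstacle to be bookkeeping rather than anything deep: precisely tracking how the batch size $B$ enters $\E\lVert\Phi_{\xi_i,\gamma_i}\rVert_2^2$ (the compressor acts on the already-batch-averaged gradient, so the variance reduction by $B$ must be threaded through the $\omega_i$-inequality correctly), and getting the two distinct averages $\overline{\mathcal L}_\sigma$ versus $\overline{\omega\mathcal L_\sigma}$ to come out exactly as in the statement — one must not carelessly bound $\omega_i \Tr(\Sigma_i)$ by $\overline\omega\,\overline{\mathcal L}_\sigma$, which would be weaker and wrong. A secondary technical point is justifying that the Itô martingale term has zero expectation, which rests on the boundedness/polynomial-growth assumptions on $\sqrt{\Sigma_i}$ and the compressor moments stated in the assumptions block; I would invoke those exactly as in the proof of Theorem~\ref{thm:DSGD_Convergence_Smooth}. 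The positivity condition on $\eta_t$ and the well-definedness of $\tilde t$'s density are immediate once that condition is in place.
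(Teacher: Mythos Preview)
Your proposal is correct and follows essentially the same route as the paper's proof: apply It\^o with the scheduler, bound the trace of $\tilde\Sigma$ via $L$-smoothness and the compressor inequality to obtain $\overline\omega\,\lVert\nabla f\rVert_2^2 + (\overline{\mathcal L}_\sigma + \overline{\omega\mathcal L_\sigma})/B$, rearrange, integrate, and invoke the random-time/LOTUS interpretation with density $\frac{\eta_s - \frac{\eta L\overline\omega}{2N}\eta_s^2}{\phi^1_t - \frac{\eta L\overline\omega}{2N}\phi^2_t}$. Your caution about keeping $\overline{\omega\mathcal L_\sigma}$ rather than the looser $\overline\omega\cdot\overline{\mathcal L}_\sigma$ exactly matches the paper's own remark on tightness, and your tracking of the batch size $B$ is in fact slightly more careful than the paper's appendix version.
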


\begin{figure*}%
    \vspace{0.5cm}
    \begin{tabular}{cc}
    
    \hspace{-0.3cm}\includegraphics[width=0.53\linewidth]{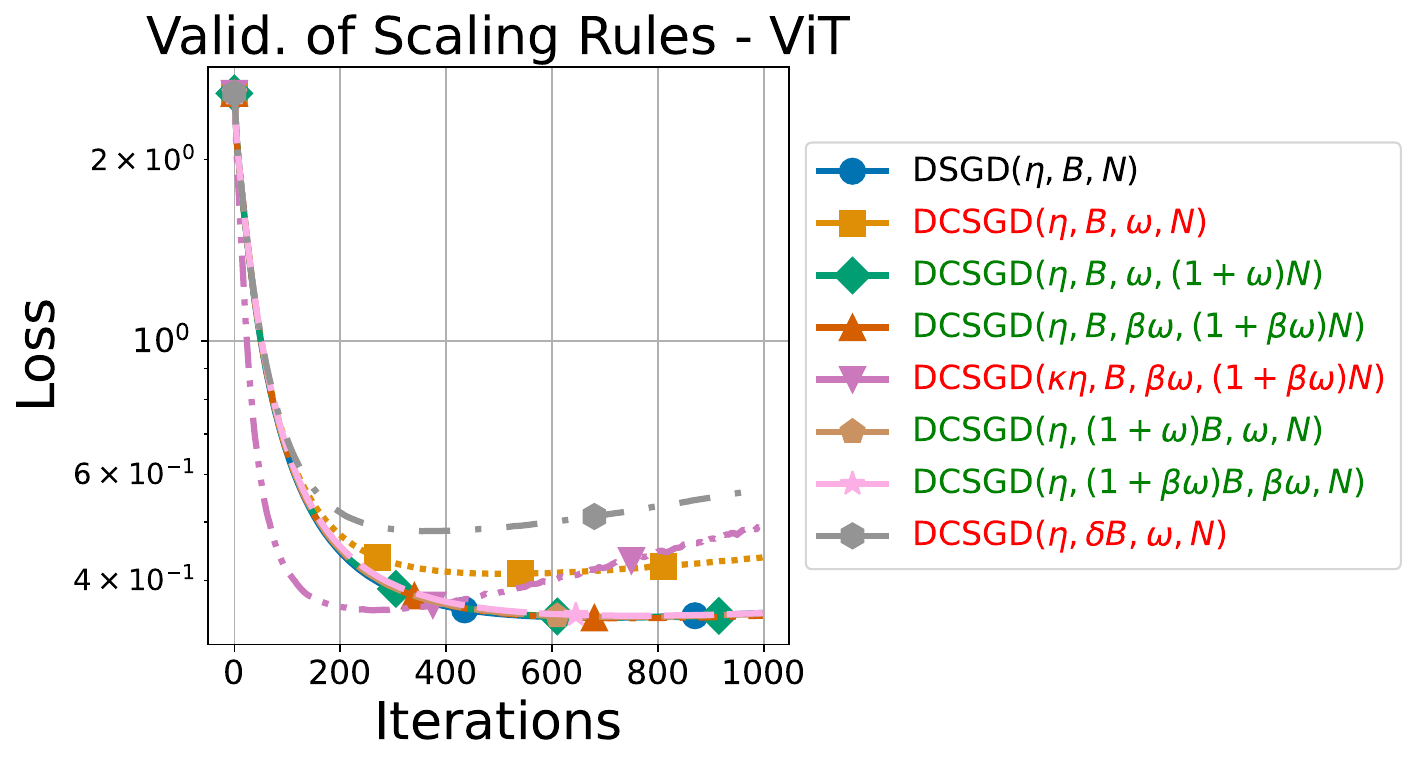}  & 
        \hspace{-0.4cm}
       \includegraphics[width=0.53\linewidth]{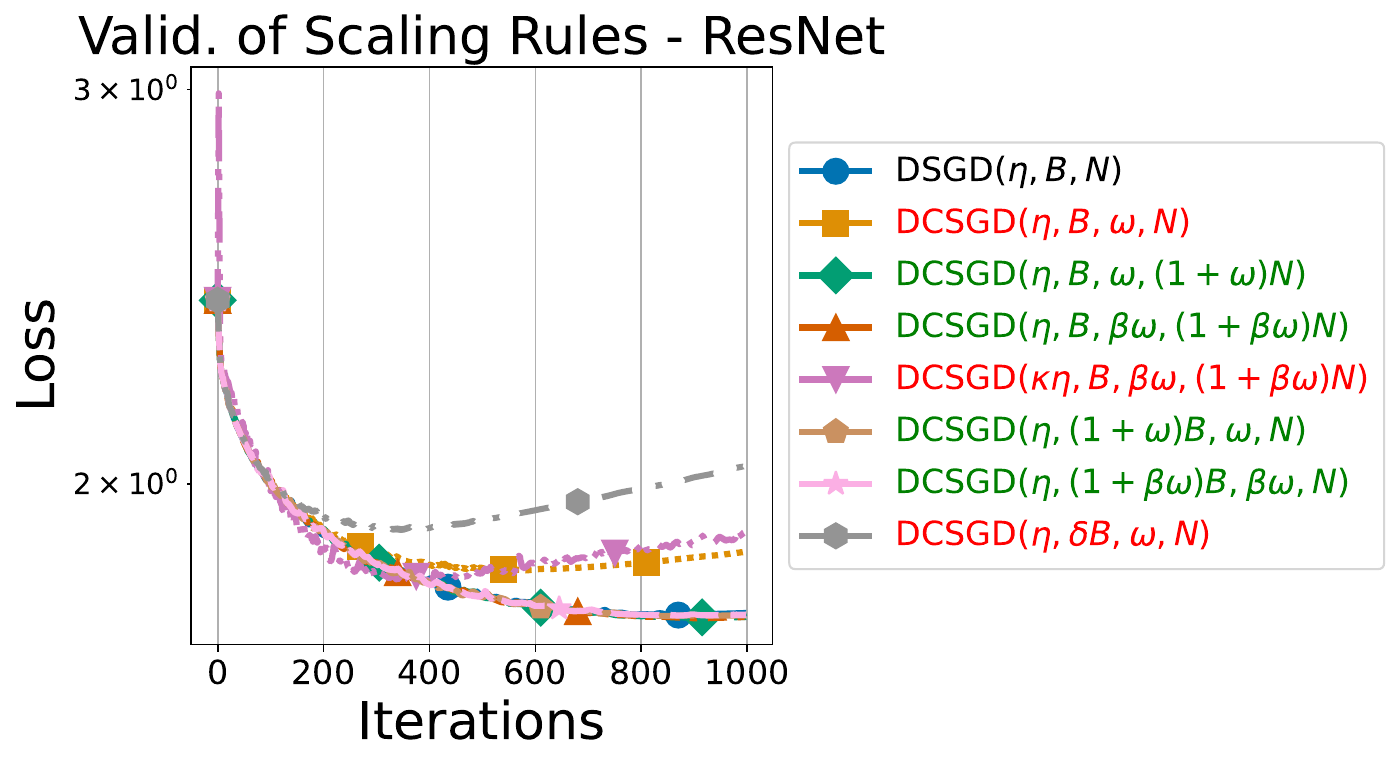}
    \end{tabular}
    \vspace{0.1cm}
    \caption{Validation of Scaling Rules: Consistently with Prop.~\ref{prop:DCSGD_RecoverLaws_Main},  DCSGD run with hyperparameters that follow the scaling rules listed in Table \ref{tab:DCSGD_ScalingLaws} (marked in green in the legends) recover the performance of DSGD$(\eta, B, N)$. Those that do not (marked in red) fail to do so. On the left, we plot the training loss of a ViT for \textit{some} rules while on the right we do the same for a ResNet. Details are in Appendix \ref{app:Exper}.
    }%
    \label{fig:DCSGD_ScalLaw}%
\end{figure*}

\paragraph{Observations on convergence guarantees:}
\begin{enumerate}
    \item The \textcolor{PineGreen}{decay $e^{- 2 \mu \Delta t}$} of DCSGD is strictly slower than that of DSGD: $\Delta$ crucially depends on the average rate of compression $\overline{\omega}$, the condition number, and the number of agents. Specifically, \textbf{larger compression implies a slower convergence in comparison with DSGD}, which is exacerbated for ill-conditioned landscapes;
    \item $\Delta$ needs to be positive to ensure convergence, which imposes limitations on the hyperparameters. For example, $\eta < \frac{2 \mu N}{L^2 \overline{\omega}}$: More agents allow for a larger learning rate, but a larger compression rate or an ill-conditioned landscape restricts it. \textbf{Violating such prescriptions might lead to divergence} (See left of Figure~\ref{fig:DCSGD_Insights_App} for empirical validation);
    \item DCSGD enjoys a linear speedup: The asymptotic loss level in Theorem~\ref{thm:DCSGD_Convergence}
    \textbf{scales inversely to the number of agents $N$} and has an \textcolor{purple}{additional term $\overline{\omega \mathcal{L}_\sigma}$} w.r.t. DSGD (Theorem~\ref{thm:DSGD_Convergence}) which quantifies the nonlinear interaction between the compression and gradient noise. See the center-left and -right of Figure \ref{fig:DCSGD_Insights_App} for empirical validation).
\end{enumerate}

\subsection*{Scaling Rules: Recovering DSGD Performance}

\vspace{0.2cm}

As previously noted, when using the same learning rate $\eta$, batch size $B$, and $N$ agents, DCSGD is slower and less optimal than DSGD. To address this, we propose deriving novel, actionable, and interpretable scaling rules to adjust the hyperparameters of DCSGD to \textit{recover} the asymptotic performance of DSGD. The following result shows these rules under the simplifying assumption that $\mathcal{L}_{\sigma_{i}} = \mathcal{L}_{\sigma} $, $\omega_i=\omega$, for $i\in[N]$, and $N \gg 1$. We defer to Proposition~\ref{prop:DCSGD_RecoverLaws} for the general cases. We validate some of these rules in Figure \ref{fig:DCSGD_ScalLaw} for a ViT and a ResNet. See Appendix \ref{ScalingRulesGPT2} for the validation on a 124M GPT2 model.

\vspace{0.2cm}

\begin{proposition}\label{prop:DCSGD_RecoverLaws_Main}
Let DCSGD$(\kappa \eta, \delta B, \beta \omega, \alpha N)$ run with batch size $ \delta B$, learning rate $\kappa \eta$, compression rates $\beta \omega$, and $\alpha N$ agents. Table \ref{tab:DCSGD_ScalingLaws} shows scaling rules to recover the asymptotic performance of DSGD$(\eta, B, N)$:\footnote{For practical reasons, we only show those involving two hyperparameters while the other two are kept constant.}

\vspace{0.2cm}

\begin{table}[!ht]
    \centering
    \begin{tabular}{|c|c|}
        \hline
        \textbf{Scaling Rule} & \textbf{Implication}\\
        \hline
        $\alpha = 1 + \beta \omega$ & CR $\uparrow \implies$ Agents $\uparrow$\\
        \hline
        $\alpha = \kappa(1 + \omega)$ & LR $\uparrow \implies$ Agents $\uparrow$\\
        \hline
        $\alpha = \frac{1 + \omega}{\delta}$ & BS $\downarrow \implies$ Agents $\uparrow$\\
        \hline
        $\kappa = \frac{1}{1 + \beta \omega}$ & CR $\uparrow \implies$ LR $\downarrow$\\
        \hline
        $\delta = 1 + \beta \omega$ & CR $\uparrow \implies$ BS $\uparrow$\\
        \hline
        $\kappa = \frac{\delta}{1 +  \omega}$ & BS $\uparrow \implies$ LR $\uparrow$\\
        \hline
    \end{tabular}
    \vspace{0.2cm}
    \caption{Scaling Rules to \textit{recover} DSGD performance. For example, compression can be countered by increasing the number of agents (CR = Compression Rate, LR = Learning Rate, and BS = Batch Size).}
    \label{tab:DCSGD_ScalingLaws}
\end{table}

\end{proposition}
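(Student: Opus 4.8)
The plan is to obtain all six rules from a single \emph{master identity} produced by matching the asymptotic loss levels predicted by the SDE analysis. First I would start from the bound of Theorem~\ref{thm:DCSGD_Convergence}: under the stated simplifications $\mathcal{L}_{\sigma_i}=\mathcal{L}_\sigma$ and $\omega_i=\omega$ we get $\overline{\mathcal{L}}_{\sigma}=\mathcal{L}_\sigma$, $\overline{\omega}=\omega$, and $\overline{\omega\mathcal{L}_\sigma}=\omega\mathcal{L}_\sigma$, so that the stationary loss level of DCSGD$(\eta,B,\omega,N)$ is
$$\mathcal{A}_{\mathrm{DCSGD}}=\frac{\eta L\,\mathcal{L}_\sigma(1+\omega)}{4\mu BN\,\Delta},\qquad \Delta=1-\frac{\eta L^2\omega}{2\mu N},$$
whereas Theorem~\ref{thm:DSGD_Convergence} gives the reference level $\mathcal{A}_{\mathrm{DSGD}}=\frac{\eta L\,\mathcal{L}_\sigma}{4\mu BN}$ for DSGD$(\eta,B,N)$.

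Next I would invoke the regime $N\gg 1$: since $\Delta = 1 - \mathcal{O}(1/N)\to 1$, both the decay exponent $2\mu\Delta t$ and the denominator factor $\Delta$ converge to their DSGD counterparts, so the DCSGD bound collapses to $\mathcal{A}_{\mathrm{DCSGD}}\simeq \frac{\eta L\mathcal{L}_\sigma(1+\omega)}{4\mu BN}$ — exactly the DSGD level inflated by the factor $(1+\omega)$ coming from the extra noise term $\overline{\omega\mathcal{L}_\sigma}$ that the compression contributes to $\tilde\Sigma$ in Theorem~\ref{thm:DCSGD_Theorem}. Substituting the scaled hyperparameters $(\kappa\eta,\delta B,\beta\omega,\alpha N)$ into this expression and equating it to $\mathcal{A}_{\mathrm{DSGD}}(\eta,B,N)$, the common factor $\frac{\eta L\mathcal{L}_\sigma}{4\mu BN}$ cancels and one is left with the single relation
$$\frac{\kappa\,(1+\beta\omega)}{\alpha\,\delta}=1 .$$
Each row of Table~\ref{tab:DCSGD_ScalingLaws} is then this identity with two of the four multipliers frozen to $1$: $\kappa=\delta=1$ gives $\alpha=1+\beta\omega$; $\delta=\beta=1$ gives $\alpha=\kappa(1+\omega)$; $\kappa=\beta=1$ gives $\alpha=\frac{1+\omega}{\delta}$; $\alpha=\delta=1$ gives $\kappa=\frac{1}{1+\beta\omega}$; $\alpha=\kappa=1$ gives $\delta=1+\beta\omega$; and $\alpha=\beta=1$ gives $\kappa=\frac{\delta}{1+\omega}$.

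The one delicate point is the passage to $N\gg 1$. I must check that $\Delta$ — for both the original run and the scaled run — stays bounded away from $0$, which is exactly the learning-rate admissibility condition $\eta<2\mu N/(L^2\omega)$ already required in Theorem~\ref{thm:DCSGD_Convergence} for the bound to be informative, and that the $\mathcal{O}(1/N)$ gap between $\Delta$ and $1$ is subleading relative to the quantities being matched, so that the rules hold exactly in the stated asymptotic sense and only approximately for moderate $N$. I would also remark that, because $\Delta\to1$, the transient decay rate $e^{-2\mu\Delta t}$ is matched simultaneously with the asymptotic level, so the rules preserve the entire bound rather than merely its limit; dropping the simplifying assumptions only amounts to carrying the agent-wise averages $\overline{\mathcal{L}}_{\sigma}$, $\overline{\omega}$, $\overline{\omega\mathcal{L}_\sigma}$ through the identical computation, which is the content of Proposition~\ref{prop:DCSGD_RecoverLaws}.
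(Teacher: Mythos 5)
Your proof is correct and follows essentially the same route as the paper: Proposition~\ref{prop:DCSGD_RecoverLaws} equates the asymptotic loss level of DCSGD$(\kappa\eta,\delta B,\beta\omega,\alpha N)$ obtained from the It\^o computation behind Theorem~\ref{thm:DCSGD_Convergence} with that of DSGD$(\eta,B,N)$, and the table is exactly the $N\gg1$, homogeneous-agent specialization of the resulting relations; your observation that all six rows collapse to the single master identity $\kappa(1+\beta\omega)/(\alpha\delta)=1$ is a clean repackaging of what the paper derives case by case. One side remark is inaccurate, however: the transient decay of the scaled run is $e^{-\kappa\left(2\mu-\eta\kappa L^2\overline{\omega}\beta/(\alpha N)\right)t}$, so the exponent carries a factor $\kappa$ even after $\Delta\to1$, and the rules with $\kappa\neq1$ therefore match only the asymptotic level rather than the entire bound --- which is all the proposition actually claims.
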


\paragraph{Observations on scaling rules:}

\begin{enumerate}
    \item In the absence of compression, the scaling rules all reduce to that of DSGD (See Proposition~\ref{prop:DSGD_ScalingLaws});
    \item For example, to achieve comparable performance between DSGD and DCSGD with compression factor $\omega$, the number of agents can be \textit{increased} to $(1 + \omega)N$. Alternatively, one can further \textit{increase} compression ($\textcolor{OrangeRed}{\beta}>1$) and compensate by \textit{increasing} the number of agents to $(1 + \textcolor{OrangeRed}{\beta}\omega)N$. Similarly, one can \textit{decrease} the learning rate to $\frac{\eta}{1+\omega}$, or \textit{increase} the batch size to $B(1 + \omega)$.
\end{enumerate}

\newpage

%%%%%%%%%%%%%%%%%%%%%%%%%%%%%%%%%%%%%%%%%%%%%%%%%%%%%%%%%%%%%%%%%%%%%%%%%%%%
\subsection{Distributed SignSGD}\label{sec:DSignSGD}
%%%%%%%%%%%%%%%%%%%%%%%%%

Now we turn to derive an SDE for DSignSGD, a \textit{biased} compression method with update rule
\vspace{-0.1cm}
\begin{equation} \label{eq:DSignSGD_Discr_Update}
    x_{k+1} = x_{k} - \frac{\eta}{N}  \sum_{i=1}^{N} \sign(\nabla f_{\gamma_i} (x_k)).
\end{equation}

\vspace{-0.2cm}

This derivation reveals how \textit{sign} compression interacts with the gradient noise in determining the dynamics of DSignSGD. In particular, we focus on the role of the \textbf{level} of the gradient noise, e.g. standard deviation or scale, and of the \textbf{fatness} of the tails of its distribution. See \cite{compagnoni2025adaptive} for a comparison between SignSGD and RMSprop, Adam, and AdamW in the single-node case.

\begin{theorem}[Informal Statement of Theorem \ref{thm:DSignSGD_SDE}]
\label{thm:DSignSGD_Theorem}
The SDE of DSignSGD is
{\small \begin{align}
    d X_t & = - \frac{1}{N} \sum_{i=1}^{N} \left( 1 - 2 \mathbb{P}(\nabla f_{\gamma_i} (X_t) <0) \right) dt + \sqrt{\frac{\eta}{ N}}\sqrt{\overline{\Sigma}(X_t)} dW_t,
\end{align}}\\
where $\overline{\Sigma}(x) := \frac{\sum_{i=1}^{N} \overline{\Sigma_i}(x)}{N} $, $ \overline{\Sigma_i}(x) = \E[\xi_{\gamma_i}(x)\xi_{\gamma_i}(x)^\top]$, and $\xi_{\gamma_i}(x):= \sign (\nabla f_{\gamma_i}(x)) - 1 + 2 \mathbb{P}(\nabla f_{\gamma_i}(x)<0)$ is the noise around $ \sign \left(\nabla f_{\gamma_i}(x) \right)$.
\end{theorem}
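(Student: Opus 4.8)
The plan is to apply the standard weak-approximation framework of \cite{li2017stochastic}, in the form used by \cite{Malladi2022AdamSDE}: to show that the stated process is an order-$1$ weak approximation (Definition~\ref{def:weak_approximation}) of the iteration \eqref{eq:DSignSGD_Discr_Update}, it suffices to (i) compute the low-order conditional moments of the one-step increment of the discrete process, (ii) verify that they agree, up to $O(\eta^2)$, with the corresponding moments of the one-step increment of the candidate SDE, and (iii) check that the SDE coefficients satisfy the regularity conditions (boundedness, Lipschitzness, affine growth, polynomially growing derivatives) needed to invoke the general theorem. Steps (i)--(ii) are essentially immediate here because $\sign(\cdot)$ is bounded; step (iii) is where the real work lies.

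First I would compute the discrete moments. Write $\Delta_k \eqdef x_{k+1}-x_k = -\tfrac{\eta}{N}\sum_{i=1}^N \sign(\nabla f_{\gamma_i}(x_k))$. Conditionally on $x_k$ the batches $\gamma_i$ are independent, and since every $Z_i(x)$ has a density the (mini-batch) gradient $\nabla f_{\gamma_i}(x_k)$ has no atom at $0$ in any coordinate; hence, coordinatewise, $\E[\sign(\nabla f_{\gamma_i}(x_k))] = 1-2\mathbb{P}(\nabla f_{\gamma_i}(x_k)<0)$, so $\E[\Delta_k\mid x_k] = -\tfrac{\eta}{N}\sum_{i=1}^N\big(1-2\mathbb{P}(\nabla f_{\gamma_i}(x_k)<0)\big)$ (note that $B$ enters only implicitly, through the law of $\nabla f_{\gamma_i}$). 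For the second moment, independence across agents together with the fact that $\xi_{\gamma_i}(x)$ is by construction the centered version of $\sign(\nabla f_{\gamma_i}(x))$ gives $\Cov(\Delta_k\mid x_k) = \tfrac{\eta^2}{N^2}\sum_{i=1}^N\E[\xi_{\gamma_i}(x_k)\xi_{\gamma_i}(x_k)^\top] = \tfrac{\eta^2}{N}\overline{\Sigma}(x_k)$. Finally $\|\Delta_k\|_\infty\le\eta$, so every mixed moment of order $\ge 3$ is $O(\eta^3)$.

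Next I would expand the candidate SDE $dX_t = b(X_t)\,dt + \sqrt{\tfrac{\eta}{N}}\sqrt{\overline{\Sigma}(X_t)}\,dW_t$, with $b(x) = -\tfrac1N\sum_i(1-2\mathbb{P}(\nabla f_{\gamma_i}(x)<0))$, via an It\^o--Taylor expansion over $[k\eta,(k+1)\eta]$: this gives $\E[X_{(k+1)\eta}-X_{k\eta}\mid X_{k\eta}] = b(X_{k\eta})\eta + O(\eta^2)$ and $\Cov(X_{(k+1)\eta}-X_{k\eta}\mid X_{k\eta}) = \tfrac{\eta^2}{N}\overline{\Sigma}(X_{k\eta}) + O(\eta^3)$, with higher moments $O(\eta^{3})$. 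Comparing with the previous paragraph, the first moments match exactly (hence up to $O(\eta^2)$), the second moments match up to $O(\eta^2)$, and all higher ones are negligible, so the moment-agreement hypotheses of the master theorem hold, and invoking it yields the claim — provided the coefficients are regular enough.

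I expect step (iii) to be the main obstacle, and it is precisely where the density assumptions of the preamble are used. In coordinate $j$, the drift is $-\tfrac1N\sum_i\big(1-2F_{i,j}(-\partial_j f(x))\big)$, where $F_{i,j}$ is the CDF of the $j$-th component of the (mini-batch) gradient noise of agent $i$; since the noise densities are smooth, bounded, and have integrable derivatives, each $F_{i,j}$ is smooth with bounded derivatives, and composing with the smooth $\nabla f$ shows $b$ is smooth, bounded by $1$, globally Lipschitz, and has polynomially growing derivatives. The diffusion matrix $\overline{\Sigma}(x)$ has entries that are bounded functionals of the same densities (since $|\xi_{\gamma_i}(x)|\le 2$ coordinatewise), and — arguing for $\sqrt{\overline{\Sigma}}$ as \cite{Malladi2022AdamSDE} do for $\sqrt{\Sigma_i}$, and if necessary replacing $\overline{\Sigma}$ by $\overline{\Sigma}+\epsilon I$ to sidestep the non-Lipschitzness of the matrix square root at degenerate points — one obtains that $\sqrt{\overline{\Sigma}}$ is bounded, Lipschitz, and satisfies the required growth bounds. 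Feeding this regularity and the moment estimates into the order-$1$ weak-approximation theorem of \cite{li2017stochastic,Malladi2022AdamSDE} completes the proof.
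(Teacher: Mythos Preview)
Your proposal is correct and follows essentially the same approach as the paper: compute the first two conditional moments of the DSignSGD increment, note that higher moments are $O(\eta^3)$ by boundedness of $\sign$, match against the It\^o--Taylor expansion of the candidate SDE, and invoke the Milstein/Li weak-approximation theorem after verifying coefficient regularity. The one point where the paper differs from you is in handling the square root: rather than regularizing via $\overline{\Sigma}+\epsilon I$, it proves directly that $\overline{\Sigma}(x)$ is strictly positive definite for every $x$ (using the assumed strict positivity of the noise density $g_x$), so the matrix square root is automatically smooth and Lipschitz without any perturbation.
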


For interpretability reasons, we present a corollary with a flexible gradient noise assumption that interpolates between the Cauchy (\textbf{heavy}-tailed) and Gaussian (\textbf{light}-tailed) distributions: {\small $\nabla f_{\gamma_i}(x) = \nabla f(x) + \sqrt{\frac{\Sigma_i}{B}} Z_i$}, {\small $Z_i \sim t_{\nu}(0, I_d) $}, $\nu$ are the degrees of freedom, and scale matrices {\small $\Sigma_i= \diag(\sigma_{1,i}^2, \cdots, \sigma_{d,i}^2)$}. This allows us to \textbf{parse the dynamics of DSignSGD into three distinct phases} depending on the size of the signal-to-noise ratios {\small $Y^i_t := \sqrt{B} \Sigma_i^{-\frac{1}{2}} \nabla f(X_t)$}. This is visually supported in the bottom-right of Figure~\ref{fig:SDEs} on a convex quadratic function.

The following results involve several quantities, highlighted using colors for clarity: \textit{\textcolor{Rhodamine}{Pink}} indicates dependence on the degrees of freedom \(\textcolor{Rhodamine}{\nu}\), related to the concept of \textit{\textcolor{Rhodamine}{fatness}} of the noise, while \textit{\textcolor{NavyBlue}{blue}} corresponds to the \textit{\textcolor{NavyBlue}{level}} of noise.

\begin{proposition} \label{prop:HSignSGD_three_phases}
For some constants $\mathbf{q}_{\mathbf{{\textcolor{Rhodamine}{\nu}}}}^{+}$, $\mathbf{q}_{\mathbf{{\textcolor{Rhodamine}{\nu}}}}^{-}$, $m_{\mathbf{{\textcolor{Rhodamine}{\nu}}}}$, $\ell_{\mathbf{{\textcolor{Rhodamine}{\nu}}}}$, and $\psi_{\mathbf{{\textcolor{Rhodamine}{\nu}}}}$ that depend on the degrees of freedom $\mathbf{{\textcolor{Rhodamine}{\nu}}}$,\footnote{See Proposition~\ref{prop:HSignSGD_three_phases_App} for their definition.} the dynamics of DSignSGD has three phases:

    \begin{enumerate} 
        \item \textbf{Phase~1:} If $ \left\vert Y^i_t  \right\vert>\psi_{\mathbf{{\textcolor{Rhodamine}{\nu}}}}$, the SDE coincides with the ODE of SignGD:
        \begin{equation} \label{sde:HSignSGD_ph1}
            d X_t = - \sign ( \nabla f(X_t)) dt;
        \end{equation}
        \item \textbf{Phase~2:}  If $ 1 < \left\vert Y^i_t \right\vert< \psi_{\mathbf{{\textcolor{Rhodamine}{\nu}}}}$ and $\overline{Y_t}:=\frac{\sum_{i=1}^N Y^i_t}{N}$:
        {\small $\mathbb{P}\left[ \lVert X_t - \E \left[ X_t \right] \rVert^2_2 > a \right] \leq \frac{\eta}{a} \left[ d - \frac{\sum_{i=1}^N \lVert  m_{\mathbf{{\textcolor{Rhodamine}{\nu}}}} Y^i_t + \mathbf{q}_{\mathbf{{\textcolor{Rhodamine}{\nu}}}}^{-} \rVert^2_2}{N} \right]$;}
        \item \textbf{Phase~3:} If {\small$ \left\vert Y^i_t \right\vert<1$}, the SDE is
        {\small
        \begin{align} \label{sde:HSignSGD_ph3}
            d X_t & = -  \ell_{\mathbf{{\textcolor{Rhodamine}{\nu}}}} \left(\frac{\sqrt{B}}{N} \sum_{i=1}^N \textcolor{NavyBlue}{\Sigma_i^{-\frac{1}{2}}} \right) \nabla f(X_t) dt  + \sqrt{\frac{\eta}{N}} \sqrt{I_d - \frac{\ell_{\mathbf{{\textcolor{Rhodamine}{\nu}}}}^2 B}{N} \sum_{i=1}^N \diag \left(\textcolor{NavyBlue}{\Sigma_i^{-\frac{1}{2}}} \nabla f(X_t) \right)^2} d W_t. \nonumber
        \end{align}
}

    \end{enumerate}
\end{proposition}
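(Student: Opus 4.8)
The plan is to start from the general SDE of DSignSGD in Theorem~\ref{thm:DSignSGD_Theorem} and specialize all the $x$-dependent quantities — the drift term $1 - 2\mathbb{P}(\nabla f_{\gamma_i}(X_t) < 0)$ and the noise covariance $\overline{\Sigma_i}(x)$ — to the Student-$t$ noise model $\nabla f_{\gamma_i}(x) = \nabla f(x) + \sqrt{\Sigma_i/B}\, Z_i$ with $Z_i \sim t_\nu(0, I_d)$. Since $\Sigma_i$ is diagonal, everything decouples coordinate-wise, and for each coordinate $j$ the sign probability becomes $\mathbb{P}(\nabla f_{\gamma_i}(x)_j < 0) = F_\nu(-Y^i_j)$, where $F_\nu$ is the CDF of a standard $t_\nu$ random variable and $Y^i := \sqrt{B}\,\Sigma_i^{-1/2}\nabla f(x)$ is the signal-to-noise ratio. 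Hence the drift coordinate is $1 - 2F_\nu(-Y^i_j) = 2F_\nu(Y^i_j) - 1 =: g_\nu(Y^i_j)$, an odd, increasing, bounded function that saturates at $\pm 1$; similarly the noise variance in coordinate $j$ is $1 - g_\nu(Y^i_j)^2$. The first step is therefore to write down the exact SDE in terms of $g_\nu$ and its square, making the dependence on the $Y^i_t$ fully explicit.

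The second step is to analyze the three regimes of $|Y^i_t|$. For Phase~1 ($|Y^i_t| > \psi_\nu$), I would argue that $g_\nu(y) \approx \sign(y)$ to within an error controlled by the tail $1 - F_\nu(\psi_\nu)$, so the drift collapses to $-\tfrac1N\sum_i \sign(\nabla f(X_t)) = -\sign(\nabla f(X_t))$ (all agents agree on the sign since $f$ is the common objective), and the noise variance $1 - g_\nu^2$ is negligible; $\psi_\nu$ is precisely the threshold at which this approximation error falls below the SDE weak-approximation order, giving the ODE \eqref{sde:HSignSGD_ph1}. For Phase~3 ($|Y^i_t| < 1$, the small-SNR regime), I would Taylor-expand $g_\nu$ around $0$: by oddness $g_\nu(y) = \ell_\nu\, y + O(y^3)$ with $\ell_\nu = 2 f_\nu(0) = g_\nu'(0)$ (the density of $t_\nu$ at the origin, up to the factor $2$), so the drift linearizes to $-\ell_\nu \tfrac{\sqrt{B}}{N}\sum_i \Sigma_i^{-1/2}\nabla f(X_t)$ and $1 - g_\nu^2 \approx 1 - \ell_\nu^2 (Y^i_t)^2$ coordinate-wise, which assembled into a diagonal matrix yields \eqref{sde:HSignSGD_ph3}. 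The constants $m_\nu$ and $\mathbf{q}_\nu^{\pm}$ in Phase~2 are the slope and intercept of an affine interpolant of $g_\nu$ on the intermediate band $1 < |Y| < \psi_\nu$, chosen so that $m_\nu Y + \mathbf{q}_\nu^{\pm}$ brackets $g_\nu$ there.

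The third step is the Phase~2 concentration bound. Here I would not try to reduce the SDE to a simpler closed form; instead I would apply Itô's formula to $\|X_t - \mathbb{E}[X_t]\|_2^2$ — the drift of this quantity vanishes in expectation to leading order and the trace of the diffusion matrix is $\tfrac{\eta}{N}\sum_i \sum_j (1 - g_\nu(Y^i_{t,j})^2)$ — bound each $1 - g_\nu(Y^i_{t,j})^2$ from above using the Phase~2 affine bracket $g_\nu(Y^i_{t,j}) \geq m_\nu Y^i_{t,j} + \mathbf{q}_\nu^{-}$ (valid on the band), and then invoke Markov's inequality to pass from $\mathbb{E}\|X_t - \mathbb{E}[X_t]\|_2^2 \leq \eta[\,d - \tfrac1N\sum_i \|m_\nu Y^i_t + \mathbf{q}_\nu^{-}\|_2^2\,]$ to the stated tail probability. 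The main obstacle will be controlling the approximation errors uniformly in a way that is consistent with the weak-approximation framework of Definition~\ref{def:weak_approximation}: the thresholds $\psi_\nu$ and the bracketing constants must be defined so that the remainder terms (tail mass beyond $\psi_\nu$ in Phase~1, cubic remainder of the Taylor expansion in Phase~3, bracketing slack in Phase~2) are all $O(\eta^\alpha)$ and do not accumulate as $X_t$ drifts across phase boundaries — which is why the proposition is stated as a phase-wise characterization rather than a single global SDE, and why Phase~2 is only a bound rather than an identity. Verifying that $g_\nu$, which is an incomplete-beta/hypergeometric function, has the claimed monotonicity, oddness, saturation rate, and a convergent odd Taylor series is routine but must be done to pin down $\ell_\nu$, $m_\nu$, $\psi_\nu$, $\mathbf{q}_\nu^{\pm}$ explicitly (deferred to Proposition~\ref{prop:HSignSGD_three_phases_App}).
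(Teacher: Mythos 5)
Your proposal is correct and follows essentially the same route as the paper: the paper likewise rewrites the drift and diffusion through $2\Xi_\nu(y)=2F_\nu(y)-1$, uses saturation of this function beyond $\psi_\nu$ for Phase~1, a secant/tangent affine bracketing with slope $m_\nu$ and intercepts $q_\nu^{\pm}$ for Phase~2, and the linearization $2\Xi_\nu(y)\sim \ell_\nu y$ with $\ell_\nu = 2\Xi_\nu'(0)$ for Phase~3. The only cosmetic difference is in Phase~2, where you apply Markov directly to $\lVert X_t - \E[X_t]\rVert_2^2$ while the paper passes through a per-coordinate union bound before summing the coordinate variances; both yield the same final estimate (and your version is arguably the cleaner of the two).
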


\begin{figure*}[ht]%
    \hspace{-0.6cm}
    \begin{tabular}{cc}
       \includegraphics[width=0.5\linewidth]{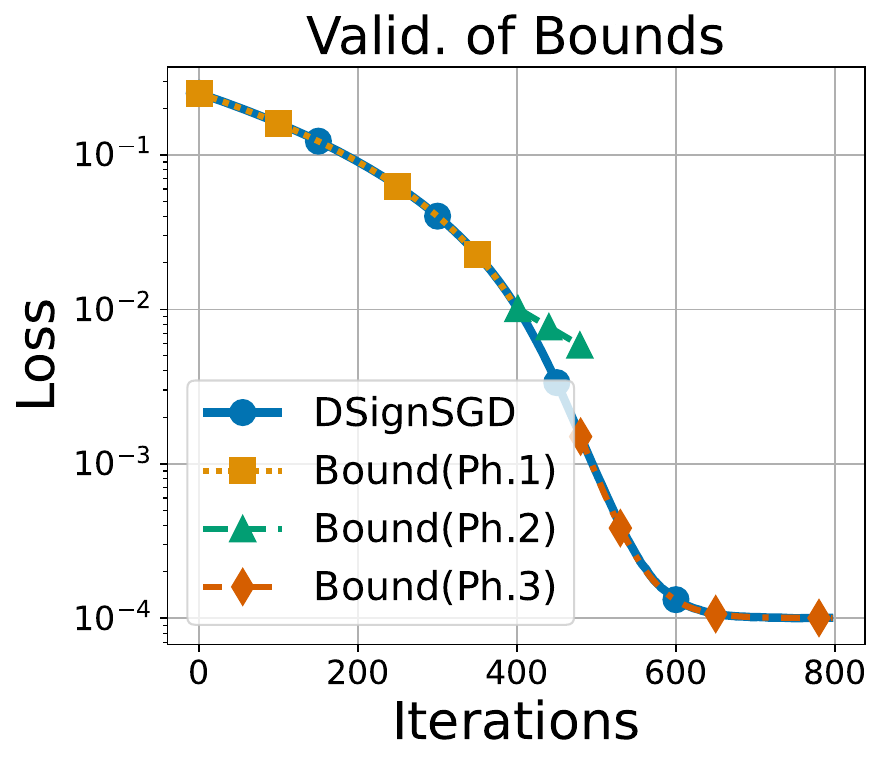}  & 
       \hspace{-0.6cm}
       \includegraphics[width=0.57\linewidth]{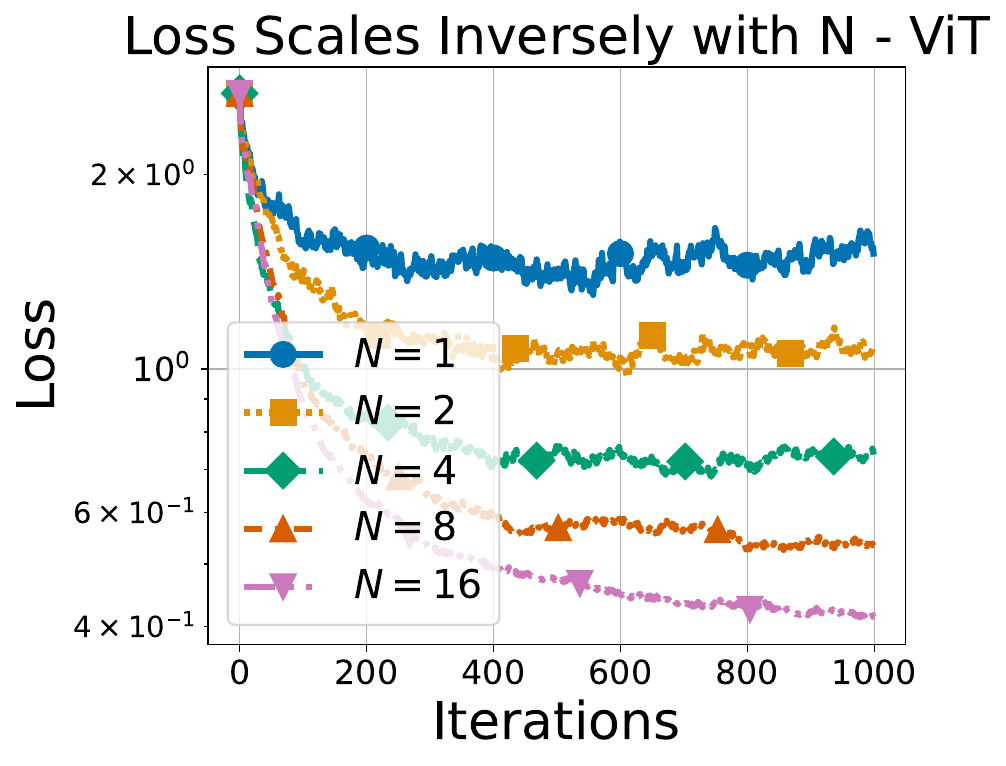}
    \end{tabular}
    
    \caption{Validation of Bounds: As prescribed by Theorem~\ref{thm:DSignSGD_Convergence}, the bounds match or dominate the empirical loss of DSignSGD on a quadratic convex function in all three phases ({\bf Left}); As per Theorem~\ref{thm:DSignSGD_Convergence}, DSignSGD achieves linear speedup: More agents imply lower loss ({\bf Right}); 
    }
\label{fig:DSignSGD_Insights}%
\end{figure*}

\vspace{-0.1cm}

\textbf{Observation on SDEs:}
\begin{enumerate}
    \item The behavior of DSignSGD depends on the size of the signal-to-noise ratios;
    \item In Phase~2 and Phase~3, the inverse of the \textcolor{NavyBlue}{level} of the noise $\textcolor{NavyBlue}{\Sigma_i^{-\frac{1}{2}}}$ premultiplies the gradient, thus \textbf{affecting the rate of descent}: The \textcolor{NavyBlue}{larger} the scale, the slower the dynamics. This is not the case for DCSGD where the $\Sigma_i$ \textbf{only influence the diffusion term};
    \item The degrees of freedom $\mathbf{{\textcolor{Rhodamine}{\nu}}}$ of the Student's t parametrize the \textcolor{Rhodamine}{fatness} of the tails of the noise distribution: The smaller $\mathbf{{\textcolor{Rhodamine}{\nu}}}$, the \textcolor{Rhodamine}{fatter} the tails and the smaller $m_{\mathbf{{\textcolor{Rhodamine}{\nu}}}}$ and $\ell_{\mathbf{{\textcolor{Rhodamine}{\nu}}}}$\footnote{For example, $\ell_{1} = \frac{2}{\pi}$, $\ell_{2} = \frac{1}{\sqrt{2}}$, and $\ell_{\infty} = \sqrt{\frac{2}{\pi}}$.} --- \textbf{\textcolor{Rhodamine}{Fatter} tails} imply that the average dynamics of $X_t$ is \textbf{slower} and exhibits \textbf{more variance}.
\end{enumerate}

To better understand the role of the noise, we need to study how its \textcolor{NavyBlue}{level} and \textcolor{Rhodamine}{fatness} affect the dynamics of the expected loss. The tightness of these bounds is empirically verified on the left of Figure~\ref{fig:DSignSGD_Insights}. 

\begin{theorem}\label{thm:DSignSGD_Convergence}
Let $f$ be $\mu$-strongly convex, $\tr(\nabla^2 f(x)) < \mathcal{L}_{\tau}$, $\Sigma_i \leq \sigma_{\text{max}, i}^2$, $S_t:=f(X_t) - f(X_*)$, and $\sigma_{\mathcal{H},j}$ be the harmonic mean of $\{(\sigma_{\text{max}, i})^j \}$. Then,
\begin{enumerate}
    \item In \textbf{Phase~1}, $S_t \leq \frac{1}{4} \left( \sqrt{\mu}t - 2 \sqrt{S_0}\right)^2$:  DSignSGD stays in this phase for at most $t_* = 2 \sqrt{\frac{S_0}{\mu}}$;
    \item In \textbf{Phase~2}, for $\Delta:=  m_{\mathbf{{\textcolor{Rhodamine}{\nu}}}} \sqrt{B}\textcolor{NavyBlue}{\sigma_{\mathcal{H},1}^{-1}} + \frac{\eta B \mu m_{\mathbf{{\textcolor{Rhodamine}{\nu}}}}^2}{2 N} \sigma_{\mathcal{H},2}^{-1}$, \\
    $$ \E[S_t] \leq S_0 e^{- 2 \mu \Delta t} +  \frac{\eta (\mathcal{L}_{\tau} - \mu d \hat{q}_{\mathbf{{\textcolor{Rhodamine}{\nu}}}}^2)}{2 N} \frac{1}{ 2 \mu \Delta} \left(1 - e^{- 2 \mu \Delta t}\right);$$
    \item In \textbf{Phase~3}, for $\Delta:=  \ell_{\mathbf{{\textcolor{Rhodamine}{\nu}}}} \sqrt{B} \textcolor{NavyBlue}{\sigma_{\mathcal{H},1}^{-1}} + \frac{\eta B \mu \ell_{\mathbf{{\textcolor{Rhodamine}{\nu}}}}^2}{2 N} \sigma_{\mathcal{H},2}^{-1}$, \\
    $$ \E[S_t] \leq S_0 e^{- 2 \mu \Delta t} +  \frac{\eta \mathcal{L}_{\tau}}{2 N} \frac{1}{ 2 \mu \Delta} \left(1 - e^{- 2 \mu \Delta t}\right).$$
\end{enumerate}
\end{theorem}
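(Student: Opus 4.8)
The plan is to analyze the three phases separately, in each case starting from the corresponding dynamics of Proposition~\ref{prop:HSignSGD_three_phases} (the deterministic SignGD ODE in Phase~1, and the full SDEs---in the explicit form stated in the appendix version---in Phases~2 and~3), substituting the quadratic surrogate $f(x) = \tfrac12 x^\top H x$ so that $\nabla f(x) = Hx$ and $\nabla^2 f = H$, and reducing everything to a scalar Grönwall-type inequality for $\E[S_t]$. Throughout, the analysis is done per phase, with $S_0$ understood as the value of $S$ at the moment that phase is entered.

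\textbf{Phase~1.} Since $dX_t = -\sign(\nabla f(X_t))\,dt$ is deterministic, the chain rule gives $\dot S_t = \nabla f(X_t)^\top \dot X_t = -\norm{\nabla f(X_t)}_1 \le -\norm{\nabla f(X_t)}_2 \le -\sqrt{2\mu S_t}$, where the last step is the Polyak--\L ojasiewicz inequality implied by $\mu$-strong convexity. Writing $u_t = \sqrt{S_t}$ gives $\dot u_t \le -\sqrt{\mu/2}$, hence $\sqrt{S_t} \le \sqrt{S_0} - \sqrt{\mu/2}\,t$ while the right-hand side is nonnegative; squaring and relaxing the constant yields $S_t \le \tfrac14(\sqrt{\mu}\,t - 2\sqrt{S_0})^2$ (with $S_t=0$ once this bound would turn negative). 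Since $\norm{\nabla f(X_t)}$ is bounded away from $0$ while $\lvert Y^i_t\rvert > \psi_{\nu}$, the process cannot remain in Phase~1 past the time $t_* = 2\sqrt{S_0/\mu}$ at which this upper bound vanishes.

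\textbf{Phases~2 and~3.} Apply Itô's formula to $S_t$: $dS_t = \nabla f(X_t)^\top dX_t + \tfrac12 \tr(H\,\Sigma_{\mathrm{diff}}(X_t))\,dt$, where $\Sigma_{\mathrm{diff}}$ is the diffusion covariance of the relevant SDE. Taking expectations removes the stochastic integral (after the standard localization argument; the diffusion coefficient is bounded and the matrix under the square root is PSD precisely because $\lvert Y^i_t\rvert$ is confined to the phase-defining range and $\ell_{\nu},m_{\nu}\le 1$). For the drift, $\Sigma_i \preceq \sigma_{\text{max}, i}^2 I$ gives $\Sigma_i^{-1/2} \succeq \sigma_{\text{max}, i}^{-1} I$, so $\nabla f^\top \Sigma_i^{-1/2}\nabla f \ge \sigma_{\text{max}, i}^{-1}\sqnorm{\nabla f} \ge 2\mu\,\sigma_{\text{max}, i}^{-1} S_t$; averaging over $i$ turns $\tfrac1N\sum_i \sigma_{\text{max}, i}^{-1}$ into $\sigma_{\mathcal H,1}^{-1}$, producing the first term of $2\mu\Delta$. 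For the diffusion, writing $\Sigma_{\mathrm{diff}} = \tfrac{\eta}{N}(I_d - \tfrac{\ell_{\nu}^2 B}{N}\sum_i \diag(\Sigma_i^{-1/2}\nabla f)^2)$ in Phase~3, expand $\tr(H\,\diag(\Sigma_i^{-1/2}\nabla f)^2) = \sum_j H_{jj}\,\sigma_{j,i}^{-2}(\nabla f)_j^2 \ge \mu\,\sigma_{\text{max}, i}^{-2}\sqnorm{\nabla f} \ge 2\mu^2\,\sigma_{\text{max}, i}^{-2} S_t$ (using $H_{jj}\ge\mu$), whose average over $i$ contributes $\tfrac{\eta B\mu^2\ell_{\nu}^2}{N}\sigma_{\mathcal H,2}^{-1}S_t$, i.e.\ the second term of $2\mu\Delta$; the leftover $\tfrac{\eta}{2N}\tr(H) \le \tfrac{\eta\mathcal L_\tau}{2N}$ is the noise floor. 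Collecting terms gives $\tfrac{d}{dt}\E[S_t] \le -2\mu\Delta\,\E[S_t] + \tfrac{\eta\mathcal L_\tau}{2N}$, and multiplying by $e^{2\mu\Delta t}$ yields exactly the Phase~3 bound. Phase~2 is the same computation with $m_{\nu}$ in place of $\ell_{\nu}$ and the extra constant shift $\mathbf q_{\nu}^{-}$ in the drift and diffusion: expanding $\diag(m_{\nu}\Sigma_i^{-1/2}\nabla f + \mathbf q_{\nu}^{-})^2$ peels off a $\nabla f$-free piece $\tr(H\,\diag(\mathbf q_{\nu}^{-})^2) \ge \mu d\,\hat q_{\nu}^2$, which converts the noise floor into $\tfrac{\eta(\mathcal L_\tau - \mu d\,\hat q_{\nu}^2)}{2N}$, while the $S_t$-linear terms reproduce $2\mu\Delta$ with $m_{\nu}$.

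\textbf{Main obstacle.} The delicate step is the diffusion-term bookkeeping in Phases~2 and~3: extracting the precise coefficient $2\mu\Delta$ with the correct sign from $\tr(H\,\diag(\cdots)^2)$ while simultaneously invoking $H \succeq \mu I$ and $\Sigma_i \preceq \sigma_{\text{max}, i}^2 I$ without collapsing the dependence on $\sigma_{\mathcal H,1}$ and $\sigma_{\mathcal H,2}$, and---in Phase~2---carefully separating the cross- and square-terms in $\mathbf q_{\nu}^{-}$ to land on $\mathcal L_\tau - \mu d\,\hat q_{\nu}^2$. A secondary technical point is the rigorous justification that the martingale part has zero expectation and that the diffusion matrix remains PSD throughout each phase, both of which rest on the phase-defining control of $\lvert Y^i_t\rvert$ together with $\ell_{\nu},m_{\nu}\le 1$.
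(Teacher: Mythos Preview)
Your proposal is correct and follows essentially the same route as the paper: apply It\^o's lemma to $f$ with the phase-specific dynamics, bound the drift term via $\Sigma_i^{-1/2}\succeq\sigma_{\max,i}^{-1}I$ and the diffusion term via $(\nabla^2 f)_{jj}\ge\mu$, then close with a Gr\"onwall argument. The quadratic-surrogate step is unnecessary---the paper works directly with general $\mu$-strongly convex $f$, and so does your argument, since the only properties of $H$ you actually invoke are $(\nabla^2 f)_{jj}\ge\mu$ and $\tr(\nabla^2 f)\le\mathcal L_\tau$, both of which hold without assuming $f$ is quadratic.
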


\paragraph{Observations on Convergence - PL:}
\begin{enumerate}
    \item The dynamics of Phase~1 ensures a steady decrease of $S_t$ independently of the noise, which triggers the emergence of Phase~2;
    \item During Phase~2 and Phase~3, the exponential decay of the loss strongly \textbf{depends on the noise distributional properties}: It scales inversely to the noise \textcolor{NavyBlue}{level $\sigma_{\mathcal{H},1}$} and proportionally to the \textcolor{Rhodamine}{fatness}  of the tails $\ell_{\mathbf{{\textcolor{Rhodamine}{\nu}}}}$, meaning that \textcolor{NavyBlue}{larger} noise and \textcolor{Rhodamine}{fatter} tails imply a slower convergence;
    \item The asymptotic loss level \textbf{achieves a linear speedup with $N$}, scales inversely to $\ell_{\mathbf{{\textcolor{Rhodamine}{\nu}}}}$ and proportionally to $\textcolor{NavyBlue}{\sigma_{\mathcal{H},1}}$: More agents imply lower loss (see right of Figure~\ref{fig:DSignSGD_Insights}) while \textcolor{Rhodamine}{fatter} tails and \textcolor{NavyBlue}{larger} noise imply larger loss (Section \ref{sec:RoleofNoise} for a discussion and empirical validation).
\end{enumerate}

The next theorem shows a general condition on the learning rate scheduler to ensure the convergence of DSignSGD for the general non-convex case. Interestingly, it sheds light on how DSignSGD reduces different gradient norms ($L^{\textcolor{purple}{\mathbf{1}}}$ and $L^{\textcolor{orange}{\mathbf{2}}}$) across different phases.

\newpage

\vspace{0.5cm}

\begin{figure*}[ht]%
    \hspace{-0.6cm}
    \begin{tabular}{cc}
       \includegraphics[width=0.5\linewidth]{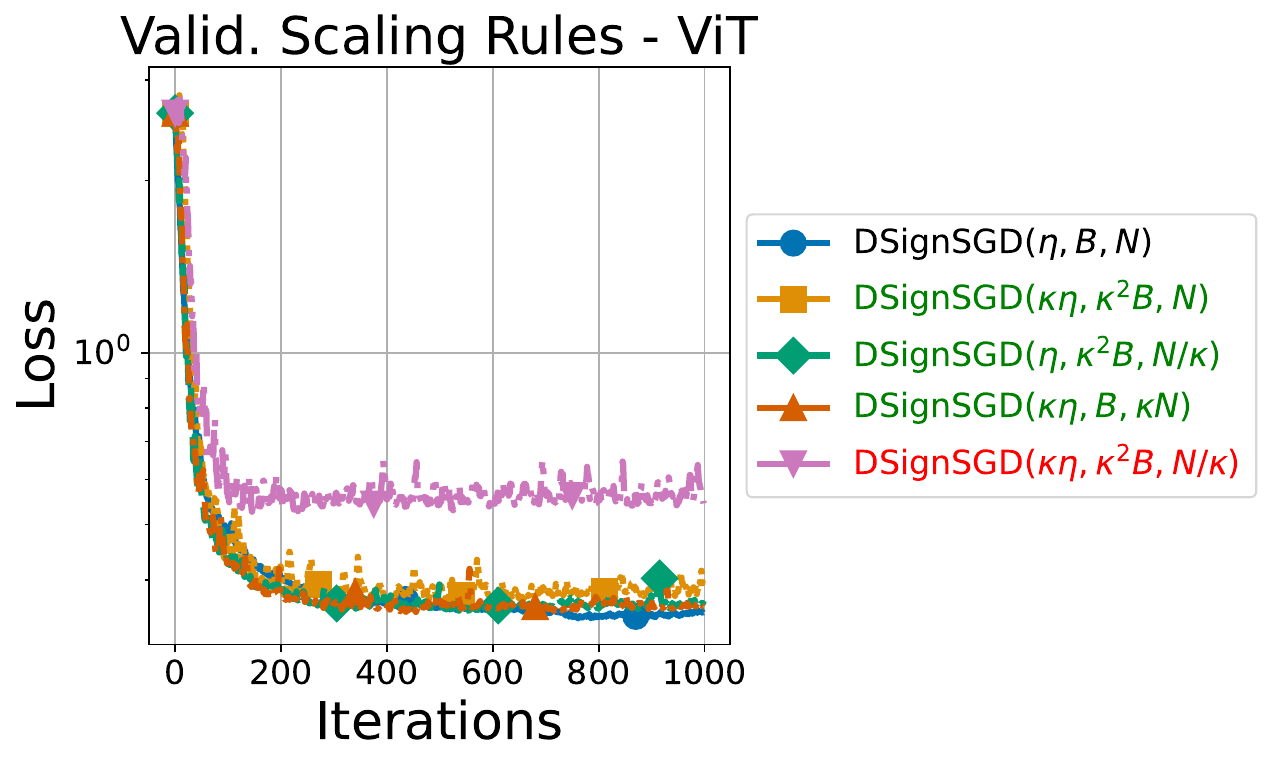}  & 
       \hspace{-0.6cm}
       \includegraphics[width=0.5\linewidth]{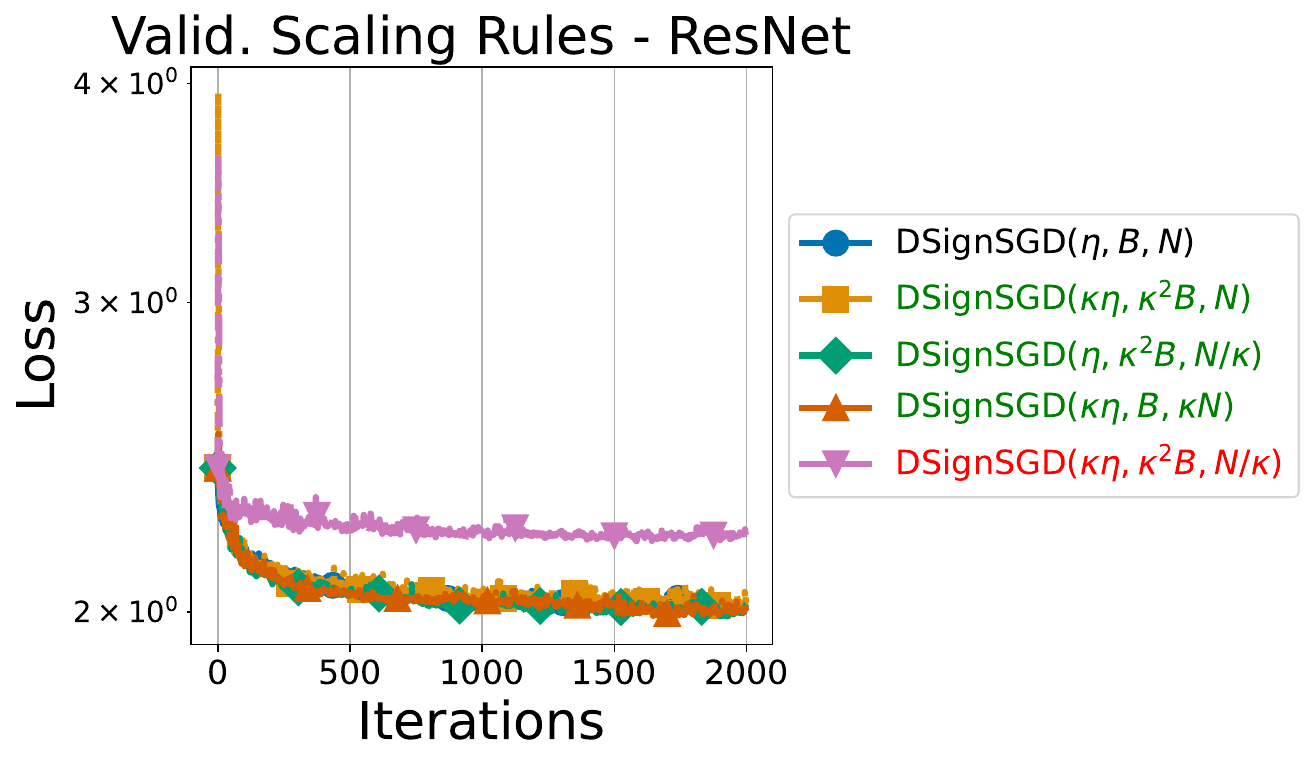}
    \end{tabular}
    \vspace{0.1cm}
    \caption{Validation of Scaling Rules: Consistently with Proposition~\ref{prop:DSignSGD_ScalingLaws}, DSignSGD run with hyperparameters that follow our scaling rule (in green in the legends) recover the performance of DSignSGD$(\eta, B, N)$. The one that does not (in red) fails to do so. On the left, we plot the training loss of a ViT for \textit{some} rules while on the right we do the same for a ResNet. Details in Appendix \ref{app:Exper}.
    }%
    \label{fig:DSignSGD_ScalLaw}%
\end{figure*}

\begin{theorem}\label{thm:DSignSGD_Conv_LSmooth}
    Let $f$ be $L$-smooth, the learning rate scheduler $\eta_t$ s.t. $\phi^i_t = \int_0^t (\eta_s)^i ds$, $\phi^1_t \overset{t \rightarrow \infty}{\rightarrow} \infty$, $\frac{\phi^2_t}{\phi^1_t} \overset{t \rightarrow \infty}{\rightarrow} 0$, $\Sigma_i \leq \sigma_{\text{max}, i}^2$, and $S_0:=f(X_0) - f(X_*)$. Then,
    \vspace{0.15cm}
    \begin{enumerate}
        \item In \textbf{Phase~1}, $\lVert \nabla f\left(X_{\Tilde{t}^1}\right)\rVert_{\textcolor{purple}{\mathbf{1}}} \leq \frac{S_0}{\phi_t^1} \overset{t \rightarrow \infty}{\rightarrow} 0$;
        \vspace{0.15cm}
        \item In \textbf{Phase~2}, 
        \begin{align}
        & \E \lVert \nabla f\left(X_{\Tilde{t}^{(1,2)}}\right)\rVert_{\textcolor{orange}{\mathbf{2}}}^2 + \frac{\hat{q}_{\mathbf{{\textcolor{Rhodamine}{\nu}}}}  \textcolor{NavyBlue}{\sigma_{\mathcal{H},1}}}{m_{\mathbf{{\textcolor{Rhodamine}{\nu}}}} \sqrt{B} } \E \lVert \nabla f\left(X_{\Tilde{t}^{(2,2)}}\right)\rVert_{\textcolor{purple}{\mathbf{1}}} \le \frac{ \textcolor{NavyBlue}{\sigma_{\mathcal{H},1}}}{\phi^1_t m_{\mathbf{{\textcolor{Rhodamine}{\nu}}}} \sqrt{B}} \left( S_0 + \frac{\eta L d \phi_t^2}{2 N} \right) \overset{t \rightarrow \infty}{\rightarrow} 0;
    \end{align}
    \vspace{0.15cm}
        \item In \textbf{Phase~3}, $  \E \lVert \nabla f\left(X_{\Tilde{t}^{3}}\right)\rVert_{\textcolor{orange}{\mathbf{2}}}^2$ is smaller than
        \begin{equation}
         \frac{ \textcolor{NavyBlue}{\sigma_{\mathcal{H},1}}}{\phi^1_t \ell_{\mathbf{{\textcolor{Rhodamine}{\nu}}}} \sqrt{B}} \left( S_0 + \frac{\eta L d \phi_t^2}{2 N} \right) \overset{t \rightarrow \infty}{\rightarrow} 0.
    \end{equation}
    \end{enumerate}
Above, $\Tilde{t}^1$, $\Tilde{t}^{(1,2)}$, $\Tilde{t}^{(2,2)}$, and $\Tilde{t}^3$ are random times with distribution $\frac{\eta_t}{\phi^1_t}$.
\end{theorem}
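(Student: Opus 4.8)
The plan is to run, in each of the three phases, the Itô--Lyapunov argument on $f(X_t)$ combined with the random-time averaging device already used for Theorem~\ref{thm:DSGD_Convergence_Smooth}. A learning-rate scheduler $\eta_t$ turns a phase SDE $dX_t = -b(X_t)\,dt + \sigma(X_t)\,dW_t$ into $dX_t = -\eta_t\, b(X_t)\,dt + \sqrt{\eta_t}\,\sigma(X_t)\,dW_t$; I would then apply Itô's formula to $f$ --- legitimizing the second-order term under mere $L$-smoothness by a standard mollification of $f$, as in the cited SDE literature --- and take expectations (the martingale term drops out since the sign-type diffusion is bounded), leaving $\frac{d}{dt}\E f(X_t) = -\eta_t\,\E\langle \nabla f(X_t), b(X_t)\rangle + \tfrac12\,\eta_t\,\E\,\Tr\!\bigl(\nabla^2 f(X_t)\,\sigma\sigma^\top(X_t)\bigr)$. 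Integrating over $[0,t]$, using $\E f(X_t)-f(X_0)\ge f(X_*)-f(X_0) = -S_0$, dividing by $\phi^1_t$, and recognizing $\tfrac{1}{\phi^1_t}\int_0^t \eta_s\,(\cdot)\,ds$ as an expectation over the random time $\tilde t$ with density $\eta_t/\phi^1_t$ will give inequalities of exactly the advertised shape; that each right-hand side tends to $0$ is then immediate from $\phi^1_t\to\infty$ and $\phi^2_t/\phi^1_t\to 0$.

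It remains to evaluate the drift pairing $\langle\nabla f,b\rangle$ and the curvature term $\Tr(\nabla^2 f\,\sigma\sigma^\top)$ in each phase. In all phases the diffusion covariance is a sign-type matrix dominated by $I_d$ (it is absent in Phase~1), so $\Tr(\nabla^2 f\,\sigma\sigma^\top)\le L\,\Tr(\sigma\sigma^\top)\le Ld$, which together with the $\tfrac{\eta}{N}$ prefactor yields the common term $\tfrac{\eta L d}{2N}\phi^2_t$. For the drift: in Phase~1, $b=\sign(\nabla f)$ so $\langle\nabla f,b\rangle = \lVert\nabla f\rVert_1$ and, there being no diffusion, integration gives $\int_0^t \eta_s\lVert\nabla f(X_s)\rVert_1\,ds\le S_0$, i.e.\ $\lVert\nabla f(X_{\tilde t^1})\rVert_1\le S_0/\phi^1_t$. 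In Phase~3, $b=\ell_\nu\,\tfrac{\sqrt B}{N}\sum_i\Sigma_i^{-1/2}\nabla f$; since $\Sigma_i=\diag(\sigma_{1,i}^2,\dots,\sigma_{d,i}^2)\preceq \sigma_{\max,i}^2 I_d$ forces $\Sigma_i^{-1/2}\succeq\sigma_{\max,i}^{-1}I_d$, we obtain $\langle\nabla f,b\rangle\ge \ell_\nu\sqrt B\bigl(\tfrac1N\sum_i\sigma_{\max,i}^{-1}\bigr)\lVert\nabla f\rVert_2^2 = \ell_\nu\sqrt B\,\sigma_{\mathcal H,1}^{-1}\lVert\nabla f\rVert_2^2$, identifying $\sigma_{\mathcal H,1}^{-1}$ with the arithmetic mean of the $\sigma_{\max,i}^{-1}$; dividing the integrated inequality by $\ell_\nu\sqrt B\,\sigma_{\mathcal H,1}^{-1}$ gives the Phase~3 claim.

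Phase~2 is the delicate step and the one I expect to be the main obstacle, because Proposition~\ref{prop:HSignSGD_three_phases} only records a concentration bound there rather than a closed-form SDE, so the Lyapunov computation has no ready-made drift to consume. The fix is to descend to the full SDE of Theorem~\ref{thm:DSignSGD_Theorem}, whose drift has components $\tfrac1N\sum_i g_\nu(Y^i_t)$ with $g_\nu(y) = 2F_\nu(y) - 1$ odd and increasing and $Y^i_t=\sqrt B\,\Sigma_i^{-1/2}\nabla f(X_t)$ coordinatewise of the sign of $\nabla f(X_t)$. The region $1<\lvert Y^i_t\rvert<\psi_\nu$ that defines Phase~2 is exactly where $g_\nu$ admits the affine lower envelope $\sign(y)\,g_\nu(y)\ge m_\nu\lvert y\rvert + \hat q_\nu$ --- the same envelope responsible for the $m_\nu Y^i_t + \mathbf q_\nu^-$ term in the Phase~2 concentration bound and for the $\hat q_\nu$ in Theorem~\ref{thm:DSignSGD_Convergence}. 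Pairing this coordinatewise with $\nabla f$ and averaging over agents yields $\langle\nabla f,b\rangle\ge\tfrac1N\sum_i\sum_j\bigl(m_\nu\sqrt B\,\sigma_{j,i}^{-1}(\partial_j f)^2 + \hat q_\nu\lvert\partial_j f\rvert\bigr)\ge m_\nu\sqrt B\,\sigma_{\mathcal H,1}^{-1}\lVert\nabla f\rVert_2^2 + \hat q_\nu\lVert\nabla f\rVert_1$, again via $\sigma_{j,i}\le\sigma_{\max,i}$ --- precisely the mixed squared-$L^2$/$L^1$ structure of the stated bound. Inserting this into the integrated inequality, dividing by $m_\nu\sqrt B\,\sigma_{\mathcal H,1}^{-1}\phi^1_t$, and reading the two surviving integrals as expectations over $\tilde t^{(1,2)}$ and $\tilde t^{(2,2)}$ reproduces the Phase~2 inequality. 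Apart from this envelope estimate, all that is left is routine SDE bookkeeping --- differentiation under the expectation, the zero mean of the martingale term from the boundedness of the sign-based diffusion, and the mollification of $f$ --- so the real content is the three piecewise drift estimates, with Phase~2 the step that genuinely requires going back to the unreduced SDE.
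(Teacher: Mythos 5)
Your proposal is correct and follows essentially the same route as the paper: Itô's lemma on $f$ with the scheduler, the diffusion trace bounded by $Ld$ to produce the $\tfrac{\eta L d}{2N}\phi^2_t$ term, the phase-wise drift pairings ($\lVert\nabla f\rVert_1$ in Phase~1, the secant/affine envelope of $2\Xi_\nu$ giving $m_\nu\sqrt{B}\,\sigma_{\mathcal H,1}^{-1}\lVert\nabla f\rVert_2^2+\hat q_\nu\lVert\nabla f\rVert_1$ in Phase~2, and $\ell_\nu\sqrt{B}\,\sigma_{\mathcal H,1}^{-1}\lVert\nabla f\rVert_2^2$ in Phase~3), followed by integration and the random-time reinterpretation via $\eta_t/\phi^1_t$. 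The Phase~2 drift envelope you reconstruct is exactly item~2(a) of Proposition~\ref{prop:HSignSGD_three_phases_App} (the appendix version records the drift bound, not only the concentration inequality), so the step you flagged as the main obstacle is handled precisely as in the paper.
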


\vspace{0.15cm}

\paragraph{Observations on Convergence - Non-convex:}

\vspace{0.15cm}

\begin{enumerate}
    \item DSignSGD \textbf{implicitly} minimizes the $L^{\textcolor{purple}{\mathbf{1}}}$ norm of the gradient in Phase~1, a linear combination of $L^{\textcolor{purple}{\mathbf{1}}}$ and $L^{\textcolor{orange}{\mathbf{2}}}$ norm in Phase~2, and transitions to optimizing the $L^{\textcolor{orange}{\mathbf{2}}}$ norm in Phase~3;
    \vspace{0.15cm}
    \item \textcolor{NavyBlue}{Large} and \textcolor{Rhodamine}{fat} noise slow down the convergence;
    \vspace{0.15cm}
    \item Note that \citep{safaryan2021signsgd} derived convergence guarantees for a mixture of $L^{\textcolor{purple}{\mathbf{1}}}$ and $L^{\textcolor{orange}{\mathbf{2}}}$ norms. This mixture reduces to a rescaled $L^{\textcolor{purple}{\mathbf{1}}}$ norm when the gradients are large (similarly to our Phase~1) and to a rescaled $L^{\textcolor{orange}{\mathbf{2}}}$ norm when the gradients are small (as in our Phase~ 3). However, we highlight that our rates reveal \textit{exactly} how all parameters affect the rate while in \citep{safaryan2021signsgd} these dependencies are \textit{hidden} in the mixed norm.
\end{enumerate}

We conclude by observing that \textit{not all biased compressors} can handle fat noise, e.g. Top-$k$ fails as well, while a slight modification is promising --- See Figure~\ref{fig:Topk}.

\subsection*{Scaling Rules: Preserving Performance}

Under the simplifying assumption that $N \gg 1$, the following result provides intuitive scaling rules for DSignSGD, while Proposition~\ref{prop:DSignSGD_ScalingLaws} presents the general cases. We validate some rules in Figure~\ref{fig:DSignSGD_ScalLaw} on a ViT and a ResNet. See Appendix \ref{ScalingRulesGPT2} for the validation on a 124M GPT2.
\begin{proposition}\label{prop:DSignSGD_ScalingLaws_Main}
Let the batch size be {\small $ \delta B$}, learning rate {\small $\kappa \eta$}, {\small $\alpha N$} agents, and {\small $N \gg 1$}. The scaling rule to preserve the performance indep. of {\small $\delta$}, {\small $\kappa$}, {\small $\alpha$}, is {\small $\frac{\kappa}{\alpha \sqrt{\delta}}=1$}.
\end{proposition}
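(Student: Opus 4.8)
The plan is to extract the rule from the \emph{asymptotic} regime of DSignSGD, which Proposition~\ref{prop:HSignSGD_three_phases} tells us is Phase~3: near the optimum the signal-to-noise ratios obey $|Y_t^i|<1$, so the iterates are governed by the Phase~3 SDE and, under the quadratic approximation $f(x)=\tfrac12 x^\top H x$, by its stationary distribution. I would first substitute the scaled hyperparameters $(\kappa\eta,\delta B,\alpha N)$ into that SDE. The structural fact that makes this painless is that the two $N$-indexed averages appearing there, $\tfrac1N\sum_{i=1}^N\Sigma_i^{-1/2}$ and $\tfrac1N\sum_{i=1}^N\diag(\Sigma_i^{-1/2}\nabla f(X_t))^2$, are \emph{intensive}: in the $N\gg1$ regime of the statement they are unchanged when $N\mapsto\alpha N$. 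Hence scaling the hyperparameters multiplies the drift of the Phase~3 SDE by $\sqrt{\delta}$ (its $\sqrt B$ prefactor becomes $\sqrt{\delta B}$) and, near the optimum where the $\nabla f$-dependent correction inside the square root is negligible so that the diffusion is effectively $\sqrt{\eta/N}\,I_d$, multiplies the diffusion coefficient by $\sqrt{\kappa/\alpha}$.

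The main argument then is to look at the stationary covariance. With $f(x)=\tfrac12 x^\top H x$, the Phase~3 SDE is an Ornstein--Uhlenbeck process with drift matrix proportional to $\sqrt B\,\big(\tfrac1N\sum_i\Sigma_i^{-1/2}\big)H$ and noise intensity $\eta/N$, so its stationary covariance $C$ solves the associated Lyapunov equation and scales as $C\propto \eta/(N\sqrt B)$; equivalently $\Tr(HC)$, the asymptotic expected loss, scales as $\eta/(N\sqrt B)$ — consistent with Theorem~\ref{thm:DSignSGD_Convergence}, whose Phase~3 asymptotic level $\tfrac{\eta\mathcal{L}_{\tau}}{4N\mu\Delta}$ reduces, once the $O(1/N)$ term $\tfrac{\eta B\mu\ell_{\nu}^2}{2N}\sigma_{\mathcal{H},2}^{-1}$ inside $\Delta$ is dropped and $\Delta\approx\ell_{\nu}\sqrt B\,\sigma_{\mathcal{H},1}^{-1}$, to $\tfrac{\eta\mathcal{L}_{\tau}\sigma_{\mathcal{H},1}}{4N\mu\ell_{\nu}\sqrt B}$. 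Requiring this level to be invariant under $(\eta,B,N)\mapsto(\kappa\eta,\delta B,\alpha N)$ gives $\tfrac{\kappa\eta}{\alpha N\sqrt{\delta B}}=\tfrac{\eta}{N\sqrt B}$, i.e.\ $\tfrac{\kappa}{\alpha\sqrt\delta}=1$. As a consistency check, one can instead note that the scaled Phase~3 SDE is a deterministic time reparametrization $t\mapsto ct$ of the original one exactly when $c=\sqrt\delta$ (matching drifts) and $\sqrt c=\sqrt{\kappa/\alpha}$ (matching diffusions), which again forces $\sqrt\delta=\kappa/\alpha$. I would close by observing that $\delta=1$ collapses the rule to $\kappa=\alpha$, and that retaining the dropped $O(1/N)$ term in $\Delta$ throughout the same computation yields the general statement (Proposition~\ref{prop:DSignSGD_ScalingLaws}).

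I expect the main obstacle to be justifying the $N\gg1$ reduction cleanly, i.e.\ showing that the three pieces discarded above — the $O(1/N)$ term in $\Delta$, the finite-$N$ deviation of $\tfrac1N\sum_i\Sigma_i^{-1/2}$ from its limit, and the $\nabla f$-dependent correction in the diffusion (negligible precisely in the near-optimum regime that controls the asymptotics) — do not alter the leading-order scaling. The clean route is to argue each correction is of strictly lower order than the term it perturbs, uniformly on the relevant region, so that the drift stays homogeneous of degree $\tfrac12$ in $B$ and the diffusion homogeneous of degree $\tfrac12$ in $\eta/N$; the scaling identity is then forced by homogeneity alone and is insensitive to those corrections. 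A secondary point worth stating explicitly is the meaning of ``preserve performance'': the rule preserves the asymptotic loss level (equivalently, the trajectory up to a deterministic time change), whereas the convergence rate $2\mu\Delta\propto\sqrt\delta$ is preserved only when $\delta=1$ — which is exactly why, unlike the \emph{linear} rule $\kappa/(\alpha\delta)=1$ for DSGD (Proposition~\ref{prop:DSGD_ScalingLaws}), this is a \emph{square-root} rule in the batch-size factor.
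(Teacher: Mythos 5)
Your proposal is correct and follows essentially the same route as the paper's proof of Proposition~\ref{prop:DSignSGD_ScalingLaws}: restrict to Phase~3, apply It\^o's lemma with the scaled hyperparameters $(\kappa\eta,\delta B,\alpha N)$, and require the asymptotic loss level $\propto \frac{\kappa\eta}{\alpha N\sqrt{\delta B}}\cdot\frac{1}{\Delta/(\kappa\sqrt{\delta B})}$ to match the unscaled one, with the $N\gg1$ regime killing the $O(1/N)$ term in $\Delta$ so that the condition collapses to $\frac{\kappa}{\alpha\sqrt{\delta}}=1$. The only difference is presentational (the paper keeps the $O(1/N)$ term to state the general rules first and then specializes, while you drop it upfront and note that retaining it recovers the general proposition), plus your added OU/time-reparametrization consistency checks, which are sound but not needed.
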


\paragraph{Observations:}

\begin{enumerate}
    \item If $\alpha=1$, this rule coincides with Adam's \cite{compagnoni2025adaptive};
    \item For example, while preserving the performance of DSignSGD run with $\eta$, $B$, and $N$, one can increase the batch size to $\kappa^2 B$, the learning rate to $\kappa \eta$, and keep $N$ agents. Alternatively, keep the learning rate to $\eta$ but increase the batch size to $\kappa^2 B$ and decrease the number of agents down to $\frac{N}{\kappa}$.
\end{enumerate}

\textbf{Stationary Distribution.}
The stationary distribution of a process characterizes its behavior at convergence. Proposition~\ref{prop:HSignSGD_StaDistr_App}, shows that of DSignSGD: The main takeaway is that \textbf{the covariance matrix of the iterates scales linearly in the noise level}. In contrast, Proposition~\ref{prop:DCSGD_StaDistr_App} shows that the scaling is \textbf{quadratic} for DCSGD with $k$-Sparsification. These findings are novel and are empirically validated in Figure \ref{fig:StatDistr}.

%%%%%%%%%%%%%%%%%%%%%%%%%%%%%%%%%%%%%%%%%%%%%%%%%%%%%%%%%%%%%%%%%%%%%%%%%%%%
\section{Heavy and Large Noise}\label{sec:RoleofNoise}
%%%%%%%%%%%%%%%%%%%%%%%%%

\begin{figure}[ht]%
   \centering

    \subfloat{{\includegraphics[width=0.45\linewidth]{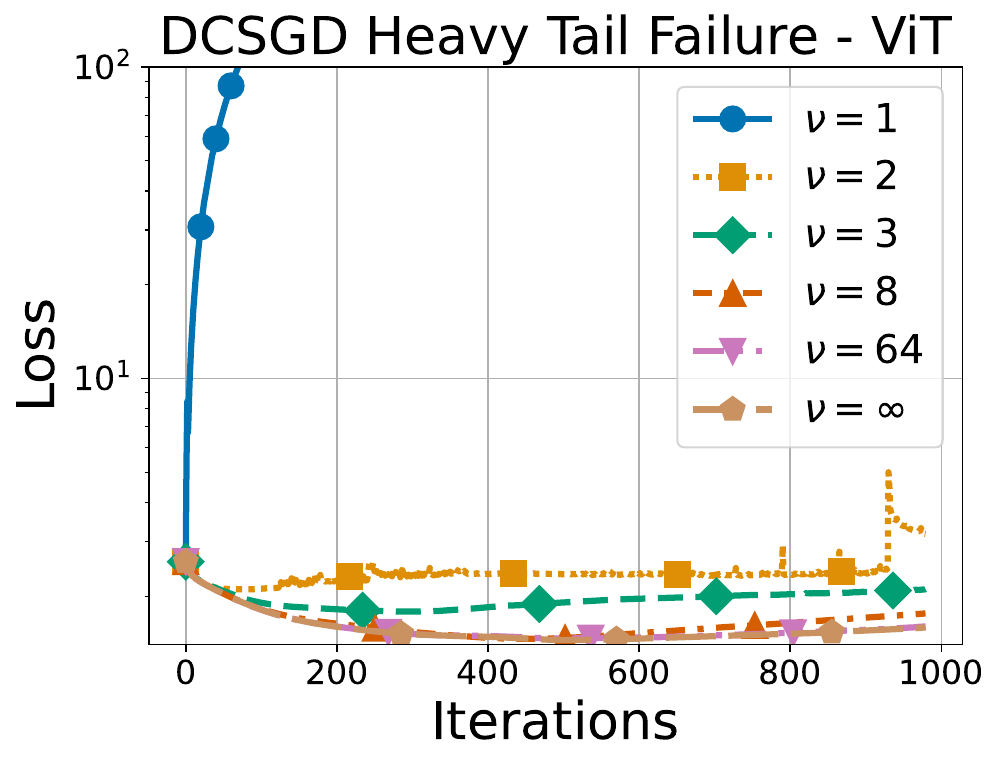} }}%
    \hspace{1.cm}\subfloat{{\includegraphics[width=0.45\linewidth]{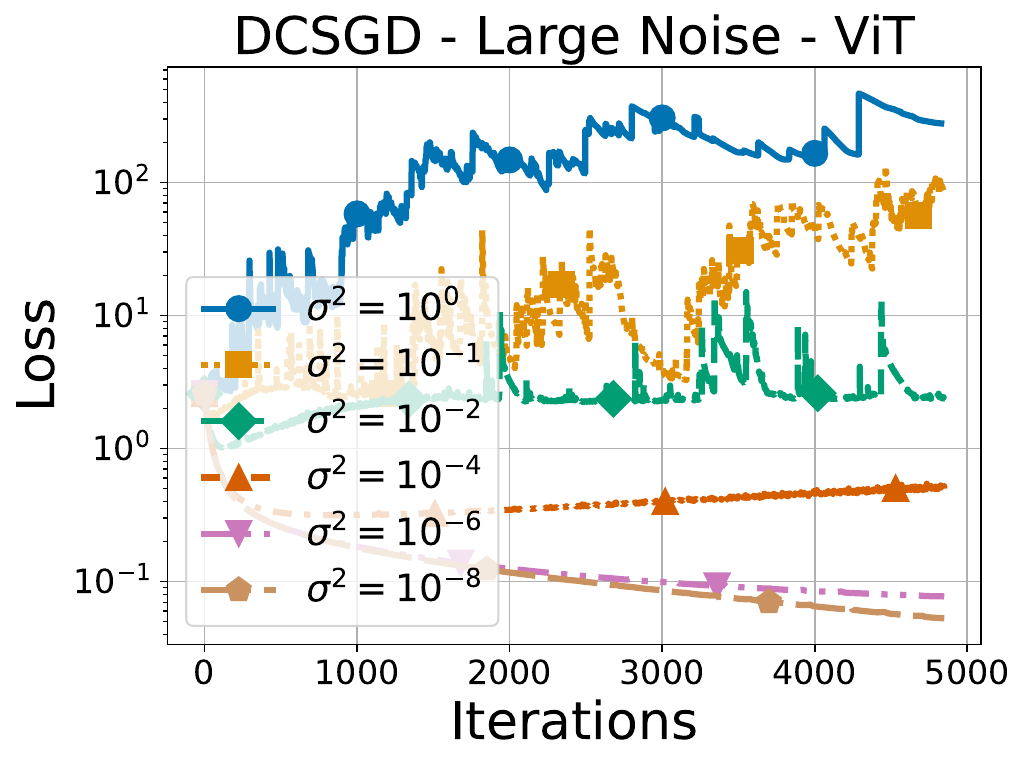} }} \\
    \subfloat{{\includegraphics[width=0.45\linewidth]{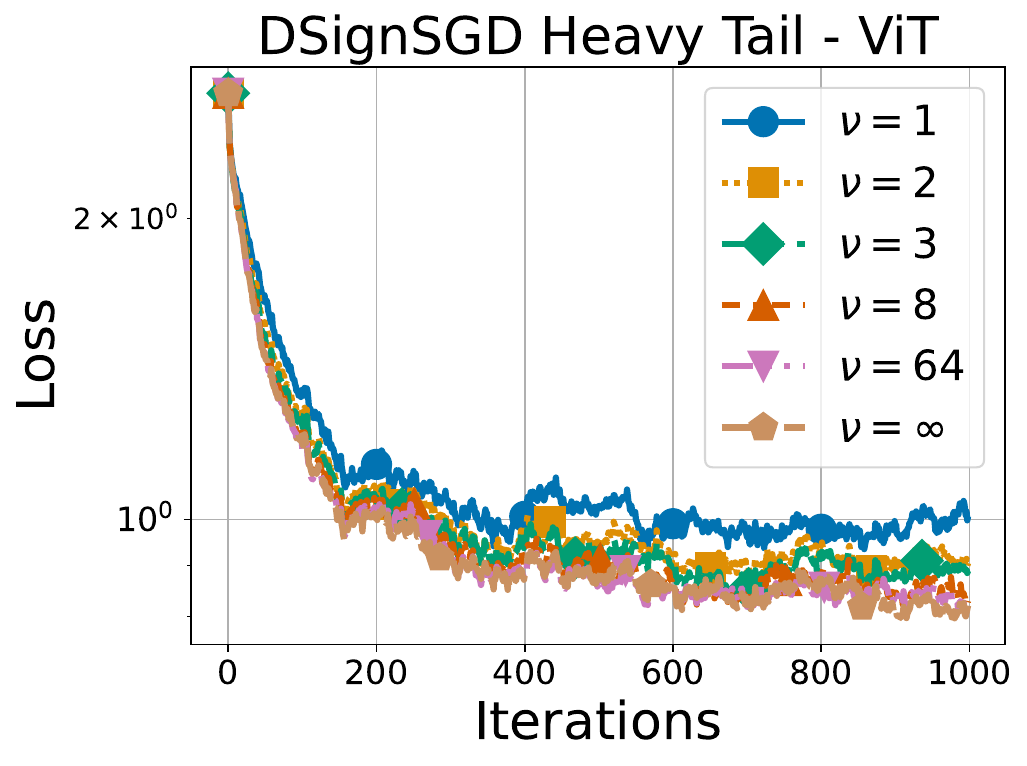} }}%
    \hspace{1.cm}\subfloat{{\includegraphics[width=0.45\linewidth]{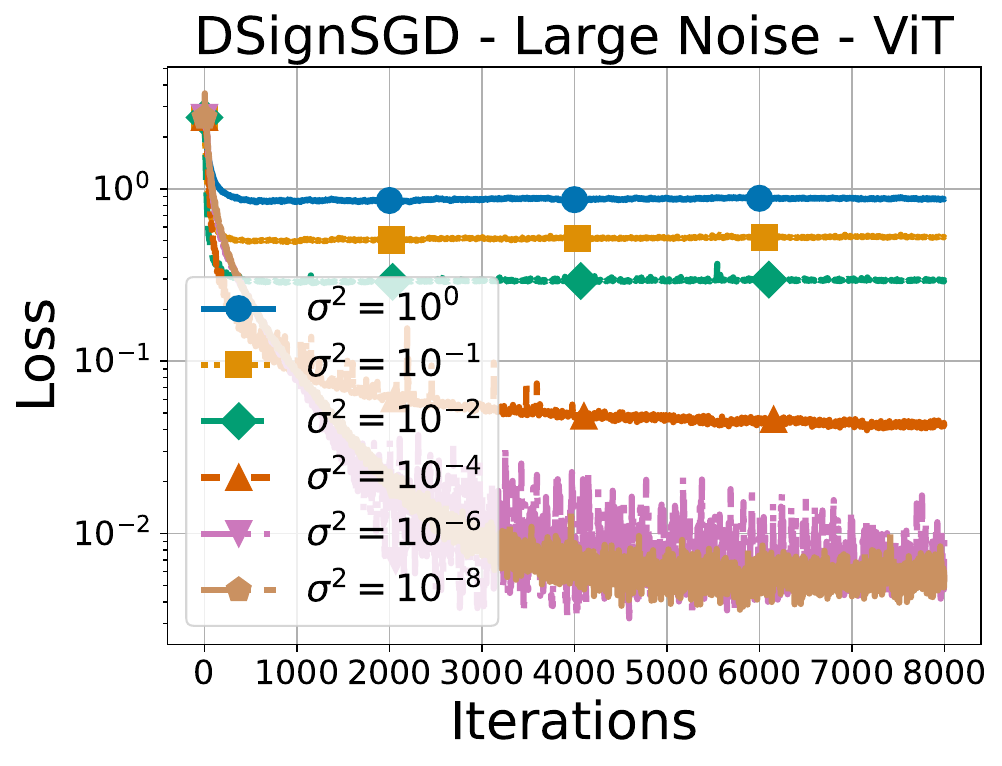} }}%
    \caption{Empirical validation of the insights derived from Theorem~\ref{thm:DCSGD_Convergence} and Theorem~\ref{thm:DSignSGD_Convergence}: i) DCSGD cannot handle \textcolor{Rhodamine}{fat} noise - The loss diverges if $\nu=1$ and is non-stationary if $\nu=2$ ({\bf Upper-Left}); ii) The loss diverges more and more for \textcolor{NavyBlue}{larger} noise ({\bf Upper-Right}); DSignSGD converges even when the noise is \textcolor{Rhodamine}{fat}, although \textcolor{Rhodamine}{fatter} noise implies less optimality ({\bf Bottom-Left}); DSignSGD never diverges even when noise becomes increasingly \textcolor{NavyBlue}{larger} ({\bf Bottom-Right}).}%
    \label{fig:ComparisonLargeNoise_ViT}%
\end{figure}

In this section, we recap our findings regarding the behavior of D(C)SGD and DSignSGD w.r.t. how \textcolor{Rhodamine}{fat}, i.e. how \textcolor{Rhodamine}{heavy-tailed} the noise is, and its \textcolor{NavyBlue}{level}, i.e. its \textcolor{NavyBlue}{standard deviation or scale}. We validate our results as we inject Gaussian or Student's t-distributed noise on the full gradient of a ViT trained on MNIST.

Theoretically and practically, \textbf{DCSGD diverges if the noise is fat}, i.e., does not admit bounded first or second moments (upper-left of Figure \ref{fig:ComparisonLargeNoise_ViT}). Provided that the noise admits a finite second moment, as per Theorem~\ref{thm:DCSGD_Convergence}, DCSGD converges to an asymptotic loss level that scales \textbf{quadratically} with the noise level $\sigma$: The upper-right of Figure \ref{fig:ComparisonLargeNoise_ViT} shows this on a ViT while Figure \ref{fig:ComparisonLargeNoise_Quadratics} validates the tightness of the bounds derived in Theorem~\ref{thm:DCSGD_Convergence} on a quadratic convex function for several noise levels.

In contrast, Theorem \ref{thm:DSignSGD_Convergence} shows that \textbf{DSignSGD converges even if the noise has an unbounded expected value}. In particular, the fatness of the tails influences both the convergence speed and the asymptotic loss level: \textbf{Fatter and larger} noise implies a \textbf{slower convergence} to a \textbf{larger} asymptotic level (bottom-left of Figure \ref{fig:ComparisonLargeNoise_ViT}). Additionally, the asymptotic loss level of DSignSGD scales (approximately) \textbf{linearly} with the noise level: The bottom-right of Figure \ref{fig:ComparisonLargeNoise_ViT} show this on a ViT while Figure \ref{fig:ComparisonLargeNoise_Quadratics} demonstrates the tightness of the bounds derived in Theorem \ref{thm:DSignSGD_Convergence} on a quadratic convex function for several noise levels.

%%%%%%%%%%%%%%%%%%%%%%%%%%%%%%%%%%%%%%%%%%%%%%%%%%%%%%%%%%%%%%%%%%%%%%%%%%%%
\section{Conclusions}\label{sec:Conclusions}
%%%%%%%%%%%%%%%%%%%%%%%%%

We derived the first formal SDE models for DSGD, DCSGD, and DSignSGD, enabling us to elucidate the complex and different ways in which \textit{unbiased} and \textit{sign} compression interact with gradient noise. We started by showcasing the \textit{tightness} of our analysis as we recovered and empirically validated the \textit{best known} convergence results for DSGD and DCSGD: 1) We quantified how \textit{unbiased} compression \textbf{slows down} the convergence of DCSGD w.r.t. DSGD, and showed that the noise level \textbf{does not impact} the convergence speed; 2) Unbiased compression and noise level interact nonlinearly by negatively affecting the asymptotic loss level of DCSGD w.r.t. DSGD. For DSignSGD, we 3) proved that \textit{sign} compression implies that noise level \textbf{does influence} the speed of convergence as \textbf{larger noise slows it down}; 4) While the asymptotic loss level of DCSGD scales \textbf{quadratically} in the noise level, that of DSignSGD does so \textbf{linearly}; 5) DSignSGD is \textbf{resilient to heavy-tailed} noise and converges even when this has an unbounded expected value. Much differently, an \textbf{unbounded variance} of the noise is already enough for \textbf{DCSGD to diverge}. 6) Importantly, we prove that DSignSGD achieves \textbf{linear speedup}; 7) Finally, we derive \textbf{novel scaling rules} for DCSGD and DSignSGD, providing intuitive and actionable guidelines for selecting hyperparameters. These rules ensure that the performance of the algorithms is preserved, even allowing DCSGD to \textit{recover} the performance of its \textit{uncompressed} counterpart and DSignSGD to \textit{preserve} it. Finally, we verify our results on a variety of deep learning architectures and datasets.

\textbf{Future work.} Our analysis can be extended to other practical optimizers, such as Top-$k$ or DSignSGD with majority vote \cite{bernstein2018signsgdd}. Moreover, the insights derived from our SDE analysis provide a foundation for developing new optimization algorithms that integrate the strengths of current methods while addressing their limitations. Finally, it is possible to extend most of our results to the \textit{heterogeneous} federated setting, up to some adjustments in the regularity of the local loss functions.

\section{Acknowledgments}
Enea Monzio Compagnoni, Rustem Islamov, and Aurelien Lucchi acknowledge the financial support of the Swiss
National Foundation, SNF grant No 207392. Frank Norbert Proske acknowledges the financial support of the Norwegian Research Council (project No 274410) and MSCA4Ukraine (project No 101101923).

\clearpage
\bibliography{references}
\bibliographystyle{plainnat}

\newpage

\appendix

%%%%%%%%%%%%%%%%%%%%%%%%%%%%%%%%%%%%%%%%%%%%%%%%%%%%%%%%%%%%%%%%%%%%%%%%%%%%
\section{Theoretical framework - Weak Approximation}\label{sec:theor}
%%%%%%%%%%%%%%%%%%%%%%%%%%%%%%%%%%%%%%%%%%%%%%%%%%%%%%%%%%%%%%%%%%%%%%%%%%%%

In this section, we introduce the theoretical framework used in the paper, together with its assumptions and notations.

First of all, many proofs will use Taylor expansions in powers of $ \eta $. For ease of notation,  we introduce the shorthand that whenever we write $ \mathcal{O}\left(\eta^\alpha\right) $, we mean that there exists a function $ K(x) \in G $ such that the error terms are bounded by $ K(x) \eta^\alpha $. For example, we write
$$
b(x+\eta)=b_0(x)+\eta b_1(x)+\mathcal{O}\left(\eta^2\right)
$$
to mean: there exists $ K \in G $ such that
$$
\left|b(x+\eta)-b_0(x)-\eta b_1(x)\right| \leq K(x) \eta^2 .
$$
Additionally, we introduce the following shorthand:

\begin{itemize}
\item A multi-index is $\alpha=\left(\alpha_1, \alpha_2, \ldots, \alpha_n\right)$ such that $\alpha_j \in\{0,1,2, \ldots\}$;
\item $|\alpha|:=\alpha_1+\alpha_2+\cdots+\alpha_n$;
\item $\alpha !:=\alpha_{1} ! \alpha_{2} ! \cdots \alpha_{n} !$;
\item For $x=\left(x_1, x_2, \ldots, x_n\right) \in \mathbb{R}^n$, we define $x^\alpha:=x_1^{\alpha_1} x_2^{\alpha_2} \cdots x_n^{\alpha_n}$;
\item For a multi-index $\beta$, $\partial_{\beta}^{|\beta|}f(x) := \frac{\partial^{|\beta|}}{\partial^{\beta_1}_{x_1}\partial^{\beta_2}_{x_2} \cdots \partial^{\beta_n}_{x_n} }f(x)$;
\item We also denote the partial derivative with respect to $ x_{i} $ by $ \partial_{e_i} $. \\
\end{itemize}

\begin{definition}[G Set]
    Let $G$ denote the set of continuous functions $\mathbb{R}^{d} \rightarrow \mathbb{R}$ of at most polynomial growth, i.e.~$g \in G$ if there exists positive integers $\nu_1, \nu_2>0$ such that $|g(x)| \leq \nu_1\left(1+|x|^{2 \nu_2}\right)$, for all $x \in \mathbb{R}^{d}$.
\end{definition}

\begin{definition}[$\mathcal{C}_b^k\left(\mathbb{R}^n, \mathbb{R}\right)$]
    $\mathcal{C}_b^k\left(\mathbb{R}^n, \mathbb{R}\right)$ denotes the space of functions whose $k$-th derivatives are bounded.
\end{definition}

\subsection{Assumptions.}\label{sec:AllAss}
In general, we assume some regularity in the loss function.
\begin{mybox}{gray}
\begin{assumption}
Assume that the following conditions on $f, f_i \in \mathcal{C}_b^8\left(\mathbb{R}^n, \mathbb{R}\right)$, and their gradients are satisfied:
\begin{itemize}
\item $\nabla f, \nabla f_i $ satisfy a Lipschitz condition: there exists $ L>0 $ such that
$$
|\nabla f(u)-\nabla f(v)|+\sum_{i=1}^n\left|\nabla f_i(u)-\nabla f_i(v)\right| \leq L|u-v|;
$$
\item $ f, f_i $ and its partial derivatives up to order 7 belong to $ G $;
\item $ \nabla f, \nabla f_i $ satisfy a growth condition: there exists $ M>0 $ such that
$$
|\nabla f(x)|+\sum_{i=1}^n\left|\nabla f_i(x)\right| \leq M(1+|x|).
$$
\end{itemize}
\label{ass:regularity_f}
\end{assumption}
\end{mybox}
Regarding the gradient noise, each optimizer has its mild assumptions which are weaker or in line with the literature.
\paragraph{DSGD}
\begin{enumerate}
    \item The covariance matrices $\Sigma_i(x)$ are Definite Positive;
    \item Their square roots $\sqrt{\Sigma_i}(x)$ are: In $G$ together with their derivatives, Lipschitz, bounded, and satisfy Affine Growth \cite{Malladi2022AdamSDE}.
\end{enumerate}

\paragraph{DCSGD} Additionally w.r.t. DSGD, DCSGD requires:
\begin{enumerate}
    \item The gradient noise $Z(x)$ admits a strictly positive density function $g_x$ for all $x$ and require that $g:\mathbb{R}^{n}\times \mathbb{R}^{n}\rightarrow \left[ 0,\infty \right)$ s.t. $ (x,y)\mapsto g_{x}(y)$ is in $C^{8}(\mathbb{R}^{n}\times \mathbb{R}^{n})$ such that all partial derivatives of $g$ up to order $8$ are integrable with respect to $y$ and s.t. their $L^{1}$-norms are uniformly bounded in $x$. This assumption covers Gaussian and Student's t, thus being \textit{more general than the literature}. Indeed, the Gaussianity of the noise is commonly assumed: Among others, see \cite{ahn2012bayesian,chen2014stochastic,mandt2016variational,stephan2017stochastic,zhu2019anisotropic,wu2020noisy,Xie2021}, while \cite{jastrzkebski2017three} offers an intuitive justification as well;
    \item Bounded and closed domain \cite{shamir2014distributed,wang2017memory,zhao2018proximal,yu2019double,aviv2021learning,ayache2023walk,marfoq2023federated,deng2024distributed}: This assumption is not restrictive in our case. Indeed, our contribution regarding DCSGD is not to prove their convergence, which has been proven before \citep{khirirat2018distributed, li2020unified}, but rather the scaling rules in Prop. \ref{prop:DCSGD_RecoverLaws_Main}. Since convergence has already been guaranteed, we can assume the domain to be closed and bounded without loss of generality while still providing insightful and actionable results. Additionally, this is also assumed in the seminal paper for this theoretical framework \cite{li2019stochastic};
    \item For all compact sets $K$ \[
\sup_{x \in K}\left\vert g(x,\cdot) \right\vert \in L^1(\mathbb{R}^n),
\]
which of course covers the Gaussian case, \textit{thus being more general than the literature.}
\end{enumerate}

\paragraph{DSignSGD}
On top of the assumptions 1. and 3. of DCSGD, we need the functions in Eq. \ref{A3} to be in $G$, which, as we show below, covers Gaussian and Student's t, \textit{thus being more general than the literature}.

\paragraph{Remark}
All the assumptions above are \textit{in line with or more general than those commonly found in the literature}. In line with \textit{Remark 11} of the seminal paper \cite{li2019stochastic}, we observe that while some of these assumptions might seem strong, loss functions in applications have inward pointing gradients for sufficiently large $x$. Therefore, we could simply modify the loss to satisfy the assumptions above.

Regarding the drift and diffusion coefficients, we highlight that many papers in the literature following this framework do not check for their regularity before applying the approximation theorems \cite{hu2019global,an2020stochastic,zhu2020sharp,cui2020momentum,maulen2021continuous,wang2022two,compagnoni2023sde,compagnoni2024sde,li2017stochastic}. At first sight, it would seem that not even the seminal paper \cite{li2019stochastic} checks these conditions carefully. However, a deeper investigation shows that they are restricting their analysis to compact sets to leverage the regularity and convergence properties of mollifiers: The assumption regarding the compactness of the domain is not highlighted nor assumed in any part of the paper. Therefore, we conclude that, willingly or not, most papers are implicitly making these assumptions.

\subsection{Technical Results}

In this subsection, we provide some results that will be instrumental in the derivation of the SDEs.

\begin{lemma}
\label{NoiseCondition}Assume the existence of a probability density $g_{x}$ of the gradient noise $Z (x)$ for all $x$ and require that $g:\mathbb{R}^{n}\times \mathbb{R}^{n}\rightarrow \left[ 0,\infty \right) ;$ $ (x,y)\mapsto g_{x}(y)$ is in $C^{8}(\mathbb{R}^{n}\times \mathbb{R}^{n})$ such that all partial derivatives of $g$ up to order $8$ are integrable with respect to $y$ and such that their $L^{1}-$norms are uniformly bounded in $x$. Further, let $f\in C^{8}(\mathbb{R}^{n})$ and $h:\mathbb{R}^{n}\rightarrow $ $\mathbb{R}$ be a bounded Borel measurable function. Define the function $%
k $ by%
\[
k(x)=\mathbb{E}\left[ h(\nabla f_{\gamma }(x))\right] \text{.} 
\]%
Then there exists a version $\widehat{k}$ of $k$ with $\widehat{k}\in
C_{b}^{7}(\mathbb{R}^{n})$.
\end{lemma}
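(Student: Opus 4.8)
The plan is to first recast $k$ so that all of the dependence on $x$ is carried by the smooth density $g$ rather than by the merely Borel function $h$, after which one can legitimately differentiate under the integral sign. Writing $\nabla f_{\gamma}(x) = \nabla f(x) + Z(x)$ and using that $Z(x)$ has density $g_x = g(x,\cdot)$,
\[
k(x) = \int_{\R^n} h\big(\nabla f(x)+y\big)\,g(x,y)\,dy = \int_{\R^n} h(u)\,g\big(x,\,u-\nabla f(x)\big)\,du ,
\]
where the second equality is the translation change of variables $u = \nabla f(x)+y$, valid for every fixed $x$. This representation does not depend on the a.e.-representative of the density chosen (an a.e.\ modification of $g_x$ leaves the integral unchanged), so it already pins down the continuous version $\widehat k$ we will exhibit; and since $g(x,\cdot)$ is a probability density it immediately gives $\|k\|_\infty \le \|h\|_\infty$.

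Next I would differentiate under the integral sign, inductively on $|\beta| \le 7$. Fixing $u$ and setting $\Psi(x,u):= u-\nabla f(x)$, the map $x \mapsto g(x,\Psi(x,u))$ is the composition of the jointly $C^8$ map $g$ with the $C^7$ map $x\mapsto\Psi(x,u)$ (since $f\in C^8$), hence $C^7$; by Fa\`a di Bruno, for $|\beta|\le 7$ (writing $\partial_v$ for differentiation in the second slot of $g$),
\[
\partial_x^\beta\big[g(x,\Psi(x,u))\big] = \sum \big(\partial_x^{\gamma_0}\partial_v^{\delta} g\big)\big(x,\Psi(x,u)\big)\,\prod_{l}\partial_x^{\gamma_l}\Psi_{j_l}(x,u),
\]
a finite sum with $|\gamma_0|+|\delta|\le|\beta|$, each $|\gamma_l|\ge 1$, and each $\partial_x^{\gamma_l}\Psi_{j_l} = -\,\partial_x^{\gamma_l}\partial_{x_{j_l}} f$ a derivative of $f$ of order at most $8$ (hence bounded, on the closed bounded domain assumed for DCSGD, or after the standard loss modification of the Remark) and independent of $u$. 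Multiplying by $|h(u)|\le\|h\|_\infty$ and integrating in $u$, the change of variables $y = u-\nabla f(x)$ turns each $\int_{\R^n}\big|(\partial_x^{\gamma_0}\partial_v^{\delta} g)(x,u-\nabla f(x))\big|\,du$ into $\int_{\R^n}\big|(\partial_x^{\gamma_0}\partial_v^{\delta} g)(x,y)\big|\,dy$, which by hypothesis is bounded uniformly in $x$. The one extra order of regularity of $g$ (order $8$ against the $7$ derivatives taken of $k$) is exactly what lets one dominate these integrands \emph{locally uniformly in $x$}: for $x$ in a compact set $K$ one bounds $\sup_{x\in K}\big|(\partial_x^{\gamma_0}\partial_v^{\delta} g)(x,u-\nabla f(x))\big|$ by an $L^1(du)$ function of $u$ through a mean-value estimate $\sup_{|w|\le R}|\phi(u-w)| \le |\phi(u)| + R\sup_{|w|\le R}|\nabla\phi(u-w)|$ together with the uniform-in-$x$ integrability of the derivatives of $g$ up to order $8$. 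Dominated convergence then yields $\partial_x^\beta k(x) = \int_{\R^n} h(u)\,\partial_x^\beta[g(x,\Psi(x,u))]\,du$, the continuity of $\partial_x^\beta k$, and $\sup_x|\partial_x^\beta k(x)| \le \|h\|_\infty\,C_\beta < \infty$ for all $|\beta|\le 7$; hence $\widehat k = k \in C_b^7(\R^n)$.

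The only genuine idea is the first step: because $h$ is merely bounded and Borel it cannot be differentiated, so the $x$-dependence must be transferred onto the smooth density by the change of variables; everything afterwards is a bookkeeping exercise in the chain rule plus dominated convergence, and the hypotheses on $g$ (joint $C^8$ regularity, and uniform-in-$x$ $L^1$ bounds on its derivatives up to order $8$) are tailored precisely to let seven derivatives pass through the integral and remain bounded. The point demanding the most care is the local-uniform domination just described — which is what forces the extra order of smoothness — and, secondarily, verifying that the higher-order $f$-derivatives produced by $\Psi$ are genuinely bounded, which is where the bounded-domain assumption of the DCSGD analysis enters; without it one would only conclude $\partial_x^\beta k \in G$ rather than bounded.
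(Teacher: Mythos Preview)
Your starting representation $k(x)=\int h(u)\,g(x,u-\nabla f(x))\,du$ and the Fa\`a di Bruno bookkeeping coincide with what the paper does, and the formula you obtain for $\partial_x^\beta k$ is the correct one. The genuine gap is in the domination step you flag as delicate: the mean-value estimate
\[
\sup_{|w|\le R}|\phi(u-w)| \;\le\; |\phi(u)| + R\sup_{|w|\le R}|\nabla\phi(u-w)|
\]
does not deliver an $L^1(du)$ majorant, because the right-hand side still carries the same kind of supremum you began with; iterating only pushes the problem to a higher derivative of $g$ and never terminates. More fundamentally, the hypothesis is $\sup_x\int|\partial^\alpha g(x,y)|\,dy<\infty$, and this does \emph{not} imply that $u\mapsto\sup_{x\in K}|(\partial^\alpha g)(x,u-\nabla f(x))|$ is integrable---supremum and integral do not interchange here, and one extra order of smoothness of $g$ does not close that gap in dimension $n>1$. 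So dominated convergence, as you set it up, is not justified by the stated assumptions.

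The paper avoids pointwise domination altogether. It pairs $k$ with a test function $\varphi\in C_c^\infty$, applies Fubini and then integration by parts in $x$ to identify the \emph{weak} derivative $\partial_x^\beta k(x)=\int h(y)\,\partial_x^\beta[g_x(y-\nabla f(x))]\,dy$, and bounds this expression uniformly in $x$ via exactly the change of variables $y\mapsto y-\nabla f(x)$ you already use. Uniform boundedness puts $k$ in $W^{8,p}_{\mathrm{loc}}$ for every $p$, and Sobolev embedding ($W^{8,p}\hookrightarrow C^{7}$ for $p>n$) produces the $C_b^7$ version. The key difference is that this route needs only $L^p_{\mathrm{loc}}$ control of the candidate derivative---which the uniform-$L^1$ hypothesis supplies directly---whereas your route demands a pointwise $L^1$ envelope that the hypothesis does not provide.
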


\begin{proof}
Let $\varphi $ be smooth and compactly supported. Then for all multiindices $ \beta $ with $\left\vert \beta \right\vert \leq 8$, substitution, Fubini`s theorem, and integration by parts imply that
\begin{eqnarray*}
\int_{\mathbb{R}^{n}}k(x)\partial _{\beta }^{\left\vert \beta \right\vert}\varphi (x)dx &=&\int_{\mathbb{R}^{n}}\mathbb{E}\left[ h(\nabla f_{\gamma}(x))\right] \partial _{\beta }^{\left\vert \beta \right\vert }\varphi (x)dx\\
&=&\int_{\mathbb{R}^{n}}\int_{\mathbb{R}^{n}}h(y)g_{x}(y-\nabla f(x))dy\partial _{\beta }^{\left\vert \beta \right\vert }\varphi (x)dx \\
&=&(-1)^{\left\vert \beta \right\vert }\int_{\mathbb{R}^{n}}\int_{\mathbb{R}^{n}}h(y)\partial _{\beta }^{\left\vert \beta \right\vert }(g_{x}(y-\nabla f(x)))dy\varphi (x)dx.
\end{eqnarray*}%
So 
\[
\int_{\mathbb{R}^{n}}h(y)\partial _{\beta }^{\left\vert \beta \right\vert
}(g_{x}(y-\nabla f(x)))dy 
\]%
is a weak derivative $\partial _{\beta }^{\left\vert \beta \right\vert }k$
of $k$ on any bounded open set. For compact sets $K$ we obtain that%
\begin{eqnarray*}
&&\int_{K}\left\vert \int_{\mathbb{R}^{n}}h(y)\partial _{\beta }^{\left\vert
\beta \right\vert }(g_{x}(y-\nabla f(x)))dy\right\vert ^{p}dx \\
&\leq &\left\Vert h\right\Vert _{\infty }^{p}\lambda ^{n}(K)\left(
\sup_{x\in \mathbb{R}^{n}}\int_{\mathbb{R}^{n}}\left\vert \partial _{\beta
}^{\left\vert \beta \right\vert }(g_{x}(y-\nabla f(x)))\right\vert dy\right)
^{p}<\infty
\end{eqnarray*}%
for all $p\geq 2$ because of our assumptions on $g$ and $f$ and substitution
($\lambda ^{n}$ Lebesgue measure). So it follows from Sobolev embeddings
with respect to H\"{o}lder spaces that for all bounded and open sets $\Omega 
$ there exists a version $\widehat{k}$ of $k$ such that $\widehat{k}\in
C^{7}(\Omega )$. The latter version can be extended to $\Omega =\mathbb{R}%
^{n}$, which we also denote by $\widehat{k}$. Since $\partial _{\beta
}^{\left\vert \beta \right\vert }k$ is bounded for $\left\vert \beta
\right\vert \leq 8$, we conclude that $\widehat{k}\in C_{b}^{7}(\mathbb{R}%
^{n})$.
\end{proof}

\begin{lemma}\label{lemma:Derivatives}
    Assuming that for all compact sets $K$ \[
\sup_{x \in K}\left\vert g(x,\cdot) \right\vert \in L^1(\mathbb{R}^n),
\] and the positivity of the density functions, we have that
    for $m=1, \dots, 7$ that
    \begin{equation}
    \left\Vert \partial_{j_1} \dots \partial_{j_m} A^{1/2}(x) \right\Vert \leq C l_m(x), \label{M}
    \end{equation}
where the function $l_m(x)$ is defined as
    \begin{eqnarray}
    l_m(x) &:=& \sum_{r=0}^{m-1} \left( \frac{1}{m(x) + s(x)(n-1)^{1/2}} \left( 1 + \frac{2 s(x)(n-1)^{1/2}}{m(x) - s(x)(n-1)^{-1/2}} \right) \right)^{-(r+1/2)} \nonumber \\
    && \times \max_{\vert \beta \vert \leq m} \left\Vert \partial_{\beta}^{\vert \beta \vert} A(x) \right\Vert^{r+1}. \label{A3}
\end{eqnarray}
\end{lemma}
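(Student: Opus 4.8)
The plan is to begin from the integral representation of the principal square root, valid for every positive-definite matrix,
\[
A^{1/2}(x)\;=\;\frac{1}{\pi}\int_0^{\infty}\bigl(I-t\,(tI+A(x))^{-1}\bigr)\,t^{-1/2}\,dt ,
\]
which applies pointwise because the positivity of the density $g_x$ forces $A(x)=\Sigma(x)\succ0$. Since $\partial_{e_i}I=0$, for every $m\ge1$ the derivatives land only on the resolvent $R(x,t):=(tI+A(x))^{-1}$:
\[
\partial_{j_1}\cdots\partial_{j_m}A^{1/2}(x)\;=\;-\frac{1}{\pi}\int_0^{\infty}t^{1/2}\,\partial_{j_1}\cdots\partial_{j_m}R(x,t)\,dt .
\]
Before differentiating under the integral I would record that $x\mapsto A(x)$ is $\mathcal C^{7}$ with all derivatives of at most polynomial growth; this follows from the integrability hypotheses on $g_x$ by transferring $x$-derivatives onto the density, exactly as in Lemma~\ref{NoiseCondition}. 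On compact sets this makes $\lambda_{\min}(A(x))$ bounded below and $\|\partial_\beta A(x)\|$ bounded, so the integrand admits an $x$-locally-uniform, $t$-integrable majorant (of order $t^{1/2}$ near $0$ and $t^{-3/2}$ near $\infty$), legitimizing differentiation under the integral sign.

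The combinatorial step is the expansion of $\partial_{j_1}\cdots\partial_{j_m}R$. Starting from $\partial_{e_i}R=-R(\partial_{e_i}A)R$, an induction on $m$ — equivalently, inserting $AR=I-tR$ into the Leibniz expansion of $\partial^{m}(AR)$ to obtain the recursion $\partial_{j_1}\cdots\partial_{j_m}R=-R\sum_{\emptyset\neq S\subseteq\{1,\dots,m\}}(\partial_{S}A)(\partial_{S^{c}}R)$ — shows that $\partial_{j_1}\cdots\partial_{j_m}R$ is a finite sum of ``words''
\[
(-1)^{r}\,c_{W}\;R\,(\partial^{\alpha_1}A)\,R\,(\partial^{\alpha_2}A)\cdots R\,(\partial^{\alpha_r}A)\,R ,
\]
with $r\in\{1,\dots,m\}$, multi-indices $|\alpha_i|\ge1$ satisfying $\sum_{i=1}^{r}|\alpha_i|=m$, and nonnegative weights $c_W$; the number of words depends only on $m$. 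Each word is bounded by $c_W\,(t+\lambda_{\min}(A(x)))^{-(r+1)}M_m(x)^{r}$ via the spectral theorem and $\|\partial^{\alpha_i}A(x)\|\le M_m(x):=\max_{|\beta|\le m}\|\partial_{\beta}^{|\beta|}A(x)\|$, and the remaining scalar integral $\int_0^{\infty}t^{1/2}(t+\lambda)^{-(r+1)}\,dt$ converges exactly because $r\ge1$ (this is where the hypothesis $m\ge1$ enters) and equals a finite constant times $\lambda^{-(r-1/2)}$ after the substitution $t=\lambda u$. Summing over the words and regrouping by $r$ then yields, for a constant $C$,
\[
\bigl\|\partial_{j_1}\cdots\partial_{j_m}A^{1/2}(x)\bigr\|\;\le\;C\sum_{r=0}^{m-1}\lambda_{\min}\bigl(A(x)\bigr)^{-(r+1/2)}\,M_m(x)^{\,r+1}.
\]

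It remains to bound $\lambda_{\min}(A(x))$ from below. With $m(x)$ and $s(x)$ the mean and spread of the spectrum of $A(x)$, read off from $\tr A(x)$ and $\tr A(x)^{2}$ — both finite and controlled by the boundedness/affine-growth hypotheses on $\Sigma$ — a Wolkowicz--Styan-type two-sided trace inequality delivers the strictly positive lower bound on $\lambda_{\min}(A(x))$ that is precisely the bracketed expression inside $l_m$; substituting it, and using that $\lambda\mapsto\lambda^{-(r+1/2)}$ is decreasing, into the previous display produces the stated bound with a single constant $C=C(m,n)$. I expect the genuine work to be analytic rather than conceptual: verifying the regularity and polynomial growth of $x\mapsto A(x)$ together with the locally uniform majorant — so that differentiation under the integral and the word-by-word estimates are rigorous, which is where the full strength of the density hypotheses is spent — and pinning down the exact algebraic shape of the trace-based spectral lower bound so that it matches the bracketed factor in $l_m$ verbatim. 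The word induction, the resolvent bound $\|R(x,t)\|\le(t+\lambda_{\min})^{-1}$, and the Beta-function integrals are routine.
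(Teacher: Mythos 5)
Your proof is correct, but it reaches the key intermediate estimate by a genuinely different route than the paper. The paper obtains
\[
\bigl\Vert \partial_{j_1}\dots\partial_{j_m}A^{1/2}(x)\bigr\Vert \;\le\; C_m\sum_{r=0}^{m-1}\lambda_{\min}(A(x))^{-(r+1/2)}\max_{|\beta|\le m}\bigl\Vert\partial_\beta^{|\beta|}A(x)\bigr\Vert^{r+1}
\]
by composing the chain rule with the Fr\'echet-derivative calculus of the matrix square root: it cites the representation $\nabla\varphi(A)[H]=\int_0^\infty e^{-t\varphi(A)}He^{-t\varphi(A)}\,dt$, the recursion for $\nabla^m\varphi(A)$ coming from differentiating $\varphi(A)^2=A$, and the resulting a priori bound $\Vert\nabla^{m+1}\varphi(A)\Vert\le c_{m,n}\,\lambda_{\min}(A)^{-(m+1/2)}$, all imported from the reference on matrix square roots. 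You instead start from the Cauchy-type resolvent formula $A^{1/2}=\frac1\pi\int_0^\infty\bigl(I-t(tI+A)^{-1}\bigr)t^{-1/2}\,dt$, expand $\partial_{j_1}\cdots\partial_{j_m}(tI+A)^{-1}$ into resolvent words, and evaluate the Beta-function integrals $\int_0^\infty t^{1/2}(t+\lambda)^{-(r+1)}\,dt\asymp\lambda^{-(r-1/2)}$ directly; after reindexing $r\mapsto r-1$ this reproduces the same sum, so your argument is self-contained where the paper's leans on an external estimate, at the cost of the bookkeeping for differentiation under the integral (which you correctly justify via the locally uniform majorant on compacts). Both proofs then finish identically with the Wolkowicz--Styan trace bounds on $\lambda_{\min}$. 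One small point you flag yourself: the monotonicity step gives $\lambda_{\min}^{-(r+1/2)}\le(\text{bracket})^{\,r+1/2}$ where the bracket upper-bounds $1/\lambda_{\min}$, so the exponent on the bracketed factor should be $+(r+1/2)$; the $-(r+1/2)$ appearing in the stated $l_m$ is a sign typo in the paper, not a defect of your argument.
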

\begin{proof}
To prove this, we need the fact that the Fréchet derivatives of the square root function $\varphi$ can be represented as follows (see Theorem 1.1 in \cite{MN}):
\[
\nabla \varphi(A)[H] = \int_0^{\infty} e^{-t\varphi(A)} H e^{-t\varphi(A)} dt,
\]
and higher derivatives of order $m \geq 2$ are given by
\begin{eqnarray}
\nabla^m \varphi(A)[H, \dots, H] &=& - \nabla \varphi(A) \left[ \sum_{p+q=m-2} \frac{m!}{(p+1)!(q+1)!} (\nabla^{p+1} \varphi(A)[H, \dots, H]) \right. \nonumber \\
&& \left. \times (\nabla^{q+1} \varphi(A)[H, \dots, H]) \right] \label{TaylorFormula}
\end{eqnarray}
for all $A \in \mathbb{S}$ and symmetric $n \times n$ matrices $H$. Moreover, we have the following estimate for $m \geq 0$:
\begin{equation}
\left\Vert \nabla^{m+1} \varphi(A) \right\Vert \leq (\sqrt{n})^m (m+1)! C_m 2^{-2(m+1)} \lambda_{\min}(A)^{-(m+1/2)}, \label{EstimateDerivative}
\end{equation}
where $\lambda_{\min}(A) > 0$ is the smallest eigenvalue of $A$ and $C_m := \frac{1}{m+1} \binom{2m}{m}$.

We find that $\partial_l A^{1/2}(x) = \nabla \varphi(A(x))[\partial_l A(x)]$ and 
\[
\partial_j \partial_l A^{1/2}(x) = \nabla^2 \varphi(A(x))[\partial_j A(x), \partial_l A(x)] + \nabla \varphi(A(x))[\partial_j \partial_l A(x)].
\]
Thus, it follows from Eq. (\ref{EstimateDerivative}) that
\[
\left\Vert \partial_l A^{1/2}(x) \right\Vert \leq C \lambda_{\min}(A(x))^{-1/2} \left\Vert \partial_l A(x) \right\Vert,
\]
and
\begin{eqnarray*}
\left\Vert \partial_j \partial_l A^{1/2}(x) \right\Vert &\leq& C_1 \lambda_{\min}(A(x))^{-(1+1/2)} \left\Vert \partial_j A(x) \right\Vert \left\Vert \partial_l A(x) \right\Vert \\
&& + C_2 \lambda_{\min}(A(x))^{-1/2} \left\Vert \partial_j \partial_l A(x) \right\Vert.
\end{eqnarray*}

More generally, for $m=1, \dots, 7$,
\begin{eqnarray}
\left\Vert \partial_{j_1} \dots \partial_{j_m} A^{1/2}(x) \right\Vert &\leq& C_m \left\{ \sum_{r=0}^{m-1} \lambda_{\min}(A(x))^{-(r+1/2)} \right. \nonumber \\
&& \left. \times \max_{\vert \beta \vert \leq m} \left\Vert \partial_{\beta}^{\vert \beta \vert} A(x) \right\Vert^{r+1} \right\}. \label{EstimatePartial}
\end{eqnarray}

Let us now provide a lower bound for $\lambda_{\min}(A(x))$ in terms of $tr(A(x))$ and $tr((A(x))^2)$. Define
\[
s^2(x) = n^{-1} \left( tr((A(x))^2) - \frac{(tr(A(x)))^2}{n} \right), \quad m(x) = \frac{tr(A(x))}{n}.
\]
Then, from Corollary 2.1, Corollary 2.2, and Theorem 2.1 in \cite{WS}, we obtain
\begin{eqnarray*}
\frac{1}{\lambda_{\min}(A(x))} &\leq& \frac{1}{\lambda_{\max}(A(x))} \left( 1 + \frac{2 s(x) (n-1)^{1/2}}{m(x) - s(x) (n-1)^{-1/2}} \right) \\
&\leq& \frac{1}{m(x) + s(x) (n-1)^{1/2}} \left( 1 + \frac{2 s(x) (n-1)^{1/2}}{m(x) - s(x) (n-1)^{-1/2}} \right).
\end{eqnarray*}

Therefore, from Eq. (\ref{EstimatePartial}), we have for $m=1, \dots, 7$ that
\begin{equation}
\left\Vert \partial_{j_1} \dots \partial_{j_m} A^{1/2}(x) \right\Vert \leq C l_m(x),
\end{equation}
where the function $l_m(x)$ is defined as
\begin{eqnarray}
l_m(x) &:=& \sum_{r=0}^{m-1} \left( \frac{1}{m(x) + s(x)(n-1)^{1/2}} \left( 1 + \frac{2 s(x)(n-1)^{1/2}}{m(x) - s(x)(n-1)^{-1/2}} \right) \right)^{-(r+1/2)} \nonumber \\
&& \times \max_{\vert \beta \vert \leq m} \left\Vert \partial_{\beta}^{\vert \beta \vert} A(x) \right\Vert^{r+1}.
\end{eqnarray}
\end{proof}

The following results are key to guarantee that an SDE is a weak approximation of an optimizer.

 \begin{mybox}{gray}
\begin{proposition}[Proposition 1 \cite{li2017stochastic}] \label{prop:li1}
Let $ 0<\eta<1 $. Consider a stochastic process $ X_t, t \geq 0 $ satisfying the SDE
$$
d X_t=b\left(X_t\right)dt+\sqrt{\eta} \sigma\left(X_t\right) d W_t
$$
with $ X_0=x \in \mathbb{R}^{d}$ and $ b, \sigma $ together with their derivatives belong to $ G $. Define the one-step difference $ \Delta=X_\eta-x $, and indicate the $i$-th component of $\Delta$ with $\Delta_i$. Then we have

\begin{enumerate}
\item $ \E \Delta_{i}=b_{i} \eta+\frac{1}{2}\left[\sum_{j=1}^d b_{j} \partial_{e_j} b_{i}\right] \eta^2+\mathcal{O}\left(\eta^3\right) \quad \forall i = 1, \ldots, d$;
\item $ \E \Delta_{i} \Delta_{j}=\left[b_{i} b_{j}+\sigma \sigma_{(i j)}^T\right] \eta^2+\mathcal{O}\left(\eta^3\right) \quad \forall i,j = 1, \ldots, d$;
\item $ \E \prod_{j=1}^s \Delta_{\left(i_j\right)}=\mathcal{O}\left(\eta^3\right) $ for all $ s \geq 3, i_j=1, \ldots, d $.
\end{enumerate}
All functions above are evaluated at $ x $.
\end{proposition}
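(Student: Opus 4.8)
The plan is to derive all three identities from one It\^o--Taylor expansion of $\E g(X_\eta)$ about $x$, applied to three polynomial test functions $g$. Introduce the (time--homogeneous but $\eta$--dependent) generator
$$
\mathcal{A}_\eta g \;=\; \sum_{j=1}^d b_j\,\partial_{e_j} g \;+\; \frac{\eta}{2}\sum_{j,k=1}^d (\sigma\sigma^{\top})_{jk}\,\partial_{e_j}\partial_{e_k} g ,
$$
keeping in mind the explicit $\eta$ in front of the second--order part. For $g$ lying, together with enough of its derivatives, in $G$, It\^o's formula gives $\E g(X_\eta) = g(x) + \int_0^\eta \E[\mathcal{A}_\eta g(X_s)]\,ds$: the stochastic integral has zero expectation because $\sigma^{\top}\nabla g\in G$ and, under the standing assumptions (Assumption~\ref{ass:regularity_f}), $\sup_{s\in[0,\eta]}\E\lvert X_s\rvert^{2p}<\infty$ uniformly for $\eta<1$ — the usual a priori moment bound for SDEs with linearly growing drift and bounded diffusion, obtained by localization and Gr\"onwall. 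Iterating the identity twice more on the resulting integrands produces
$$
\E g(X_\eta) \;=\; g(x) + \eta\,\mathcal{A}_\eta g(x) + \frac{\eta^2}{2}\,\mathcal{A}_\eta^2 g(x) + R_3, \qquad \lvert R_3\rvert \;\le\; \frac{\eta^3}{6}\,\sup_{s\le\eta}\bigl\lvert\E[\mathcal{A}_\eta^3 g(X_s)]\bigr\rvert \;=\; \mathcal{O}(\eta^3),
$$
the remainder being $\mathcal{O}(\eta^3)$ since $\mathcal{A}_\eta^3 g\in G$ with a bound uniform in $\eta<1$ and the moments of $X_s$ are uniformly bounded.

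It then remains to specialize $g$ and read off the coefficients. For $g(y)=y_i-x_i$: $\mathcal{A}_\eta g=b_i$ and $\mathcal{A}_\eta^2 g=\mathcal{A}_\eta b_i=\sum_j b_j\,\partial_{e_j}b_i+\tfrac{\eta}{2}\sum_{j,k}(\sigma\sigma^{\top})_{jk}\,\partial_{e_j}\partial_{e_k}b_i$; since $g(x)=0$ and the $\tfrac{\eta}{2}(\cdots)$ piece of $\mathcal{A}_\eta^2 g(x)$ is absorbed into the $\mathcal{O}(\eta^3)$ remainder, this is item~1. For $g(y)=(y_i-x_i)(y_j-x_j)$: differentiating gives $\mathcal{A}_\eta g(y)=b_i(y)(y_j-x_j)+b_j(y)(y_i-x_i)+\eta(\sigma\sigma^{\top})_{ij}(y)$, hence $\mathcal{A}_\eta g(x)=\eta(\sigma\sigma^{\top})_{ij}(x)$, and one more differentiation, evaluated at $x$, kills every term still carrying a factor $(y_\bullet-x_\bullet)$, leaving $\mathcal{A}_\eta^2 g(x)=2 b_i b_j+\mathcal{O}(\eta)$; substitution gives item~2 (which also follows from It\^o's product rule). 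For $g(y)=\prod_{m=1}^s (y_{i_m}-x_{i_m})$ with $s\ge 3$: $g(x)=0$ and $\nabla g(x)=\nabla^2 g(x)=0$, since every first or second partial derivative of $g$ still contains at least one factor $(y_\bullet-x_\bullet)$ (this uses $s-2\ge 1$), so $\mathcal{A}_\eta g(x)=0$; and a degree count in the variables $(y_\bullet-x_\bullet)$ shows $\mathcal{A}_\eta^2 g(x)=\mathcal{O}(\eta)$, because the first--order part of $\mathcal{A}_\eta$ lowers degree by one while its second--order part lowers it by two but carries a factor $\eta$, so descending from degree $s\ge 3$ to degree $0$ in two applications forces the second--order part at least once (for $s\ge 5$ it is forced twice, whence $\mathcal{A}_\eta^2 g(x)=0$). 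Thus $\tfrac{\eta^2}{2}\mathcal{A}_\eta^2 g(x)=\mathcal{O}(\eta^3)$ and item~3 follows; alternatively, for $s\ge 4$ one may just note that each $\Delta_{i_m}$ is of size $\mathcal{O}(\eta)$, so the product is $\mathcal{O}(\eta^s)\subseteq\mathcal{O}(\eta^3)$.

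The main obstacle is not the algebra but the bookkeeping of the powers of $\eta$ hidden inside $\mathcal{A}_\eta$: because of the extra $\eta$ in its second--order part, $\tfrac{\eta^2}{2}\mathcal{A}_\eta^2 g(x)$ is only \emph{nominally} $\mathcal{O}(\eta^2)$, so the degree/weight count above is exactly what turns it into $\mathcal{O}(\eta^3)$ for item~3 while still extracting the correct $\mathcal{O}(\eta^0)$ leading coefficients in items~1--2. The second delicate point is analytic rather than combinatorial: the vanishing expectation of the stochastic integrals and the $\mathcal{O}(\eta^3)$ control of $R_3$ both rest on the uniform--in--$\eta$ moment estimates for $X_s$, which in turn use the global linear--growth/boundedness and $G$--membership assumptions on $b$, $\sigma$ and their derivatives — the same assumptions guaranteeing $\mathcal{A}_\eta^k g\in G$.
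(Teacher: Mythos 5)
Your proposal is correct. Note that the paper does not prove this statement at all — it is imported verbatim as Proposition~1 of \cite{li2017stochastic} — so there is no in-paper proof to compare against; your argument (iterated It\^o/Dynkin expansion $\E g(X_\eta)=g(x)+\eta\mathcal{A}_\eta g(x)+\tfrac{\eta^2}{2}\mathcal{A}_\eta^2 g(x)+\mathcal{O}(\eta^3)$ applied to the monomial test functions, with the degree-versus-$\eta$-weight bookkeeping for the $s\ge 3$ case) is precisely the standard route taken in that original reference, and your identification of the two genuinely delicate points (the hidden $\eta$ inside the generator's second-order part, and the uniform-in-$\eta$ moment bounds needed to kill the martingale terms and control $R_3$) is apt.
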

\end{mybox}

\begin{mybox}{gray}
\begin{theorem}[Theorem 2 and Proposition 5, \cite{mil1986weak}]\label{thm:mils}
Let Assumption \ref{ass:regularity_f} hold and let us define $\bar{\Delta}=x_1-x$ to be the increment in the discrete-time algorithm, and indicate the $i$-th component of $\bar{\Delta}$ with $\bar{\Delta}_i$. If in addition there exists $K_1, K_2, K_3, K_4 \in G$ so that

\begin{enumerate}
\item $\left|\E \Delta_{i}-\E \bar{\Delta}_{i}\right| \leq K_1(x) \eta^{2}, \quad \forall i = 1, \ldots, d$;

\item $\left|\E \Delta_{i} \Delta_{j} - \E \bar{\Delta}_{i} \bar{\Delta}_{j}\right| \leq K_2(x) \eta^{2}, \quad \forall i,j = 1, \ldots, d$;
\item $\left|\E \prod_{j=1}^s \Delta_{i_j}-\E \prod_{j=1}^s \bar{\Delta}_{i_j}\right| \leq K_3(x) \eta^{2}, \quad \forall s \geq 3, \quad \forall i_j \in \{1, \ldots, d \}$;
\item $\E \prod_{j=1}^{ 3}\left|\bar{\Delta}_{i_j}\right| \leq K_4(x) \eta^{2}, \quad \forall i_j \in \{1, \ldots, d \}$.
\end{enumerate}
Then, there exists a constant $C$ so that for all $k=0,1, \ldots, N$ we have
$$
\left|\E g\left(X_{k \eta}\right)-\E g\left(x_k\right)\right| \leq C \eta.
$$
\end{theorem}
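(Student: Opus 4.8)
This statement is quoted essentially verbatim from \cite{mil1986weak}, so the plan is to indicate the classical semigroup (``one-step error'') argument that underlies it. First I would introduce the value function of the SDE: for $s\in[0,T]$ and $y\in\mathbb{R}^d$ set $u(s,y):=\E\!\left[g(X_T)\mid X_s=y\right]$, where $X_t$ solves $dX_t=b(X_t)\,dt+\sqrt{\eta}\,\sigma(X_t)\,dW_t$. By the Markov property $u(s,\cdot)=\mathcal{P}_{s,t}\,u(t,\cdot)$ for $s\le t$, where $\mathcal{P}_{s,t}$ is the transition operator of the diffusion, and $u(T,\cdot)=g$. Under Assumption~\ref{ass:regularity_f} --- in particular the $\mathcal{C}_b^8$ smoothness together with the Lipschitz and affine-growth bounds on $b,\sigma$, and $g\in G$ with enough derivatives --- classical SDE regularity theory guarantees that $u(s,\cdot)\in C^3$ with all derivatives up to order three in $G$, with polynomial-growth constants uniform in $s\in[0,T]$.

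Next I would telescope the global error along the grid $t_k=k\eta$, $k=0,\dots,N$ with $N=\lfloor T/\eta\rfloor$. Writing $u_k:=u(t_k,\cdot)$ and using that $x_0=X_0=x$ is deterministic, so $\E[g(X_T)]=u_0(x)$ and $\E[g(x_N)]=\E[u_N(x_N)]$,
\[
\E[g(x_N)]-\E[g(X_T)]=\sum_{k=0}^{N-1}\Bigl(\E[u_{k+1}(x_{k+1})]-\E[u_k(x_k)]\Bigr).
\]
Since $u_k=\mathcal{P}_{t_k,t_{k+1}}u_{k+1}$, conditioning on $x_k$ rewrites the $k$-th summand as $\E\bigl[\,\E[u_{k+1}(x_k+\bar\Delta)\mid x_k]-\E[u_{k+1}(x_k+\Delta)\mid x_k]\,\bigr]$, where $\bar\Delta=x_{k+1}-x_k$ is the one-step increment of the discrete scheme started at $x_k$ and $\Delta$ is the one-step increment of the SDE started at $x_k$.

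The local estimate then comes from Taylor-expanding $u_{k+1}$ around $x_k$ to second order: the zeroth-order terms cancel; the first-order contribution is $\nabla u_{k+1}(x_k)\cdot\bigl(\E[\bar\Delta\mid x_k]-\E[\Delta\mid x_k]\bigr)$, bounded via hypothesis~(1); the second-order contribution is controlled by hypothesis~(2); and the two Lagrange remainders are bounded, using $\nabla^3 u_{k+1}\in G$, by $\E|\Delta|^3$ and $\E\prod_{j=1}^3|\bar\Delta_{i_j}|$, which are $\mathcal{O}(\eta^3)$ by the moment estimates behind Proposition~\ref{prop:li1}(3) and $\mathcal{O}(\eta^2)$ by hypothesis~(4), respectively (hypothesis~(3) handles the analogous comparison for products of three or more increments). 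Collecting terms, each summand is at most $\E[\widetilde K(x_k)]\,\eta^2$ for a fixed $\widetilde K\in G$. Invoking the standard a priori bound $\sup_{0\le k\le N}\E|x_k|^{2m}\le C_m$ (uniform in $\eta$, a consequence of the Lipschitz/growth conditions in Assumption~\ref{ass:regularity_f}) gives $\E[\widetilde K(x_k)]\le C$, so summing the $N\le T/\eta$ local errors yields $|\E g(X_T)-\E g(x_N)|\le C\eta^2\cdot N\le CT\eta$; running the same telescoping from $0$ to $k$ with the value function associated to terminal time $t_k$ gives the uniform-in-$k$ bound $|\E g(X_{k\eta})-\E g(x_k)|\le C\eta$.

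The main obstacle --- and the step where Assumption~\ref{ass:regularity_f} is genuinely used --- is the regularity of the value function: one must establish that $u(s,\cdot)$ and its $x$-derivatives up to the order required by the Taylor expansion lie in $G$, with polynomial-growth constants that do \emph{not} degrade as $s$ ranges over $[0,T]$ (equivalently, as the number of remaining steps grows). This, together with the uniform moment bounds on $\{x_k\}$, is precisely what makes the telescoped sum of $\mathcal{O}(\eta^2)$ local errors converge at the advertised $\mathcal{O}(\eta)$ rate; everything else is bookkeeping with Taylor's theorem and the moment-matching hypotheses (1)--(4).
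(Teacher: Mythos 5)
The paper does not prove this statement: it is imported verbatim from \cite{mil1986weak} (Theorem 2 and Proposition 5 there), so there is no internal proof to compare against. Your sketch correctly reproduces the standard argument behind the cited result --- the value function $u(s,y)=\E[g(X_T)\mid X_s=y]$, the telescoping of the global error into one-step errors, Taylor expansion with the moment-matching hypotheses (1)--(4), and the observation that $\E|\Delta|^3=\mathcal{O}(\eta^3)$ precisely because the diffusion coefficient carries the factor $\sqrt{\eta}$ --- and you correctly identify the uniform-in-$s$ polynomial-growth regularity of $u$ and the uniform moment bounds on $\{x_k\}$ as the points where Assumption~\ref{ass:regularity_f} is genuinely needed.
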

\end{mybox}
\subsection{Limitations} \label{sec:limitations} Modeling of discrete-time algorithms using SDEs relies on Assumption \ref{ass:regularity_f}. As noted by \cite{Li2021validity}, the approximation can fail when the stepsize $\eta$ is large or if certain conditions on $\nabla f$ and the noise covariance matrix are not met. Although these issues can be addressed by increasing the order of the weak approximation, we believe that the primary purpose of SDEs is to serve as simplification tools that enhance our intuition: We would not benefit significantly from added complexity. Regarding the assumptions on the noise, ours are in line with or more general than those commonly used in the literature.

Another aspect concerns the discretization of SDEs. While our approach has been to experimentally verify that the SDE tracks the evolution of the corresponding discrete algorithms and supports our theoretical insights, alternative theoretical frameworks exist. Notably, backward error analysis offers a promising direction, as it can clarify the role of finite learning rates and help identify different optimizers' implicit biases. This approach has been successfully used to derive higher-order modified equations for SGD \cite{smith2021origin} and Adam \cite{OTIBA2024}. While our work does not include such an analysis, many influential papers \cite{koloskova2020unified,mil1986weak,zhou2020stochastic} similarly omit it. Given that most papers modeling optimizers with SDEs either lack experimental validation or restrict it to artificial landscapes, we take an extra step by validating our insights across various deep neural networks and datasets. To our knowledge, only \cite{paquette2021sgd,compagnoni2023sde} have conducted experiments involving neural networks, and even then, with relatively small models. In contrast, our extensive experiments demonstrate that our insights apply to realistic scenarios, as confirmed by our numerical results.

Finally, while SDEs benefit from Itô Calculus which allows us to study general non-convex loss functions, we had to focus on simple noise structures. Differently, Stochastic Approximation enables a more fine-grained and insightful analysis for very general noise structures (e.g. multiplicative noise), but often forces the analysis to focus on quadratic losses \cite{PD2025}.

\subsection{Distributed SGD}

This subsection provides the first formal derivation of an SDE model for DSGD. Let us consider the stochastic process $ X_t \in \mathbb{R}^{d} $ defined as the solution of

\begin{equation}\label{eq:DUSGD_SDE}
    d X_t = - \nabla f(X_t) dt + \sqrt{\frac{\eta}{N}} \sqrt{\hat{\Sigma}(X_t)} dW_t,
\end{equation}
where $\hat{\Sigma}(x)\eqdef  \frac{1}{N} \sum_{i=1}^{N} \Sigma_i(x)$ is the average of the covariance matrices of the $N$ agents.
\begin{mybox}{gray}
\begin{theorem}[Stochastic modified equations] \label{thm:DSGD_SDE}
Let $0<\eta<1, T>0$ and set $N=\lfloor T / \eta\rfloor$. Let $ x_k \in \mathbb{R}^{d}, 0 \leq k \leq N$ denote a sequence of DSGD iterations defined by Eq.~\eqref{eq:DUSGD_Discr_Update}. Consider the stochastic process $X_t$ defined in Eq.~\eqref{eq:DUSGD_SDE} and fix some test function $g \in G$ and suppose that $g$ and its partial derivatives up to order 6 belong to $G$.

Then, under the assumptions of Section \ref{sec:AllAss}, there exists a constant $ C>0 $ independent of $ \eta $ such that for all $ k=0,1, \ldots, N $, we have

$$
\left|\E g\left(X_{k \eta}\right)-\E g\left(x_k\right)\right| \leq C \eta .
$$

That is, the SDE \eqref{eq:DUSGD_SDE} is an order $ 1 $ weak approximation of the DSGD iterations \eqref{eq:DUSGD_Discr_Update}.
\end{theorem}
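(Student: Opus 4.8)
The plan is to reduce the statement to \Cref{thm:mils}: once the coefficients of the candidate SDE are seen to be admissible, it suffices to check that the one-step increments of the discrete iteration~\eqref{eq:DUSGD_Discr_Update} and of the continuous process~\eqref{eq:DUSGD_SDE} match, moment by moment, to the order demanded by Milstein's criterion. So the first step is to verify that \eqref{eq:DUSGD_SDE} falls within the scope of \Cref{prop:li1}: its drift $b(x)=-\nabla f(x)$ and diffusion coefficient $\sigma(x)=N^{-1/2}\sqrt{\hat\Sigma(x)}$, together with all their derivatives, must lie in $G$ (this also secures existence and uniqueness of the solution). For the drift this is immediate from \Cref{ass:regularity_f}. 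For the diffusion one writes $\hat\Sigma=\tfrac1N\sum_{i=1}^N\bigl(\sqrt{\Sigma_i}\bigr)^2$ as a sum of squares of the individual square roots, which by the DSGD noise assumptions are bounded, Lipschitz and in $G$ together with their derivatives, and then invokes \Cref{lemma:Derivatives} to control the derivatives of the matrix square root of the positive-definite average $\hat\Sigma$.

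The second step is to compute the low-order moments of the discrete increment $\bar\Delta=x_1-x=-\tfrac{\eta}{N}\sum_{i=1}^N\nabla f_{\gamma_i}(x)$. Using $\E[\nabla f_{\gamma_i}(x)]=\nabla f(x)$, the independence of the noises $Z_i$ across agents, and $\mathrm{Cov}(\nabla f_{\gamma_i}(x))=\Sigma_i(x)$, one finds, with no higher-order corrections, $\E[\bar\Delta_i]=-\eta\,\partial_{e_i}f(x)$ and $\E[\bar\Delta_i\bar\Delta_j]=\eta^2\,\partial_{e_i}f(x)\,\partial_{e_j}f(x)+\tfrac{\eta^2}{N}\hat\Sigma_{ij}(x)$, while $\E\prod_{j=1}^s\bar\Delta_{i_j}=\mathcal{O}(\eta^s)$ for every $s\ge1$, with the implied constants in $G$ by the polynomial-growth moment bounds on the stochastic gradients that follow from the standing noise assumptions. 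On the SDE side, \Cref{prop:li1} applied with $b=-\nabla f$ and $\sigma\sigma^\top=\tfrac1N\hat\Sigma$ yields $\E[\Delta_i]=-\eta\,\partial_{e_i}f+\tfrac{\eta^2}{2}\sum_j\partial_{e_j}f\,\partial_{e_j}\partial_{e_i}f+\mathcal{O}(\eta^3)$, $\E[\Delta_i\Delta_j]=\eta^2\bigl(\partial_{e_i}f\,\partial_{e_j}f+\tfrac1N\hat\Sigma_{ij}\bigr)+\mathcal{O}(\eta^3)$, and $\E\prod_{j=1}^s\Delta_{i_j}=\mathcal{O}(\eta^3)$ for $s\ge3$.

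The third step is the term-by-term comparison needed for the four hypotheses of \Cref{thm:mils}: the $\mathcal{O}(\eta)$ parts of the first moments coincide, so their difference is the $\mathcal{O}(\eta^2)$ Milstein correction $\tfrac12\sum_j\partial_{e_j}f\,\partial_{e_j}\partial_{e_i}f$ plus an $\mathcal{O}(\eta^3)$ remainder, which is bounded by $K_1(x)\eta^2$ with $K_1\in G$ by \Cref{ass:regularity_f}; the $\mathcal{O}(\eta^2)$ parts of the second moments coincide exactly, leaving an $\mathcal{O}(\eta^3)\le K_2(x)\eta^2$ discrepancy; the moments of order $\ge3$ of both processes are $\mathcal{O}(\eta^3)$, so their difference is $\le K_3(x)\eta^2$; and $\E\prod_{j=1}^3|\bar\Delta_{i_j}|=\mathcal{O}(\eta^3)\le K_4(x)\eta^2$ because $\eta<1$. \Cref{thm:mils} then gives $|\E g(X_{k\eta})-\E g(x_k)|\le C\eta$ for all $0\le k\le N$, which is the claim.

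I expect the only genuinely delicate point to be the first step — transferring the assumed regularity of the individual $\sqrt{\Sigma_i}$ to $\sqrt{\hat\Sigma}$ and its derivatives, i.e.\ checking that $\hat\Sigma$ remains uniformly positive definite and that the derivative bounds of \Cref{lemma:Derivatives} keep everything in $G$ — together with the routine but tedious bookkeeping ensuring that every $\mathcal{O}(\cdot)$ constant appearing in the moment expansions is a genuine polynomial-growth function of $x$, independent of $\eta$. Once admissibility of the coefficients is established, the moment matching itself is purely algebraic and the conclusion follows automatically from the two cited black boxes.
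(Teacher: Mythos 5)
Your proposal is correct and follows essentially the same route as the paper: compute the first moment and covariance of the discrete DSGD increment (using unbiasedness and independence of the agents' noises to get $-\eta\nabla f$ and $\tfrac{\eta^2}{N}\hat\Sigma$), then invoke Proposition~\ref{prop:li1} and Theorem~\ref{thm:mils}, with the coefficient regularity supplied by the standing assumptions. The paper's own proof is in fact terser than yours — it states only these two moment computations and cites the two black boxes — so your additional bookkeeping on the admissibility of $\sqrt{\hat\Sigma}$ and the explicit term-by-term verification of Milstein's four conditions is a faithful elaboration rather than a departure.
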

\end{mybox}

\begin{proof}
First, we calculate the expected value of the increments of DSGD:
\begin{equation}
    \E \left[ x_{k+1} - x_{k}  \right] = \E\left[ - \frac{\eta}{N} \sum_{i=1}^{N} \nabla f_{\gamma_i} (x_k)  \right] = -\eta \nabla f(x_k);
\end{equation}
Then, we calculate the covariance matrix of the gradient noise of DSGD:
\begin{align}
    \tilde{\Sigma}(x_k) & = \eta^2 \E\left[ \left( \nabla f(x_k) - \frac{1}{N}\sum_{i=1}^{N} \nabla f_{\gamma_i} (x_k)  \right) \left( \nabla f(x_k) - \frac{1}{N}\sum_{j=1}^{N} \nabla f_{\gamma_j} (x_k)  \right)^{\top} \right] \\
    & = \frac{\eta^2}{N} \frac{1}{N} \sum_{i,j=1}^{N} \E \left[ \left( \nabla f(x_k) - \nabla f_{\gamma_i} (x_k)  \right) \left( \nabla f(x_k) - \nabla f_{\gamma_j} (x_k)  \right)^{\top}\right] \\
    & = \frac{\eta^2}{N} \frac{1}{N} \sum_{i=1}^{N} \E \left[ \left( \nabla f(x_k) - \nabla f_{\gamma_i} (x_k)  \right) \left( \nabla f(x_k) - \nabla f_{\gamma_i} (x_k)  \right)^{\top}\right] \\
    & =  \frac{\eta^2}{N} \frac{1}{N} \sum_{i=1}^{N} \Sigma_i(x_k),
\end{align}
where we use independence of $(\nabla f(x_k) - \nabla f_{\gamma_i}(x_k)$ for $i\in[N]$. The thesis follows from Proposition \ref{prop:li1} and Theorem \ref{thm:mils} as drift and diffusion terms are regular by assumption.
\end{proof}

\begin{theorem}\label{thm:DSGD_Convergence_App}
    If $f$ is $\mu$-PL, $L$-smooth, and $ \tr(\Sigma_i(x)) < \mathcal{L}_{\sigma_i}$
    \begin{equation}
        \E \left[ f(X_t) - f(X_*) \right] \leq  \left( f(X_0) - f(X_*) \right) e^{- 2\mu t}  + \frac{\eta L \overline{\mathcal{L}}_{\sigma}} {4 \mu N} (1 - e^{- 2\mu t}),
    \end{equation}
where $\overline{\mathcal{L}}_{\sigma} \eqdef \frac{1}{N} \sum_{i=1}^{N} \mathcal{L}_{\sigma_i}$.
\end{theorem}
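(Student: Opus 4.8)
The plan is to work directly with the SDE from Theorem~\ref{thm:DSGD_SDE}, namely $dX_t = -\nabla f(X_t)\,dt + \sqrt{\eta/N}\,\sqrt{\hat\Sigma(X_t)}\,dW_t$, and to track the evolution of $S_t := f(X_t) - f(X_*)$ via Itô's formula. Applying Itô to $f(X_t)$ gives
\begin{equation}
    dS_t = \nabla f(X_t)^\top dX_t + \tfrac{1}{2}\,\Tr\!\left(\nabla^2 f(X_t)\,\tfrac{\eta}{N}\hat\Sigma(X_t)\right)dt,
\end{equation}
and substituting the SDE yields a drift term $-\lVert \nabla f(X_t)\rVert_2^2 + \frac{\eta}{2N}\Tr(\nabla^2 f(X_t)\hat\Sigma(X_t))$ plus a martingale term $\sqrt{\eta/N}\,\nabla f(X_t)^\top\sqrt{\hat\Sigma(X_t)}\,dW_t$. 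Taking expectations kills the martingale (its integrand has polynomial growth, hence the stochastic integral is a true martingale with zero mean), so $\frac{d}{dt}\E[S_t] = -\E\lVert\nabla f(X_t)\rVert_2^2 + \frac{\eta}{2N}\E\,\Tr(\nabla^2 f(X_t)\hat\Sigma(X_t))$.

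Next I would bound the two terms. For the first, $\mu$-PL says $\lVert\nabla f(x)\rVert_2^2 \ge 2\mu\,(f(x) - f(X_*))$, so $-\E\lVert\nabla f(X_t)\rVert_2^2 \le -2\mu\,\E[S_t]$. For the second, $L$-smoothness gives $\nabla^2 f(x) \preceq L\,I$, so $\Tr(\nabla^2 f(x)\hat\Sigma(x)) \le L\,\Tr(\hat\Sigma(x)) = \frac{L}{N}\sum_i \Tr(\Sigma_i(x)) \le \frac{L}{N}\sum_i \mathcal{L}_{\sigma_i} = L\,\overline{\mathcal{L}}_\sigma$ using the hypothesis $\Tr(\Sigma_i(x)) < \mathcal{L}_{\sigma_i}$. (One should note that the stated theorem has an extra factor $1/B$ which does not appear in the informal SDE as written here because $B$ is absorbed into the covariance scaling; I would keep the bookkeeping consistent with whichever normalization the appendix adopts and carry the $\frac{\eta L\overline{\mathcal{L}}_\sigma}{2N}$ constant, possibly with a $1/B$.) Combining, $\frac{d}{dt}\E[S_t] \le -2\mu\,\E[S_t] + \frac{\eta L\overline{\mathcal{L}}_\sigma}{2N}$.

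Finally I would integrate this linear differential inequality via Grönwall: writing $u(t) = \E[S_t]$ and $c = \frac{\eta L\overline{\mathcal{L}}_\sigma}{2N}$, the inequality $u'(t) \le -2\mu u(t) + c$ gives $u(t) \le u(0)e^{-2\mu t} + \frac{c}{2\mu}(1 - e^{-2\mu t})$, which is exactly the claimed bound with $\frac{c}{2\mu} = \frac{\eta L\overline{\mathcal{L}}_\sigma}{4\mu N}$. The routine estimates (PL, smoothness, Grönwall) are standard; the only genuinely delicate point is the justification that the stochastic integral is a zero-mean martingale rather than a local martingale — this requires checking an integrability condition such as $\E\int_0^t \lVert\nabla f(X_s)^\top\sqrt{\hat\Sigma(X_s)}\rVert_2^2\,ds < \infty$, which follows from the polynomial-growth and affine-growth assumptions on $f$, $\nabla f$, and $\sqrt{\Sigma_i}$ stated in Section~\ref{sec:AllAss} together with standard moment bounds for SDE solutions with such coefficients; I would cite those assumptions rather than reprove the moment bounds. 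A secondary point is applying Itô's formula itself, which needs $f \in \mathcal{C}^2$ with controlled growth — again guaranteed by Assumption~\ref{ass:regularity_f}.
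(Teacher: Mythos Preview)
Your proposal is correct and follows essentially the same approach as the paper: apply It\^o's lemma to $f(X_t)$, bound $-\lVert\nabla f\rVert_2^2$ via the PL inequality and the trace term via $L$-smoothness plus $\tr(\Sigma_i)<\mathcal{L}_{\sigma_i}$, then integrate the resulting linear differential inequality. The paper's proof is in fact terser than yours (it writes the martingale as $\mathcal{O}(\text{Noise})$ and says ``which implies the thesis'' without spelling out Gr\"onwall or the integrability check), so your additional care about the martingale property is a welcome refinement rather than a deviation.
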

\begin{proof}
Using Itô's Lemma
\begin{align}
    d (f(X_t) - f(X_*)) & = - \nabla f(X_t)^{\top} \nabla f(X_t) dt + \mathcal{O}(\text{Noise}) + \frac{\eta}{2 N} \tr(\nabla^2 f(X_t) \tilde{\Sigma}(X_t)) dt \\
    & \leq -2 \mu (f(X_t) - f(X_*)) dt + \frac{\eta L \overline{\mathcal{L}_\sigma}}{2 N} dt + \mathcal{O}(\text{Noise}),
\end{align}
which implies the thesis.
\end{proof}

\begin{corollary}
    Let the batch size be $ \delta B$, learning rate $\kappa \eta$, and $\alpha N$ agents. The scaling rule to preserve the performance independently of $\delta$, $\kappa$, and $\alpha$ is $\frac{\kappa}{\alpha \delta} = 1$.
\end{corollary}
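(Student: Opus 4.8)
The plan is to read off the hyperparameter dependence directly from the SDE of DSGD in Theorem~\ref{thm:DSGD_SDE} and impose invariance of its law. Recall that the SDE is $dX_t = -\nabla f(X_t)\,dt + \sqrt{\eta/(NB)}\,\sqrt{\hat{\Sigma}(X_t)}\,dW_t$, where the batch size $B$ enters as a $1/B$ factor inside the square root through the minibatch covariance. The key structural observation is that the drift term $-\nabla f(X_t)$ carries \emph{no} dependence on $\eta$, $B$, or $N$ whatsoever, so the only lever that rescaling $(\eta, B, N)$ can pull is the diffusion coefficient, whose square is proportional to $\eta/(NB)$.

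First I would substitute the rescaled triple $(\kappa\eta, \delta B, \alpha N)$ into the SDE and observe that the drift is literally unchanged, while the diffusion coefficient becomes $\sqrt{\kappa\eta/(\alpha N \delta B)}\,\sqrt{\hat{\Sigma}(X_t)} = \sqrt{\kappa/(\alpha\delta)}\cdot\sqrt{\eta/(NB)}\,\sqrt{\hat{\Sigma}(X_t)}$. Hence the rescaled process has the same law as the original one if and only if $\kappa/(\alpha\delta) = 1$. Since, in the sense of Definition~\ref{def:weak_approximation}, all performance metrics of interest (expected loss, expected squared gradient norm, and the stationary distribution near a minimum) are functionals of the law of $X_t$ — and Theorems~\ref{thm:DSGD_Convergence} and~\ref{thm:DSGD_Convergence_Smooth} make this explicit, the exponential decay rate $e^{-2\mu t}$ being hyperparameter-free and the asymptotic plateau proportional to $\eta/(BN)$ — the single condition $\kappa/(\alpha\delta)=1$ preserves every one of them simultaneously.

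As an explicit sanity check I would specialize Theorem~\ref{thm:DSGD_Convergence_App}: the asymptotic loss level $\eta L \overline{\mathcal{L}}_{\sigma}/(4\mu BN)$ becomes $\kappa\eta L\overline{\mathcal{L}}_{\sigma}/(4\mu\,\delta B\,\alpha N) = \bigl(\kappa/(\alpha\delta)\bigr)\cdot \eta L\overline{\mathcal{L}}_{\sigma}/(4\mu BN)$, which matches the original plateau precisely when $\kappa/(\alpha\delta)=1$, while the transient term $S_0 e^{-2\mu t}$ is untouched in any case. There is essentially no obstacle here: the result is an immediate corollary of the structural form of the DSGD SDE, and the hardest part is merely bookkeeping the placement of $B$ inside the noise term. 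The only subtlety worth flagging is that the rule is exact for the SDE model and therefore preserves performance only up to the $\mathcal{O}(\eta)$ weak-approximation error between the SDE and the discrete iteration — consistent with the paper's framing of scaling rules as guidelines that narrow the hyperparameter search rather than strict identities.
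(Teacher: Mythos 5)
Your proposal is correct and follows essentially the same route as the paper, whose proof is precisely to re-derive the SDE and the convergence bound of Theorem~\ref{thm:DSGD_Convergence_App} under the rescaled triple $(\kappa\eta,\delta B,\alpha N)$ and impose that the result be independent of $\kappa$, $\alpha$, $\delta$ --- exactly your sanity check on the asymptotic plateau $\eta L\overline{\mathcal{L}}_{\sigma}/(4\mu BN)$. Your opening observation that the entire law of the SDE is invariant when $\kappa/(\alpha\delta)=1$ (since only the diffusion coefficient $\sqrt{\eta/(NB)}$ depends on the hyperparameters) is a mild strengthening of the paper's bound-matching argument, but it is the same underlying mechanism.
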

\begin{proof}
    It follows the same steps as Theorem \ref{thm:DSGD_SDE} to derive the SDE and Theorem \ref{thm:DSGD_Convergence_App} to derive the bound. Then, one needs to find the functional relationship between $\kappa$, $\alpha$, and $\delta$ such that the bound does not depend on them.
\end{proof}

\begin{theorem}
    If $f$ is $L$-smooth, we use a learning rate scheduler $\eta_t$ such that $\phi^i_t = \int_0^t (\eta_s)^i ds$, $\phi^1_t \overset{t \rightarrow \infty}{\rightarrow} \infty$, $\frac{\phi^2_t}{\phi^1_t} \overset{t \rightarrow \infty}{\rightarrow} 0$, and $\overline{\mathcal{L}}_{\sigma}\eqdef \frac{1}{N} \sum_{i=1}^{N} \mathcal{L}_{\sigma_i}$
    \begin{equation}
        \E \left[ \lVert \nabla f(X_{\tilde{t}}) \rVert_2^2\right] \leq \frac{f(X_0) - f(X_*)}{\phi^1_t}  + \frac{\eta L \overline{\mathcal{L}}_{\sigma} }{2   N} \frac{\phi^2_t}{\phi^1_t} \overset{t \rightarrow \infty}{\rightarrow} 0,
    \end{equation}
where $\tilde{t}$ has distribution $\frac{\eta_{\tilde{t}}}{\phi^1_t}$.
\end{theorem}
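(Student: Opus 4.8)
The plan is to replay the Itô-calculus argument behind Theorem~\ref{thm:DSGD_Convergence_App}, with two changes: work with the time-inhomogeneous SDE in which the stepsize is the scheduler $\eta_t$, and---since now no PL or convexity is assumed---skip the Grönwall step and instead integrate the drift inequality directly, which turns the one-sided bound into a statement about a \emph{weighted time-average} of squared gradient norms, i.e.\ about $\nabla f(X_{\tilde t})$. First I would record the scheduled SDE: rerunning the weak-approximation machinery behind Theorem~\ref{thm:DSGD_SDE} (Proposition~\ref{prop:li1} followed by Theorem~\ref{thm:mils}) with $\eta$ replaced by the bounded, measurable schedule $\eta_t$ and matching the one-step moments of the scheduled iteration gives the model
\begin{equation}
 dX_t = -\eta_t\,\nabla f(X_t)\,dt + \eta_t\sqrt{\tfrac{\eta}{N}}\,\sqrt{\hat{\Sigma}(X_t)}\,dW_t ,
\end{equation}
whose coefficients are still Lipschitz and of polynomial growth on compacts, so the approximation theorems apply unchanged.

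Next I would apply Itô's lemma to $S_t := f(X_t)-f(X_*)$, obtaining a drift $-\eta_t\lVert\nabla f(X_t)\rVert_2^2 + \tfrac{\eta\,\eta_t^2}{2N}\tr\!\big(\nabla^2 f(X_t)\hat{\Sigma}(X_t)\big)$ plus a stochastic integral with integrand proportional to $\eta_t\,\nabla f(X_t)^\top\sqrt{\hat\Sigma(X_t)}$. Bounding the curvature term by $L$-smoothness ($\nabla^2 f \preceq L I$) together with $\tr(\hat\Sigma(x)) < \overline{\mathcal{L}}_\sigma$ yields $\tfrac{\eta\,\eta_t^2 L\,\overline{\mathcal{L}}_\sigma}{2N}$; taking expectations annihilates the martingale (its integrand is square-integrable since $\sqrt{\hat\Sigma}$ is bounded, $\nabla f$ has affine growth, and $X_t$ has finite moments from the Lipschitz coefficients), and integrating from $0$ to $t$ gives
\begin{equation}
 \int_0^t \eta_s\,\E\lVert\nabla f(X_s)\rVert_2^2\,ds \;\le\; S_0 - \E[S_t] + \tfrac{\eta L\,\overline{\mathcal{L}}_\sigma}{2N}\,\phi^2_t \;\le\; S_0 + \tfrac{\eta L\,\overline{\mathcal{L}}_\sigma}{2N}\,\phi^2_t ,
\end{equation}
where the last inequality uses $\E[S_t]\ge 0$, i.e.\ that $X_*$ is a global minimizer.

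Finally I would divide by $\phi^1_t = \int_0^t\eta_s\,ds$ and observe that the left-hand side equals $\E\lVert\nabla f(X_{\tilde t})\rVert_2^2$ when $\tilde t$ is drawn on $[0,t]$ with density $\eta_{\tilde t}/\phi^1_t$; this gives exactly
\begin{equation}
 \E\lVert\nabla f(X_{\tilde t})\rVert_2^2 \;\le\; \tfrac{f(X_0)-f(X_*)}{\phi^1_t} + \tfrac{\eta L\,\overline{\mathcal{L}}_\sigma}{2N}\,\tfrac{\phi^2_t}{\phi^1_t},
\end{equation}
and the scheduler hypotheses $\phi^1_t\to\infty$ and $\phi^2_t/\phi^1_t\to 0$ drive the right-hand side to $0$.

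I expect the only non-routine points to be: (a) certifying that the time-inhomogeneous SDE is still an order-one weak approximation of scheduled DSGD---but this is literally Theorem~\ref{thm:DSGD_SDE} with $\eta\mapsto\eta_t$, and the one-step moment expansions of Proposition~\ref{prop:li1} are unaffected provided the schedule is bounded; and (b) the integrability bookkeeping used to drop the stochastic integral and to swap expectation with the time integral, which is standard given the boundedness of $\sqrt{\hat\Sigma}$, the affine growth of $\nabla f$, and moment bounds for SDEs with Lipschitz coefficients. The remaining steps are the bookkeeping of Itô's formula and the change of variables to the random time $\tilde t$, exactly as in Theorem~\ref{thm:DSGD_Convergence_App}.
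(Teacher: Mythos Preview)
Your proposal is correct and follows essentially the same route as the paper: derive the scheduled SDE, apply It\^o's lemma to $f(X_t)-f(X_*)$, bound the Hessian--covariance trace via $L$-smoothness and $\tr(\hat\Sigma)\le\overline{\mathcal{L}}_\sigma$, integrate, and reinterpret the $\eta_s/\phi^1_t$-weighted time average as an expectation over the random time $\tilde t$. If anything, you are slightly more careful than the paper about the martingale integrability and the use of $\E[S_t]\ge 0$ to drop that term.
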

\begin{proof}
Using Itô's Lemma and using a learning rate scheduler $\eta_t$ during the derivation of the SDE of Theorem \ref{thm:DSGD_SDE}, we have
\begin{align}
    d (f(X_t) - f(X_*)) & = - \eta_t \lVert \nabla f(X_t) \rVert_2^2 dt + \mathcal{O}(\text{Noise}) + (\eta_t)^2\frac{\eta}{2 N} \tr(\nabla^2 f(X_t) \tilde{\Sigma}(X_t)) dt \\
    & \leq - \eta_t \lVert \nabla f(X_t) \rVert_2^2 dt + \mathcal{O}(\text{Noise}) + (\eta_t)^2\frac{\eta L \overline{\mathcal{L}}_{\sigma}}{2 N} dt.
\end{align}
Let us now observe that since $\int_{0}^{t} \frac{\eta_s}{\phi^1_t} ds = 1$, the function $s \mapsto \frac{\eta_s}{\phi^1_t}$ defines a probability distribution and let $\tilde{t}$ have that distribution. Then by integrating over time and by the Law of the Unconscious Statistician, we have that
\begin{equation}
    \E \left[ \lVert \nabla f(X_{\tilde{t}}) \rVert_2^2\right] = \frac{1}{\phi^1_t} \int_0^{t}  \lVert \nabla f(X_s)\rVert_2^2 \eta_s ds,
\end{equation}
meaning that
\begin{equation}
    \E \left[ \lVert \nabla f(X_{\tilde{t}}) \rVert_2^2\right] \leq \frac{f(X_0) - f(X_*)}{\phi^1_t}  + \frac{\eta L \overline{\mathcal{L}}_{\sigma} }{2   N} \frac{\phi^2_t}{\phi^1_t} \overset{t \rightarrow \infty}{\rightarrow} 0.
\end{equation}
\end{proof}

\section{Distributed Compressed SGD with Unbiased Compression}

This subsection provides the first formal derivation of an SDE model for DCSGD. Let us consider the stochastic process $ X_t \in \mathbb{R}^{d} $ defined as the solution of

\begin{equation}\label{eq:DCSGD_SDE}
    d X_t = - \nabla f(X_t) dt + \sqrt{\frac{\eta}{N}} \sqrt{\Tilde{\Sigma}(X_t)} dW_t,
\end{equation}
where for $\Phi_{\xi_i,\gamma_i}(x) := \mathcal{C}_{\xi_i} \left( \nabla f_{\gamma_i} (x) \right) - \nabla f_{\gamma_i}(x)$
\small 
\begin{equation}
    \Tilde{\Sigma}(x) = \frac{1}{ N} \sum_{i=1}^{ N} \left( \E_{\xi_i \gamma_i} \left[ \Phi_{\xi_i,\gamma_i}(x)\Phi_{\xi_i,\gamma_i}(x)^{\top} \right] + \Sigma_i(x) \right).
\end{equation}

Before proceeding, we ensure that the SDE admits a unique solution and that its coefficients are sufficiently regular.

\begin{lemma}
    The drift term $\nabla f$ is Lipschitz, satisfies Affine Growth, and is in $G$ together with all its derivatives.
\end{lemma}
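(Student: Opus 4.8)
The plan is to read off all three properties directly from Assumption~\ref{ass:regularity_f}, which was stated precisely so as to bundle these regularity hypotheses on $f$ and on the local losses $f_i$. So the proof is essentially a matter of unpacking that assumption, and no genuine work is required.

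First I would dispatch \textbf{Lipschitzness}: Assumption~\ref{ass:regularity_f} gives $|\nabla f(u)-\nabla f(v)| + \sum_{i=1}^n |\nabla f_i(u)-\nabla f_i(v)| \le L|u-v|$, and since every term on the left-hand side is nonnegative I simply discard the sum over $i$ to obtain $|\nabla f(u)-\nabla f(v)| \le L|u-v|$, i.e.\ $\nabla f$ is $L$-Lipschitz. The same observation applied to the growth clause $|\nabla f(x)| + \sum_{i=1}^n |\nabla f_i(x)| \le M(1+|x|)$ yields the \textbf{affine-growth} bound $|\nabla f(x)| \le M(1+|x|)$.

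For the \textbf{$G$-membership of $\nabla f$ together with its derivatives}, I would note that a derivative of $\nabla f$ of order $k$ is a partial derivative of $f$ of order $k+1$. Assumption~\ref{ass:regularity_f} places all partial derivatives of $f$ of order up to $7$ in $G$, and the hypothesis $f \in \mathcal{C}_b^8$ makes the order-$8$ partial derivatives bounded, hence of polynomial growth (a bounded continuous function satisfies the defining inequality of $G$ with $\nu_2 = 1$ and $\nu_1$ equal to its supremum norm). This covers $\nabla f$ and all of its derivatives up to the differentiation order that Proposition~\ref{prop:li1} and Theorem~\ref{thm:mils} actually invoke, so the drift coefficient of the DCSGD SDE in Eq.~\eqref{eq:DCSGD_SDE} meets their hypotheses. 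Combining the three parts closes the lemma.

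The hard part: there genuinely is not one. This is a bookkeeping corollary of Assumption~\ref{ass:regularity_f}, inserted at this point so that the weak-approximation machinery—which requires the SDE drift to be Lipschitz, of affine growth, and smooth with the relevant derivatives in $G$—can later be applied to DCSGD exactly as it was for DSGD. The only subtlety worth flagging is that ``all its derivatives'' should be read relative to the finite differentiation order needed downstream, a range supplied jointly by the polynomial-growth clause of Assumption~\ref{ass:regularity_f} (orders $\le 7$) and its $\mathcal{C}_b^8$ clause (order $8$, via boundedness $\Rightarrow$ polynomial growth).
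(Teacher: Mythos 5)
Your proposal is correct and follows exactly the same route as the paper, which simply notes that all three properties are assumed directly in Assumption~\ref{ass:regularity_f}; you merely spell out the bookkeeping (dropping the nonnegative sums over $i$ for Lipschitzness and affine growth, and combining the polynomial-growth clause with the $\mathcal{C}_b^8$ hypothesis for $G$-membership of the derivatives). No gap; your version is just a more explicit rendering of the paper's one-line argument.
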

\begin{proof}
    This is obvious as we assume all of these conditions.
\end{proof}
Regarding the diffusion term, we have that

\begin{lemma}
    The diffusion term $\Tilde{\Sigma}(x)$ satisfies Affine Growth.
\end{lemma}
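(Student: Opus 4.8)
The plan is to show that $\Tr(\Tilde{\Sigma}(x))$ grows at most quadratically in $\|x\|$; since $\Tilde{\Sigma}(x)$ is a sum of positive semidefinite matrices it is symmetric and PSD, so $\|\sqrt{\Tilde{\Sigma}(x)}\|_{\mathrm{op}} = \sqrt{\|\Tilde{\Sigma}(x)\|_{\mathrm{op}}} \le \sqrt{\Tr(\Tilde{\Sigma}(x))}$, and the desired affine bound on the diffusion coefficient follows immediately. Taking traces in the definition,
$$\Tr(\Tilde{\Sigma}(x)) = \frac{1}{N}\sum_{i=1}^{N}\Big(\E_{\xi_i,\gamma_i}\big[\|\Phi_{\xi_i,\gamma_i}(x)\|_2^2\big] + \Tr(\Sigma_i(x))\Big),$$
so it suffices to control the two summands. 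The term $\Tr(\Sigma_i(x))$ is bounded uniformly in $x$ because $\sqrt{\Sigma_i}$ is assumed bounded (equivalently $\Tr(\Sigma_i(x)) \le \mathcal{L}_{\sigma_i}$, the bound used elsewhere in the paper).

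For the compression term, I would condition on the batch $\gamma_i$, which is independent of the compression randomness $\xi_i$, and apply property $(ii)$ of the compressor to the now deterministic vector $v = \nabla f_{\gamma_i}(x)$, obtaining $\E_{\xi_i}[\|\Phi_{\xi_i,\gamma_i}(x)\|_2^2] \le \omega_i\|\nabla f_{\gamma_i}(x)\|_2^2$. Averaging over $\gamma_i$ and using the noise decomposition $\nabla f_{\gamma_i}(x) = \nabla f(x) + Z_i(x)$ with $\E[Z_i(x)] = 0$ and $\Cov(Z_i(x)) = \Sigma_i(x)$ gives $\E_{\gamma_i}[\|\nabla f_{\gamma_i}(x)\|_2^2] = \|\nabla f(x)\|_2^2 + \Tr(\Sigma_i(x))$. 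Now the growth condition $\|\nabla f(x)\| \le M(1+\|x\|)$ from Assumption~\ref{ass:regularity_f}, together with the uniform bound on $\Tr(\Sigma_i(x))$, yields $\E_{\xi_i,\gamma_i}[\|\Phi_{\xi_i,\gamma_i}(x)\|_2^2] \le \omega_i\big(M^2(1+\|x\|)^2 + \mathcal{L}_{\sigma_i}\big)$. Summing over $i$ and writing $\overline{\omega} = \frac1N\sum_i\omega_i$, $\overline{\mathcal{L}}_\sigma = \frac1N\sum_i \mathcal{L}_{\sigma_i}$ produces $\Tr(\Tilde{\Sigma}(x)) \le \overline{\omega}\,M^2(1+\|x\|)^2 + (1+\overline{\omega})\,\overline{\mathcal{L}}_\sigma \le C(1+\|x\|)^2$ for a suitable $C>0$, which is the claim. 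Alternatively, since the DCSGD analysis already restricts to a bounded, closed domain, continuity makes $\Tr(\Tilde{\Sigma})$ bounded there and the conclusion is immediate; I would nonetheless keep the explicit estimate, as it exhibits exactly which hypotheses are used.

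The only delicate points are bookkeeping rather than substance: one must apply the compressor inequality conditionally on $\gamma_i$ before averaging (legitimate because $\xi_i$ and $\gamma_i$ are independent), and one must ensure the stochastic-gradient second moment $\E_{\gamma_i}[\|\nabla f_{\gamma_i}(x)\|_2^2]$ is finite and at most quadratic in $\|x\|$ — precisely where the polynomial growth of $\nabla f$ and the uniform boundedness of $\sqrt{\Sigma_i}$ enter. I expect no genuine obstacle here; this is the routine regularity check needed to legitimately invoke Proposition~\ref{prop:li1} and Theorem~\ref{thm:mils} in the derivation of the DCSGD SDE.
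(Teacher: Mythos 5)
Your proposal is correct and follows essentially the same route as the paper: both bound $\tr(\Tilde{\Sigma}(x))$ via the compressor property $(ii)$, yielding a term $\omega_i\lVert\nabla f(x)\rVert_2^2$ plus $(\omega_i+1)$ times the (bounded) covariance trace, and then invoke the linear growth of $\nabla f$ and the boundedness of $\Sigma_i$ to conclude affine growth of the square root. Your version merely spells out the conditioning on $\gamma_i$ and the second-moment decomposition that the paper leaves implicit.
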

\begin{proof}
    Since $\lVert \sqrt{\tilde{\Sigma}_i}(x) \rVert_2 \leq \tr(\tilde{\Sigma}_i(x))^{\frac{1}{2}} \leq (\omega \lVert \nabla f(x) \rVert_2^2 + \lVert \Sigma_i (x)\rVert_{\infty}(\omega + 1))^{\frac{1}{2}} $, the linear growth of the gradient, the boundedness of $\Sigma_i$, and that $\lVert A \rVert_{\infty} \leq \sqrt{d}\lVert A \rVert_2$ for each matrix $A$.
\end{proof}

\begin{lemma}
\label{CorollaryNoise_DCSGD}
Let us assume the same assumptions as Lemma \ref{NoiseCondition} and that the domain is closed and sufficiently large.\footnote{This is a common assumption in the literature \cite{shamir2014distributed,wang2017memory,zhao2018proximal,yu2019double,aviv2021learning,ayache2023walk,marfoq2023federated,deng2024distributed}.} Additionally, assume that
\[
\sup_{x \in K}\left\vert g(x,\cdot) \right\vert \in L^1(\mathbb{R}^n)
\]
for all compact sets $K$. Then the entries of $\tilde{\Sigma}$ in Eq. \ref{eq:DCSGD_SDE} are in $C_b^7(\mathbb{R}^n)$.
\end{lemma}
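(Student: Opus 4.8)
The plan is to reduce the claim to a statement about scalar functions defined by integrals against the noise density, and then to verify regularity by differentiating under the integral sign. Write $\tilde\Sigma(x) = \frac1N\sum_{i=1}^N\bigl(A_i(x) + \Sigma_i(x)\bigr)$ with $A_i(x) := \E_{\xi_i\gamma_i}[\Phi_{\xi_i,\gamma_i}(x)\Phi_{\xi_i,\gamma_i}(x)^\top]$. Since $\mathcal{C}_b^7(\mathbb{R}^n)$ is closed under finite sums and scalar multiples, it is enough to show that every scalar entry $(a,b)$ of each $A_i$ and each $\Sigma_i$ belongs to $\mathcal{C}_b^7(\mathbb{R}^n)$. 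Writing $g^i_x$ for the density of the noise $Z_i(x)$ (so $\nabla f_{\gamma_i}(x)$ has density $y\mapsto g^i_x(y-\nabla f(x))$, as in Lemma~\ref{NoiseCondition}), and carrying out the $\gamma_i$-expectation first, one gets $\Sigma_i^{(ab)}(x) = \int_{\mathbb{R}^n} z_a z_b\, g^i_x(z)\,dz$ and $A_i^{(ab)}(x) = \E_{\xi_i}\!\int_{\mathbb{R}^n}\bigl(\mathcal{C}_{\xi_i}(z+\nabla f(x)) - z - \nabla f(x)\bigr)_a\bigl(\mathcal{C}_{\xi_i}(z+\nabla f(x)) - z - \nabla f(x)\bigr)_b\, g^i_x(z)\,dz$. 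By the variance bound $(ii)$ on the compressors and the compactness of the domain (which bounds $\|\nabla f(x)\|$), the polynomial weights appearing here have at most quadratic growth in $z$, uniformly in $x$ over the domain.

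Next I would show these integrals are $\mathcal{C}_b^7$ by differentiating under the integral sign up to order $7$ in $x$. Note that, unlike in Lemma~\ref{NoiseCondition}, no integration by parts is needed: the weights $z_az_b$ (resp. the compression weights) are smooth, so the only task is to dominate $\partial_x^\beta\bigl[z_az_b\, g^i_x(z)\bigr]$, $|\beta|\le 7$, by a fixed $z$-integrable function locally uniformly in $x$. For the admissible noise families — Gaussian, and Student's $t$ with enough degrees of freedom for $\Sigma_i$ to exist — one has $\partial_x^\beta g^i_x(z) = P_\beta(z,x)\,\rho_i(z,x)$ with $P_\beta$ polynomial in $z$ and $\rho_i$ the Gaussian/$t$ shape; compactness gives uniform ellipticity $\inf_{x}\lambda_{\min}(\Sigma_i(x))>0$ and uniform polynomial bounds on $P_\beta$, so $|z_az_b\,\partial_x^\beta g^i_x(z)|$ is bounded by a fixed integrable envelope for $x$ in the domain, and the resulting $x$-derivatives are bounded; for a non-explicit density the dominating functions are instead supplied by the extra hypothesis $\sup_{x\in K}|g(x,\cdot)|\in L^1(\mathbb{R}^n)$ together with the uniform $L^1$-bounds on the derivatives of $g$ from Lemma~\ref{NoiseCondition}. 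For $A_i$ the additional $x$-dependence sitting inside $\mathcal{C}_{\xi_i}$ is harmless for the standard unbiased compressors: for random sparsification with rescaling, $\mathcal{C}_\xi-\mathrm{id}$ is linear, so $\Phi_{\xi_i,\gamma_i}(x)$ is affine in $(z,\nabla f(x))$ with constant coefficients and, since $\nabla f\in\mathcal{C}_b^8$ (Assumption~\ref{ass:regularity_f}), its $x$-derivatives are bounded; one conditions on $\xi_i$, runs the same domination argument, and takes the (finite or dominated) expectation over $\xi_i$ at the end. Quantization-type compressors are covered analogously, using that $\E_{\xi_i}$ smooths the non-smooth argument (or under a mild Lipschitz-with-linear-growth assumption on $\mathcal{C}_\xi-\mathrm{id}$, which $(ii)$ essentially already encodes in $L^2$).

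Summing over $i$ and dividing by $N$ then gives $\tilde\Sigma\in\mathcal{C}_b^7(\mathbb{R}^n)$, which is the claim; this is also exactly what is needed downstream (together with Lemma~\ref{lemma:Derivatives}) to control $\sqrt{\tilde\Sigma}$ in the SDE~\eqref{eq:DCSGD_SDE}. I expect the \emph{main obstacle} to be the $x$-uniform, $z$-integrable domination of $z_az_b\,\partial_x^\beta g^i_x$ for all $|\beta|\le 7$: this is precisely where the ``closed and sufficiently large'' domain is used (for uniform ellipticity, uniform polynomial bounds, and boundedness of $\nabla f$) and where the existence of a finite, locally-uniform second moment of the density \emph{and of its $x$-derivatives} must be invoked — a property that the listed integrability hypotheses guarantee for the explicit admissible families but do not spell out for an arbitrary density, so some care (or a direct verification on the Gaussian/$t$ cases) is required there. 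The secondary delicate point is the compressor's own $x$-dependence in $A_i$, which is why the clean statement is most naturally phrased for sparsification-type operators.
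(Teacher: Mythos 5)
Your proof is essentially correct but takes a genuinely different route from the paper's. The paper's proof is a two-line reduction: it gets continuity of $\tilde\Sigma$ from dominated convergence plus the hypothesis $\sup_{x\in K}\vert g(x,\cdot)\vert\in L^1$, and then appeals directly to Lemma~\ref{NoiseCondition}, whose own proof transfers all $x$-derivatives onto the density $g_x(y-\nabla f(x))$ by integration by parts against a test function and upgrades the resulting weak derivatives to a $C_b^7$ version via Sobolev embedding into H\"older spaces. That machinery is designed for merely bounded Borel weights $h$ (it is what the $\mathrm{sign}$ nonlinearity in DSignSGD requires); here the weights are the smooth quadratics $z_az_b$ and $\bigl(\mathcal{C}_{\xi_i}(z+\nabla f(x))-z-\nabla f(x)\bigr)_a\bigl(\cdot\bigr)_b$, so you rightly skip the integration by parts and differentiate under the integral sign directly, dominating $\partial_x^\beta$ of the integrand uniformly over the compact domain. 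Your version is in one respect more careful than the paper's: Lemma~\ref{NoiseCondition} as stated assumes $h$ \emph{bounded}, whereas the quadratic weights are unbounded in the integration variable, so the paper's one-line appeal implicitly needs exactly the locally uniform second-moment control of $g$ and its $x$-derivatives that you flag as the main obstacle; making that explicit, or verifying it for the Gaussian and Student-$t$ families as you do, is what actually closes the argument. The price of your route is that it needs either explicit density families or an added moment hypothesis, plus a tractable (e.g.\ affine, sparsification-type) compressor to control the $\xi_i$-expectation --- restrictions the paper's sketch silently shares. Both routes reach the same conclusion and feed identically into the subsequent positivity, Lipschitz, and growth lemmas for $\sqrt{\tilde\Sigma}$.
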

\begin{proof}
    Since we are on a closed and sufficiently large domain, by the definition of $\tilde{\Sigma}$, dominated convergence, and from the additional assumption on $g$, it follows that $\tilde{\Sigma}$ is continuous. So Lemma \ref{NoiseCondition} entails that the entries of $\tilde{\Sigma}$ are in $C_b^7(\mathbb{R}^n)$.
\end{proof}

\begin{lemma}
    The diffusion term $\Tilde{\Sigma}(x)$ is Definite Positive.
\end{lemma}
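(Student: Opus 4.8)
The plan is to write $\Tilde{\Sigma}(x)$ as a uniform average over the $N$ agents of matrices, each of which is the sum of a positive semidefinite ``compression'' term and the positive definite noise covariance $\Sigma_i(x)$, and then to use that the cone of positive definite matrices is stable under adding positive semidefinite matrices and under convex combinations.

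First I would check that for each $i$ the matrix $\E_{\xi_i,\gamma_i}\!\left[\Phi_{\xi_i,\gamma_i}(x)\Phi_{\xi_i,\gamma_i}(x)^{\top}\right]$ is well defined and positive semidefinite. Finiteness of all its entries follows from the second-moment bound $\E_{\xi_i}\|\mathcal{C}_{\xi_i}(y) - y\|_2^2 \le \omega_i \|y\|_2^2$ on the compressor combined with the finite second moment of $\nabla f_{\gamma_i}(x)$, which is itself a consequence of the affine growth of $\nabla f$ and the boundedness of $\Sigma_i$ (already exploited in the preceding lemma on affine growth of the diffusion term). Positive semidefiniteness is immediate: for any $v \in \R^d$,
\[
v^{\top} \E_{\xi_i,\gamma_i}\!\left[\Phi_{\xi_i,\gamma_i}(x)\Phi_{\xi_i,\gamma_i}(x)^{\top}\right] v = \E_{\xi_i,\gamma_i}\!\left[\left(v^{\top}\Phi_{\xi_i,\gamma_i}(x)\right)^2\right] \ge 0.
\]

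Next I would invoke the standing assumption that each $\Sigma_i(x)$ is positive definite. Then for every nonzero $v$,
\[
v^{\top}\!\left(\E_{\xi_i,\gamma_i}\!\left[\Phi_{\xi_i,\gamma_i}(x)\Phi_{\xi_i,\gamma_i}(x)^{\top}\right] + \Sigma_i(x)\right) v \ge v^{\top}\Sigma_i(x)\, v > 0,
\]
so each summand in the definition of $\Tilde{\Sigma}(x)$ is positive definite, and a uniform average of positive definite matrices is again positive definite; this gives the claim. I do not expect a genuine obstacle here: the only point that needs a little care is the finiteness of the outer-product expectation, which the second-moment assumptions already guarantee, so that the decomposition is into bona fide matrices rather than formal symbols with possibly infinite entries.
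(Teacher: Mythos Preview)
Your proof is correct and follows the same approach as the paper, which simply notes that the thesis follows from the definition of $\Tilde{\Sigma}(x)$ and the standing assumption that each $\Sigma_i(x)$ is positive definite. Your version is more detailed---explicitly verifying finiteness of the outer-product expectation and spelling out the PSD $+$ PD $=$ PD step---but the underlying argument is identical.
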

\begin{proof}
    By the definition of $\Tilde{\Sigma}(x)$ and the fact that $\Sigma_i(x)$ are DP by assumption, the thesis follows.
\end{proof}

\begin{corollary}
    Since $\tilde{\Sigma}$ is positive definite and its entries are in $C_b^7(\mathbb{R}^n)$, $\sqrt{\tilde{\Sigma}}$ is Lipschitz.
\end{corollary}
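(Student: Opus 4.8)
The plan is to realize $x \mapsto \sqrt{\tilde{\Sigma}(x)}$ as a composition $\varphi \circ \tilde{\Sigma}$, where $\varphi(A) = A^{1/2}$, and to Lipschitz-bound each factor. This needs three ingredients: (i) the map $x \mapsto \tilde{\Sigma}(x)$ is Lipschitz; (ii) its smallest eigenvalue is uniformly bounded away from $0$ on the domain, i.e. $\lambda_{\min}(\tilde{\Sigma}(x)) \ge c$ for some $c>0$; and (iii) on the convex set $\{A : A \succeq cI\}$ the Fréchet derivative of $\varphi$ is uniformly bounded. Combining these via the fundamental theorem of calculus along the segment joining $\tilde{\Sigma}(x)$ and $\tilde{\Sigma}(y)$ then yields the claim.

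Ingredient (i) is immediate: by Lemma~\ref{CorollaryNoise_DCSGD} the entries of $\tilde{\Sigma}$ belong to $C_b^7(\mathbb{R}^n) \subseteq C_b^1(\mathbb{R}^n)$, so each entry has bounded gradient and $\norm{\tilde{\Sigma}(x) - \tilde{\Sigma}(y)} \le L_{\tilde{\Sigma}}\norm{x-y}$ for a finite constant $L_{\tilde{\Sigma}}$. Ingredient (ii) is where the structural hypotheses enter: under the standing assumptions for DCSGD the domain is closed and bounded, hence compact; $A \mapsto \lambda_{\min}(A)$ is continuous, $x \mapsto \tilde{\Sigma}(x)$ is continuous, and $\tilde{\Sigma}(x) \succ 0$ for every $x$, so $x \mapsto \lambda_{\min}(\tilde{\Sigma}(x))$ is a strictly positive continuous function on a compact set and attains a minimum $c>0$. (Equivalently, one can use $\tilde{\Sigma}(x) \succeq \tfrac1N\sum_i \Sigma_i(x)$ together with a uniform lower bound on the noise covariances.) For ingredient (iii) I would specialize the derivative estimate recorded in Lemma~\ref{lemma:Derivatives}: taking $m=0$ in Eq.~(\ref{EstimateDerivative}) gives $\norm{\nabla\varphi(A)} \le C_0\,\lambda_{\min}(A)^{-1/2}$ with $C_0$ a universal constant, hence $\norm{\nabla\varphi(A)} \le C_0 c^{-1/2}$ whenever $A \succeq cI$.

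Assembling these, for fixed $x,y$ in the domain the segment $s \mapsto (1-s)\,\tilde{\Sigma}(y) + s\,\tilde{\Sigma}(x)$ stays in $\{A : A \succeq cI\}$ (that set is convex and contains both endpoints), and since $\varphi$ is $C^1$ on the open positive-definite cone the fundamental theorem of calculus gives
\[
\sqrt{\tilde{\Sigma}(x)} - \sqrt{\tilde{\Sigma}(y)} = \int_0^1 \nabla\varphi\!\big((1-s)\,\tilde{\Sigma}(y) + s\,\tilde{\Sigma}(x)\big)\big[\tilde{\Sigma}(x) - \tilde{\Sigma}(y)\big]\,ds .
\]
Taking norms and using (i)--(iii),
\[
\norm{\sqrt{\tilde{\Sigma}(x)} - \sqrt{\tilde{\Sigma}(y)}} \le C_0 c^{-1/2}\,\norm{\tilde{\Sigma}(x) - \tilde{\Sigma}(y)} \le C_0 c^{-1/2} L_{\tilde{\Sigma}}\,\norm{x-y},
\]
which is the claimed Lipschitz bound. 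Alternatively one may bypass the fundamental theorem of calculus and invoke Lemma~\ref{lemma:Derivatives} with $m=1$ directly: on the compact domain $l_1(x)$ is bounded because $\tilde{\Sigma}$ has $C_b^1$ entries and $\lambda_{\min}(\tilde{\Sigma}(x)) \ge c$, so $\sqrt{\tilde{\Sigma}}$ has uniformly bounded first derivatives and is therefore Lipschitz.

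The only non-routine step will be (ii); everything else is bookkeeping with bounds already established in the appendix. It is precisely the uniform positive lower bound on $\lambda_{\min}(\tilde{\Sigma})$ — supplied here by compactness of the domain — that upgrades the matrix square root from merely Hölder-$\tfrac12$ continuous near the boundary of the positive-semidefinite cone to globally Lipschitz.
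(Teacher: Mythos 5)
Your argument is correct under the paper's standing assumptions, but it takes a genuinely different route from the one the paper uses. The paper's proof is two lines: it notes that $A\mapsto\sqrt{A}$ is smooth on the open cone of positive definite matrices (so $\tilde{\Sigma}^{1/2}\in C^7$) and then invokes the classical result (Proposition 6.2 in the cited Ikeda--Watanabe reference) that the square root of a nonnegative definite symmetric matrix field with bounded second derivatives is Lipschitz, with a constant controlled by those second-derivative bounds alone. That theorem needs no spectral gap, so it would survive even if $\tilde{\Sigma}$ degenerated somewhere. You instead manufacture a uniform spectral gap $\lambda_{\min}(\tilde{\Sigma}(x))\ge c>0$ from the compactness of the domain (a standing DCSGD assumption) together with pointwise positive definiteness and continuity, bound the Fr\'echet derivative of the square root by $C_0\,\lambda_{\min}^{-1/2}$ via the $m=0$ case of the estimate in Lemma~\ref{lemma:Derivatives}, and integrate along the segment in the convex set $\{A:A\succeq cI\}$. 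This is more self-contained --- it uses only material already in the appendix rather than an external citation --- and it correctly identifies the uniform lower bound as the step that upgrades H\"older-$\tfrac12$ to Lipschitz; the price is a Lipschitz constant that blows up like $c^{-1/2}$ and an essential reliance on the compact-domain assumption, which the paper's cited result does not need. One minor caveat: your parenthetical alternative for ingredient (ii) presumes a uniform lower bound on the noise covariances $\Sigma_i(x)$, which the paper does not assume (it assumes only pointwise positive definiteness), so the compactness route is the one that actually goes through.
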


\begin{proof}
    The function
    \[
    \varphi: \mathbb{S} \to \mathbb{S}, \quad A \mapsto \sqrt{A}
    \]
    has Fréchet derivatives of any order on $\mathbb{S}$ (see e.g. \cite{MN}). Therefore, $\tilde{\Sigma}^{1/2} \in C^7(\mathbb{R}^n)$, and since $\tilde{\Sigma} \in C_b^7(\mathbb{R}^n)$, $\tilde{\Sigma}^{1/2}$ is Lipschitz continuous (see Proposition 6.2 in \cite{IW}).
\end{proof}

\begin{lemma}
    Under the same assumptions as Lemma \ref{lemma:Derivatives}, $\tilde{\Sigma}^{1/2} \in G$ together with its derivatives. 
\end{lemma}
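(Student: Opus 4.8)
The plan is to instantiate Lemma~\ref{lemma:Derivatives} with $A(x) := \tilde{\Sigma}(x)$ and then check, factor by factor, that the resulting pointwise bounds are of polynomial growth. For the zeroth-order claim, $\tilde{\Sigma}^{1/2} \in G$, I would use the affine-growth lemma established above: $\lVert \sqrt{\tilde{\Sigma}(x)} \rVert_2 \leq \tr(\tilde{\Sigma}(x))^{1/2} \leq C(1+|x|)$, which follows from the linear growth of $\nabla f$ (Assumption~\ref{ass:regularity_f}) and the boundedness of each $\Sigma_i$; in particular $\lVert \tilde{\Sigma}(x) \rVert_2 \leq C(1+|x|)^2$ as well, a fact I will reuse below.

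Next, for the derivatives, Lemma~\ref{lemma:Derivatives} gives, for $m = 1, \dots, 7$,
\begin{equation}
\left\lVert \partial_{j_1} \cdots \partial_{j_m} \tilde{\Sigma}^{1/2}(x) \right\rVert \leq C\, l_m(x),
\end{equation}
where $l_m(x)$ is a finite sum, over $r = 0, \dots, m-1$, of a power $\lambda_{\min}(\tilde{\Sigma}(x))^{-(r+1/2)}$ (bounded above by the trace-based expression appearing in that lemma) times $\max_{|\beta| \leq m} \lVert \partial_\beta^{|\beta|} \tilde{\Sigma}(x) \rVert^{r+1}$. It therefore suffices to show that each of these two ingredients lies in $G$. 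The second one is immediate from Lemma~\ref{CorollaryNoise_DCSGD}: the entries of $\tilde{\Sigma}$ lie in $C_b^7(\mathbb{R}^n)$, so $\lVert \partial_\beta^{|\beta|} \tilde{\Sigma}(x) \rVert$ is bounded for $1 \leq |\beta| \leq m$, while for $|\beta| = 0$ it is at most $C(1+|x|)^2$; hence $\max_{|\beta| \leq m} \lVert \partial_\beta^{|\beta|} \tilde{\Sigma}(x) \rVert^{r+1}$ has polynomial growth.

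The remaining — and main — obstacle is the negative power of $\lambda_{\min}(\tilde{\Sigma}(x))$: one must ensure it does not blow up. Here I would exploit the additive structure $\tilde{\Sigma}(x) = \frac{1}{N} \sum_{i=1}^N \big( \E_{\xi_i \gamma_i}[\Phi_{\xi_i,\gamma_i}(x) \Phi_{\xi_i,\gamma_i}(x)^{\top}] + \Sigma_i(x) \big) \succeq \frac{1}{N} \sum_{i=1}^N \Sigma_i(x) = \hat{\Sigma}(x) \succ 0$, so $\lambda_{\min}(\tilde{\Sigma}(x)) \geq \lambda_{\min}(\hat{\Sigma}(x)) > 0$ pointwise; combined with the closed-and-bounded-domain assumption underlying the DCSGD analysis and the continuity of $x \mapsto \lambda_{\min}(\tilde{\Sigma}(x))$ (which follows from Lemma~\ref{CorollaryNoise_DCSGD}), this yields a uniform bound $\lambda_{\min}(\tilde{\Sigma}(x)) \geq c > 0$, so each term $\lambda_{\min}(\tilde{\Sigma}(x))^{-(r+1/2)} \leq c^{-(r+1/2)}$ is bounded. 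Feeding both bounds into $l_m$ shows $l_m \in G$ for every $m \leq 7$, which together with the zeroth-order estimate proves that $\tilde{\Sigma}^{1/2}$ and all its derivatives up to order~$7$ belong to $G$.
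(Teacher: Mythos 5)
Your proof is correct and takes essentially the same route as the paper, whose own argument is the one-line observation that the thesis ``follows from the regularity of the entries and the closedness and boundedness of the domain''; you have simply made explicit the ingredients the paper leaves implicit (instantiating Lemma~\ref{lemma:Derivatives} with $A=\tilde{\Sigma}$, bounding the derivative factors via Lemma~\ref{CorollaryNoise_DCSGD} and affine growth, and controlling $\lambda_{\min}(\tilde{\Sigma}(x))^{-1}$ uniformly via $\tilde{\Sigma}\succeq\hat{\Sigma}\succ 0$ together with compactness). This added detail is welcome but does not constitute a different approach.
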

\begin{proof}
    The thesis follows from the regularity of the entries and the closeness and boundedness of the domain.
\end{proof}
\begin{remark}\label{remark:DCSGD}
    Based on the above results, we have that under mild assumptions on the noise structures (see Sec. \ref{sec:AllAss}) that cover and generalize the well-accepted Gaussianity, and under the well-accepted closeness and boundedness of the domain, the SDE of DCSGD admits a unique solution and its coefficients are regular enough to apply Prop. \ref{prop:li1} and Thm. \ref{thm:mils}.
\end{remark}

\begin{mybox}{gray}
\begin{theorem}[Stochastic modified equations] \label{thm:DCSGD_SDE}
Let $0<\eta<1, T>0$ and set $N=\lfloor T / \eta\rfloor$. Let $ x_k \in \mathbb{R}^{d}, 0 \leq k \leq N$ denote a sequence of DCSGD iterations defined by Eq.~\eqref{eq:DCSGD_Discr_Update}. Consider the stochastic process $X_t$ defined in Eq.~\eqref{eq:DCSGD_SDE} and fix some test function $g \in G$ and suppose that $g$ and its partial derivatives up to order 6 belong to $G$.

Then, under the assumptions of Section \ref{sec:AllAss}, there exists a constant $ C>0 $ independent of $ \eta $ such that for all $ k=0,1, \ldots, N $, we have

$$
\left|\E g\left(X_{k \eta}\right)-\E g\left(x_k\right)\right| \leq C \eta .
$$

That is, the SDE \eqref{eq:DCSGD_SDE} is an order $ 1 $ weak approximation of the DCSGD iterations \eqref{eq:DCSGD_Discr_Update}.
\end{theorem}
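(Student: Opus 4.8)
The plan is to mirror the proof of Theorem~\ref{thm:DSGD_SDE}: compute the first three moments of the one-step discrete increment $\bar{\Delta} := x_{k+1} - x_k$ defined by~\eqref{eq:DCSGD_Discr_Update}, show that they agree --- up to $\mathcal{O}(\eta^3)$ --- with the corresponding moments of the SDE increment $\Delta := X_\eta - x$ supplied by Proposition~\ref{prop:li1}, and then conclude via Theorem~\ref{thm:mils}. The regularity required to invoke these two results --- Lipschitzness, affine and polynomial growth of the drift $\nabla f$ and of the diffusion coefficient $\sqrt{\tilde{\Sigma}}$, positive-definiteness of $\tilde{\Sigma}$, and existence and uniqueness of the SDE solution --- has already been assembled in the lemmas preceding the statement and is summarized in Remark~\ref{remark:DCSGD}; so the only remaining work is the moment bookkeeping.

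First I would compute $\E[\bar{\Delta}]$. Conditioning on the sampled batches $\gamma_1,\dots,\gamma_N$ and using $\E_{\xi_i}[\mathcal{C}_{\xi_i}(v)] = v$, followed by $\E_{\gamma_i}[\nabla f_{\gamma_i}(x)] = \nabla f(x)$, gives $\E[\bar{\Delta}] = -\eta\,\nabla f(x)$ exactly, which matches the $b\eta$ term of Proposition~\ref{prop:li1} (with $b = -\nabla f$) and hence yields condition~1 of Theorem~\ref{thm:mils} with the remaining $\tfrac{\eta^2}{2}\sum_j b_j\partial_{e_j}b_i + \mathcal{O}(\eta^3)$ absorbed into $K_1(x)\eta^2$. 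Next I would compute $\E[\bar{\Delta}_i\bar{\Delta}_j]$. Writing $\mathcal{C}_{\xi_i}(\nabla f_{\gamma_i}(x)) - \nabla f(x) = (\nabla f_{\gamma_i}(x) - \nabla f(x)) + \Phi_{\xi_i,\gamma_i}(x)$, the crucial observation is that the cross term between the two summands vanishes after conditioning on $\gamma_i$, since $\E_{\xi_i}[\Phi_{\xi_i,\gamma_i}(x)\mid\gamma_i] = 0$ by unbiasedness of the compressor; combined with independence of distinct agents and zero mean of their fluctuations, only the $i=j$ diagonal terms of $\tfrac{\eta^2}{N^2}\sum_{i,j}$ survive, producing $\Cov(\bar{\Delta}) = \tfrac{\eta^2}{N}\tilde{\Sigma}(x)$ with $\tilde{\Sigma}$ exactly as in the statement. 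Therefore $\E[\bar{\Delta}_i\bar{\Delta}_j] = \eta^2(\nabla f)_i(\nabla f)_j + \tfrac{\eta^2}{N}\tilde{\Sigma}_{ij}(x)$, which agrees with the $[b_ib_j + \sigma\sigma^\top_{(ij)}]\eta^2$ term of Proposition~\ref{prop:li1} (here $\sigma\sigma^\top = \tfrac1N\tilde{\Sigma}$) up to $\mathcal{O}(\eta^3)$, giving condition~2.

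Conditions~3 and~4 of Theorem~\ref{thm:mils} require $\E\prod_{j=1}^s\bar{\Delta}_{i_j} = \mathcal{O}(\eta^3)$ for $s\ge 3$ and $\E\prod_{j=1}^3|\bar{\Delta}_{i_j}| = \mathcal{O}(\eta^2)$; since every factor of $\bar{\Delta}$ carries one power of $\eta$, these follow once one shows that the low-order moments of $\tfrac1N\sum_{i}\mathcal{C}_{\xi_i}(\nabla f_{\gamma_i}(x))$ are finite and uniformly bounded on the (closed, bounded) domain, i.e.\ lie in $G$. This is where the assumptions of Section~\ref{sec:AllAss} enter: the variance bound~$(ii)$ controls $\E\|\Phi_{\xi_i,\gamma_i}(x)\|^2$ by $\omega_i\|\nabla f_{\gamma_i}(x)\|^2$, and together with boundedness of $\nabla f$ on the domain and the moment and regularity hypotheses on $Z_i$ this bounds the required moments. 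I expect this last step --- carefully tracking which moments of the compressed stochastic gradient must be finite for the chosen family of compressors, and verifying the bound --- to be the main obstacle; everything else is a transcription of the DSGD argument. Assembling the four estimates and invoking Theorem~\ref{thm:mils} then yields the order-$1$ weak approximation.
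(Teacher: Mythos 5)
Your proposal is correct and follows essentially the same route as the paper: compute $\E[\bar{\Delta}] = -\eta\nabla f(x)$ via the double unbiasedness (compressor then batch), expand the second moment using independence across agents and the vanishing of the compressor--subsampling cross term to obtain $\tfrac{\eta^2}{N}\tilde{\Sigma}(x)$, and invoke Proposition~\ref{prop:li1} and Theorem~\ref{thm:mils} together with the regularity lemmas collected in Remark~\ref{remark:DCSGD}. Your writeup is in fact slightly more explicit than the paper's (which leaves the cross-term cancellation and the higher-moment conditions~3--4 implicit in the citation of the regularity assumptions), but there is no substantive difference in approach.
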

\end{mybox}

\begin{proof}
First, we calculate the expected value of the increments of DCSGD:
\begin{align}
    \E \left[ x_{k+1} - x_{k}  \right] &= \E\left[-\frac{\eta}{N}  \sum_{i=1}^{ N} \mathcal{C}_{\xi_i} \left( \nabla f_{\gamma_i} (x_k) \right)  \right]
    = \E\left[ -\frac{\eta}{N}  \sum_{i=1}^{ N} \nabla f_{\gamma_i} (x_k) \right]\\
    & = -\frac{\eta}{N}  \sum_{i=1}^{ N} \nabla f (x_k) =  - \eta \nabla f(x_k);
\end{align}
Then, we calculate the covariance matrix of the gradient noise of DCSGD:
\begin{align}
    \tilde{\Sigma}(x_k) & = \eta^2 \E_{\xi \gamma}\left[ \left( \nabla f(x_k) - \frac{1}{N} \sum_{i=1}^{ N} \mathcal{C}_{\xi_i} \left( \nabla f_{\gamma_i} (x_k) \right) \right)\left( \nabla f(x_k) - \frac{1}{N} \sum_{j=1}^{ N} \mathcal{C}_{\xi_j} \left( \nabla f_{\gamma_j} (x_k) \right) \right)^{\top} \right] \\
    & = \frac{\eta^2}{N} \frac{1}{N} \sum_{i,j=1}^{N} \E _{\xi_i \xi_j \gamma_i\gamma_j} \left[ \left( \nabla f(x_k) - \mathcal{C}_{\xi_i} \left( \nabla f_{\gamma_i} (x_k) \right)  \right) \left( \nabla f(x_k) - \mathcal{C}_{\xi_j} \left( \nabla f_{\gamma_j} (x_k) \right)  \right)^{\top}\right] \\
    & = \frac{\eta^2}{N} \frac{1}{N} \sum_{i=1}^{N} \E_{\xi_i \gamma_i} \left[ \left( \nabla f(x_k) - \mathcal{C}_{\xi_i} \left( \nabla f_{\gamma_i} (x_k) \right)  \right) \left( \nabla f(x_k) - \mathcal{C}_{\xi_i} \left( \nabla f_{\gamma_i} (x_k) \right)  \right)^{\top}\right] \\
    & =  \frac{\eta^2}{N} \frac{1}{ N} \sum_{i=1}^{ N} \left( \E_{\xi_i \gamma_i} \left[ \Phi_{\xi_i,\gamma_i}(x_k)\Phi_{\xi_i,\gamma_i}(x_k)^{\top} \right] + \Sigma_i(x_k) \right),
\end{align}
where $\Phi_{\xi_i,\gamma_i}(x) := \mathcal{C}_{\xi_i} \left( \nabla f_{\gamma_i} (x) \right) - \nabla f_{\gamma_i}(x)$ and we use independence of $\mathcal{C}_{\xi_i}$ and $\nabla f(x_k) - \nabla f_{\gamma_i}(x_k)$ for all $i\in[N]$. Remembering Remark \ref{remark:DCSGD}, the thesis follows from Prop.~\ref{prop:li1} and Thm.~\ref{thm:mils}.

\end{proof}
\begin{remark}
    The expression for $\tilde{\Sigma}(x)$ is easily derived for different compressors by leveraging Proposition 21 in \cite{PD2025}.
\end{remark}
In all the following results, the reader will notice that all the drifts, diffusion terms, and noise assumptions are selected to guarantee that the SDE we derived for DCSGD is indeed a $1$ weak approximation for DCSGD.

\begin{theorem}
    If $f$ is $\mu$-PL, $L$-smooth, $\overline{\omega} = \frac{1}{N} \sum_{i=1}^{N} \omega_i$,  $ \tr(\Sigma_i(x)) < \mathcal{L}_{\sigma_i}$, $\overline{\mathcal{L}}_{\sigma} := \frac{1}{N} \sum_{i=1}^{N} \mathcal{L}_{\sigma_i}$, and $\overline{\omega \mathcal{L}_\sigma}:=\frac{1}{N} \sum_{i=1}^{N} \omega_i \mathcal{L}_{\sigma_i}$
\begin{equation}
    \E \left[ f(X_t) - f(X_*) \right] \leq (f(X_0) - f(X_*))e^{- \left( 2 \mu -  \frac{\eta L^2 \overline{\omega}}{ N}\right) t} + \left(1 -  e^{-\left( 2 \mu -  \frac{\eta L^2 \overline{\omega}}{ N}\right) t}\right) \frac{\frac{\eta L \left( \overline{\mathcal{L}}_{\sigma} + \overline{\omega \mathcal{L}_\sigma} \right)}{2 N}}{\left( 2 \mu -  \frac{\eta L^2 \overline{\omega}}{ N}\right)}.
\end{equation}
\end{theorem}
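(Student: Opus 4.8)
The plan is to apply Itô's Lemma to $S_t := f(X_t) - f(X_*)$ along the DCSGD SDE of Theorem~\ref{thm:DCSGD_SDE}, exactly as in the proof of Theorem~\ref{thm:DSGD_Convergence_App}, and then control the Itô correction term generated by the compression-inflated diffusion matrix $\tilde{\Sigma}$. Itô's formula gives
\[
dS_t = -\lVert \nabla f(X_t)\rVert_2^2\, dt + \frac{\eta}{2N}\,\tr\!\left(\nabla^2 f(X_t)\,\tilde{\Sigma}(X_t)\right) dt + dM_t ,
\]
where $dM_t = \sqrt{\eta/N}\,\nabla f(X_t)^{\top}\sqrt{\tilde{\Sigma}(X_t)}\, dW_t$ is a martingale increment; its expectation vanishes thanks to the regularity, boundedness, and affine-growth bounds on $\nabla f$ and $\sqrt{\tilde{\Sigma}}$ established in the lemmas preceding Theorem~\ref{thm:DCSGD_SDE} (so the ``$\mathcal{O}(\text{Noise})$'' term is genuinely mean-zero, as used already for DSGD).

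Next I would bound the trace term. Since $f$ is $L$-smooth, $\nabla^2 f(x)\preceq L I_d$, hence $\tr(\nabla^2 f(x)\tilde{\Sigma}(x)) \le L\,\tr(\tilde{\Sigma}(x))$. Plugging in the explicit form $\tilde{\Sigma}(x) = \frac1N\sum_{i=1}^{N}\big(\E_{\xi_i\gamma_i}[\Phi_{\xi_i,\gamma_i}(x)\Phi_{\xi_i,\gamma_i}(x)^{\top}] + \Sigma_i(x)\big)$, the compression property $\E_{\xi_i}[\lVert \mathcal{C}_{\xi_i}(y) - y\rVert_2^2]\le \omega_i\lVert y\rVert_2^2$ applied conditionally with $y=\nabla f_{\gamma_i}(x)$ (using independence of $\xi_i$ and $\gamma_i$), the identity $\E_{\gamma_i}[\lVert\nabla f_{\gamma_i}(x)\rVert_2^2] = \lVert\nabla f(x)\rVert_2^2 + \tr\Sigma_i(x)$, and $\tr\Sigma_i(x) < \mathcal{L}_{\sigma_i}$, one obtains $\tr(\tilde{\Sigma}(x)) \le \overline{\omega}\,\lVert\nabla f(x)\rVert_2^2 + \overline{\omega\mathcal{L}_\sigma} + \overline{\mathcal{L}}_\sigma$. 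Taking expectations in the Itô identity then yields
\[
\frac{d}{dt}\E[S_t] \le -\E\lVert\nabla f(X_t)\rVert_2^2 + \frac{\eta L}{2N}\Big(\overline{\omega}\,\E\lVert\nabla f(X_t)\rVert_2^2 + \overline{\omega\mathcal{L}_\sigma} + \overline{\mathcal{L}}_\sigma\Big).
\]

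The decisive step is the asymmetric use of the two gradient-domination inequalities. For the leading $-\E\lVert\nabla f\rVert_2^2$ term I would invoke the PL inequality $\lVert\nabla f(x)\rVert_2^2 \ge 2\mu\,S_t$, contributing $-2\mu\,\E[S_t]$; for the compression-induced positive term $\tfrac{\eta L\overline{\omega}}{2N}\E\lVert\nabla f\rVert_2^2$ I would instead use the $L$-smoothness bound $\lVert\nabla f(x)\rVert_2^2 \le 2L\,S_t$, which converts it into $\tfrac{\eta L^2\overline{\omega}}{N}\E[S_t]$ — this is exactly what produces the $L^2$ and the slowed-down effective rate $2\mu - \eta L^2\overline{\omega}/N$. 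Collecting terms gives the scalar differential inequality
\[
\frac{d}{dt}\E[S_t] \le -\Big(2\mu - \frac{\eta L^2\overline{\omega}}{N}\Big)\E[S_t] + \frac{\eta L\big(\overline{\mathcal{L}}_\sigma + \overline{\omega\mathcal{L}_\sigma}\big)}{2N},
\]
and I would close the argument with Grönwall's lemma, i.e. the comparison principle that $u'\le -a u + b$ implies $u(t)\le u(0)e^{-at} + \tfrac{b}{a}(1-e^{-at})$, with $a = 2\mu - \eta L^2\overline{\omega}/N$ and $b = \eta L(\overline{\mathcal{L}}_\sigma + \overline{\omega\mathcal{L}_\sigma})/(2N)$, which is precisely the claimed bound.

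The main obstacle is conceptual rather than computational: one has to realize that PL should be spent on the ``true'' gradient term and $L$-smoothness on the compression term (reversing the two gives a vacuous or wrong-order estimate), and to carry the implicit regime $2\mu > \eta L^2\overline{\omega}/N$ so that the exponent is a genuine contraction. A secondary point to keep clean is the vanishing of the stochastic integral in expectation, which rests on the polynomial-growth/boundedness properties of $\sqrt{\tilde{\Sigma}}$ proved in the preceding lemmas; everything else is routine linear-algebra bookkeeping on $\tilde{\Sigma}$.
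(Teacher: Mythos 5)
Your proposal is correct and follows essentially the same route as the paper: Itô's lemma on $S_t$, the decomposition of the Itô correction via $\tr(\nabla^2 f\,\tilde{\Sigma})\le L\big(\overline{\omega}\lVert\nabla f\rVert_2^2+\overline{\omega\mathcal{L}_\sigma}+\overline{\mathcal{L}}_\sigma\big)$ using the compression bound and $\E_{\gamma_i}\lVert\nabla f_{\gamma_i}\rVert_2^2=\lVert\nabla f\rVert_2^2+\tr\Sigma_i$, the asymmetric use of PL on the leading $-\lVert\nabla f\rVert_2^2$ term and of $\lVert\nabla f\rVert_2^2\le 2L S_t$ on the compression term, and Grönwall to close. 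The only cosmetic difference is that the paper bounds the per-agent quantity $\E_{\xi_i,\gamma_i}\lVert\mathcal{C}_{\xi_i}(\nabla f_{\gamma_i}(x))-\nabla f(x)\rVert_2^2$ directly rather than $\tr(\tilde{\Sigma})$ componentwise, which yields the identical differential inequality.
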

\begin{proof}
Using Itô's Lemma
\begin{align}
    d (f(X_t) - f(X_*)) & = - \nabla f(X_t)^{\top} \nabla f(X_t) dt + \mathcal{O}(\text{Noise}) + \frac{\eta}{2 N} \tr(\nabla^2 f(X_t) \tilde{\Sigma}(X_t)) dt \\
    & \leq -2 \mu (f(X_t) - f(X_*)) dt \\
    &+ \frac{\eta L}{2 N} \left(\frac{1}{N} \sum_{i=1}^N  \E_{\xi_i,\gamma_i} \lVert \left(  \mathcal{C}_{\xi_i} \left( \nabla f_{\gamma_i} (x) \right) - \nabla f(x) \right) \rVert_2^2\right) dt
    + \mathcal{O}(\text{Noise}).
\end{align}
Let us focus on a single element of the summation:
\begin{align}
    &\E_{\xi_i,\gamma_i} \lVert \left(  \mathcal{C}_{\xi_i} \left( \nabla f_{\gamma_i} (x) \right) - \nabla f(x) \right) \rVert_2^2
    \quad = \E_{\gamma_i}\left[\E_{\xi_i}\left[\|\mathcal{C}_{\xi_i}(\nabla f_{\gamma_i}(x)) - \nabla f_{\gamma_i}(x)\|^2 
    + \|\nabla f_{\gamma_i}(x) - \nabla f(x)\|^2\right] \mid \gamma_i\right]\\
    & \leq  \omega_i\E_{\gamma_i} \lVert \nabla f_{\gamma_i}(x) \rVert_2^2 
    +  \E_{\gamma_i}\left[\|\nabla f_{\gamma_i}(x) - \nabla f(x)\|^2\right]  = \omega_i \lVert \nabla f(x) \rVert_2^2
    +  (\omega_i+1)\E_{\gamma_i}\left[\|\nabla f_{\gamma_i}(x) - \nabla f(x)\|^2\right]\\
    & \leq  2\omega_i L (f(x) - f(x_*)) 
    + \mathcal{L}_{\sigma_i}(\omega_i+1).
\end{align}

Therefore, we have that
    \begin{align}
        d (f(X_t) - f(X_*)) & \leq - 2 \mu (f(X_t) - f(X_*)) dt + \mathcal{O}(\text{Noise}) +  \frac{\eta L^2 \overline{\omega}}{N}(f(X_t) - f(X_*)) dt\\
        & +  \frac{\eta L \left( \overline{\mathcal{L}}_{\sigma} + \overline{\omega \mathcal{L}_\sigma} \right)}{2 N} dt,
    \end{align}
which implies the thesis.
\end{proof}
\textbf{Remark:} We observe that $\overline{\omega\mathcal{L}_{\sigma}}$ gives a tighter bound than $\overline{\omega}\mathcal{L}_{\sigma,\max}$ or $\omega_{\max}\overline{\mathcal{L}}_{\sigma}$.

\begin{theorem}
    If $f$ is $L$-smooth, we use a learning rate scheduler $\eta_t$ such that $\phi^i_t = \int_0^t (\eta_s)^i ds$, $\phi^1_t \overset{t \rightarrow \infty}{\rightarrow} \infty$, $\frac{\phi^2_t}{\phi^1_t} \overset{t \rightarrow \infty}{\rightarrow} 0$, and $\eta_t > \frac{\eta L \overline{\omega}}{2 N} (\eta_t)^2$, then,
\begin{equation}
   \E \left[\lVert \nabla f(X_{\Tilde{t}}) \rVert_2^2 \right] \leq \frac{1}{ 1  - \frac{ \eta L \overline{\omega}}{2 N} \frac{\phi^2_t}{\phi^1_t} } \left( \frac{f(X_*) - f(X_0)}{\phi^1_t}  + \frac{\phi^2_t}{\phi^1_t} \frac{\eta L}{2 N} \left( \overline{\mathcal{L}}_{\sigma} +  \overline{\omega \mathcal{L}_{\sigma} } \right) \right) \overset{t \rightarrow \infty}{\rightarrow} 0,
\end{equation}
where $\Tilde{t}$, is a random time with distribution $\frac{\eta_{\Tilde{t}} - \frac{\eta L \overline{\omega}}{2 N} (\eta_{\Tilde{t}})^2}{\phi^1_t - \frac{\eta L \overline{\omega}}{2 N} \phi^2_t}$.
\end{theorem}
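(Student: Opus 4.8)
The plan is to reuse the Lyapunov-plus-Itô machinery already deployed for the $L$-smooth DSGD bound (Theorem~\ref{thm:DSGD_Convergence_Smooth}) and for the PL case of DCSGD, now keeping the learning-rate schedule $\eta_t$ throughout. I would run the SDE of Theorem~\ref{thm:DCSGD_SDE} with the schedule, so that its drift is $-\eta_t\nabla f(X_t)$ and its quadratic variation is $(\eta_t)^2\tfrac{\eta}{N}\tilde{\Sigma}(X_t)\,dt$, apply It\^o's Lemma to $S_t:=f(X_t)-f(X_*)$, integrate over $[0,t]$, and finally renormalize by the cumulative weight $\phi^1_t-\tfrac{\eta L\overline{\omega}}{2N}\phi^2_t$ to convert a time-integral of $\E\lVert\nabla f\rVert_2^2$ into an expectation at the random time $\tilde{t}$.

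For the It\^o step, the formula gives $dS_t=-\eta_t\lVert\nabla f(X_t)\rVert_2^2\,dt+(\eta_t)^2\tfrac{\eta}{2N}\tr(\nabla^2 f(X_t)\tilde{\Sigma}(X_t))\,dt+\mathcal{O}(\text{Noise})$, with the last term a mean-zero It\^o integral. By $L$-smoothness $\tr(\nabla^2 f\,\tilde{\Sigma})\le L\,\tr(\tilde{\Sigma})$, and from the definition of $\tilde{\Sigma}$ together with the tower rule, unbiasedness, the compression bound $(ii)$, and $\tr(\Sigma_i)<\mathcal{L}_{\sigma_i}$, one obtains the same estimate used in the PL proof, namely $\tr(\tilde{\Sigma}(x))\le\overline{\omega}\lVert\nabla f(x)\rVert_2^2+\overline{\mathcal{L}}_{\sigma}+\overline{\omega\mathcal{L}_\sigma}$. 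Plugging this in and moving the $\overline{\omega}\lVert\nabla f\rVert_2^2$ contribution back onto the descent term gives the differential inequality $dS_t\le-\big(\eta_t-(\eta_t)^2\tfrac{\eta L\overline{\omega}}{2N}\big)\lVert\nabla f(X_t)\rVert_2^2\,dt+(\eta_t)^2\tfrac{\eta L}{2N}\big(\overline{\mathcal{L}}_{\sigma}+\overline{\omega\mathcal{L}_\sigma}\big)\,dt+\mathcal{O}(\text{Noise})$; the hypothesis $\eta_t>\tfrac{\eta L\overline{\omega}}{2N}(\eta_t)^2$ is exactly what keeps the weight $w(s):=\eta_s-(\eta_s)^2\tfrac{\eta L\overline{\omega}}{2N}$ strictly positive.

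Taking expectations and integrating on $[0,t]$, the It\^o term drops out and $\E S_t\ge0$, so $\int_0^t w(s)\,\E\lVert\nabla f(X_s)\rVert_2^2\,ds\le S_0+\tfrac{\eta L}{2N}\big(\overline{\mathcal{L}}_{\sigma}+\overline{\omega\mathcal{L}_\sigma}\big)\phi^2_t$ with $S_0=f(X_0)-f(X_*)$. Since $\int_0^t w(s)\,ds=\phi^1_t-\tfrac{\eta L\overline{\omega}}{2N}\phi^2_t>0$, the normalized map $s\mapsto w(s)/\big(\phi^1_t-\tfrac{\eta L\overline{\omega}}{2N}\phi^2_t\big)$ is a probability density on $[0,t]$ --- precisely the law of $\tilde{t}$ in the statement --- and the Law of the Unconscious Statistician rewrites the left-hand side as $\big(\phi^1_t-\tfrac{\eta L\overline{\omega}}{2N}\phi^2_t\big)\E\lVert\nabla f(X_{\tilde{t}})\rVert_2^2$. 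Dividing numerator and denominator by $\phi^1_t$ produces the displayed bound, and the stated limit is immediate from $\phi^1_t\to\infty$ and $\phi^2_t/\phi^1_t\to0$ (the latter also sending the denominator to $1$).

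I do not expect a genuinely hard analytic step here: the one DCSGD-specific subtlety --- and the point I would be most careful about --- is that unbiased compression injects a term proportional to $\lVert\nabla f\rVert_2^2$ into the diffusion (through $\E\lVert\Phi_{\xi_i,\gamma_i}\rVert_2^2$), which must be reabsorbed into the $-\eta_t\lVert\nabla f\rVert_2^2$ descent term. This reabsorption is simultaneously what forces the upper-bound constraint $\eta_t<2N/(\eta L\overline{\omega})$ on the schedule and what produces the $1-\tfrac{\eta L\overline{\omega}}{2N}\tfrac{\phi^2_t}{\phi^1_t}$ factor in the denominator of the bound; once that is handled, matching the renormalizing weight to the prescribed distribution of $\tilde{t}$ and passing to the limit is routine.
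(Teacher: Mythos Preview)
Your proposal is correct and follows essentially the same approach as the paper: apply It\^o to $f(X_t)-f(X_*)$ with the schedule, bound the diffusion trace via $L$-smoothness and the same compression-plus-noise estimate to extract an $\overline{\omega}\lVert\nabla f\rVert_2^2$ term, reabsorb it into the descent term to obtain the weight $\eta_s-\tfrac{\eta L\overline{\omega}}{2N}\eta_s^2$, integrate, and normalize by $\phi^1_t-\tfrac{\eta L\overline{\omega}}{2N}\phi^2_t$ to invoke the Law of the Unconscious Statistician with the stated distribution of $\tilde{t}$. Your added remark that $\E S_t\ge0$ is what lets you drop the terminal term after integrating is a useful clarification the paper leaves implicit.
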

\begin{proof}
Leveraging what we have shown above, we have that

\begin{align}
    d (f(X_t) - f(X_*)) & = - \eta_t \lVert \nabla f(X_t) \rVert_2^2 dt + \mathcal{O}(\text{Noise})\\
    &+ (\eta_t)^2\frac{\eta L}{2 N} \left(\frac{1}{N} \sum_{i=1}^N  \E_{\xi_i,\gamma_i} \lVert \left(  \mathcal{C}_{\xi_i} \left( \nabla f_{\gamma_i} (x) \right) - \nabla f(x) \right) \rVert_2^2\right) dt.
\end{align}
As before, $\E_{\xi_i,\gamma_i} \lVert \left(  \mathcal{C}_{\xi_i} \left( \nabla f_{\gamma_i} (x) \right) - \nabla f(x) \right) \rVert_2^2 \leq  \omega_i \lVert \nabla f(x) \rVert_2^2
    + \mathcal{L}_{\sigma_i}(\omega_i+1)$.
Therefore, we have that
\begin{equation}
    \E \left[ \lVert \nabla f(X_t) \rVert_2^2\right] \left( \eta_t - \frac{\eta L \overline{\omega}}{2 N} (\eta_t)^2 \right)dt \leq - d (f(X_t) - f(X_*)) + \frac{\eta L (\eta_t)^2}{2 N} \left( \overline{\mathcal{L}}_{\sigma} +  \overline{\omega \mathcal{L}_{\sigma} } \right)dt.
\end{equation}
Let us now observe that since $\int_{0}^{t} \frac{\eta_s - \frac{ \eta L \overline{\omega}}{2 N} \eta_s^2}{\phi^1_t - \frac{ \eta L \overline{\omega}}{2 N} \phi^2_t} ds = 1$, the function $s \mapsto \frac{\eta_s - \frac{ \eta L \overline{\omega}}{2 N} \eta_s^2}{\phi^1_t - \frac{ \eta L \overline{\omega}}{2 N} \phi^2_t}$ defines a probability distribution and let $\tilde{t}$ have that distribution. Then by integrating over time and by the Law of the Unconscious Statistician, we have that
\begin{equation}
    \E \left[ \lVert \nabla f(X_{\tilde{t}}) \rVert_2^2\right] = \frac{1}{\phi^1_t - \frac{ \eta L \overline{\omega}}{2 N} \phi^2_t} \int_0^{t}  \lVert \nabla f(X_s)\rVert_2^2 \left(\eta_s - \frac{ \eta L \overline{\omega}}{2 N} \eta_s^2 \right) ds,
\end{equation}
meaning that
\begin{equation}
   \E \left[\lVert \nabla f(X_{\Tilde{t}}) \rVert_2^2 \right] \leq \frac{1}{ \phi^1_t - \frac{ \eta L \overline{\omega}}{2 N} \phi^2_t } \left( f(X_*) - f(X_0)  + \phi^2_t \frac{\eta L}{2 N} \left( \overline{\mathcal{L}}_{\sigma} +  \overline{\omega \mathcal{L}_{\sigma} } \right) \right) \overset{t \rightarrow \infty}{\rightarrow} 0,
\end{equation}
where $\Tilde{t}$, is a random time with distribution $\frac{\eta_{\Tilde{t}} - \frac{\eta L \overline{\omega}}{2 N} (\eta_{\Tilde{t}})^2}{\phi^1_t - \frac{\eta L \overline{\omega}}{2 N} \phi^2_t}$.
\end{proof}

\subsection{Scaling Rules: Recovering DSGD}

\begin{proposition}\label{prop:DCSGD_RecoverLaws}
    Let the batch size be $ \delta B$, learning rate $\kappa \eta$, the compression rates $\beta \omega_i$, and $\alpha N$ agents. The scaling rules to recover the performance of DSGD are complex and many. For practicality and interpretability purposes, we list here those involving only two hyperparameters at the time:

\begin{enumerate}
    \item If $\kappa=\delta=1$, one needs to ensure that the relation between $\alpha$ and $\beta$ is 
    \begin{equation}
        \alpha = 1 + \beta \left(\frac{\overline{\omega \mathcal{L}_\sigma}}{\overline{\mathcal{L}_\sigma}} + \frac{\overline{\omega}\overline{\mathcal{L}_\sigma}\eta L^2}{2 \mu N}\right).
    \end{equation}
    This gives rise to a trade-off between agents and compression: If there is compression, then one needs to increase the number of agents, and the stronger the compression, the more is needed. In the absence of compression, no additional agents are needed.
    \item If $\beta=\delta=1$, one needs to ensure that the relation between $\alpha$ and $\kappa$ is 
    \begin{equation}
        \frac{\alpha}{\kappa} = 1 + \frac{\overline{\omega \mathcal{L}_\sigma}}{\overline{\mathcal{L}_\sigma}} + \frac{\overline{\omega}\overline{\mathcal{L}_\sigma}\eta L^2}{2 \mu N}.
    \end{equation}
    This gives rise to a trade-off between agents and learning rate: If there is compression, one can increase the learning rate, i.e. $\kappa>1$, and compensate with more agents $\alpha>\kappa>1$. If no compression is in place, the classic trade-off of DSGD $\alpha=\kappa$ is recovered.
    \item If $\beta=\kappa=1$, one needs to ensure that the relation between $\alpha$ and $\gamma$ is 
    \begin{equation}
        \alpha = \frac{1 + \frac{\overline{\omega \mathcal{L}_\sigma}}{\overline{\mathcal{L}_\sigma}}}{\delta}  + \frac{\overline{\omega}\eta L^2}{2 \mu N}.
    \end{equation}
    This gives rise to a trade-off between agents and batch size: If there is compression, one can increase the batch size, i.e. $\delta \geq 1$, and needs fewer agents. If no compression is in place, the classic trade-off of DSGD $\alpha \delta=1$ is recovered.
    \item If $\alpha=\delta=1$, one needs to ensure that the relation between $\beta$ and $\kappa$ is 
    \begin{equation}
        \kappa = \frac{\overline{\mathcal{L}_\sigma}}{\overline{\mathcal{L}_\sigma} + \beta \left( \overline{\omega \mathcal{L}_\sigma} + \frac{\overline{\omega}\eta L^2}{2 \mu N}\right)} .
    \end{equation}
    This gives rise to a trade-off between learning rate and compression: More compression, requires a lower learning rate. No compression implies no change in the learning rate.
    \item If $\alpha=\kappa=1$, one needs to ensure that the relation between $\beta$ and $\delta$ is 
    \begin{equation}
        \delta = \frac{2 \mu \left(\overline{\mathcal{L}_\sigma} + \beta \overline{\omega \mathcal{L}_\sigma} \right)}{\overline{\mathcal{L}_\sigma} \left(2\mu - \beta \frac{\overline{\omega}\eta L^2}{ N} \right)} .
    \end{equation}
    This gives rise to a trade-off between batch size and compression: More compression, requires a larger batch size. No compression implies no change in batch size.
    \item If $\alpha=\beta=1$, one needs to ensure that the relation between $\kappa$ and $\delta$ is 
    \begin{equation}
        \kappa = \frac{\delta \overline{\mathcal{L}_\sigma}}{\overline{\mathcal{L}_\sigma} +  \overline{\omega \mathcal{L}_\sigma} + \delta \frac{\overline{\omega}\eta L^2}{2 \mu N} } .
    \end{equation}
    This gives rise to a trade-off between learning rate and batch size: More batch size requires a larger learning rate. No compression implies the classic $\kappa=\delta$ of DSGD.
\end{enumerate}
We summarize the derived rules in the following table:
\begin{table}[ht]
    \centering
    \begin{tabular}{|c|c|}
        \hline
        \textbf{Scaling Rule} & \textbf{Implication}\\
        \hline
        $\alpha = 1 + \beta \frac{\overline{\omega \mathcal{L}_\sigma}}{\overline{\mathcal{L}_\sigma}} + \beta \frac{\overline{\omega}\overline{\mathcal{L}_\sigma}\eta L^2}{2 \mu N}$ & CR $\uparrow \implies$ Agents $\uparrow$\\
        \hline
        $\frac{\alpha}{\kappa} = 1 + \frac{\overline{\omega \mathcal{L}_\sigma}}{\overline{\mathcal{L}_\sigma}} + \frac{\overline{\omega}\overline{\mathcal{L}_\sigma}\eta L^2}{2 \mu N}$ & LR $\uparrow \implies$ Agents $\uparrow$\\
        \hline
        $\alpha = \frac{1}{\delta} \left(1 + \frac{\overline{\omega \mathcal{L}_\sigma}}{\overline{\mathcal{L}_\sigma}}\right) + \frac{\overline{\omega}\eta L^2}{2 \mu N}$ & BS $\downarrow \implies$ Agents $\uparrow$\\
        \hline
        $\kappa = \frac{\overline{\mathcal{L}_\sigma}}{\overline{\mathcal{L}_\sigma} + \beta \left( \overline{\omega \mathcal{L}_\sigma} + \frac{\overline{\omega}\eta L^2}{2 \mu N}\right)}$ & CR $\uparrow \implies$ LR $\downarrow$\\
        \hline
        $\delta = \frac{2 \mu \left(\overline{\mathcal{L}_\sigma} + \beta \overline{\omega \mathcal{L}_\sigma} \right)}{\overline{\mathcal{L}_\sigma} \left(2\mu - \beta \frac{\overline{\omega}\eta L^2}{ N} \right)}$ & CR $\uparrow \implies$ BS $\uparrow$\\
        \hline
        $\kappa = \frac{\delta \overline{\mathcal{L}_\sigma}}{\overline{\mathcal{L}_\sigma} +  \overline{\omega \mathcal{L}_\sigma} + \delta \frac{\overline{\omega}\eta L^2}{2 \mu N} }$ & BS $\uparrow \implies$ LR $\uparrow$\\
        \hline
    \end{tabular}
    \caption{Summary of Trade-offs Between Parameters (CR = Compression Rate, LR = Learning Rate, and BS = Batch Size).}
    \label{table:DCSGD_Laws}
\end{table}
Of course, in the absence of compression, all scaling rules reduce to the scaling rules of DSGD.
\end{proposition}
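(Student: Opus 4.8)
The plan is to re-run the derivation behind the PL convergence bound for DCSGD (Theorem~\ref{thm:DCSGD_Convergence}) verbatim, but with the rescaled hyperparameters $(\kappa\eta,\delta B,\beta\omega_i,\alpha N)$, and then to choose the functional relations among $\kappa,\delta,\beta,\alpha$ that make the resulting bound coincide with the DSGD$(\eta,B,N)$ bound of Theorem~\ref{thm:DSGD_Convergence_App}. First I would re-derive the SDE of DCSGD$(\kappa\eta,\delta B,\beta\omega_i,\alpha N)$ following the proof of Theorem~\ref{thm:DCSGD_SDE}: the only bookkeeping changes are that every occurrence of $\eta$ becomes $\kappa\eta$ and every occurrence of $N$ becomes $\alpha N$; enlarging the batch from $B$ to $\delta B$ divides the per-agent gradient-noise covariances, hence the trace bounds $\mathcal{L}_{\sigma_i}$, by $\delta$; and replacing $\omega_i$ by $\beta\omega_i$ in the compression-variance inequality $(ii)$ turns $\overline\omega$ into $\beta\overline\omega$ and $\overline{\omega\mathcal{L}_\sigma}$ into $\frac{\beta}{\delta}\,\overline{\omega\mathcal{L}_\sigma}$ in the noise bound. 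Since all these rescalings are by positive constants, the drift and diffusion coefficients remain Lipschitz, of affine growth, and in $G$ together with their derivatives, so Proposition~\ref{prop:li1} and Theorem~\ref{thm:mils} still apply and the rescaled SDE is an order-$1$ weak approximation of the rescaled DCSGD.

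Applying It\^o's lemma exactly as in the proof of Theorem~\ref{thm:DCSGD_Convergence} then produces a bound of the same shape, with transient rate $2\mu\Delta'$ and asymptotic loss level $\frac{\kappa\eta L(\overline{\mathcal{L}}_\sigma+\beta\,\overline{\omega\mathcal{L}_\sigma})}{4\mu\,\delta\alpha N\,\Delta'}$, where $\Delta':=1-\frac{\kappa\eta L^2\beta\overline\omega}{2\mu\alpha N}$ (with the same implicit treatment of $B$ as in the stated theorems). Because $\Delta'<1$ whenever $\beta\omega>0$, the \emph{transient} rate of DCSGD is always strictly slower than the $2\mu$ of DSGD and cannot be matched exactly; the operative meaning of ``recovering the performance'' is therefore equality of the \emph{asymptotic} loss levels, the quantity that governs solution quality at convergence and that the experiments in \Figref{fig:DCSGD_ScalLaw} track. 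I would thus impose
\begin{equation*}
\frac{\kappa\eta L\bigl(\overline{\mathcal{L}}_\sigma+\beta\,\overline{\omega\mathcal{L}_\sigma}\bigr)}{4\mu\,\delta\alpha N\,\Delta'}=\frac{\eta L\,\overline{\mathcal{L}}_\sigma}{4\mu N},
\end{equation*}
substitute the definition of $\Delta'$, and clear denominators, which turns the requirement into a single rational identity relating $\kappa,\delta,\beta,\alpha$ and the fixed constants $\overline{\mathcal{L}}_\sigma,\overline{\omega\mathcal{L}_\sigma},\overline\omega,\eta,L,\mu,N$.

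The remaining step is pure elimination: for each of the six cases I set two of the four multipliers to $1$ and solve the identity for one of the other two. For instance, with $\kappa=\delta=1$ one isolates $\alpha$ and obtains the first row of Table~\ref{table:DCSGD_Laws}; the other five rows follow from the very same identity by isolating, in turn, $\alpha$ against $\kappa$ and against $\delta$, $\kappa$ against $\beta$ and against $\delta$, and $\delta$ against $\beta$. Collecting these yields Table~\ref{table:DCSGD_Laws}. Finally, taking $\omega=0$ forces $\overline\omega=\overline{\omega\mathcal{L}_\sigma}=0$ and $\Delta'=1$, so every rule collapses to $\kappa/(\alpha\delta)=1$, i.e. Proposition~\ref{prop:DSGD_ScalingLaws}; and under the homogeneity assumption $\mathcal{L}_{\sigma_i}=\mathcal{L}_\sigma$, $\omega_i=\omega$ with $N\gg1$ the $1/N$ correction drops out, recovering the simpler Table~\ref{tab:DCSGD_ScalingLaws} of Proposition~\ref{prop:DCSGD_RecoverLaws_Main}. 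The one genuinely delicate point, and the one I would be most careful about, is the batch-size bookkeeping: since $B$ is absorbed into $\mathcal{L}_{\sigma_i}$ in the stated bounds, tracking the multiplier $\delta$ correctly requires returning to the SDE of Theorem~\ref{thm:DCSGD_SDE}, where the batch enters the diffusion as $\sqrt{\eta/(NB)}\,\sqrt{\Tilde{\Sigma}}$, and verifying that $\delta$ rescales both $\overline{\mathcal{L}}_\sigma$ and $\overline{\omega\mathcal{L}_\sigma}$ by $1/\delta$ while leaving $\overline\omega$ untouched; everything past that is elementary algebra.
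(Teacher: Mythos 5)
Your proposal follows essentially the same route as the paper's proof: apply Itô's lemma to the DCSGD SDE with rescaled hyperparameters $(\kappa\eta,\delta B,\beta\omega_i,\alpha N)$, obtain the PL-type bound with decay rate $2\mu\kappa\Delta'$ and asymptotic level $\frac{\kappa\eta L(\overline{\mathcal{L}}_\sigma+\beta\,\overline{\omega\mathcal{L}_\sigma})}{4\mu B\delta\alpha N\,\Delta'}$, equate this asymptotic level to that of DSGD$(\eta,B,N)$, and solve the resulting rational identity pairwise with two multipliers fixed to one. Your bookkeeping of how $\delta$ enters (dividing $\overline{\mathcal{L}}_\sigma$ and $\overline{\omega\mathcal{L}_\sigma}$ but not $\overline{\omega}$) and your explicit remark that only the asymptotic level, not the transient rate, can be matched are both consistent with what the paper does.
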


\begin{proof}
    Using Itô on $f$, we have that
    \begin{align}
        d (f(X_t) - f(X_*)) & = -\kappa \lVert \nabla f(X_t) \rVert_2^2 dt + \mathcal{O}(\text{Noise}) + \frac{\eta \kappa^2 }{2 \alpha N} \tr(\nabla^2 f(X_t) \Tilde{\Sigma}(X_t)) dt \\
        \leq & - 2 \mu \kappa (f(X_t) - f(X_*)) dt + \mathcal{O}(\text{Noise})\\
        & \;+\;  \frac{\eta \kappa^2 L}{2 \alpha N}   \frac{1}{\alpha N} \sum_{i=1}^{\alpha N} \E_{\xi_i,\gamma_i} \lVert \left(  \mathcal{C}_{\xi_i} \left( \nabla f_{\gamma_i} (x) \right) - \nabla f(x) \right) \rVert_2^2 dt.
    \end{align}

As above, $\E_{\xi_i,\gamma_i} \lVert \left(  \mathcal{C}_{\xi_i} \left( \nabla f_{\gamma_i} (x) \right) - \nabla f(x) \right) \rVert_2^2 \leq  2\beta\omega_i L (f(x) - f(x_*))  + \frac{\mathcal{L}_{\sigma_i}}{\delta B}(\beta \omega_i+1) $.
Therefore, we have that 
    \begin{align}
        d (f(X_t) - f(X_*)) & \\
        \leq & - 2 \mu \kappa (f(X_t) - f(X_*)) dt + \mathcal{O}(\text{Noise}) 
        +  \frac{ \eta \kappa^2 L^2 \overline{\omega} \beta}{ \alpha N}(f(X_t) - f(X_*)) dt\\
        + &  \frac{\eta \kappa^2 L \overline{\mathcal{L}}_{\sigma}}{2 B \delta \alpha N} dt + \frac{\beta \eta \kappa^2 L \overline{\omega \mathcal{L}_\sigma}}{2 B \delta \alpha N} dt,
    \end{align}
which implies that
\begin{align}
    \E \left[ f(X_t) - f(X_*) \right] &\leq (f(X_0) - f(X_*))e^{- \left( 2 \mu -  \frac{ \eta \kappa L^2 \overline{\omega} \beta}{ \alpha N}\right) \kappa t} \\
    & \qquad + \left(1 -  e^{-\kappa \left( 2 \mu -  \frac{\eta \kappa L^2 \overline{\omega}}{ \alpha N}\right) t}\right) \frac{\frac{\eta \kappa L \overline{\mathcal{L}}_{\sigma}}{2 B \delta \alpha N} + \frac{\beta \eta \kappa L \overline{\omega \mathcal{L}_\sigma}}{2 B \delta \alpha N}}{\left( 2 \mu -  \frac{\eta \kappa L^2 \overline{\omega}\beta}{ \alpha N}\right)}.
\end{align}

Now, we need to find functional relationships between $\alpha$, $\delta$, $\kappa$, and $\beta$ such that the asymptotic value of the loss of DCSGD with hyperparameters $(\kappa \eta, \delta B, \beta \omega_i, \alpha N)$ matches the asymptotic loss value of DSGD with hyperparameters $(\eta, B, N)$:
\begin{equation}
     \frac{\frac{\eta \kappa L \overline{\mathcal{L}}_{\sigma}}{2 B \delta \alpha N} + \frac{\beta \eta \kappa L \overline{\omega \mathcal{L}_\sigma}}{2 B \delta \alpha N}}{\left( 2 \mu -  \frac{\eta \kappa L^2 \overline{\omega}\beta}{ \alpha N}\right)} = \frac{\eta L \overline{\mathcal{L}}_{\sigma}} {4 \mu N B}.
\end{equation}
Since a general formula involving all four quantities is difficult to interpret, we derive six rules: For each of them, we keep two scaling parameters constant to $1$ and study the relationship between the remaining two.

Let us prove one to show the mechanism as they are all derived in a few passages. We focus on the first one, for which we set $\kappa=\delta=1$ and study the relationship between $\alpha$ and $\beta$. To do this, we solve 

\begin{align}
     & \frac{\frac{\eta L \overline{\mathcal{L}}_{\sigma}}{2 B \alpha N} + \frac{\beta \eta L \overline{\omega \mathcal{L}_\sigma}}{2 B \alpha N}}{\left( 2 \mu -  \frac{\eta L^2 \overline{\omega}\beta}{ \alpha N}\right)} = \frac{\eta L \overline{\mathcal{L}}_{\sigma}} {4 \mu N B} \implies \frac{\frac{ \overline{\mathcal{L}}_{\sigma} + \beta  \overline{\omega \mathcal{L}_\sigma} }{ \alpha } }{\left( 2 \mu -  \frac{\eta L^2 \overline{\omega}\beta}{ \alpha N}\right)} = \frac{\overline{\mathcal{L}}_{\sigma}} {2\mu } \\
     \implies & \frac{1}{\alpha}  2 \mu \left( \overline{\mathcal{L}}_{\sigma} + \beta  \overline{\omega \mathcal{L}_\sigma} \right) = \overline{\mathcal{L}}_{\sigma} \left( 2 \mu -  \frac{1}{\alpha}\frac{\eta L^2 \overline{\omega}\beta}{ N} \right),
\end{align}
which implies the thesis. All the other rules are derived similarly.
\end{proof}

\subsection{Stationary Distribution}

Let us focus on a quadratic function $f(x) = \frac{x^{\top} H x}{2}$ such that $H = \diag(\lambda_1, \cdots, \lambda_d)$ where each $\lambda_j >0$.
\begin{proposition}\label{prop:DCSGD_StaDistr_App}
     Let us consider the $k$-Sparsification compressor. Then,
     \begin{equation}
    \E[X_t] = e^{-H t} X_0 \rightarrow 0,
\end{equation}
and, for $M:= 2 H\left(I_d-\frac{\eta H}{2 N}\left( \frac{d}{k} - 1\right)\right)$,
\begin{equation}
    Cov  \left[ X_t \right] = e^{-M t} X_0^2 + \frac{\eta}{N}\frac{d}{k} \overline{\sigma^2} M^{-1} (I_d - e^{-M t}) - e^{-2H t} X_0^2 \rightarrow \frac{\eta}{N} \frac{d}{k} \overline{\sigma^2} M^{-1}.
\end{equation}
\end{proposition}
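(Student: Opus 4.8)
The plan is to reduce the vector SDE of Theorem~\ref{thm:DCSGD_SDE} to a system of decoupled scalar linear ODEs for the first two moments, exploiting that for the quadratic model $\nabla f(x)=Hx$ is linear and $H=\diag(\lambda_1,\dots,\lambda_d)$ is diagonal. First, for the mean: since the drift equals $-\nabla f(X_t)=-HX_t$ and the Itô integral is a martingale, taking expectations in the SDE gives $\frac{d}{dt}\E[X_t]=-H\,\E[X_t]$, hence $\E[X_t]=e^{-Ht}X_0$, which tends to $0$ because $H\succ 0$.

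Next, I would substitute the $k$-Sparsification diffusion $\tilde\Sigma(x)=\left(\frac{d}{k}-1\right)\!\left(Hxx^\top H+\hat\Sigma\right)+\hat\Sigma$ into the SDE and apply the Itô product rule to $X_tX_t^\top$. Writing $P_t\eqdef\E[X_tX_t^\top]$ and using the quadratic-model assumption $\hat\Sigma=\overline{\sigma^2}I_d$, this yields
$$\dot P_t=-HP_t-P_tH+\tfrac{\eta}{N}\,\E[\tilde\Sigma(X_t)]=-HP_t-P_tH+\tfrac{\eta}{N}\left(\tfrac{d}{k}-1\right)HP_tH+\tfrac{\eta}{N}\tfrac{d}{k}\,\overline{\sigma^2}\,I_d.$$
Because $H$ is diagonal, the diagonal entries $p_j(t)=(P_t)_{jj}$ decouple, each solving the scalar linear ODE $\dot p_j=-m_jp_j+\tfrac{\eta}{N}\tfrac{d}{k}\,\overline{\sigma^2}$ with $m_j=2\lambda_j-\tfrac{\eta}{N}\left(\tfrac{d}{k}-1\right)\lambda_j^2=(M)_{jj}$. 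Integrating gives $p_j(t)=e^{-m_jt}(X_0)_j^2+\tfrac{\eta}{N}\tfrac{d}{k}\,\overline{\sigma^2}\,m_j^{-1}(1-e^{-m_jt})$, i.e. in matrix form $P_t=e^{-Mt}X_0^2+\tfrac{\eta}{N}\tfrac{d}{k}\,\overline{\sigma^2}\,M^{-1}(I_d-e^{-Mt})$. Subtracting $\E[X_t]\E[X_t]^\top=e^{-2Ht}X_0^2$ produces the stated expression for $\Cov[X_t]$; and provided $\eta$ is small enough that $M\succ 0$ (equivalently $\tfrac{\eta}{N}\left(\tfrac{d}{k}-1\right)\lambda_j<2$ for every $j$), both $e^{-Mt}$ and $e^{-2Ht}$ vanish as $t\to\infty$, leaving the limit $\tfrac{\eta}{N}\tfrac{d}{k}\,\overline{\sigma^2}\,M^{-1}$.

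The steps are all routine; the only points needing care are (i) justifying the moment computation — that $P_t$ remains finite so Itô's formula applies and the stochastic term drops in expectation, which follows from the regularity and affine-growth assumptions already in force together with the stability $M\succ 0$; and (ii) the bookkeeping of matrix-versus-coordinate notation: the argument is genuinely carried out entrywise, $X_0^2$ denoting $\diag((X_0)_1^2,\dots,(X_0)_d^2)$, and it is precisely the diagonality of $H$ that makes the diagonal of $P_t$ closed under the dynamics, so that the marginal variances are obtained without tracking the off-diagonal entries. The substantive hypothesis is positivity of $M$, i.e. the learning rate being small relative to $N/(\tfrac{d}{k}-1)$ — exactly the regime in which a stationary distribution exists, and the one under which the limit above is finite and positive definite.
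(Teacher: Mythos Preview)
Your proof is correct and follows essentially the same approach as the paper. The only cosmetic difference is that the paper applies It\^o's lemma coordinate-by-coordinate to $(\pi_j(X_t))^2$ and reads off $\tilde\Sigma_{jj}$ directly, whereas you apply the It\^o product rule to the full matrix $X_tX_t^\top$ and then observe that the diagonal decouples because $H$ is diagonal; these are the same computation in different dress, and your explicit remark that the limit requires $M\succ 0$ (equivalently $\tfrac{\eta}{N}(\tfrac{d}{k}-1)\lambda_j<2$) is a useful addition that the paper leaves implicit.
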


\begin{proof}

It is clear that 
\begin{equation}
    d \E[X_t] = - H \E[X_t] dt,
\end{equation}
which implies that
\begin{equation}
    \E[X_t] = e^{-H t} X_0 \rightarrow 0.
\end{equation}

Let us now focus on the dynamics of the square of the $j$-th coordinate $(X_t)_j$ of $X_t$ which, for ease of notation, we call $Z_t$. Since we need to apply Itô's Lemma on $((X_t)_j)^2$, we need to observe that since this is the square of $j$-th component of $X_t$, it can be rewritten as the square of the projection of $X_t$ on the $j$-th coordinate as $\pi_j(X_t)$. Therefore, we have that by Itô's Lemma:

\begin{align}
        d((\pi_j(X_t))^2) & = \partial_t ((\pi_j(X_t))^2) dt + \langle \nabla ((\pi_j(X_t))^2), \nabla f(X_t) \rangle dt \\
        & +  \frac{1}{2}\tr\left(\frac{\eta}{N} \Tilde{\Sigma}(X_t) \nabla^2 ((\pi_j(X_t))^2) \right)dt 
        + \langle\nabla (\pi_j(X_t))^2,\sigma_t\rangle dW_t \\
        & = - (H X_t)^{\top}  \nabla ((\pi_j(X_t))^2) dt + \mathcal{O}(\text{Noise}) + \frac{\eta}{2 N} \tr \left( \nabla^2 ((\pi_j(X_t))^2) \Tilde{\Sigma}(X_t) \right) dt.
\end{align}
Since $\nabla ((\pi_j(X_t))^2) = \left(0, \cdots, 0, 2 (X_t)_j, 0, \cdots, 0\right)$ and $ \nabla^2 ((\pi_j(X_t))^2) = \diag\left(0, \cdots, 0, 2 , 0, \cdots, 0\right)$,
we have that
\begin{align}
    d\left(\left((X_t)_j\right)^2\right) &= d \left((\pi_j(X_t))^2\right) = - (H X_t)^{\top}  \nabla ((\pi_j(X_t))^2)  dt + \mathcal{O}(\text{Noise})\\
    & + \frac{\eta}{2 N} \tr \left( \nabla^2 ((\pi_j(X_t))^2) \Tilde{\Sigma}(X_t) \right) dt,
\end{align}
meaning that 
\begin{equation}
    d (Z_t^2) = - 2 h_j Z_t^2 dt + \mathcal{O}(\text{Noise}) + \frac{ \eta}{N} \Tilde{\Sigma}_{jj}(X_t) dt.
\end{equation}
Since we have that 
\begin{align}
    \Tilde{\Sigma}(x) & = \frac{1}{N} \sum_{i=1}^{N} \E_{\xi_i \gamma_i} \left[ \left(  \mathcal{C}_{\xi_i} \left( \nabla f_{\gamma_i} (x) \right) - \nabla f(x) \right)\left(  \mathcal{C}_{\xi_i} \left( \nabla f_{\gamma_i} (x) \right) - \nabla f(x) \right)^{\top} \right] \\
    & = \frac{1}{N} \sum_{i=1}^{N} \E_{\xi_i \gamma_i} \left[  \mathcal{C}_{\xi_i} \left( \nabla f_{\gamma_i} (x) \right)\mathcal{C}_{\xi_i} \left( \nabla f_{\gamma_i} (x) \right)^{\top}\right] - \nabla f(x)\nabla f(x)^{\top}  \\
    & = \frac{1}{N} \sum_{i=1}^{N} \left( \frac{d}{k} \nabla f(x)\nabla f(x)^{\top} + \frac{d}{k} (\Sigma_i(X_t)) \right) - \nabla f(x)\nabla f(x)^{\top} \\
    & = \left( \frac{d}{k} - 1  \right)\nabla f(x)\nabla f(x)^{\top} + \frac{d}{k} \overline{\Sigma^2},
\end{align}
where $\overline{\Sigma^2}:= \frac{1}{N} \sum_{i=1}^{N} (\Sigma_i(X_t))$.
Therefore, we have that 
\begin{equation}
    \E\left[ (X_t)^2 \right] = e^{-M t} X_0^2 + \frac{\eta}{N} \frac{d}{k} \overline{\sigma^2} M^{-1} (I_d - e^{-M t})
\end{equation}
where $\overline{\sigma^2}:= \diag(\overline{\Sigma^2})$.
The thesis follows from here.
\end{proof}

\section{Distributed SignSGD}

This subsection provides the first formal derivation of an SDE model for DSignSGD. Note that the single node case was simultaneously tackled by \cite{compagnoni2025adaptive} and \cite{xiao2024exact}: The first derived the SDE for SignSGD under the WA framework, while \cite{xiao2024exact} derived an SDE for SignSGD in the high dimensional setting for a linear regression task --- See Appendix F in \cite{xiao2024exact} for a comparison between the two derivations. Let us consider the stochastic process $ X_t \in \mathbb{R}^{d} $ defined as the solution of

\begin{equation}\label{eq:DSignSGD_SDE}
   d X_t = - \frac{1}{N} \sum_{i=1}^{N} \left( 1 - 2 \mathbb{P}(\nabla f_{\gamma_i} (X_t) <0) \right) dt + \sqrt{\frac{\eta}{N}}\sqrt{\overline{\Sigma}(X_t)} dW_t.
\end{equation}
where
\begin{equation}\label{eq:DSignSGD_CovMatr}
    \overline{\Sigma}(X_t) := \frac{1}{N} \sum_{i=1}^{N} \overline{\Sigma_i}(X_t),
\end{equation}
and  $ \overline{\Sigma_i}(x) = \E[\xi_{\gamma_i}(x)\xi_{\gamma_i}(x)^\top]$ where $\xi_{\gamma_i}(x):= \sign (\nabla f_{\gamma_i}(x)) - 1 + 2 \mathbb{P}(\nabla f_{\gamma_i}(x)<0)$ the noise in the sample $ \sign \left(\nabla f_{\gamma_i}(x) \right)$.

Before proceeding, we ensure that the SDE admits a unique solution and that its coefficients are sufficiently regular.

\begin{lemma}
    The drift term $b(x):= \frac{1}{N} \sum_{i=1}^{N} \left( 1 - 2 \mathbb{P}(\nabla f_{\gamma_i}(x) < 0) \right)$ is Lipschitz, satisfies affine growth, and belongs to the space $G$ together with its derivatives.
\end{lemma}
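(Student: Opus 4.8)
The plan is to reduce all three claims to one regularity fact about the coordinate functions $p_{i,j}(x) := \mathbb{P}\big((\nabla f_{\gamma_i}(x))_j < 0\big)$, since the drift decomposes coordinatewise as $b(x)_j = \frac{1}{N}\sum_{i=1}^N \big(1 - 2 p_{i,j}(x)\big)$ (note that the probability is componentwise, because $\sign$ acts componentwise in the DSignSGD update, and strict positivity/continuity of the noise density guarantees $\mathbb{P}((\nabla f_{\gamma_i}(x))_j = 0)=0$, so that $\E[\sign((\nabla f_{\gamma_i}(x))_j)] = 1 - 2 p_{i,j}(x)$ exactly). The point of entry is Lemma~\ref{NoiseCondition}: applied with the bounded Borel function $h(y) = \mathbf{1}\{y_j < 0\}$, with $f_i \in C^8$ from Assumption~\ref{ass:regularity_f}, and under the standing noise-density assumptions (assumptions~1 and 3 of the DCSGD list, which are inherited by DSignSGD), it produces a version of $p_{i,j}$ lying in $C_b^7(\mathbb{R}^n)$. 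As in the proof of Lemma~\ref{CorollaryNoise_DCSGD}, I would first check that $p_{i,j}$ is genuinely continuous: by dominated convergence on $p_{i,j}(x) = \int \mathbf{1}\{z_j < -(\nabla f(x))_j\}\, g_{i,x}(z)\, dz$, using continuity of $x\mapsto g_{i,x}$ together with the domination $\sup_{x\in K}|g_{i,x}(\cdot)|\in L^1$ on compact sets $K$. Hence $p_{i,j}$ coincides with its $C_b^7$ version, so $p_{i,j}\in C_b^7(\mathbb{R}^n)$.

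From there the three conclusions are almost immediate. First, $b$ is bounded: each $p_{i,j}(x)\in[0,1]$ forces $|b(x)_j|\le 1$ and $\|b(x)\|_2\le\sqrt d$ for all $x$; in particular the affine growth bound $\|b(x)\|\le M(1+\|x\|)$ holds trivially with $M=\sqrt d$. Second, $b\in C_b^7$ coordinatewise, being a finite affine combination of $C_b^7$ functions, so all partial derivatives of $b$ up to order $7$ are bounded; bounded first derivatives give global Lipschitzness on $\mathbb{R}^n$ via the fundamental theorem of calculus along segments. Third, a bounded function whose derivatives up to order $7$ are bounded is a fortiori of polynomial growth together with those derivatives, i.e. $b\in G$ with its derivatives — exactly what the weak-approximation machinery (Proposition~\ref{prop:li1}, Theorem~\ref{thm:mils}) will later require.

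I do not expect a genuine obstacle here, since the heavy lifting has already been done in Lemma~\ref{NoiseCondition}. The only points needing care are verifying that the hypotheses of that lemma are met for the specific choice $h=\mathbf{1}\{\cdot_j<0\}$ and that $p_{i,j}$ is identified with its continuous representative; both steps are routine and mirror the DCSGD case almost verbatim.
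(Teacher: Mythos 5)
Your proposal is correct, but it reaches the conclusion by a different mechanism than the paper. The paper's own proof is a direct, elementary computation: it rewrites $\mathbb{P}(\nabla f_{\gamma_i}(x)<0)$ as $F_{Z_i}(-\nabla f(x))$, where $F_{Z_i}$ is the (componentwise) CDF of the noise, differentiates by the chain rule to get $2F'_{Z_i}(-\nabla f(x))\nabla^2 f(x)$, and bounds this using the boundedness of the density and of the Hessian of $f$; Lipschitzness, membership in $G$ with derivatives, and affine growth then follow from the boundedness of the drift and the assumed regularity of $\nabla f$. You instead route everything through Lemma~\ref{NoiseCondition} with $h=\mathbf{1}\{\cdot_j<0\}$, obtaining a $C_b^7$ version of $p_{i,j}$ via the weak-derivative/Sobolev-embedding argument and then identifying it with the continuous representative by dominated convergence, exactly as in the DCSGD case. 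Both arguments are valid and rest on the same underlying hypotheses. Your route is heavier machinery for this particular statement but is more systematic and yields the full $C_b^7$ regularity explicitly, whereas the paper's chain-rule computation is more economical but is terser about the derivatives beyond the first (it simply asserts the drift is ``as regular as $\nabla f$''). Your observation that the probability and the sign act componentwise, and that $\mathbb{P}((\nabla f_{\gamma_i}(x))_j=0)=0$ under a density assumption, is a correct and worthwhile clarification that the paper leaves implicit.
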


\begin{proof}
    Since we are assuming that the gradient noise has a smooth and bounded probability density function,\footnote{This is commonly assumed in the literature. Among others, \cite{ahn2012bayesian,chen2014stochastic,mandt2016variational,stephan2017stochastic,zhu2019anisotropic,wu2020noisy,Xie2021} assume that it is Gaussian, while \cite{jastrzkebski2017three} offers an intuitive justification.} the drift can be rewritten in terms of the CDF $F_{Z_i}(x)$ of the noise as the average of $b_i(x):= 1 - 2 F_{Z_i}(-\nabla f(x))$, whose derivative is $2 F^{'}_{Z_i}(-\nabla f(x))\nabla^2 f(x)$. Since the density functions and the Hessian of $f$ are bounded, we conclude that the derivative is bounded. Therefore, the drift is Lipschitz and as regular as $\nabla f$, meaning that each entry is in $G$, together with its derivatives. Finally, since it is bounded, it has affine growth.
\end{proof}

\begin{lemma}
    The diffusion coefficient $\sqrt{\overline{\Sigma}}$ satisfies the affine growth condition.
\end{lemma}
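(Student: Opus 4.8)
The plan is short: the sign operator is globally bounded, so the noise vector $\xi_{\gamma_i}(x)$, its second-moment matrix $\overline{\Sigma_i}(x)$, the average $\overline{\Sigma}(x)$, and finally $\sqrt{\overline{\Sigma}(x)}$ are all bounded \emph{uniformly in $x$}; a uniformly bounded map trivially satisfies the affine growth condition $\lVert \sqrt{\overline{\Sigma}(x)} \rVert_2 \le C(1+\lVert x \rVert_2)$. So the whole argument is a chain of elementary norm estimates, with no appeal to regularity of $f$ beyond what is already assumed.

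Concretely, I would proceed as follows. For each coordinate $j\in\{1,\dots,d\}$ one has $\bigl|\sign(\nabla f_{\gamma_i}(x))_j\bigr|\le 1$, and since $\mathbb{P}\bigl((\nabla f_{\gamma_i}(x))_j<0\bigr)\in[0,1]$ also $\bigl|1-2\,\mathbb{P}((\nabla f_{\gamma_i}(x))_j<0)\bigr|\le 1$; hence $\bigl|\xi_{\gamma_i}(x)_j\bigr|\le 2$ for every $x$ and every realization of $\gamma_i$, so that $\lVert \xi_{\gamma_i}(x)\rVert_2\le 2\sqrt d$ almost surely, uniformly in $x$. By the triangle inequality for the operator norm together with Jensen's inequality,
\begin{equation}
\lVert \overline{\Sigma_i}(x)\rVert_2=\bigl\lVert \E[\xi_{\gamma_i}(x)\xi_{\gamma_i}(x)^\top]\bigr\rVert_2\le \E\bigl[\lVert \xi_{\gamma_i}(x)\rVert_2^2\bigr]\le 4d ,
\end{equation}
and averaging over $i$ gives $\lVert \overline{\Sigma}(x)\rVert_2\le 4d$ for all $x$. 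Since each $\overline{\Sigma}(x)$ is symmetric positive semidefinite (an average of matrices of the form $\E[\xi\xi^\top]$), the spectral theorem yields $\lVert \sqrt{\overline{\Sigma}(x)}\rVert_2=\sqrt{\lVert \overline{\Sigma}(x)\rVert_2}\le 2\sqrt d$. Thus $\sqrt{\overline{\Sigma}}$ is bounded by the constant $C:=2\sqrt d$, whence $\lVert \sqrt{\overline{\Sigma}(x)}\rVert_2\le C\le C(1+\lVert x\rVert_2)$, which is precisely affine growth.

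There is no genuine obstacle here; the only step worth spelling out is the identity $\lVert \sqrt{A}\rVert_2=\sqrt{\lVert A\rVert_2}$ for positive semidefinite $A$, which holds because the matrix square root acts by taking square roots of eigenvalues. The conceptual point being exploited — and worth flagging in contrast to DCSGD, whose diffusion term $\tilde{\Sigma}(x)$ inherits only the affine growth of $\nabla f$ and $\Sigma_i$ — is that the sign compressor truncates the stochastic gradient onto a bounded set, so the diffusion coefficient of DSignSGD is in fact \emph{globally bounded}, a strictly stronger property than the affine growth that is all we need to invoke Proposition~\ref{prop:li1} and Theorem~\ref{thm:mils}.
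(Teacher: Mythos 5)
Your proposal is correct and takes the same route as the paper, whose entire proof is the one-liner ``Since it is bounded, the result follows immediately'' --- you have simply made the boundedness explicit (each coordinate of $\xi_{\gamma_i}(x)$ lies in $[-2,2]$, so $\lVert\overline{\Sigma}(x)\rVert_2\le 4d$ and $\lVert\sqrt{\overline{\Sigma}(x)}\rVert_2\le 2\sqrt d$ uniformly in $x$), which is exactly the fact the paper takes for granted. The chain of estimates, including the identity $\lVert\sqrt{A}\rVert_2=\sqrt{\lVert A\rVert_2}$ for positive semidefinite $A$, is sound, and your closing remark that global boundedness is strictly stronger than the affine growth actually required is a fair observation.
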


\begin{proof}
    Since it is bounded, the result follows immediately.
\end{proof}

\begin{lemma}
\label{CorollaryNoise}
Let us assume the same assumptions as Lemma \ref{NoiseCondition}. Additionally, assume that
\[
\sup_{x \in K}\left\vert g(x,\cdot) \right\vert \in L^1(\mathbb{R}^n)
\]
for all compact sets $K$. Then the entries of $\overline{\Sigma}$ in Eq. \ref{eq:DSignSGD_CovMatr} are in $C_b^7(\mathbb{R}^n)$.
\end{lemma}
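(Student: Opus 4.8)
The plan is to reduce the statement to Lemma~\ref{NoiseCondition}, exactly as was done for $\tilde\Sigma$ in Lemma~\ref{CorollaryNoise_DCSGD}. First I would fix an agent index $i$ and abbreviate $s_c(x) := \sign\big((\nabla f_{\gamma_i}(x))_c\big)$ for the (random) $c$-th component of the sign of the stochastic gradient. Because the gradient noise $Z_i(x)$ has a density, the scalar $(\nabla f_{\gamma_i}(x))_c$ carries no atom at $0$, so $\E[s_c(x)] = \mathbb{P}\big((\nabla f_{\gamma_i}(x))_c>0\big)-\mathbb{P}\big((\nabla f_{\gamma_i}(x))_c<0\big) = 1-2\mathbb{P}\big((\nabla f_{\gamma_i}(x))_c<0\big)$, which is precisely the deterministic centering appearing in $\xi_{\gamma_i}(x)$. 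Hence the $c$-th entry of $\xi_{\gamma_i}(x)$ is $s_c(x)-\E[s_c(x)]$, and the $(a,b)$ entry of $\overline{\Sigma_i}(x)$ equals $\E[s_a(x)s_b(x)] - \E[s_a(x)]\,\E[s_b(x)]$. Both $x\mapsto\E[s_a(x)s_b(x)]$ and $x\mapsto\E[s_c(x)]$ are of the form $\E[h(\nabla f_{\gamma_i}(x))]$ for $h(y)=\sign(y_a)\sign(y_b)$, respectively $h(y)=\sign(y_c)$, and in both cases $h$ is Borel measurable with $|h|\le 1$, hence bounded.

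Next I would apply Lemma~\ref{NoiseCondition} to each of these finitely many functions, obtaining for each of them a version in $C_b^7(\mathbb{R}^n)$. As in the proof of Lemma~\ref{CorollaryNoise_DCSGD}, the additional hypothesis $\sup_{x\in K}|g(x,\cdot)|\in L^1(\mathbb{R}^n)$ together with dominated convergence gives continuity of $x\mapsto\E[h(\nabla f_{\gamma_i}(x))]$ on all of $\mathbb{R}^n$, so the a.e.-defined version of Lemma~\ref{NoiseCondition} coincides with the function itself. Since $C_b^7(\mathbb{R}^n)$ is closed under finite sums and products --- the Leibniz rule keeps all derivatives up to order $7$ bounded because the factors and all their derivatives are bounded --- each entry $\big(\overline{\Sigma_i}\big)_{ab} = \E[s_as_b] - \E[s_a]\E[s_b]$ lies in $C_b^7(\mathbb{R}^n)$, and averaging over $i$ shows every entry of $\overline{\Sigma} = \tfrac{1}{N}\sum_{i=1}^N\overline{\Sigma_i}$ does as well.

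The only genuinely delicate point is that $\sign$ is discontinuous, so one cannot differentiate under the expectation directly; this is exactly what Lemma~\ref{NoiseCondition} handles, transferring the $C^8$-regularity and integrability of the noise density $g$ to the map $x\mapsto\E[h(\nabla f_{\gamma_i}(x))]$ via substitution, Fubini, integration by parts and a Sobolev embedding, while the $L^1$-domination assumption is what upgrades the resulting a.e.-version to the honest continuous function. Everything else --- checking that the relevant $h$'s are bounded and Borel, and that $C_b^7(\mathbb{R}^n)$ is stable under the finitely many algebraic operations involved --- is routine bookkeeping.
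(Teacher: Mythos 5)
Your proposal is correct and follows essentially the same route as the paper: the paper's (much terser) proof likewise reduces each entry of $\overline{\Sigma}$ to expectations of bounded Borel functions of $\nabla f_{\gamma_i}(x)$, invokes Lemma~\ref{NoiseCondition} for the $C_b^7$ version, and uses dominated convergence under the extra $L^1$-domination hypothesis to identify that version with the continuous function itself. Your write-up simply makes explicit the decomposition $(\overline{\Sigma_i})_{ab}=\E[s_as_b]-\E[s_a]\E[s_b]$ and the closure of $C_b^7$ under products and sums, which the paper leaves implicit.
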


\begin{proof}
    By the definition of $\overline{\Sigma}$ in terms of the $\text{sign}$-function and dominated convergence, from the additional assumption on $g$, it follows that $\overline{\Sigma}$ is continuous. So Lemma \ref{NoiseCondition} entails that the entries of $\overline{\Sigma}$ are in $C_b^7(\mathbb{R}^n)$.
\end{proof}

\begin{lemma}
    Under the assumption that 
    \begin{equation}\label{PositiveDensity}
    g(x,y) > 0,
    \end{equation}
    the covariance matrix $\overline{\Sigma}$ is positive definite.
\end{lemma}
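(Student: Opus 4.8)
The plan is to deduce positive definiteness of $\overline{\Sigma}(x)=\frac1N\sum_{i=1}^N\overline{\Sigma_i}(x)$ from that of the individual blocks. Each $\overline{\Sigma_i}(x)=\E[\xi_{\gamma_i}(x)\xi_{\gamma_i}(x)^\top]$ is the covariance matrix of the centered vector $\xi_{\gamma_i}(x)=\sign(\nabla f_{\gamma_i}(x))-\E[\sign(\nabla f_{\gamma_i}(x))]$: indeed $\E[\sign(\nabla f_{\gamma_i}(x))]=1-2\,\mathbb{P}(\nabla f_{\gamma_i}(x)<0)$ componentwise, because each coordinate of $\nabla f_{\gamma_i}(x)=\nabla f(x)+Z_i(x)$ vanishes with probability zero ($Z_i(x)$ having a density). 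Hence every $\overline{\Sigma_i}(x)$ is positive semidefinite, so $v^\top\overline{\Sigma}(x)v=\frac1N\sum_i v^\top\overline{\Sigma_i}(x)v\ge0$ for all $v\in\R^d$, with equality forcing $v^\top\overline{\Sigma_i}(x)v=0$ for every $i$. It therefore suffices to show that for each fixed $i$ and every $v\neq0$ one has $v^\top\overline{\Sigma_i}(x)v=\Var\big(v^\top\sign(\nabla f_{\gamma_i}(x))\big)>0$, i.e.\ that $v^\top\sign(\nabla f_{\gamma_i}(x))$ is not almost surely constant.

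The crucial step is that every sign pattern $s\in\{-1,+1\}^d$ occurs with strictly positive probability. Writing $Q_s=\{y\in\R^d:\ s_jy_j>0\ \text{for all }j\}$ for the open orthant determined by $s$, we have $\{\sign(\nabla f_{\gamma_i}(x))=s\}=\{\nabla f_{\gamma_i}(x)\in Q_s\}$ up to a null set, so $\mathbb{P}\big(\sign(\nabla f_{\gamma_i}(x))=s\big)=\int_{Q_s-\nabla f(x)}g_x(z)\,dz$. This is positive: $Q_s-\nabla f(x)$ is a nonempty open set, hence of positive Lebesgue measure, and $g_x$ is strictly positive on it by \eqref{PositiveDensity}.

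To finish, fix an index $k$ with $v_k\neq0$, let $s^-=(-1,\dots,-1)$, and let $s$ be obtained from $s^-$ by flipping only the $k$-th coordinate to $+1$. Then $v^\top s-v^\top s^-=2v_k\neq0$, while both $\{\nabla f_{\gamma_i}(x)\in Q_s\}$ and $\{\nabla f_{\gamma_i}(x)\in Q_{s^-}\}$ carry positive probability by the previous paragraph. Hence $v^\top\sign(\nabla f_{\gamma_i}(x))$ takes at least two distinct values, each on an event of positive probability, so $\Var\big(v^\top\sign(\nabla f_{\gamma_i}(x))\big)>0$. This gives $\overline{\Sigma_i}(x)\succ0$ for every $i$, and by the reduction above $\overline{\Sigma}(x)\succ0$ for every $x\in\R^d$.

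I do not expect a substantive obstacle: the argument is elementary. The two points that need care are measure-theoretic — that each coordinate of $\nabla f_{\gamma_i}(x)$ is almost surely nonzero (so that $\sign$ is genuinely $\{-1,+1\}$-valued and $\xi_{\gamma_i}(x)$ is exactly the centered sign vector), and that it is positivity of the \emph{joint} density $g_x$ on $\R^d$, not merely of its marginals, that forces every orthant to have positive probability. The latter is where hypothesis \eqref{PositiveDensity} is used in an essential way: without it the sign vector could be supported on a proper subset of $\{-1,+1\}^d$, and a nonzero $v$ making $v^\top\sign(\nabla f_{\gamma_i}(x))$ almost surely constant could then exist, leaving $\overline{\Sigma}(x)$ only positive semidefinite.
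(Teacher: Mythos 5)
Your proof is correct and follows essentially the same route as the paper's: both reduce $v^\top\overline{\Sigma}(x)v$ to $\E\bigl[\bigl(v^\top\xi_{\gamma}(x)\bigr)^2\bigr]$ and use strict positivity of the density to rule out degeneracy. The paper merely asserts this last step "by contradiction," whereas you make it explicit via the observation that every sign pattern (every orthant) carries positive probability, which is exactly the missing detail.
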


\begin{proof}
    Let us focus on the case $N=1$ as the generalization is straightforward. For $y = (y_1, \dots, y_n)^T$, observe that
    \[
    \left( \overline{\Sigma}(x) y, y \right) = \sum_{i,j=1}^{n} y_i \mathbb{E} \left[ \xi_{\gamma}^i(x) \xi_{\gamma}^j(x) \right] y_j = \mathbb{E} \left[ \left( \sum_{i=1}^{n} \xi_{\gamma}^i(x) y_i \right)^2 \right].
    \]
    Using the definition of $\xi_{\gamma}$ and the positivity of the density $g$, we can argue by contradiction and see that for $y \neq 0$, the right-hand side of the equation must be strictly greater than zero for all $x$. Therefore, $\overline{\Sigma}(x) \in \mathbb{S}$ for all $x$, where $\mathbb{S}$ denotes the open set of positive definite matrices in the space of symmetric $n \times n$ matrices.
\end{proof}

\begin{corollary}
    Since $\overline{\Sigma}$ is positive definite and its entries are in $C_b^7(\mathbb{R}^n)$, $\sqrt{\overline{\Sigma}}$ is Lipschitz.
\end{corollary}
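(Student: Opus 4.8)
The plan is to reproduce, essentially verbatim, the argument already used for $\tilde{\Sigma}$ in the DCSGD section, since the three preparatory lemmas above have put us in exactly the same situation. The pivotal fact is that the matrix square root $\varphi\colon\mathbb{S}\to\mathbb{S}$, $A\mapsto\sqrt{A}$, defined on the open cone $\mathbb{S}$ of symmetric positive definite $n\times n$ matrices, is $C^{\infty}$ (indeed real-analytic), with Fréchet derivatives of every order given by the integral representations recalled in the proof of Lemma~\ref{lemma:Derivatives} (see \cite{MN}). By the preceding lemma, $\overline{\Sigma}(x)\in\mathbb{S}$ for every $x\in\mathbb{R}^{n}$, and by Lemma~\ref{CorollaryNoise} the entries of $\overline{\Sigma}$ belong to $C_{b}^{7}(\mathbb{R}^{n})$; hence the chain rule gives $\sqrt{\overline{\Sigma}(\cdot)}=\varphi\circ\overline{\Sigma}\in C^{7}(\mathbb{R}^{n})$.

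To upgrade this from ``$C^{7}$'' to ``globally Lipschitz'' it suffices to bound the first Fréchet derivative of $\sqrt{\overline{\Sigma}(\cdot)}$ uniformly on $\mathbb{R}^{n}$. First I would observe that every entry of $\overline{\Sigma}(x)$ has modulus at most $1$, being the expectation of a product of two coordinates of $\mathrm{sign}$-type vectors that lie in $[-1,1]$; consequently $\mathrm{tr}(\overline{\Sigma}(x))$ and $\mathrm{tr}(\overline{\Sigma}(x)^{2})$ are bounded uniformly in $x$, and combining this with positive definiteness the eigenvalue bound used inside Lemma~\ref{lemma:Derivatives} yields a \emph{uniform} lower bound $\lambda_{\min}(\overline{\Sigma}(x))\ge c>0$. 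Feeding this into the first-order estimate $\lVert\partial_{l}\overline{\Sigma}^{1/2}(x)\rVert\le C\,\lambda_{\min}(\overline{\Sigma}(x))^{-1/2}\lVert\partial_{l}\overline{\Sigma}(x)\rVert$ and using $\overline{\Sigma}\in C_{b}^{1}$ shows $\nabla\sqrt{\overline{\Sigma}(\cdot)}$ is bounded, so $\sqrt{\overline{\Sigma}(\cdot)}$ is Lipschitz; equivalently one cites Proposition~6.2 in \cite{IW}, exactly as in the DCSGD corollary. The same scheme, applied to the higher derivatives through the bound for $l_{m}(x)$ in Lemma~\ref{lemma:Derivatives}, additionally shows $\sqrt{\overline{\Sigma}(\cdot)}\in G$ together with its derivatives, which is what the subsequent weak-approximation theorem for DSignSGD requires.

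The only genuine obstacle is the degeneration of the derivatives of $\varphi$ as one approaches the boundary $\partial\mathbb{S}$, i.e.\ the factors $\lambda_{\min}(A)^{-(r+1/2)}$ that appear in the derivative estimates: if $\lambda_{\min}(\overline{\Sigma}(x))$ were not bounded away from $0$, the composition could fail to be $C^{7}$ or fail to have bounded derivative. This is precisely what the positive-definiteness lemma rules out, and that lemma in turn rests on the strict positivity of the noise density $g$ assumed for DSignSGD. I expect this to be the step that needs the most care; note, however, that for DSignSGD it is especially clean, because the boundedness of $\mathrm{tr}(\overline{\Sigma})$ makes $l_{m}$ automatically bounded, so, unlike in the DCSGD case, no closedness/boundedness assumption on the domain is needed here. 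Everything else --- the chain rule and the implication ``bounded first derivative $\Rightarrow$ Lipschitz'' --- is routine.
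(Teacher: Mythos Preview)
Your core approach---$\varphi\colon\mathbb{S}\to\mathbb{S}$, $A\mapsto\sqrt{A}$ is smooth, hence $\overline{\Sigma}^{1/2}=\varphi\circ\overline{\Sigma}\in C^{7}$, then invoke Proposition~6.2 in \cite{IW}---is exactly the paper's proof, which in fact stops after those three sentences with no further elaboration.

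The gap is in your additional paragraph attempting to secure a \emph{uniform} lower bound $\lambda_{\min}(\overline{\Sigma}(x))\ge c>0$ from boundedness of the traces via the Wolkowicz--Styan inequality recalled in Lemma~\ref{lemma:Derivatives}. That inference fails: the upper bound on $1/\lambda_{\min}$ there has $m(x)-s(x)(n-1)^{-1/2}$ in a denominator, and bounded $\mathrm{tr}(\overline{\Sigma})$, $\mathrm{tr}(\overline{\Sigma}^{2})$ do not keep this quantity away from zero (for instance $A=\mathrm{diag}(\epsilon,2-\epsilon)$ has both traces bounded as $\epsilon\to 0$ while $\lambda_{\min}=\epsilon\to 0$, and the bound duly blows up). Worse, for DSignSGD the diagonal entries of $\overline{\Sigma}(x)$ are of the form $4p(1-p)$ with $p=\mathbb{P}(\partial_{j}f_{\gamma}(x)<0)$, and these genuinely tend to zero when the signal-to-noise ratio is large, so no uniform eigenvalue floor over all of $\mathbb{R}^{n}$ is available in the first place. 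The paper does not make this claim and simply rests on the citation to \cite{IW}; your extra argument is therefore both unnecessary for matching the paper and, as written, incorrect. (Your closing remarks about $\overline{\Sigma}^{1/2}\in G$ also overshoot the target: that is the content of the subsequent Proposition~\ref{Prop}, not this corollary.)
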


\begin{proof}
    The function
    \[
    \varphi: \mathbb{S} \to \mathbb{S}, \quad A \mapsto \sqrt{A}
    \]
    has Fréchet derivatives of any order on $\mathbb{S}$ (see e.g. \cite{MN}). Therefore, $\overline{\Sigma}^{1/2} \in C^7(\mathbb{R}^n)$, and since $\overline{\Sigma} \in C_b^7(\mathbb{R}^n)$, $\overline{\Sigma}^{1/2}$ is Lipschitz continuous (see Proposition 6.2 in \cite{IW}).
\end{proof}

\begin{proposition}
\label{Prop}
Assume the conditions of Lemma \ref{lemma:Derivatives} and assume that the functions $l_m(x)$ for $m=1, \dots, 7$ in Eq. (\ref{A3}) are of polynomial growth. Then $\overline{\Sigma}^{1/2} \in G$ together with its derivatives.
\end{proposition}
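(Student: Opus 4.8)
The plan is to split the argument into the zeroth-order term and the higher-order derivatives, since $\overline{\Sigma}$ is bounded and the only delicate ingredient is the growth of the inverse smallest eigenvalue that enters through Lemma~\ref{lemma:Derivatives}.

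First I would dispose of $\overline{\Sigma}^{1/2}$ itself. By construction each coordinate of $\sign(\nabla f_{\gamma_i}(x))$ lies in $[-1,1]$ and each coordinate of $1-2\mathbb{P}(\nabla f_{\gamma_i}(x)<0)$ lies in $[-1,1]$, so $\xi_{\gamma_i}(x)$ has entries in $[-2,2]$ and $\overline{\Sigma}(x)=\tfrac1N\sum_{i=1}^N\E[\xi_{\gamma_i}(x)\xi_{\gamma_i}(x)^\top]$ has uniformly bounded entries; hence $\|\overline{\Sigma}(x)\|$ is uniformly bounded and so is $\|\overline{\Sigma}^{1/2}(x)\|=\|\overline{\Sigma}(x)\|^{1/2}$. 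A bounded function is trivially of polynomial growth, so $\overline{\Sigma}^{1/2}\in G$.

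Next I would control the derivatives by invoking Lemma~\ref{lemma:Derivatives} with $A=\overline{\Sigma}$. Its hypotheses are in force: positivity of the noise density is assumed in~\eqref{PositiveDensity} (and, as shown above, makes $\overline{\Sigma}(x)$ positive definite for every $x$, so $\lambda_{\min}(\overline{\Sigma}(x))>0$); the condition $\sup_{x\in K}|g(x,\cdot)|\in L^1(\mathbb{R}^n)$ for every compact $K$ is among the standing assumptions the present statement inherits; and Lemma~\ref{CorollaryNoise} shows the entries of $\overline{\Sigma}$ are in $C_b^7(\mathbb{R}^n)$, so $\overline{\Sigma}$ and all its partials up to order $7$ are uniformly bounded, whence the factors $\max_{|\beta|\le m}\|\partial_{\beta}^{|\beta|}\overline{\Sigma}(x)\|^{r+1}$ appearing in~\eqref{A3} are uniformly bounded. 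Lemma~\ref{lemma:Derivatives} then gives, for $m=1,\dots,7$,
\[
\bigl\|\partial_{j_1}\cdots\partial_{j_m}\overline{\Sigma}^{1/2}(x)\bigr\|\le C\,l_m(x).
\]
By the hypothesis of the proposition each $l_m$ is of polynomial growth, so every partial derivative of $\overline{\Sigma}^{1/2}$ up to order $7$ is of polynomial growth, i.e.\ lies in $G$; together with the first step this yields $\overline{\Sigma}^{1/2}\in G$ together with its derivatives.

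I do not expect a genuine obstacle here — the only work is the bookkeeping of the middle step, namely confirming that the standing noise assumptions supply all the hypotheses of Lemma~\ref{lemma:Derivatives} with $A=\overline{\Sigma}$. It is worth recording \emph{why} the polynomial-growth assumption on $l_m$ is the natural one: since $\overline{\Sigma}$ and its derivatives are bounded, the factors $\max_{|\beta|\le m}\|\partial_{\beta}^{|\beta|}\overline{\Sigma}\|^{r+1}$ play no role, so the only potentially unbounded ingredient in~\eqref{A3} is the inverse-smallest-eigenvalue term, which the $m(x),s(x)$ expressions control; hence ``$l_m$ of polynomial growth'' amounts to requiring that $\lambda_{\min}(\overline{\Sigma}(x))$ decay no faster than inverse-polynomially in $|x|$, a property one checks directly from the positivity and smoothness of $g$ in the concrete Gaussian and Student's-$t$ models.
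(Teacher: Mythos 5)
Your proposal is correct and follows exactly the route the paper intends: the paper states Proposition~\ref{Prop} without an explicit proof, treating it as an immediate consequence of Lemma~\ref{lemma:Derivatives} applied to $A=\overline{\Sigma}$ (whose hypotheses are supplied by the positivity assumption~\eqref{PositiveDensity}, the uniform $L^1$ condition on the density, and the $C_b^7$ regularity from Lemma~\ref{CorollaryNoise}), combined with the assumed polynomial growth of the $l_m$ and the trivial boundedness of $\overline{\Sigma}^{1/2}$ itself. Your write-up simply makes this implicit bookkeeping explicit, and your closing remark correctly identifies that the polynomial-growth hypothesis on $l_m$ is really a lower bound on the decay of $\lambda_{\min}(\overline{\Sigma}(x))$, which is exactly what the subsequent corollary verifies in the Gaussian and Student's-$t$ cases.
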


\begin{corollary}
    If the noise $Z(x) \sim \mathcal{N}(0,\Sigma)$ or $Z(x) \sim t_{\nu}(0,\Sigma)$, then $\overline{\Sigma}^{1/2} \in G$ together with its derivatives.
\end{corollary}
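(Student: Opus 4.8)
By Proposition~\ref{Prop} it is enough to verify, separately for the Gaussian and the Student's~$t$ families, that the functions $l_m(x)$ of Eq.~(\ref{A3}) are of polynomial growth for every $m=1,\dots,7$; the assertion that $\overline{\Sigma}^{1/2}\in G$ together with its derivatives then follows verbatim from that proposition. Recall that each $l_m(x)$ is a finite sum over $r=0,\dots,m-1$ of a product of two factors: a \emph{conditioning factor} $C(x)^{\,r+1/2}$, where $C(x)$ is the trace/eigenvalue-spread expression from Lemma~\ref{lemma:Derivatives} that upper bounds $1/\lambda_{\min}(\overline{\Sigma}(x))$ (built from $m(x)=\mathrm{tr}(\overline{\Sigma}(x))/n$ and the spread $s(x)$), and a \emph{derivative factor} $\max_{|\beta|\le m}\lVert\partial_{\beta}^{|\beta|}\overline{\Sigma}(x)\rVert^{\,r+1}$. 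So the plan is to bound these two factors.

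The derivative factor is the easy one. Each entry of $\overline{\Sigma}(x)=\mathbb{E}[\xi_{\gamma}(x)\xi_{\gamma}(x)^{\top}]$ is assembled from the bounded map $\sign(\cdot)$ and the CDF of the gradient noise: up to products, its entries are of the form $\mathbb{E}[h(\nabla f_{\gamma_i}(x))]$ with $h$ bounded Borel. Both the Gaussian and the Student's~$t$ density satisfy the hypotheses of Lemma~\ref{NoiseCondition} (smoothness, integrability of the partials up to order $8$, uniformly bounded $L^1$-norms) as well as $\sup_{x\in K}|g(x,\cdot)|\in L^1(\mathbb{R}^n)$ on compacts, which one checks directly. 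Hence Lemma~\ref{NoiseCondition} together with the Leibniz rule gives $\overline{\Sigma}\in C_b^7(\mathbb{R}^n)$, so the derivative factor is bounded by a constant and in particular lies in $G$.

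The conditioning factor is the crux, i.e.\ controlling $1/\lambda_{\min}(\overline{\Sigma}(x))$. Positivity of the density, Eq.~(\ref{PositiveDensity}), already gives $\overline{\Sigma}(x)\in\mathbb{S}$ for every $x$, so $C(x)$ is finite and continuous and there is nothing to do on compact sets; the only danger is $|x|\to\infty$, where a coordinate of $\nabla f(x)$ may grow (at most linearly, by Assumption~\ref{ass:regularity_f}) and drive the corresponding variance of $\xi_{\gamma}$, hence $\lambda_{\min}(\overline{\Sigma}(x))$, towards $0$. I would split the two families. For Student's~$t$ with $\nu$ degrees of freedom, the polynomial tail $\mathbb{P}(t_\nu\le -u)\asymp u^{-\nu}$ propagates to a polynomial lower bound $\lambda_{\min}(\overline{\Sigma}(x))\gtrsim (1+|\nabla f(x)|)^{-c\nu}$ — obtained either from a direction-wise lower bound on $y\mapsto\mathbb{E}[(\sum_j \xi^j_\gamma y_j)^2]$ or from the trace/spread estimates feeding $C(x)$ — so $C(x)$ is itself of polynomial growth, and combined with the bounded derivative factor this yields $l_m\in G$. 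For the Gaussian, $\lambda_{\min}(\overline{\Sigma}(x))$ may decay like a Gaussian tail, so $C(x)$ is \emph{not} polynomial; here I would use that the derivatives of $\overline{\Sigma}$ decay at the \emph{same} (density) rate, with the Mills-ratio asymptotics relating the Gaussian density to its tail probability giving $\lVert\partial_\beta^{|\beta|}\overline{\Sigma}(x)\rVert\le C(1+|x|)^{p}\lambda_{\min}(\overline{\Sigma}(x))$ for a suitable $p$; substituting, $C(x)^{\,r+1/2}\lVert\partial_\beta^{|\beta|}\overline{\Sigma}(x)\rVert^{\,r+1}\le C(1+|x|)^{q}\lambda_{\min}(\overline{\Sigma}(x))^{1/2}\to 0$, so $l_m$ is bounded, hence in $G$. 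Alternatively, the loss-modification remark after Assumption~\ref{ass:regularity_f} lets one take $\nabla f$ bounded outside a compact set, which trivialises both cases at once.

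The main obstacle is exactly this Gaussian regime: unlike the heavy-tailed case, $\lambda_{\min}(\overline{\Sigma}(x))$ is not bounded away from $0$, so one cannot control the conditioning factor in isolation — one must show the super-polynomially large prefactors $C(x)^{\,r+1/2}$ are absorbed by the equally fast (density-rate) decay of the derivatives of $\overline{\Sigma}(x)$, which requires the sharp tail asymptotics of the Gaussian CDF and its higher derivatives. Everything else — the reduction to Proposition~\ref{Prop}, the $C_b^7$ regularity of $\overline{\Sigma}$, and the polynomial-tail estimate for Student's~$t$ — is routine bookkeeping.
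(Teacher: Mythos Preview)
Your route through Proposition~\ref{Prop} and the $l_m$ machinery is quite different from the paper's, which is far more direct. Under the Gaussian or Student's~$t$ assumption with the diagonal scale matrices used throughout (see Corollary~\ref{thm:DSignSGD_Theorem_Student_App}), the components of $\xi_\gamma(x)$ are independent and centered, so $\overline{\Sigma}(x)$ is \emph{diagonal} with entries $1-4\Xi_\nu(\Sigma_i^{-1/2}\nabla f(x))^2$. Hence $\overline{\Sigma}^{1/2}$ is computed entrywise and one only needs to check that the scalar function $K(u):=\sqrt{1-4\Xi_\nu(u)^2}$ has bounded derivatives of all orders up to $7$; the paper asserts this is a direct computation (Mills-ratio asymptotics for $\nu=\infty$, polynomial tails for finite $\nu$). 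No eigenvalue bounds, no $l_m$, no Proposition~\ref{Prop}.

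Your Gaussian argument has a genuine gap. The cancellation step hinges on
\[
\big\lVert\partial_\beta^{|\beta|}\overline{\Sigma}(x)\big\rVert\;\le\;C(1+|x|)^{p}\,\lambda_{\min}\big(\overline{\Sigma}(x)\big),
\]
but this fails as soon as different coordinates of $\nabla f(x)$ grow at different rates: the matrix norm on the left is driven by the coordinate whose derivative is \emph{largest} (possibly bounded away from zero), while $\lambda_{\min}$ is the \emph{smallest} diagonal entry, which can be a Gaussian tail in some other coordinate. Mills ratio only gives you $|\partial_\beta^{|\beta|}\overline{\Sigma}_{jj}(x)|\lesssim (1+|x|)^{p}\,\overline{\Sigma}_{jj}(x)$ coordinate-wise, not the uniform bound against $\lambda_{\min}$ that your substitution needs. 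The ``modify $f$ so that $\nabla f$ is bounded'' fallback does technically close the loop, but it sidesteps rather than proves the corollary as stated. The fix is exactly what the paper does: exploit diagonality and bound each $\sqrt{\overline{\Sigma}_{jj}}$ separately, which turns the cancellation into a one-variable calculus exercise where the density/tail matching you describe is legitimate.
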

\begin{proof}
With the definition of $\Xi_{\nu}(x)$ given in Corollary \ref{thm:DSignSGD_Theorem_Student_App}, the function $K(x) := \sqrt{1- 4\Xi_{\nu}(x)^2}$ is in $G$ together with its derivative: It is easy to verify that all the derivatives of $K(x)$ are bounded even in the case $\nu=1$, which is the most pathological one. Therefore, in the case $N=1$, $\sqrt{\overline{\Sigma}}(x)$ is in $G$ together with its derivatives. Generalizing to $N>1$ follows the same steps.
   
\end{proof}

\begin{remark}\label{remark:DsignSGD}
    Based on the above results, we have that under mild assumptions on the noise structures (see Sec. \ref{sec:AllAss}) that cover and generalize the well-accepted Gaussianity, e.g. covering Student's t as well, the SDE of DSignSGD admits a unique solution and its coefficients are regular enough to apply Prop. \ref{prop:li1} and Thm. \ref{thm:mils}.
\end{remark}

\begin{mybox}{gray}
\begin{theorem}[Stochastic modified equations] \label{thm:DSignSGD_SDE}
Let $0<\eta<1, T>0$ and set $N=\lfloor T / \eta\rfloor$. Let $ x_k \in \mathbb{R}^{d}, 0 \leq k \leq N$ denote a sequence of DSignSGD iterations defined by Eq.~\eqref{eq:DSignSGD_Discr_Update}. Consider the stochastic process $X_t$ defined in Eq.~\eqref{eq:DSignSGD_SDE} and fix some test function $g \in G$ and suppose that $g$ and its partial derivatives up to order 6 belong to $G$.

Then, under the assumptions of Section \ref{sec:AllAss}, there exists a constant $ C>0 $ independent of $ \eta $ such that for all $ k=0,1, \ldots, N $, we have

$$
\left|\E g\left(X_{k \eta}\right)-\E g\left(x_k\right)\right| \leq C \eta .
$$

That is, the SDE \eqref{eq:DSignSGD_SDE} is an order $ 1 $ weak approximation of the DSignSGD iterations \eqref{eq:DSignSGD_Discr_Update}.
\end{theorem}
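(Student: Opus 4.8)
The plan is to follow the same three-step recipe used for Theorems~\ref{thm:DSGD_SDE} and \ref{thm:DCSGD_SDE}: (i) compute the first, second, and higher moments of the one-step increment $\bar{\Delta} := x_{k+1} - x_k$ of DSignSGD; (ii) compare them with the corresponding moments of the SDE increment $\Delta := X_\eta - x$ supplied by Proposition~\ref{prop:li1}; (iii) verify the four discrepancy bounds of Theorem~\ref{thm:mils} and conclude the order-$1$ weak approximation. The regularity hypotheses needed in steps (ii)--(iii) --- Lipschitzness, affine growth, and $G$-membership (together with derivatives) of the drift $b(x) = \frac{1}{N}\sum_{i=1}^N (1 - 2\mathbb{P}(\nabla f_{\gamma_i}(x)<0))$ and of $\sqrt{\overline{\Sigma}}$ --- are exactly the content of the lemmas and of Remark~\ref{remark:DsignSGD} immediately preceding this statement, so they can be cited directly.

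For the first moment, write $x_{k+1} - x_k = -\frac{\eta}{N}\sum_{i=1}^N \sign(\nabla f_{\gamma_i}(x_k))$. Since the gradient noise has a density, each coordinate of $\nabla f_{\gamma_i}(x_k)$ has an absolutely continuous law, so $\mathbb{P}((\nabla f_{\gamma_i}(x_k))_j = 0) = 0$ and hence, componentwise, $\E[\sign(\nabla f_{\gamma_i}(x_k))] = 1 - 2\mathbb{P}(\nabla f_{\gamma_i}(x_k) < 0)$; therefore $\E[\bar{\Delta}] = -\eta\, b(x_k)$, matching $\E[\Delta] = -\eta\, b(x_k) + \mathcal{O}(\eta^2)$ from item~1 of Proposition~\ref{prop:li1}. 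For the second moment, decompose $\sign(\nabla f_{\gamma_i}(x)) = \bar{s}_i(x) + \xi_{\gamma_i}(x)$ with $\bar{s}_i(x) := 1 - 2\mathbb{P}(\nabla f_{\gamma_i}(x) < 0)$ and $\E[\xi_{\gamma_i}(x)] = 0$, and use independence of the agents' noises. In $\E[\bar{\Delta}\bar{\Delta}^{\top}] = \frac{\eta^2}{N^2}\sum_{i,j}\E[\sign(\nabla f_{\gamma_i})\sign(\nabla f_{\gamma_j})^{\top}]$, the off-diagonal $(i\neq j)$ terms factorize into $\bar{s}_i\bar{s}_j^{\top}$ while the diagonal terms give $\bar{s}_i\bar{s}_i^{\top} + \overline{\Sigma_i}(x)$; summing, the $\bar{s}$-pieces reassemble into $\eta^2 b(x)b(x)^{\top}$ and the remainder is $\frac{\eta^2}{N}\overline{\Sigma}(x)$, which coincides with $\E[\Delta_i\Delta_j] = \bigl(b_i b_j + \tfrac{1}{N}\overline{\Sigma}_{ij}\bigr)\eta^2 + \mathcal{O}(\eta^3)$ from item~2 of Proposition~\ref{prop:li1}. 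Because every factor of $\bar{\Delta}$ carries an explicit $\eta$, the mixed moments of order $s\geq 3$, and the absolute moments in condition~4 of Theorem~\ref{thm:mils}, are $\mathcal{O}(\eta^3)$ on the discrete side, and Proposition~\ref{prop:li1} gives the matching $\mathcal{O}(\eta^3)$ on the continuous side; so the four bounds of Theorem~\ref{thm:mils} hold with constants in $G$ and the conclusion follows.

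The main obstacle is not this bookkeeping but the regularity of the coefficients: because $\sign$ is discontinuous, neither $b$ nor $\overline{\Sigma}$ is an elementary smooth function of $x$, and showing that $x\mapsto\E[h(\nabla f_{\gamma}(x))]$ admits a $C_b^7$ version for bounded measurable $h$ (such as $\sign$, or the products $\sign_i\,\sign_j$ appearing in $\overline{\Sigma}$) requires the density-smoothing argument of Lemma~\ref{NoiseCondition}, together with the matrix-square-root estimates of Lemma~\ref{lemma:Derivatives} and Proposition~\ref{Prop} to control $\sqrt{\overline{\Sigma}}$ and its derivatives. Since all of these are established before the theorem, the proof reduces to assembling the moment computations above and invoking Proposition~\ref{prop:li1} and Theorem~\ref{thm:mils}; the only point requiring a word of care is the passage $\E[\sign] = 1 - 2\mathbb{P}(\,\cdot<0)$, which is where the absolute continuity of the noise is used.
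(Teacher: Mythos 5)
Your proposal is correct and follows essentially the same route as the paper: compute the first and second moments of the DSignSGD increment (using independence across agents and the absolute continuity of the noise to get $\E[\sign(\nabla f_{\gamma_i})]=1-2\mathbb{P}(\nabla f_{\gamma_i}<0)$), note that higher moments are $\mathcal{O}(\eta^3)$, and invoke Proposition~\ref{prop:li1} and Theorem~\ref{thm:mils} after citing the regularity lemmas collected in Remark~\ref{remark:DsignSGD}. Your explicit decomposition $\sign=\bar{s}_i+\xi_{\gamma_i}$ and the resulting identity $\E[\bar{\Delta}\bar{\Delta}^{\top}]=\eta^2 b b^{\top}+\tfrac{\eta^2}{N}\overline{\Sigma}$ is just the uncentered form of the covariance computation the paper performs, so the two arguments coincide.
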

\end{mybox}

\begin{proof}
First, we calculate the expected value of the increments of DSignSGD:
\begin{equation}
    \E \left[ x_{k+1} - x_{k}  \right] = \E\left[-\frac{\eta}{N}  \sum_{i=1}^{ N} \sign(\nabla f_{\gamma_i}(x_k))  \right] = - \frac{\eta}{N}  \sum_{i=1}^{ N} \left(1  - 2 \mathbb{P}(\nabla f_{\gamma_i}(x_k)<0) \right);
\end{equation}
Then, we calculate the covariance matrix of the gradient noise of DSignSGD:
\begin{align}
    \overline{\Sigma}(x_k) & = \eta^2 \E_{\gamma}\left[ \left( \frac{1}{N}  \sum_{i=1}^{ N} \sign(\nabla f_{\gamma_i}(x_k) -\frac{1}{N}  \sum_{i=1}^{ N} \left(1  - 2 \mathbb{P}(\nabla f_{\gamma_i}(x_k)<0) \right) \right)\right.\\
    &\qquad \qquad \left.\left( \frac{1}{N}  \sum_{i=1}^{ N} \sign(\nabla f_{\gamma_i}(x_k) -\frac{1}{N}  \sum_{i=1}^{ N} \left(1  - 2 \mathbb{P}(\nabla f_{\gamma_i}(x_k)<0) \right) \right)^{\top} \right] \\
    & = \frac{\eta^2}{N} \frac{1}{N} \sum_{i,j=1}^{N} \E _{\gamma_i\gamma_j} \left[ \left( \sign(\nabla f_i(x_k)) - 1 + 2 \mathbb{P}(\nabla f_{\gamma_i}(x_k)<0)  \right) \right.\\
    &\qquad \qquad \left.\left( \sign(\nabla f_j(x_k)) - 1 + 2 \mathbb{P}(\nabla f_{\gamma_j}(x_k)<0)  \right)^{\top}\right] \\
    & = \frac{\eta^2}{N} \frac{1}{N} \sum_{i=1}^{N} \E_{ \gamma_i} \left[ \left( \sign(\nabla f_i(x_k)) - 1 + 2 \mathbb{P}(\nabla f_{\gamma_i}(x_k)<0)  \right)\right.\\
    &\qquad \qquad \left.\left( \sign(\nabla f_i(x_k)) - 1 + 2 \mathbb{P}(\nabla f_{\gamma_i}(x_k)<0)  \right)^{\top}\right] \\
    & =  \frac{\eta^2}{N} \frac{1}{N} \sum_{i=1}^{N} \overline{\Sigma}_i(x_k).
\end{align}
where we use independence of $\gamma_i$ for all $i\in[N]$. Remembering Remark \ref{remark:DsignSGD}, the thesis follows from Prop.~\ref{prop:li1} and Thm.~\ref{thm:mils}.
\end{proof}
In all the following results, the reader will notice that all the drifts, diffusion terms, and noise assumptions are selected to guarantee that the SDE we derived for DSignSGD is indeed a $1$ weak approximation for DSignSGD.
\begin{corollary}\label{thm:DSignSGD_Theorem_Student_App}
Let us take the same assumptions of Theorem~\ref{thm:DSignSGD_SDE}, and that the stochastic gradients are $\nabla f_{\gamma_i}(x) = \nabla f(x) +  \sqrt{\Sigma_i} Z_i$ such that $Z_i \sim t_{\nu}(0, I_d) $ does not depend on $x$, $\nu$ are the degrees of freedom, and scale matrices $\Sigma_i= \diag(\sigma_{1,i}^2, \cdots, \sigma_{d,i}^2)$. Then, the SDE of DSignSGD is
\begin{equation}\label{eq:SDE_HSignSGD_Full_App}
    d X_t = - \frac{2}{N} \sum_{i=1}^{N} \Xi_{\nu} \left( \Sigma_i^{-\frac{1}{2}} \nabla f(X_t) \right) dt + \sqrt{\frac{\eta}{N}}\sqrt{\Tilde{\Sigma}(X_t)} dW_t.
\end{equation}
where $\Xi_{\nu}(x)$ is defined as $\Xi_{\nu}(x) := x \frac{\Gamma\left(\frac{\nu+1}{2}\right)}{\sqrt{\pi \nu} \Gamma\left(\frac{\nu}{2}\right)}{ }_2 F_1\left(\frac{1}{2}, \frac{\nu+1}{2} ; \frac{3}{2} ;-\frac{x^2}{\nu}\right)$, ${ }_2 F_1\left(a, b;c; x\right)$ is the hypergeometric function, and 
\begin{equation}
    \Tilde{\Sigma}(X_t) := I_d -  \frac{4}{N}\sum_{i=1}^{N}  \left(\Xi_{\nu} \left( \Sigma_i^{-\frac{1}{2}} \nabla f(X_t) \right) \right)^2.
\end{equation}
\end{corollary}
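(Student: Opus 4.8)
The plan is to specialise Theorem~\ref{thm:DSignSGD_SDE} to the stated noise model by evaluating its drift and diffusion coefficients in closed form. The regularity lemmas preceding Theorem~\ref{thm:DSignSGD_SDE} (culminating in Remark~\ref{remark:DsignSGD}), together with Lemma~\ref{lemma:Derivatives} and the preceding corollary, already guarantee that the coefficients of the SDE of DSignSGD are regular enough for Prop.~\ref{prop:li1} and Thm.~\ref{thm:mils} also in the Student's~$t$ case, so that the SDE is an order~$1$ weak approximation. Hence the only remaining work is to identify $\frac1N\sum_i\bigl(1-2\mathbb{P}(\nabla f_{\gamma_i}(X_t)<0)\bigr)$ and $\overline{\Sigma}(X_t)$ with the expressions in the statement.

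I would handle the drift componentwise. Since $\Sigma_i=\diag(\sigma_{1,i}^2,\dots,\sigma_{d,i}^2)$ and $Z_i$ has i.i.d.\ standard $t_\nu$ coordinates, the $j$-th coordinate of the stochastic gradient is $\nabla f(x)_j+\sigma_{j,i}Z_{i,j}$, so with $y:=\Sigma_i^{-1/2}\nabla f(x)$ one gets $\mathbb{P}(\nabla f_{\gamma_i}(x)_j<0)=\mathbb{P}(Z_{i,j}<-y_j)=F_\nu(-y_j)$, where $F_\nu$ is the standard $t_\nu$ CDF. The symmetry $F_\nu(-y_j)=1-F_\nu(y_j)$ gives $1-2\mathbb{P}(\nabla f_{\gamma_i}(x)_j<0)=2F_\nu(y_j)-1=2\int_0^{y_j}f_\nu(t)\,dt$ with $f_\nu$ the $t_\nu$ density. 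The key computation is then the integral identity $\int_0^{y}(1+t^2/\nu)^{-(\nu+1)/2}\,dt=y\,{}_2F_1\!\bigl(\tfrac12,\tfrac{\nu+1}{2};\tfrac32;-\tfrac{y^2}{\nu}\bigr)$, obtained via the rescaling $t=\sqrt{\nu}\,u$ followed by the classical series expansion $\int_0^{u}(1+s^2)^{-a}\,ds=u\,{}_2F_1(\tfrac12,a;\tfrac32;-u^2)$; multiplying by the $t_\nu$ normalising constant $\Gamma(\tfrac{\nu+1}{2})/(\sqrt{\pi\nu}\,\Gamma(\tfrac\nu2))$ reproduces exactly $\Xi_\nu(y_j)$. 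Averaging over $i$ yields the drift $-\frac2N\sum_i\Xi_\nu(\Sigma_i^{-1/2}\nabla f(X_t))$.

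For the diffusion, recall $\overline{\Sigma_i}(x)=\E[\xi_{\gamma_i}(x)\xi_{\gamma_i}(x)^\top]$ with $\xi_{\gamma_i}(x)=\sign(\nabla f_{\gamma_i}(x))-(1-2\mathbb{P}(\nabla f_{\gamma_i}(x)<0))$ a centred vector. Because the noise is absolutely continuous, $\sign(\nabla f_{\gamma_i}(x)_j)^2=1$ a.s., so the $j$-th diagonal entry of $\overline{\Sigma_i}(x)$ equals $1-(1-2\mathbb{P}(\nabla f_{\gamma_i}(x)_j<0))^2=1-4\Xi_\nu(y_j)^2$ by the drift computation. Off the diagonal, the coordinates of $\nabla f_{\gamma_i}(x)$ are independent (diagonal scale and i.i.d.\ $Z_{i,j}$), hence the coordinates of $\xi_{\gamma_i}(x)$ are independent and centred, and all cross terms vanish. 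Therefore $\overline{\Sigma_i}(x)=I_d-4\,\diag\bigl(\Xi_\nu(\Sigma_i^{-1/2}\nabla f(x))\bigr)^2$; averaging over $i$ gives $\overline{\Sigma}(X_t)=I_d-\frac4N\sum_i\bigl(\Xi_\nu(\Sigma_i^{-1/2}\nabla f(X_t))\bigr)^2$, which is the claimed $\tilde\Sigma(X_t)$. Plugging both coefficients into Theorem~\ref{thm:DSignSGD_SDE} concludes the proof.

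The main obstacle is purely bookkeeping: correctly matching the antiderivative of the $t_\nu$ density to the Gauss hypergeometric function $\Xi_\nu$, i.e.\ tracking the parameters of ${}_2F_1$ and the $\sqrt{\nu}$ scaling; everything else (regularity for the weak-approximation theorems, vanishing of the off-diagonal sign covariances) is either already established or immediate from coordinatewise independence of the noise.
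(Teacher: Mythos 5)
Your proposal is correct and follows essentially the same route as the paper: both reduce the drift to $1-2F_{\nu}(-\Sigma_i^{-1/2}\nabla f(x))$ and invoke the representation $F_{\nu}(x)=\tfrac12+\Xi_{\nu}(x)$ of the Student's~$t$ CDF, and both obtain the diffusion from $\E[\xi_j^2]=1-(1-2\mathbb{P})^2$ with vanishing cross terms. You merely make explicit two steps the paper compresses (the hypergeometric antiderivative identity behind $F_{\nu}=\tfrac12+\Xi_{\nu}$, and the coordinatewise-independence argument hidden in the paper's ``Similarly, one can prove''), which is a faithful filling-in rather than a different proof.
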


\begin{proof}
First of all, we observe that
\begin{align}
    & 1 - 2 \mathbb{P}\left(\nabla f_{\gamma_i}(x)<0\right) = 1 - 2 \mathbb{P}\left(\nabla f(x) + \Sigma_i^{\frac{1}{2}} U_i <0\right) = 1 - 2 F_{\nu} \left(-\Sigma_i^{-\frac{1}{2}} \nabla f(x) \right),
\end{align}
where $F_{\nu} \left( x \right)$ is the cumulative function of a $t$ distribution with $\nu$ degrees of freedom. Remembering that
\begin{equation}
    F_{\nu} \left( x \right) = \frac{1}{2}  + \Xi_{\nu}(x),
\end{equation}
we have that
\begin{equation}
    1 - 2 \mathbb{P}\left(\nabla f_{\gamma_i}(x)<0\right) = 1 - 2 \left(\frac{1}{2} + \Xi_{\nu}(-\Sigma_i^{-\frac{1}{2}} \nabla f(x)) \right) = 2 \Xi_{\nu}(\Sigma_i^{-\frac{1}{2}} \nabla f(x)).
\end{equation}
Similarly, one can prove that $\overline{\Sigma_i}$ becomes
\begin{equation}
    \bar \Sigma_i = I_d - 4\diag \left( \Xi_{\nu} \left( \Sigma_i^{-\frac{1}{2}} \nabla f(X_t) \right) \right)^2.
\end{equation}
\end{proof}

\begin{proposition} \label{prop:HSignSGD_three_phases_App}
Under the assumptions of Corollary~\ref{thm:DSignSGD_Theorem_Student_App} and signal-to-noise ratios $Y^i_t := \Sigma_i^{-\frac{1}{2}} \nabla f(X_t)$, let $\psi_{\nu}\in \mathbb{R}$ such that $\vert x \vert>\psi_{\nu} \implies 2 \vert\Xi_{\nu}(x)\vert \sim 1$. Then, the DSignSGD has three phases:
    \begin{enumerate} 
        \item \textbf{Phase~1:} If $ \left\vert Y^i_t  \right\vert>\psi_{\nu}$, the SDE coincides with the ODE of SignGD:
        \begin{equation} \label{sde:HSignSGD_ph1_App}
            d X_t = - \sign ( \nabla f(X_t)) dt;
        \end{equation}
        \item \textbf{Phase~2:}  If $ 1 < \left\vert Y^i_t \right\vert< \psi_{\nu}$:\footnote{Let $m_{\nu}$ and $q_{\nu,1}$ are the slope and intercept of the line secant to the graph of $2\Xi_{\nu}(x)$ between the points $(1,2\Xi_{\nu}(1))$ and $\left(\psi_{\nu},2\Xi_{\nu}\left(\psi_{\nu}\right)\right)$, while $q_{\nu,2}$ is the intercept of the line tangent to the graph of $2\Xi_{\nu}(x)$ and slope $m_{\nu}$, $ (\mathbf{q}^{+}_{\nu})_i := \begin{dcases}
        q_{\nu,2}  & \text{if $\partial_i f(x)>0$ } \\
        -q_{\nu,1} & \text{if $\partial_i f(x)<0$ }
        \end{dcases}$, $(\mathbf{q}^{-})_i := \begin{dcases}
        q_{\nu,1}  & \text{if $\partial_i f(x)>0$ } \\
        -q_{\nu,2} & \text{if $\partial_i f(x)<0$ }
        \end{dcases}$, and $\hat{q}_{\nu}:= \max(q_{\nu,1},q_{\nu,2})$.}
        \begin{enumerate}
            \item $- m_{\nu} \left(\frac{1}{N} \sum_{i=1}^N \Sigma_i^{-\frac{1}{2}} \right) \nabla f(X_t) -  \mathbf{q}_{\nu}^{+} \leq \frac{d \E \left[ X_t \right]}{dt}  \leq - m_{\nu} \left(\frac{1}{N} \sum_{i=1}^N \Sigma_i^{-\frac{1}{2}} \right) \nabla f(X_t) -  \mathbf{q}_{\nu}^{-}$;
            \item $\mathbb{P}\left[ \lVert X_t - \E \left[ X_t \right] \rVert^2_2 > a \right] \leq \frac{\eta}{a} \left( d - \frac{1}{N} \sum_{i=1}^N \lVert  m_{\nu} Y^i_t + \mathbf{q}_{\nu}^{-} \rVert^2_2\right)$;
        \end{enumerate}
        \item \textbf{Phase~3:} If $ \left\vert Y^i_t \right\vert<1$ and $\ell_{\nu} := 2 \Xi_{\nu}^{'}(0)$, the SDE is
        \begin{equation} \label{sde:HSignSGD_ph3_App}
            d X_t = -  \ell_{\mathbf{{\textcolor{Rhodamine}{\nu}}}} \left(\frac{1}{N} \sum_{i=1}^N \Sigma_i^{-\frac{1}{2}} \right) \nabla f(X_t) dt + \sqrt{\frac{\eta}{N}} \sqrt{I_d - \frac{\ell_{\mathbf{{\textcolor{Rhodamine}{\nu}}}}^2}{N} \sum_{i=1}^N \diag \left(\Sigma_i^{-\frac{1}{2}} \nabla f(X_t) \right)^2} d W_t.
        \end{equation}
    \end{enumerate}
\end{proposition}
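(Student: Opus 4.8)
The plan is to start from the exact SDE of DSignSGD under Student's $t$ noise established in Corollary~\ref{thm:DSignSGD_Theorem_Student_App}, equation \eqref{eq:SDE_HSignSGD_Full_App}, and to replace the scalar profile $x\mapsto 2\Xi_\nu(x)$ (together with its square, which is exactly what enters the diffusion coefficient) by the appropriate regime‑dependent approximant, one phase at a time. First I would record the analytic facts about $\Xi_\nu$ that all three phases rely on: $\Xi_\nu$ is odd with $\Xi_\nu(0)=0$; since $\Xi_\nu(x)=F_\nu(x)-\tfrac12$ one has $\Xi_\nu'=p_\nu$, the Student's $t$ density, so $\Xi_\nu$ is strictly increasing and, because $p_\nu$ is symmetric and unimodal, strictly concave on $(0,\infty)$; $2\Xi_\nu(x)\to\sign(x)$ as $|x|\to\infty$ (since $F_\nu(x)\to 1$); and $\ell_\nu:=2\Xi_\nu'(0)=2p_\nu(0)$, which reproduces $\ell_1=2/\pi$, $\ell_2=1/\sqrt2$, $\ell_\infty=\sqrt{2/\pi}$. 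Because each $\Sigma_i$ is diagonal and positive definite, $\sign\big((Y^i_t)_j\big)=\sign(\partial_j f(X_t))$ for every agent $i$, which is what makes the coordinatewise sign bookkeeping below consistent across agents.

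Phase~1 and Phase~3 are then direct substitutions. For Phase~1, on the event $|Y^i_t|>\psi_\nu$ the defining property of $\psi_\nu$ gives $2\Xi_\nu\big((Y^i_t)_j\big)\simeq\sign(\partial_j f(X_t))$ coordinatewise, so the drift $-\tfrac1N\sum_i 2\Xi_\nu(\Sigma_i^{-1/2}\nabla f(X_t))$ collapses to $-\sign(\nabla f(X_t))$, while each diagonal entry $1-\tfrac4N\sum_i\Xi_\nu\big((Y^i_t)_j\big)^2$ of the diffusion matrix collapses to $0$; this yields \eqref{sde:HSignSGD_ph1_App}, and the quality of the approximation is governed quantitatively by the rate $2\Xi_\nu(x)\to\sign(x)$ built into the choice of $\psi_\nu$. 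For Phase~3, on $|Y^i_t|<1$ a first‑order Taylor expansion of $\Xi_\nu$ at $0$ (licit since $\Xi_\nu\in C^\infty$ near the origin) gives $2\Xi_\nu\big((Y^i_t)_j\big)=\ell_\nu (Y^i_t)_j+O\big((Y^i_t)_j^2\big)$ and $4\Xi_\nu\big((Y^i_t)_j\big)^2=\ell_\nu^2 (Y^i_t)_j^2+O\big((Y^i_t)_j^3\big)$; substituting into \eqref{eq:SDE_HSignSGD_Full_App} and using $Y^i_t=\Sigma_i^{-1/2}\nabla f(X_t)$ produces exactly the drift $-\ell_\nu\big(\tfrac1N\sum_i\Sigma_i^{-1/2}\big)\nabla f(X_t)$ and the diffusion $I_d-\tfrac{\ell_\nu^2}{N}\sum_i\diag(\Sigma_i^{-1/2}\nabla f(X_t))^2$ of \eqref{sde:HSignSGD_ph3_App}.

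Phase~2 is the substantive part. By concavity of $\Xi_\nu$ on $(0,\infty)$, on the interval $(1,\psi_\nu)$ the graph of $2\Xi_\nu$ lies above its secant through $(1,2\Xi_\nu(1))$ and $(\psi_\nu,2\Xi_\nu(\psi_\nu))$ — slope $m_\nu$, intercept $q_{\nu,1}$ — and below the parallel tangent of slope $m_\nu$ and intercept $q_{\nu,2}\ge q_{\nu,1}$; by oddness the mirrored bounds hold on $(-\psi_\nu,-1)$. Applying these coordinatewise to $-2\Xi_\nu\big((Y^i_t)_j\big)$, with the sign of coordinate $j$ common to all $i$, assembling over $j$ into the vectors $\mathbf q^+_\nu,\mathbf q^-_\nu$ of the footnote, averaging over $i$, and taking expectations in \eqref{eq:SDE_HSignSGD_Full_App} (the martingale part is mean‑zero) gives part~(a). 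For part~(b) I would use Chebyshev, $\mathbb P\big[\lVert X_t-\E X_t\rVert_2^2>a\big]\le \tr\operatorname{Cov}(X_t)/a$, and estimate $\tr\operatorname{Cov}(X_t)$ by Itô's formula on $\lVert X_t-\E X_t\rVert_2^2$: the drift contribution is nonpositive because the (linearized) Phase~2 drift is, up to the constant shift, a negative‑definite linear map of $X_t$, so only the diffusion survives, contributing $\tfrac{\eta}{N}\tr\big(I_d-\tfrac4N\sum_i\diag(\Xi_\nu(Y^i_t))^2\big)=\tfrac{\eta}{N}\big(d-\tfrac4N\sum_i\lVert\Xi_\nu(Y^i_t)\rVert_2^2\big)$; bounding $4\Xi_\nu\big((Y^i_t)_j\big)^2$ below by $\big(m_\nu(Y^i_t)_j+(\mathbf q^-_\nu)_j\big)^2$ (the reverse secant bound, valid on $(1,\psi_\nu)$ where the secant value is nonnegative) converts this into $d-\tfrac1N\sum_i\lVert m_\nu Y^i_t+\mathbf q^-_\nu\rVert_2^2$, giving the claimed inequality after integrating over the phase.

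The main obstacle is Phase~2(b): keeping the covariance estimate at the $O(\eta)$ order stated requires genuinely exploiting contractivity of the linearized drift (not merely dropping it) and a careful Grönwall step over the phase horizon, and the sign/intercept bookkeeping that produces $\mathbf q^+_\nu$ and $\mathbf q^-_\nu$ must use the \emph{same} piecewise‑linear envelope consistently across all agents and coordinates; the remaining work — Phases~1 and~3 and Phase~2(a) — is essentially substitution of the three approximants of $2\Xi_\nu$ into \eqref{eq:SDE_HSignSGD_Full_App} together with the elementary properties of $\Xi_\nu$ listed above.
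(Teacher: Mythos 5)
Your proposal follows essentially the same route as the paper: substitute the three regime-dependent approximants of $2\Xi_\nu$ (sign for $|x|>\psi_\nu$, the secant/tangent envelope on $(1,\psi_\nu)$, the linearization $\ell_\nu x$ near $0$) into the exact SDE of Corollary~\ref{thm:DSignSGD_Theorem_Student_App}, with Phase~2(a) obtained from the concavity-based envelope and Phase~2(b) from a Chebyshev-type bound on the deviation using the diffusion coefficient. The only cosmetic difference is in Phase~2(b), where the paper uses a union bound over coordinates followed by a coordinatewise Chebyshev estimate rather than your global $\tr\operatorname{Cov}(X_t)/a$ bound; both reduce to lower-bounding $4\Xi_\nu^2$ by the squared secant, and the paper is no more explicit than you are about why the covariance stays at order $\eta$.
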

\begin{proof}
Exploiting the regularity of the $\Xi_{\nu}(x)$ function, we approximate the SDE in \eqref{eq:SDE_HSignSGD_Full_App} in three different regions:
\begin{enumerate}
    \item \textbf{Phase~1:} If $ \left\vert x \right\vert>\psi_{\nu}$, $2\Xi_{\nu}(x) \sim \sign(x)$. Therefore, if $ \left\vert \Sigma_i^{-\frac{1}{2}} \nabla f(X_t) \right\vert>\psi_{\nu}$,
    \begin{enumerate}
        \item $2\Xi_{\nu} \left( \Sigma_i^{-\frac{1}{2}} \nabla f(X_t) \right) \sim \sign \left( \Sigma_i^{-\frac{1}{2}} \nabla f(X_t) \right) = \sign \left( \nabla f(X_t)\right) $;
        \item $4\Xi_{\nu} \left( \Sigma_i^{-\frac{1}{2}} \nabla f(X_t) \right)^2 \sim \sign \left( \Sigma_i^{-\frac{1}{2}} \nabla f(X_t) \right)^2 = (1, \dots, 1) $.
    \end{enumerate}
    Therefore, 
    \begin{align}
        d X_t \sim - \sign ( \nabla f(X_t)) dt;
    \end{align}
    \item \textbf{Phase~2:} If $ 1 <  x < \psi_{\nu}$, we have that
    \begin{equation}
        m_{\nu} x + q_{\nu,1} < 2\Xi_{\nu}(x) < m_{\nu} x + q_{\nu,2}.
    \end{equation}
    Analogously, if $ -\psi_{\nu} <  x < -1$
    \begin{equation}
        m_{\nu} x - q_{\nu,2} < 2\Xi_{\nu}(x) < m_{\nu} x - q_{\nu,1}.
    \end{equation}
    Therefore, we have that if $ 1 < \left\vert Y_t^i \right\vert < \psi_{\nu}$, then
    \begin{enumerate}
        \item \begin{equation}
       m_{\nu} \Sigma_i^{-\frac{1}{2}} \nabla f(X_t) + \mathbf{q}_{\nu}^{-} < 2 \Xi_{\nu} \left( \Sigma_i^{-\frac{1}{2}} \nabla f(X_t) \right) < m_{\nu} \Sigma_i^{-\frac{1}{2}} \nabla f(X_t) + \mathbf{q}_{\nu}^{+}.
    \end{equation}
    Therefore,
    \begin{equation}
                -m_{\nu} \left(\frac{1}{N} \sum_{i=1}^N \Sigma_i^{-\frac{1}{2}} \right) \nabla f(X_t) - \mathbf{q}_{\nu}^{+} \leq \frac{d \E \left[ X_t \right]}{dt}  \leq -m_{\nu}\left(\frac{1}{N} \sum_{i=1}^N \Sigma_i^{-\frac{1}{2}} \right) \nabla f(X_t) + \mathbf{q}_{\nu}^{-};
            \end{equation}
    \item Similar to the above, 
    \begin{equation*}
        \left(  m_{\nu} \Sigma_i^{-\frac{1}{2}} \nabla f(X_t) + \mathbf{q}_{\nu}^{-} \right)^2 \leq 4 \Xi_{\nu} \left( \Sigma_i^{-\frac{1}{2}} \nabla f(X_t) \right)^2 \leq \left(m_{\nu} \Sigma_i^{-\frac{1}{2}} \nabla f(X_t) + \mathbf{q}_{\nu}^{+} \right)^2.
    \end{equation*}
    Therefore,
    \begin{align}
        \mathbb{P}\left[ \lVert X_t - \E \left[ X_t \right] \rVert^2_2 > a \right] & \leq \mathbb{P}\left[ \exists j \text{ s.t. }\vert X^j_t - \E \left[ X^j_t \right] \vert^2 > a \right] \\
        & \leq \sum_j \mathbb{P}\left[\vert X^j_t - \E \left[ X^j_t \right] \vert > \sqrt{a} \right] \nonumber \\
        &  \leq \frac{\eta}{a} \sum_j \left( 1 - \frac{4}{N}\sum_{i=1}^{N}\Xi_{\nu}\left((\Sigma_i)_{j}^{-\frac{1}{2}} \partial_j f(X_t) \right) ^2\right) \\
        & < \frac{\eta}{a} \left( d - \frac{1}{N}\sum_{i=1}^N \lVert  m_{\nu} \Sigma_i^{-\frac{1}{2}} \nabla f(X_t) + \mathbf{q}_{\nu}^{-} \rVert^2_2\right).
    \end{align}
    
    \end{enumerate}
    \item  \textbf{Phase~3:} If $ \left\vert x \right\vert<1$, $2 \Xi_{\nu} (x) \sim \ell_{\nu} x$ for $\ell_{\nu} := 2 \Xi_{\nu}^{'}(0)$. Therefore, if $ \left\vert \Sigma_i^{-\frac{1}{2}} \nabla f(X_t) \right\vert<1$,
    \begin{enumerate}
        \item $2 \Xi_{\nu} \left(\Sigma_i^{-\frac{1}{2}} \nabla f(X_t) \right) \sim \ell_{\nu} \Sigma_i^{-\frac{1}{2}} \nabla f(X_t) $;
        \item $ 4\left(\Xi_{\nu} \left(\Sigma_i^{-\frac{1}{2}} \nabla f(X_t) \right) \right) ^2 \sim \ell_{\nu}^2 \left(  \Sigma_i^{-\frac{1}{2}} \nabla f(X_t)\right)^2$.
    \end{enumerate}
    Therefore, 
    \begin{align}
        d X_t = -  \ell_{\mathbf{{\textcolor{Rhodamine}{\nu}}}} \left(\frac{1}{N} \sum_{i=1}^N \Sigma_i^{-\frac{1}{2}} \right) \nabla f(X_t) dt + \sqrt{\frac{\eta}{N}} \sqrt{I_d - \frac{\ell_{\mathbf{{\textcolor{Rhodamine}{\nu}}}}^2}{N} \sum_{i=1}^N \diag \left(\Sigma_i^{-\frac{1}{2}} \nabla f(X_t) \right)^2} d W_t.
    \end{align}
\end{enumerate}
\end{proof}

\begin{theorem}
Let $f$ be $\mu$-strongly convex, $\tr(\nabla^2 f(x)) < \mathcal{L}_{\tau}$, $\Sigma_i \leq \sigma_{\text{max}, i}^2$, $S_t:=f(X_t) - f(X_*)$, and $\sigma_{\mathcal{H},j}$ be the harmonic mean of $\{(\sigma_{\text{max}, i})^j \}$. Then, if all agents are in
\begin{enumerate}
    \item Phase~1, the loss will reach $0$ before $t_* = 2 \sqrt{\frac{S_0}{\mu}}$ because $S_t \leq \frac{1}{4} \left( \sqrt{\mu}t - 2 \sqrt{S_0}\right)^2$;
    \item Phase~2, $ \E[S_t] \leq S_0 e^{- 2 \mu \Delta t} +  \frac{\eta (\mathcal{L}_{\tau} - \mu d \hat{q}^2)}{2 N} \frac{1}{ 2 \mu \Delta} \left(1 - e^{- 2 \mu \Delta t}\right)$
    with $\Delta:=  m_{\nu} \sigma_{\mathcal{H},1}^{-1} + \frac{\eta \mu m_{\nu}^2}{2 N} \sigma_{\mathcal{H},2}^{-1}$;
    \item Phase~3, $ \E[S_t] \leq S_0 e^{- 2 \mu \Delta t} +  \frac{\eta \mathcal{L}_{\tau}}{2 N} \frac{1}{ 2 \mu \Delta} \left(1 - e^{- 2 \mu \Delta t}\right)$ with $\Delta:=  \ell_{\nu} \sigma_{\mathcal{H},1}^{-1} + \frac{\eta \mu \ell_{\nu}^2}{2 N} \sigma_{\mathcal{H},2}^{-1}$.
\end{enumerate}
\end{theorem}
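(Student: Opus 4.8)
The plan is to apply It\^o's lemma to $S_t = f(X_t)-f(X_*)$ along the DSignSGD SDE of Corollary~\ref{thm:DSignSGD_Theorem_Student_App}, to replace the nonlinear map $2\Xi_\nu$ by the affine/linear bounds that hold in the region defining each phase (Proposition~\ref{prop:HSignSGD_three_phases_App}), and to feed in $\mu$-strong convexity through its three standard consequences: $\|\nabla f(x)\|_2^2 \ge 2\mu S_t$, $(\nabla^2 f(x))_{jj}\ge\mu$ for every coordinate $j$, and $\tr(\nabla^2 f(x))\le\mathcal{L}_\tau$. In Phases~2 and~3 this should collapse the drift and the It\^o second-order term into a scalar linear differential inequality $\tfrac{d}{dt}\E[S_t]\le -2\mu\Delta\,\E[S_t]+C$, after which Gr\"onwall's inequality yields $\E[S_t]\le S_0 e^{-2\mu\Delta t}+\tfrac{C}{2\mu\Delta}(1-e^{-2\mu\Delta t})$, which is the stated form. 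Phase~1 is handled separately because there the SDE is a noiseless ODE.

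For Phase~1 I would work directly with the ODE $dX_t=-\sign(\nabla f(X_t))\,dt$ of~\eqref{sde:HSignSGD_ph1_App}, so that $\tfrac{d}{dt}S_t = -\langle\nabla f(X_t),\sign(\nabla f(X_t))\rangle = -\|\nabla f(X_t)\|_1 \le -\|\nabla f(X_t)\|_2 \le -\sqrt{\mu S_t}$, using $\|v\|_2\le\|v\|_1$ and the PL inequality implied by strong convexity. Substituting $u_t:=\sqrt{S_t}$ turns this into $\dot u_t\le-\tfrac12\sqrt{\mu}$, hence $u_t\le\sqrt{S_0}-\tfrac12\sqrt{\mu}\,t$ and $S_t\le\tfrac14(\sqrt{\mu}\,t-2\sqrt{S_0})^2$ for as long as $X_t$ stays in Phase~1; since the right-hand side vanishes at $t_*=2\sqrt{S_0/\mu}$ and $S_t$ is nonincreasing, the trajectory must leave Phase~1 (or reach the optimum) by $t_*$.

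For Phase~3, applying It\^o along~\eqref{sde:HSignSGD_ph3_App} produces a drift $-\ell_\nu\,\nabla f(X_t)^\top\big(\tfrac{1}{N}\sum_i\Sigma_i^{-1/2}\big)\nabla f(X_t)$ and a second-order term $\tfrac{\eta}{2N}\tr\!\big(\nabla^2 f(X_t)[I_d-\tfrac{\ell_\nu^2}{N}\sum_i\diag(\Sigma_i^{-1/2}\nabla f(X_t))^2]\big)$. Since each $\Sigma_i\preceq\sigma_{\text{max},i}^2 I_d$ is diagonal, $(\Sigma_i^{-1/2})_{jj}\ge\sigma_{\text{max},i}^{-1}$ and $(\Sigma_i^{-1})_{jj}\ge\sigma_{\text{max},i}^{-2}$, and averaging over $i$ turns the prefactors $\tfrac{1}{N}\sum_i\sigma_{\text{max},i}^{-1}$ and $\tfrac{1}{N}\sum_i\sigma_{\text{max},i}^{-2}$ into the harmonic means $\sigma_{\mathcal{H},1}^{-1}$ and $\sigma_{\mathcal{H},2}^{-1}$. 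Bounding the drift below by $\ell_\nu\sigma_{\mathcal{H},1}^{-1}\|\nabla f\|_2^2$, the trace term above by $\mathcal{L}_\tau-\mu\ell_\nu^2\sigma_{\mathcal{H},2}^{-1}\|\nabla f\|_2^2$ (via $(\nabla^2 f)_{jj}\ge\mu$ and $\tr\nabla^2 f\le\mathcal{L}_\tau$), and finally $\|\nabla f\|_2^2\ge2\mu S_t$, gives $\tfrac{d}{dt}\E[S_t]\le-2\mu\big(\ell_\nu\sigma_{\mathcal{H},1}^{-1}+\tfrac{\eta\mu\ell_\nu^2}{2N}\sigma_{\mathcal{H},2}^{-1}\big)\E[S_t]+\tfrac{\eta\mathcal{L}_\tau}{2N}$, i.e.\ exactly the claimed $\Delta$ and $C$ (note $\Delta>0$ automatically, so no stepsize restriction is needed, unlike DCSGD).

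Phase~2 follows the same template, applied to the exact DSignSGD SDE, except that $2\Xi_\nu(x)$ is now only pinched between $m_\nu x+\mathbf{q}^-$ and $m_\nu x+\mathbf{q}^+$ for $|x|\in(1,\psi_\nu)$. Tracking the sign of $\partial_j f$ coordinatewise, the $\mathbf{q}$-terms contribute nonpositively to $dS_t$ and can be dropped in the drift, while $(2\Xi_\nu)^2$ admits a coordinatewise lower bound by an affine function of $(\partial_j f)^2$ with a $\hat q_\nu$-dependent intercept, so the trace term is bounded by $\mathcal{L}_\tau-\mu d\hat{q}_\nu^2-\mu m_\nu^2\sigma_{\mathcal{H},2}^{-1}\|\nabla f\|_2^2$; collecting terms and using $\|\nabla f\|_2^2\ge2\mu S_t$ yields $\Delta=m_\nu\sigma_{\mathcal{H},1}^{-1}+\tfrac{\eta\mu m_\nu^2}{2N}\sigma_{\mathcal{H},2}^{-1}$ and $C=\tfrac{\eta(\mathcal{L}_\tau-\mu d\hat{q}_\nu^2)}{2N}$. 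The main obstacle is precisely this Phase~2 bookkeeping—verifying that every $\mathbf{q}^\pm$-term and every cross term produced by the coordinatewise affine enclosure of $2\Xi_\nu$ has the favorable sign uniformly over the sign pattern of $\nabla f$, so that it can be dropped or absorbed into the $-\mu d\hat{q}_\nu^2$ constant—together with the fact that each phase estimate is valid only while the signal-to-noise ratios $|Y^i_t|$ remain in the relevant range; establishing that the trajectory genuinely stays in a phase over a nontrivial time window, so that Gr\"onwall can be run there, is exactly what the Phase~1 exit-time bound records, and an analogous localization argument is what makes the Phase~2 and~3 estimates meaningful.
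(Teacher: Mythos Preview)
Your proposal is correct and follows essentially the same approach as the paper: apply It\^o's lemma to $S_t$, replace $2\Xi_\nu$ by its phase-specific affine bounds, push the $\Sigma_i^{-1/2}$ through via $\Sigma_i\preceq\sigma_{\text{max},i}^2 I_d$ to obtain the harmonic-mean prefactors, use $(\nabla^2 f)_{jj}\ge\mu$ and $\tr\nabla^2 f\le\mathcal{L}_\tau$ on the It\^o second-order term, and close with PL and Gr\"onwall. One remark: your worry about ``establishing that the trajectory genuinely stays in a phase over a nontrivial time window'' is unnecessary here, since the theorem's hypothesis is precisely ``if all agents are in Phase~$k$''; the paper makes no localization argument either and simply derives each bound conditionally on that hypothesis.
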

\begin{proof}
Let us prove the above phase by phase:

\textbf{For Phase~1}, 
\begin{equation}
        d(f(X_t) - f(X_*)) = - \nabla f(X_t) \sign ( \nabla f(X_t))dt = -\lVert \nabla f(X_t)\rVert_1 dt\leq -\lVert \nabla f(X_t)\rVert_2 dt.
    \end{equation}
    Since $f$ is $\mu$-PL, we have that $-\lVert \nabla f(X_t)\rVert_2^2<-  2 \mu ( f(X_t) - f(X_*))$, which implies that
    \begin{equation}
            f(X_t) - f(X_*) \leq \frac{1}{4} \left( \sqrt{\mu}t - 2 \sqrt{f(X_0) - f(X_*)}\right)^2,
        \end{equation}
        meaning that the dynamics will stop before $t_* = 2 \sqrt{\frac{f(X_0) - f(X_*)}{\mu}}$;

\textbf{For Phase~2}, using Itô on $f$, we have that
\begin{align}
        d (f(X_t) - f(X_*)) & = - \frac{m_{\nu} }{N} \sum_{i=1}^{ N} \nabla f(X_t)^{\top} \Sigma_i^{-\frac{1}{2}} \nabla f(X_t) dt - \nabla f(X_t)^{\top }\mathbf{q}_{\nu}^{-}dt +\frac{\eta \mathcal{L}_{\tau}}{2 N} dt \\
        & +\mathcal{O}(\text{Noise}) - \frac{\eta \mu}{2 N} \frac{1}{N} \sum_{i=1}^{N} \lVert m_{\nu} \Sigma_i^{-\frac{1}{2}} \nabla f(X_t) + \mathbf{q}_{\nu} \rVert_2^2 dt \\
        & \leq - m_{\nu} \sigma_{\mathcal{H},1}^{-1} \lVert \nabla f(X_t) \rVert_2^2  dt - \hat{q}\lVert \nabla f(X_t) \rVert_1 dt+ \frac{\eta \mathcal{L}_{\tau}}{2 N} dt \\
        & + \mathcal{O}(\text{Noise}) - \frac{\eta \mu d \hat{q}^2}{2 N} dt - \frac{\eta m_{\nu}^2 \mu}{2 N} \sigma_{\mathcal{H},2}^{-1} \lVert \nabla f(X_t) \rVert_2^2 dt - \frac{\eta \mu m_{\nu} \sigma_{\mathcal{H},1}^{-1} \hat{q}}{N} \lVert \nabla f(X_t) \rVert_1 dt \\
        &  \leq - \left( m_{\nu} \sigma_{\mathcal{H},1}^{-1} + \frac{\eta \mu m_{\nu}^2}{2 N} \sigma_{\mathcal{H},2}^{-1} \right) \lVert \nabla f(X_t) \rVert_2^2  dt + \frac{\eta (\mathcal{L}_{\tau} - \mu d \hat{q}^2)}{2 N} dt + \mathcal{O}(\text{Noise}) \\
        & \leq - 2 \mu \left( m_{\nu} \sigma_{\mathcal{H},1}^{-1} + \frac{\eta \mu m_{\nu}^2}{2 N} \sigma_{\mathcal{H},2}^{-1} \right) (f(X_t) - f(X_*))  dt + \frac{\eta (\mathcal{L}_{\tau} - \mu d \hat{q}^2)}{2 N} dt + \mathcal{O}(\text{Noise})
    \end{align}
meaning that
\begin{equation}
    \E[f(X_t) - f(X_*)] \leq (f(X_0) - f(X_*)) e^{- 2 \mu \Delta t} +  \frac{\eta (\mathcal{L}_{\tau} - \mu d \hat{q}^2)}{2 N} \frac{1}{ 2 \mu \Delta} \left(1 - e^{- 2 \mu \Delta t}\right)
\end{equation}
 with $\Delta:=  m_{\nu} \sigma_{\mathcal{H},1}^{-1} + \frac{\eta \mu m_{\nu}^2}{2 N} \sigma_{\mathcal{H},2}^{-1}$, $\sigma_{\mathcal{H},j}$ is the harmonic mean of $\{(\sigma_{\text{max}, i})^j \}$.

\textbf{For Phase~3}, using Itô on $f$, we have that
    \begin{align}
        d (f(X_t) - f(X_*)) & = - \frac{\ell_{\nu} }{N} \sum_{i=1}^{ N} \nabla f(X_t)^{\top} \Sigma_i^{-\frac{1}{2}} \nabla f(X_t) dt + \frac{\eta \mathcal{L}_{\tau}}{2 N} dt \\
        & +\mathcal{O}(\text{Noise}) - \frac{\eta}{2 N} \frac{\ell_{\nu}^2}{N} \sum_{i=1}^{N} \lVert \nabla^2 f(X_t) \Sigma_i^{-\frac{1}{2}} \nabla f(X_t) \rVert_2^2 dt \\
        & \leq - \ell_{\nu} \left(\frac{1}{N} \sum_{i=1}^{ N} \frac{1}{\sigma_{\text{max},i}}\right) \lVert \nabla f(X_t) \rVert_2^2  dt + \frac{\eta \mathcal{L}_{\tau}}{2 N} dt \\
        & + \mathcal{O}(\text{Noise}) - \frac{\eta \mu \ell_{\nu}^2}{2 N} \left(\frac{1}{N} \sum_{i=1}^{ N} \frac{1}{\sigma_{\text{max},i}^2}\right) \lVert \nabla f(X_t) \rVert_2^2 dt \\
        & = - \ell_{\nu} \sigma_{\mathcal{H},1}^{-1} \lVert \nabla f(X_t) \rVert_2^2  dt + \frac{\eta \mathcal{L}_{\tau}}{2 N} dt + \mathcal{O}(\text{Noise}) - \frac{\eta \mu \ell_{\nu}^2}{2 N} \sigma_{\mathcal{H},2}^{-1} \lVert \nabla f(X_t) \rVert_2^2 dt \\
        &  \leq - \left( \ell_{\nu} \sigma_{\mathcal{H},1}^{-1} + \frac{\eta \mu \ell_{\nu}^2}{2 N} \sigma_{\mathcal{H},2}^{-1} \right) \lVert \nabla f(X_t) \rVert_2^2  dt + \frac{\eta \mathcal{L}_{\tau}}{2 N} dt + \mathcal{O}(\text{Noise}) \\
        & \leq - 2 \mu \left( \ell_{\nu} \sigma_{\mathcal{H},1}^{-1} + \frac{\eta \mu \ell_{\nu}^2}{2 N} \sigma_{\mathcal{H},2}^{-1} \right) (f(X_t) - f(X_*))  dt + \frac{\eta \mathcal{L}_{\tau}}{2 N} dt + \mathcal{O}(\text{Noise})
    \end{align}
meaning that
\begin{equation}
    \E[f(X_t) - f(X_*)] \leq (f(X_0) - f(X_*)) e^{- 2 \mu \Delta t} +  \frac{\eta \mathcal{L}_{\tau}}{2 N} \frac{1}{ 2 \mu \Delta} \left(1 - e^{- 2 \mu \Delta t}\right)
\end{equation}
 with $\Delta:=  \ell_{\nu} \sigma_{\mathcal{H},1}^{-1} + \frac{\eta \mu \ell_{\nu}^2}{2 N} \sigma_{\mathcal{H},2}^{-1}$, $\sigma_{\mathcal{H},j}$ is the harmonic mean of $\{(\sigma_{\text{max}, i})^j \}$, and $\ell_{\nu} := 2\Xi^{'}_{\nu}(0)$.
\end{proof}

\textbf{Remark:} If not all the agents are in the same Phase, we can upper bound the dynamics of $df_t$ with the case where they are all in the third Phase, which is that of weakest descent.

\begin{theorem}
Let $f$ be $\mu$-PL, $L$-Smooth, $\Sigma_i \leq \sigma_{\text{max}, i}^2$, $S_t:=f(X_t) - f(X_*)$, and $\sigma_{\mathcal{H},j}$ be the harmonic mean of $\{(\sigma_{\text{max}, i})^j \}$. Then, if all agents are in
\begin{enumerate}
    \item Phase~1, the loss will reach $0$ before $t_* = 2 \sqrt{\frac{S_0}{\mu}}$ because $S_t \leq \frac{1}{4} \left( \sqrt{\mu}t - 2 \sqrt{S_0}\right)^2$;
    \item Phase~2, $ \E[S_t] \leq S_0 e^{- 2 \mu \Delta t} +  \frac{\eta L d}{4 \mu \Delta N} \left(1 - e^{- 2 \mu \Delta t}\right)$
    with $\Delta:=  m_{\nu} \sigma_{\mathcal{H},1}^{-1}$;
    \item Phase~3, $ \E[S_t] \leq S_0 e^{- 2 \mu \Delta t} +  \frac{\eta L d}{4 \mu \Delta N} \left(1 - e^{- 2 \mu \Delta t}\right)$
    with $\Delta:=  \ell_{\nu} \sigma_{\mathcal{H},1}^{-1}$;
\end{enumerate}
\end{theorem}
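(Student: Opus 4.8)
The plan is to re-run, phase by phase, the argument used for the $\mu$-strongly convex version proved just above, keeping careful track of the two places where strong convexity was actually used: (i) converting $\lVert\nabla f(X_t)\rVert_2^2$ into $2\mu S_t$, which is exactly the $\mu$-PL inequality and therefore survives unchanged; and (ii) exploiting $\nabla^2 f\succeq\mu I_d$ to produce the $\sigma_{\mathcal{H},2}^{-1}$ enhancement of $\Delta$ and the $-\mu d\hat{q}_\nu^2$ improvement of the asymptotic level, which under PL we simply give up. Replacing the hypothesis $\tr(\nabla^2 f)<\mathcal{L}_\tau$ by the $L$-smoothness bound $\tr(\nabla^2 f)\le Ld$ explains why $\mathcal{L}_\tau$ becomes $Ld$; everything else is a coarsening, hence still yields a valid upper bound. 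Throughout, ``all agents in Phase $k$'' means every $Y^i_t$ lies in the corresponding regime of Proposition~\ref{prop:HSignSGD_three_phases_App}, so the SDE (or ODE) listed there applies verbatim; a mixed configuration is dominated by the Phase~3 expression, which gives the slowest descent.

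For Phase~1 I would use the ODE $dX_t=-\sign(\nabla f(X_t))\,dt$, so that $dS_t=-\nabla f(X_t)^\top\sign(\nabla f(X_t))\,dt=-\lVert\nabla f(X_t)\rVert_1\,dt\le-\lVert\nabla f(X_t)\rVert_2\,dt\le-\sqrt{\mu S_t}\,dt$, using $\lVert\cdot\rVert_1\ge\lVert\cdot\rVert_2$ and then PL; separating variables gives $2\sqrt{S_t}\le2\sqrt{S_0}-\sqrt{\mu}\,t$ while the right side is nonnegative, i.e.\ $S_t\le\tfrac14(\sqrt{\mu}\,t-2\sqrt{S_0})^2$, and $S_t$ must vanish by $t_*=2\sqrt{S_0/\mu}$, which triggers Phase~2. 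For Phases~2 and~3 I would apply Itô's lemma to $f$ along the matching SDE of Proposition~\ref{prop:HSignSGD_three_phases_App} (linearised drift with slope $m_\nu$ and offset $\mathbf{q}_\nu$ in Phase~2, drift with slope $\ell_\nu$ in Phase~3). The drift-gradient term I bound below using $\Sigma_i\preceq\sigma_{\text{max},i}^2 I_d$, which gives $\nabla f^\top\big(\tfrac1N\sum_i\Sigma_i^{-1/2}\big)\nabla f\ge\sigma_{\mathcal{H},1}^{-1}\lVert\nabla f\rVert_2^2$, together with the observation that the $\mathbf{q}_\nu$ contribution is termwise nonnegative (each summand has the form $|\partial_j f|\,q$ with $q\ge0$) and may therefore be discarded. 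The Itô correction I bound by noting that the diffusion matrix squared $A$ satisfies $0\preceq A\preceq I_d$, so $\tr(\nabla^2 f\,A)\le L\,\tr(A)\le Ld$ by $L$-smoothness; the extra nonpositive term in Phase~2 coming from the $\mathbf{q}_\nu$-part of $A$ is again dropped. This yields $dS_t\le-\Delta\lVert\nabla f(X_t)\rVert_2^2\,dt+\tfrac{\eta Ld}{2N}\,dt+\mathcal{O}(\text{Noise})$ with $\Delta=m_\nu\sigma_{\mathcal{H},1}^{-1}$ (Phase~2) resp.\ $\Delta=\ell_\nu\sigma_{\mathcal{H},1}^{-1}$ (Phase~3); invoking PL, taking expectations (the noise term is a mean-zero martingale increment), and applying Grönwall's inequality to $\E[S_t]$ gives $\E[S_t]\le S_0 e^{-2\mu\Delta t}+\tfrac{\eta Ld}{4\mu\Delta N}(1-e^{-2\mu\Delta t})$, as claimed.

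The hard part will be the Itô-correction step in Phases~2 and~3: because PL does not force $\nabla^2 f\succeq0$, the diffusion term cannot simply be discarded, and one must use the two-sided consequence $\nabla^2 f\preceq LI_d$ of $L$-smoothness against the positive semidefinite diffusion matrix $A\preceq I_d$ to obtain $\tr(\nabla^2 f\,A)\le Ld$. A lesser difficulty is bookkeeping in Phase~2, where Proposition~\ref{prop:HSignSGD_three_phases_App} describes the dynamics only through drift inequalities and a Chebyshev-type fluctuation bound rather than an exact SDE, so every inequality invoked (replacing $\Sigma_i^{-1/2}$ by $\sigma_{\text{max},i}^{-1}I_d$, discarding the $\mathbf{q}_\nu$ terms, and the sign alignment) must be checked to point toward an upper bound on $\E[S_t]$.
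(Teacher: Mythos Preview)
Your proposal is correct and follows essentially the same route as the paper's own proof: Phase~1 via the SignGD ODE, $\lVert\cdot\rVert_1\ge\lVert\cdot\rVert_2$, and PL; Phases~2 and~3 via It\^o's lemma on $f$, bounding the drift quadratic form with $\Sigma_i^{-1/2}\succeq\sigma_{\text{max},i}^{-1}I_d$ to produce $\sigma_{\mathcal{H},1}^{-1}$, discarding the sign-aligned $\mathbf{q}_\nu$ contribution, bounding the It\^o correction by $\frac{\eta Ld}{2N}$, and closing with PL plus Gr\"onwall. Your writeup is in fact more explicit than the paper's on two points the paper leaves implicit: that the diffusion matrix satisfies $0\preceq A\preceq I_d$ so that $\tr(\nabla^2 f\,A)\le L\,\tr(A)\le Ld$ works even without $\nabla^2 f\succeq 0$, and that in Phase~2 one is working from the drift inequalities of Proposition~\ref{prop:HSignSGD_three_phases_App} rather than an exact SDE.
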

\begin{proof}
Let us prove the above phase by phase:

\textbf{For Phase~1}, 
\begin{equation}
        d(f(X_t) - f(X_*)) = - \nabla f(X_t) \sign ( \nabla f(X_t))dt = -\lVert \nabla f(X_t)\rVert_1 dt\leq -\lVert \nabla f(X_t)\rVert_2 dt.
    \end{equation}
    Since $f$ is $\mu$-PL, we have that $-\lVert \nabla f(X_t)\rVert_2^2<-  2 \mu ( f(X_t) - f(X_*))$, which implies that
    \begin{equation}
            f(X_t) - f(X_*) \leq \frac{1}{4} \left( \sqrt{\mu}t - 2 \sqrt{f(X_0) - f(X_*)}\right)^2,
        \end{equation}
        meaning that the dynamics will stop before $t_* = 2 \sqrt{\frac{f(X_0) - f(X_*)}{\mu}}$;

\textbf{For Phase~2}, using Itô on $f$, we have that
\begin{align}
        d (f(X_t) - f(X_*)) & = - \frac{m_{\nu} }{N} \sum_{i=1}^{ N} \nabla f(X_t)^{\top} \Sigma_i^{-\frac{1}{2}} \nabla f(X_t) dt - \nabla f(X_t)^{\top }\mathbf{q}_{\nu}^{-}dt +\frac{\eta L d}{2 N} dt \\
        & \leq - 2 \mu  m_{\nu} \sigma_{\mathcal{H},1}^{-1} (f(X_t) - f(X_*))  dt + \frac{\eta L d }{2 N} dt + \mathcal{O}(\text{Noise})
    \end{align}
which implies the thesis.

\textbf{For Phase~3}, using Itô on $f$, we have that
\begin{align}
        d (f(X_t) - f(X_*)) & = - \frac{\ell_{\nu} }{N} \sum_{i=1}^{ N} \nabla f(X_t)^{\top} \Sigma_i^{-\frac{1}{2}} \nabla f(X_t) dt - \nabla f(X_t)^{\top }\mathbf{q}_{\nu}^{-}dt +\frac{\eta L d}{2 N} dt \\
        & \leq - 2 \mu  \ell_{\nu} \sigma_{\mathcal{H},1}^{-1} (f(X_t) - f(X_*))  dt + \frac{\eta L d }{2 N} dt + \mathcal{O}(\text{Noise})
    \end{align}
which implies the thesis.
\end{proof}

\begin{theorem}
    If $f$ is $L$-smooth, we use a learning rate scheduler $\eta_t$ such that $\phi^i_t = \int_0^t (\eta_s)^i ds$, $\phi^1_t \overset{t \rightarrow \infty}{\rightarrow} \infty$, $\frac{\phi^2_t}{\phi^1_t} \overset{t \rightarrow \infty}{\rightarrow} 0$, and $\Sigma_i \leq \sigma_{\text{max}, i}^2$.

    \begin{enumerate}
        \item In Phase~1, $\lVert \nabla f\left(X_{\Tilde{t}^1}\right)\rVert_1 \leq \frac{f(X_0) - f(X_*)}{\phi_t^1} \overset{t \rightarrow \infty}{\rightarrow} 0$;
        \item In Phase~2, 
        \begin{equation}
         m_{\nu} \E \lVert \nabla f\left(X_{\Tilde{t}^{(1,2)}}\right)\rVert_2^2 + \hat{q}  \sigma_{\mathcal{H},1} \E \lVert \nabla f\left(X_{\Tilde{t}^{(2,2)}}\right)\rVert_1  \leq \frac{ \sigma_{\mathcal{H},1}}{\phi^1_t} \left( f(X_0) - f(X_*) + \frac{\eta L d \phi_t^2}{2 N} \right) \overset{t \rightarrow \infty}{\rightarrow} 0;
    \end{equation}
        \item In Phase~3, 
        \begin{equation}
        \ell_{\nu} \E \lVert \nabla f\left(X_{\Tilde{t}^{3}}\right)\rVert_2^2 \leq \frac{ \sigma_{\mathcal{H},1}}{\phi^1_t} \left( f(X_0) - f(X_*) + \frac{\eta L d \phi_t^2}{2 N} \right) \overset{t \rightarrow \infty}{\rightarrow} 0.
    \end{equation}
    \end{enumerate}
Above, $\Tilde{t}^1$, $\Tilde{t}^{(1,2)}$, $\Tilde{t}^{(2,2)}$, and $\Tilde{t}^3$ are random times with distribution $\frac{\eta_t}{\phi^1_t}$.

\end{theorem}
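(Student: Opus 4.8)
I would follow the template of the proof of Theorem~\ref{thm:DSGD_Convergence_Smooth}: run Itô's lemma on $S_t := f(X_t) - f(X_*)$ using, in each of the three regimes, the DSignSGD SDE prescribed by Proposition~\ref{prop:HSignSGD_three_phases_App} (equivalently the full SDE of Corollary~\ref{thm:DSignSGD_Theorem_Student_App} restricted to that regime), with the scheduler $\eta_t$ inserted so that the drift carries a factor $\eta_t$ and the Itô correction a factor $(\eta_t)^2$; then integrate on $[0,t]$, take expectations (the stochastic integral is a mean-zero martingale by the regularity of the diffusion coefficient noted in Remark~\ref{remark:DsignSGD}), divide by $\phi^1_t$, and invoke the Law of the Unconscious Statistician to rewrite $\frac{1}{\phi^1_t}\int_0^t\eta_s\,g(X_s)\,ds$ as $\E[g(X_{\tilde t})]$ for the random time $\tilde t$ with density $\eta_s/\phi^1_t$ --- exactly the step already used for DSGD. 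In \textbf{Phase~1} the dynamics is the deterministic $dX_t=-\eta_t\sign(\nabla f(X_t))\,dt$, so Itô gives $dS_t=-\eta_t\langle\nabla f(X_t),\sign(\nabla f(X_t))\rangle\,dt=-\eta_t\lVert\nabla f(X_t)\rVert_1\,dt$ with no diffusion term; integrating, dividing by $\phi^1_t$ and applying the above identification yields $\E\lVert\nabla f(X_{\tilde t^1})\rVert_1\le S_0/\phi^1_t\to 0$.

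\textbf{Phases~2 and 3.} I would apply Itô to $S_t$ with the Phase~2 (resp.\ Phase~3) SDE. The drift contributes $-\eta_t\langle\nabla f(X_t),b(X_t)\rangle$; using the affine sandwich of $2\Xi_\nu$ by the secant/tangent lines of Proposition~\ref{prop:HSignSGD_three_phases_App}, pairing componentwise with $\nabla f(X_t)$, the bound $\Sigma_i\preceq\sigma_{\text{max}, i}^2 I_d$ (so, on the diagonal, $\Sigma_i^{-1/2}\succeq\sigma_{\text{max}, i}^{-1}I_d$), and $\frac1N\sum_i\sigma_{\text{max}, i}^{-1}=\sigma_{\mathcal{H},1}^{-1}$, the linear part is bounded by $-\eta_t\,m_\nu\,\sigma_{\mathcal{H},1}^{-1}\lVert\nabla f(X_t)\rVert_2^2$ (with $\ell_\nu$ in Phase~3), while in Phase~2 the intercept part $-\eta_t\langle\nabla f(X_t),\mathbf{q}_\nu^-\rangle$ is bounded by $-\eta_t\,\hat q\,\lVert\nabla f(X_t)\rVert_1$ (there is no intercept in Phase~3). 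The Itô correction $(\eta_t)^2\frac{\eta}{2N}\tr\big(\nabla^2 f(X_t)\,\overline{\Sigma}(X_t)\big)$ is bounded by $(\eta_t)^2\frac{\eta L d}{2N}$, since the diffusion matrix is diagonal with entries in $[0,1]$ and $\lvert(\nabla^2 f)_{jj}\rvert\le L$ (the $-4\Xi_\nu^2$ term, favourable in the strongly-convex analysis, is simply discarded here as it is not sign-definite). This gives, in Phase~2,
\begin{equation*}
 dS_t \le -\eta_t\Big(m_\nu\sigma_{\mathcal{H},1}^{-1}\lVert\nabla f(X_t)\rVert_2^2 + \hat q\,\lVert\nabla f(X_t)\rVert_1\Big)\,dt + (\eta_t)^2\frac{\eta L d}{2N}\,dt + dM_t,
\end{equation*}
with $M_t$ a mean-zero martingale (drop the $L^1$ term and put $\ell_\nu$ for $m_\nu$ in Phase~3). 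Multiplying by $\sigma_{\mathcal{H},1}$, integrating, taking expectations, dividing by $\phi^1_t$, and applying the Law of the Unconscious Statistician separately to the $L^2$ and $L^1$ terms (producing the random times $\tilde t^{(1,2)},\tilde t^{(2,2)}$ in Phase~2, resp.\ $\tilde t^3$ in Phase~3, all of density $\eta_s/\phi^1_t$) yields the stated inequalities, each bounded by $\frac{\sigma_{\mathcal{H},1}}{\phi^1_t}\big(S_0+\frac{\eta L d\,\phi^2_t}{2N}\big)\to 0$ because $\phi^1_t\to\infty$ and $\phi^2_t/\phi^1_t\to 0$.

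\textbf{Main obstacle.} The content is routine; the care is in the bookkeeping. The delicate points are (i) tracking the sign-dependent intercepts $\mathbf{q}_\nu^{\pm}$ when pairing the affine sandwich of $2\Xi_\nu$ with $\nabla f$, so the intercept genuinely produces an $L^1$ term --- this relies on $\Sigma_i$ (hence $\Sigma_i^{-1/2}$) being diagonal, which is also what lets the average of $\sigma_{\text{max}, i}^{-1}$ collapse to the harmonic mean $\sigma_{\mathcal{H},1}^{-1}$; and (ii) that the trace bound uses only $\lvert(\nabla^2 f)_{jj}\rvert\le L$ rather than $\nabla^2 f\succeq 0$, which is precisely where the non-convex version forgoes the extra negative term retained in the strongly-convex Phase~2/3 bound. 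The one genuinely new element relative to that analysis --- the scheduler-weighted time change --- is handled verbatim as in Theorem~\ref{thm:DSGD_Convergence_Smooth}.
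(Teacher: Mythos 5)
Your proposal is correct and follows essentially the same route as the paper's proof: Itô's lemma on $f(X_t)-f(X_*)$ with the phase-specific drift bounds from Proposition~\ref{prop:HSignSGD_three_phases_App}, the Itô correction bounded by $(\eta_t)^2\frac{\eta L d}{2N}$, and the scheduler-weighted time change via the Law of the Unconscious Statistician exactly as in the DSGD case. The bookkeeping details you flag (the sign-dependent intercepts yielding the $L^1$ term, the harmonic mean from the diagonal scale matrices, and discarding the non-sign-definite diffusion contribution) are precisely the steps the paper carries out.
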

\begin{proof}
Let us prove the above phase by phase:

\textbf{For Phase~1}, \begin{align}
        d(f(X_t) - f(X_*)) & = - \eta_t \nabla f(X_t) \sign ( \nabla f(X_t)) dt = - \eta_t \lVert \nabla f(X_t)\rVert_1 dt \\
        & = - \phi_t^1 \frac{\eta_t \lVert \nabla f(X_t)\rVert_1 }{\phi_t^1} dt
    \end{align}
    Therefore, by integrating over time and using the law of the unconscious statistician
    \begin{equation}
        \lVert \nabla f\left(X_{\Tilde{t}^1}\right)\rVert_1 \leq \frac{f(X_0) - f(X_*)}{\phi_t^1} \overset{t \rightarrow \infty}{\rightarrow} 0;
    \end{equation}
where $\Tilde{t}^1$ is a random time with distribution $\frac{\eta_t}{\phi^1_t}$;
\end{proof}

\textbf{For Phase~2}, using Itô on $f$, we have that
\begin{align}
        d (f(X_t) - f(X_*)) &  \leq - \eta_t  m_{\nu} \sigma_{\mathcal{H},1}^{-1}\lVert \nabla f(X_t) \rVert_2^2  dt - \eta_t \hat{q} \lVert \nabla f(X_t) \rVert_1 dt + \eta_t^2 \frac{\eta L d}{2 N} dt + \mathcal{O}(\text{Noise})
    \end{align}
Therefore, by integrating over time and using the law of the unconscious statistician we have
    \begin{equation}
         m_{\nu} \E \lVert \nabla f\left(X_{\Tilde{t}^{(1,2)}}\right)\rVert_2^2 + \hat{q}  \sigma_{\mathcal{H},1} \E \lVert \nabla f\left(X_{\Tilde{t}^{(2,2)}}\right)\rVert_1  \leq \frac{ \sigma_{\mathcal{H},1}}{\phi^1_t} \left( f(X_0) - f(X_*) + \frac{\eta L d \phi_t^2}{2 N} \right) \overset{t \rightarrow \infty}{\rightarrow} 0,
    \end{equation}
where $\Tilde{t}^{(1,2)}$, $\Tilde{t}^{(2,2)}$, and $\Tilde{t}^3$ are random times with distribution $\frac{\eta_t}{\phi^1_t}$;

\textbf{For Phase~3}, using Itô on $f$, we have that
\begin{align}
        d (f(X_t) - f(X_*)) &  \leq - \eta_t  \ell_{\nu} \sigma_{\mathcal{H},1}^{-1}\lVert \nabla f(X_t) \rVert_2^2  dt + \eta_t^2 \frac{\eta L d}{2 N} dt + \mathcal{O}(\text{Noise})
    \end{align}
Therefore, by integrating over time and using the law of the unconscious statistician we have
    \begin{equation}
         \ell_{\nu} \E \lVert \nabla f\left(X_{\Tilde{t}^{3}}\right)\rVert_2^2 \leq \frac{ \sigma_{\mathcal{H},1}}{\phi^1_t} \left( f(X_0) - f(X_*) + \frac{\eta L d \phi_t^2}{2 N} \right) \overset{t \rightarrow \infty}{\rightarrow} 0,
    \end{equation}
where $\Tilde{t}^3$ is a random time with distribution $\frac{\eta_t}{\phi^1_t}$.

\subsection{Scaling Rules}

\begin{proposition}\label{prop:DSignSGD_ScalingLaws}
    Let the batch size be $ \delta B$, learning rate $\kappa \eta$, and $\alpha N$ agents. Let $K_1 := \ell_{\nu} \sqrt{B} \sigma_{\mathcal{H},1}^{-1}$ and $K_2:=\frac{ \eta \ell_{\nu}^2 B \mu \sigma_{\mathcal{H},2}^{-1}}{2 N}$. The scaling rules (involving only two parameters at the time) to preserve the performance independently of $\delta$, $\kappa$, and $\alpha$, are:
    \begin{table}[ht]
    \centering
    \begin{tabular}{|c|c|}
        \hline
        \textbf{Scaling Rule} & \textbf{Implication}\\
        \hline
        $\alpha = \frac{1}{\sqrt{\delta}} + \frac{K_2}{K_1} \left( \frac{1}{\sqrt{\delta}} - \sqrt{\delta} \right) $ & BS $\downarrow \implies$ Agents $\uparrow$\\
        \hline
        $\alpha = \kappa$ & LR $\uparrow \implies$ Agents $\uparrow$\\
        \hline
        $\kappa = \frac{\sqrt{\delta}}{1 + \frac{K_1}{K_2}(1 -\delta)}$ & BS $\uparrow \implies$ Agents $\uparrow$\\
        \hline
    \end{tabular}
    \caption{Summary of Trade-offs Between Parameters (LR = Learning Rate and BS = Batch Size).}
\end{table}
Finally, we observe that if $\frac{K_1}{K_2}\sim 0$, for example when $N \gg 1$, then these rules can be summarized as $\frac{\kappa}{\alpha \sqrt{\delta}}=1$, which recover the Scaling Rules of Adam and RMSprop as well as allow for the enhanced design flexibility of the distributed setting.
\end{proposition}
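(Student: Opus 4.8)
The plan is to measure ``performance'' exactly as in the scaling rules for DSGD and DCSGD, i.e.\ by the \emph{asymptotic loss level} predicted by the convergence bound, and to impose its invariance under the reparametrization $(\eta,B,N)\mapsto(\kappa\eta,\delta B,\alpha N)$. I would first re-run the Itô computation behind the Phase~3 estimate of Theorem~\ref{thm:DSignSGD_Convergence} while carrying the batch size explicitly: for the Student's-$t$ model of Corollary~\ref{thm:DSignSGD_Theorem_Student_App}, $\nabla f_{\gamma_i}(x)=\nabla f(x)+\sqrt{\Sigma_i/B}\,Z_i$, so the signal-to-noise ratio is $\sqrt B\,\Sigma_i^{-1/2}\nabla f$, and applying Itô's lemma to $S_t=f(X_t)-f(X_*)$ along the Phase~3 SDE of Proposition~\ref{prop:HSignSGD_three_phases} (for $\mu$-strongly convex $f$ with $\tr(\nabla^2 f)<\mathcal L_\tau$) yields
\[
\E[S_t]\le S_0\,e^{-2\mu\Delta t}+\frac{\eta\,\mathcal L_\tau}{2N}\cdot\frac{1}{2\mu\Delta}\bigl(1-e^{-2\mu\Delta t}\bigr),\qquad \Delta=K_1+K_2,
\]
with $K_1=\ell_{\nu}\sqrt B\,\sigma_{\mathcal{H},1}^{-1}$ arising from the first-order $\ell_{\nu}$-drift term and $K_2=\tfrac{\eta\,\ell_{\nu}^2 B\,\mu}{2N}\,\sigma_{\mathcal{H},2}^{-1}$ from the Itô curvature correction, which inherits the extra factor $B/N$ from the diffusion matrix $I_d-\tfrac{\ell_{\nu}^2 B}{N}\sum_i\diag\bigl(\Sigma_i^{-1/2}\nabla f\bigr)^2$. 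Since Phase~3 is the phase of slowest descent, the quantity to hold fixed is the asymptotic level $L_\infty:=\eta\,\mathcal L_\tau/(4\mu N\Delta)$.

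Next I would track how $L_\infty$ transforms. The constants $\ell_{\nu},\mu,\mathcal L_\tau$ and the harmonic means $\sigma_{\mathcal{H},1},\sigma_{\mathcal{H},2}$ depend only on the landscape and on the per-sample noise, hence are untouched by the reparametrization (all $B$-dependence already sits explicitly in $K_1,K_2$). Thus $K_1\mapsto\sqrt\delta\,K_1$, $K_2\mapsto(\kappa\delta/\alpha)\,K_2$, and the prefactor $\eta/N\mapsto(\kappa/\alpha)\,(\eta/N)$. Equating the transformed $L_\infty$ with the original and clearing denominators leaves the single scalar constraint
\[
\kappa\,(K_1+K_2)=\alpha\sqrt\delta\,K_1+\kappa\delta\,K_2,\qquad\text{equivalently}\qquad \kappa K_1+\kappa(1-\delta)\,K_2=\alpha\sqrt\delta\,K_1 .
\]
The same constraint, with $m_{\nu}$ replacing $\ell_{\nu}$, emerges from the Phase~2 bound, so the resulting rule is phase-independent.

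The three two-parameter rules are then immediate specializations of this constraint: fixing $\kappa=1$ and solving for $\alpha$ gives $\alpha=\tfrac1{\sqrt\delta}+\tfrac{K_2}{K_1}\bigl(\tfrac1{\sqrt\delta}-\sqrt\delta\bigr)$; fixing $\delta=1$ kills the $K_2$ terms and gives $\kappa=\alpha$; fixing $\alpha=1$ and solving for $\kappa$ gives $\kappa=\sqrt{\delta}\,\bigl(1+(1-\delta)K_2/K_1\bigr)^{-1}$. Finally, in the regime $N\gg1$ one has $K_2/K_1=\tfrac{\eta\,\ell_{\nu}\sqrt B\,\mu}{2N}\cdot(\sigma_{\mathcal{H},1}/\sigma_{\mathcal{H},2})\to0$, so the constraint collapses to $\kappa K_1=\alpha\sqrt\delta\,K_1$, i.e.\ $\kappa/(\alpha\sqrt\delta)=1$; specializing further to $\alpha=1$ reproduces the square-root rule $\kappa=\sqrt\delta$ of Adam/RMSprop, while keeping $\alpha$ free exhibits the additional distributed degree of freedom, as claimed.

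The step I expect to be the real obstacle is the choice of invariant. Unlike DSGD (Proposition~\ref{prop:DSGD_ScalingLaws}), where the exponential rate $e^{-2\mu t}$ carries no hyperparameters so that matching the asymptotic coefficient matches the entire bound, here $\Delta$ itself depends on $B,N,\eta$: the transient rate cannot in general be preserved simultaneously with the asymptotic level, and one must justify --- consistently with the stationary-distribution viewpoint, under which time rescaling is immaterial --- that $L_\infty$ is the correct thing to fix. Everything else is routine: carefully propagating the factor $B$ through the single Itô step so that $\sqrt B$ lands in $K_1$ and $B$ in $K_2$, and then a one-line algebraic solve of the scalar constraint for each fixed pair of parameters.
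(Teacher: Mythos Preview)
Your proposal is correct and follows essentially the same route as the paper: both fix the Phase~3 asymptotic loss level $\tfrac{\eta\mathcal L_\tau}{4\mu N\Delta}$ as the invariant, track how $K_1$, $K_2$, and the prefactor $\eta/N$ transform under $(\eta,B,N)\mapsto(\kappa\eta,\delta B,\alpha N)$, and reduce to the scalar constraint $\kappa(K_1+K_2)=\alpha\sqrt{\delta}\,K_1+\kappa\delta\,K_2$ before specializing. The paper re-runs the It\^o step with the rescaled parameters (carrying an extra overall $\kappa$ in the drift, which is just a time reparametrization and cancels in the asymptotic level), whereas you substitute directly into the already-derived formula; the resulting constraint and the three two-parameter rules are identical --- in fact your $\alpha=1$ specialization $\kappa=\sqrt{\delta}\bigl(1+(1-\delta)K_2/K_1\bigr)^{-1}$ is the algebraically correct one consistent with that constraint.
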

\begin{proof}
Let us focus on Phase~3, which is when the dynamics reaches stability. Using Itô's Lemma on $f$ we have that
    \begin{align}
        d (f(X_t) - f(X_*)) & = - \frac{ \kappa}{\alpha N} \sum_{i=1}^{\alpha N} \ell_{\nu} \sqrt{\delta B} \nabla f(X_t)^{\top} \Sigma_i^{-\frac{1}{2}} \nabla f(X_t) dt + \frac{\eta \kappa^2  \mathcal{L}_{\tau}}{2 \alpha N} dt \\
        & +\mathcal{O}(\text{Noise}) - \frac{\eta \kappa^2}{2 \alpha N} \frac{\ell_{\nu}^2 B \delta}{\alpha N} \sum_{i=1}^{\alpha N} \lVert \nabla^2 f(X_t) \Sigma_i^{-\frac{1}{2}} \nabla f(X_t) \rVert_2^2 dt \\
        & \leq - (2\mu \kappa \ell_{\nu} \sqrt{\delta B}) \left(\frac{1}{\alpha N}\sum_{i=1}^{\alpha
         N} \frac{1}{\sigma_{\text{max},i}} \right)(f(X_t) - f(X_*)) dt + \frac{\eta \kappa^2  \mathcal{L}_{\tau}}{2 \alpha N} dt \\
        & +\mathcal{O}(\text{Noise}) - \frac{2 \mu^2 \eta \kappa^2}{2 \alpha N} \ell_{\nu}^2 B \delta\left(\frac{1}{\alpha N}\sum_{i=1}^{\alpha
         N} \frac{1}{\sigma_{\text{max},i}^2} \right)(f(X_t) - f(X_*))dt,
    \end{align}
meaning that
\begin{equation}
    \E[f(X_t) - f(X_*)] \leq (f(X_0) - f(X_*)) e^{- 2 \mu \Delta t} +  \frac{\eta \kappa^2}{2 \alpha N} \frac{\mathcal{L}_{\tau}}{ 2 \mu \Delta} \left(1 - e^{- 2 \mu \Delta t}\right)
\end{equation}
 with $\Delta:= \left( \ell_{\nu} \kappa \sqrt{\delta B} \sigma_{\mathcal{H},1}^{-1} + \frac{ \eta \kappa^2}{2\alpha N} \ell_{\nu}^2 \delta B \mu \sigma_{\mathcal{H},2}^{-1}  \right)$.

 The asymptotic limit is thus:

\begin{equation}
    \frac{\eta \mathcal{L}_{\tau}}{4 \mu N} \frac{\kappa}{\alpha \sqrt{\delta}} \frac{1}{ \ell_{\nu} \sqrt{B} \sigma_{\mathcal{H},1}^{-1} +  \frac{ \eta \ell_{\nu}^2 B \mu \sigma_{\mathcal{H},2}^{-1}}{2 N}  \frac{\kappa \sqrt{\delta}}{\alpha}}.
\end{equation}
To maintain the performance of DSignSGD independently of $\alpha$, $\kappa$, and $\delta$, we need to solve the following equation:
\begin{equation}
    \frac{\eta \mathcal{L}_{\tau}}{4 \mu N} \frac{\kappa}{\alpha \sqrt{\delta}} \frac{1}{ \ell_{\nu} \sqrt{B} \sigma_{\mathcal{H},1}^{-1} +  \frac{ \eta \ell_{\nu}^2 B \mu \sigma_{\mathcal{H},2}^{-1}}{2 N}  \frac{\kappa \sqrt{\delta}}{\alpha}} = \frac{\eta \mathcal{L}_{\tau}}{4 \mu N}  \frac{1}{ \ell_{\nu} \sqrt{B} \sigma_{\mathcal{H},1}^{-1} +  \frac{ \eta \ell_{\nu}^2 B \mu \sigma_{\mathcal{H},2}^{-1}}{2 N}}.
\end{equation}
 To provide easily interpretable and actionable scaling rules, we fix one of the three parameters to $1$ and find the relationship between the others. With simple math, the thesis follows.
\end{proof}

\subsection{Stationary Distribution}

\begin{proposition}
\label{prop:HSignSGD_StaDistr_App}
Let $H = \diag (\lambda_1, \dots, \lambda_d)$, $M_t:=e^{-2\left( \ell_{\nu} \Sigma_{\mathcal{H},1} H + \frac{\eta}{2 N}\ell_{\nu}^2 \Sigma_{\mathcal{H},2} H^2 \right) t}$ where $\Sigma_{\mathcal{H},1}=\frac{1}{N}\sum_{i=1}^{N} \Sigma_i^{-\frac{1}{2}}$, and $\Sigma_{\mathcal{H},2}=\frac{1}{N}\sum_{i=1}^{N} \Sigma_i^{-1}$. Then,
\begin{enumerate}
    \item $\E  \left[ X_t \right] = e^{- \ell_{\nu} \Sigma_{\mathcal{H},1} H t} X_0 \overset{t \rightarrow \infty}{\rightarrow} 0 $;
    \item $ Cov\left[ X_t \right] = \left( M_t - e^{-2 \ell_{\nu} \Sigma_{\mathcal{H},1} H t} \right) X_0^2 + \frac{\eta}{2 N} \left( \ell_{\nu}I_d  + \frac{\eta}{2 N}\ell_{\nu}^2 \Sigma_{\mathcal{H},2} \Sigma_{\mathcal{H},1}^{-1} H\right)^{-1} H^{-1} \Sigma_{\mathcal{H},1}^{-1} \left( I_d - M_t \right),$\\ which as $t\to\infty$ converges to $\frac{\eta}{2 N} \left( \ell_{\nu}I_d  + \frac{\eta}{2 N}\ell_{\nu}^2 \Sigma_{\mathcal{H},2} \Sigma_{\mathcal{H},1}^{-1} H\right)^{-1} H^{-1} \Sigma_{\mathcal{H},1}^{-1}$.
\end{enumerate}
\end{proposition}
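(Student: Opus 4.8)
The plan is to work directly with the Phase~3 SDE of Proposition~\ref{prop:HSignSGD_three_phases_App}, which is the regime that governs the behavior near the optimum: since $\nabla f(X_*) = 0$, the signal-to-noise ratios $Y^i_t = \Sigma_i^{-1/2}\nabla f(X_t)$ shrink as $X_t \to X_*$, so the condition $|Y^i_t| < 1$ eventually holds. Substituting $f(x) = \tfrac12 x^{\top} H x$, hence $\nabla f(x) = Hx$, and using that $H$, $\Sigma_i$, $\Sigma_{\mathcal{H},1} = \tfrac1N\sum_i \Sigma_i^{-1/2}$, and $\Sigma_{\mathcal{H},2} = \tfrac1N\sum_i \Sigma_i^{-1}$ are all diagonal (so that $\tfrac1N\sum_i \diag(\Sigma_i^{-1/2}HX_t)^2 = \Sigma_{\mathcal{H},2}\diag(HX_t)^2$), the Phase~3 SDE becomes
\begin{equation}
    dX_t = -\ell_\nu \Sigma_{\mathcal{H},1} H X_t\, dt + \sqrt{\tfrac{\eta}{N}}\,\sqrt{I_d - \ell_\nu^2 \Sigma_{\mathcal{H},2}\diag(H X_t)^2}\, dW_t .
\end{equation}
Because every coefficient is diagonal, the system decouples into $d$ scalar SDEs in the coordinates $(X_t)_j$, with drift $-\ell_\nu(\Sigma_{\mathcal{H},1})_{jj}\lambda_j (X_t)_j$ and squared diffusion $\tfrac{\eta}{N}\bigl(1 - \ell_\nu^2(\Sigma_{\mathcal{H},2})_{jj}\lambda_j^2(X_t)_j^2\bigr)$; in particular, for deterministic $X_0$ the coordinates are independent and all off-diagonal covariances stay identically zero, consistent with the diagonal form of the claimed covariance.

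For the mean I would take expectations in the SDE: the stochastic integral has zero expectation, leaving the linear ODE $\tfrac{d}{dt}\E[X_t] = -\ell_\nu\Sigma_{\mathcal{H},1}H\,\E[X_t]$, whose solution is $\E[X_t] = e^{-\ell_\nu\Sigma_{\mathcal{H},1}Ht}X_0$; since $\ell_\nu\Sigma_{\mathcal{H},1}H$ is diagonal with strictly positive entries, this decays to $0$, giving claim~1. For the covariance I would apply Itô's Lemma to $(\pi_j(X_t))^2$, exactly mirroring the computation in the proof of Proposition~\ref{prop:DCSGD_StaDistr_App}: the drift contributes $-2\ell_\nu(\Sigma_{\mathcal{H},1})_{jj}\lambda_j(X_t)_j^2\,dt$, the quadratic-variation term contributes $\tfrac{\eta}{N}\bigl(1 - \ell_\nu^2(\Sigma_{\mathcal{H},2})_{jj}\lambda_j^2(X_t)_j^2\bigr)\,dt$, and the $dW_t$ term vanishes in expectation. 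Setting $a_j := 2\ell_\nu(\Sigma_{\mathcal{H},1})_{jj}\lambda_j + \tfrac{\eta}{N}\ell_\nu^2(\Sigma_{\mathcal{H},2})_{jj}\lambda_j^2$ and $Z_t^{(j)} := \E[(X_t)_j^2]$, this yields $\dot Z_t^{(j)} = -a_j Z_t^{(j)} + \tfrac{\eta}{N}$, with solution $Z_t^{(j)} = e^{-a_j t}(X_0)_j^2 + \tfrac{\eta}{N a_j}(1 - e^{-a_j t})$.

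Finally I would reassemble: $\Cov[X_t]_{jj} = Z_t^{(j)} - (\E[X_t]_j)^2 = \bigl(e^{-a_j t} - e^{-2\ell_\nu(\Sigma_{\mathcal{H},1})_{jj}\lambda_j t}\bigr)(X_0)_j^2 + \tfrac{\eta}{N a_j}(1 - e^{-a_j t})$. Here $(M_t)_{jj} = e^{-a_j t}$ is exactly the $(j,j)$ entry of $M_t = e^{-2(\ell_\nu\Sigma_{\mathcal{H},1}H + \frac{\eta}{2N}\ell_\nu^2\Sigma_{\mathcal{H},2}H^2)t}$, and a short algebraic check shows $\tfrac{\eta}{N a_j} = \bigl[\tfrac{\eta}{2N}(\ell_\nu I_d + \tfrac{\eta}{2N}\ell_\nu^2\Sigma_{\mathcal{H},2}\Sigma_{\mathcal{H},1}^{-1}H)^{-1}H^{-1}\Sigma_{\mathcal{H},1}^{-1}\bigr]_{jj}$; stacking the coordinates gives the claimed matrix identity, and letting $t\to\infty$ (all $a_j$ and $\ell_\nu(\Sigma_{\mathcal{H},1})_{jj}\lambda_j$ positive) produces the stated stationary covariance $\tfrac{\eta}{2N}(\ell_\nu I_d + \tfrac{\eta}{2N}\ell_\nu^2\Sigma_{\mathcal{H},2}\Sigma_{\mathcal{H},1}^{-1}H)^{-1}H^{-1}\Sigma_{\mathcal{H},1}^{-1}$.

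I expect the main obstacle to be the quadratic-variation computation for the state-dependent diffusion coefficient: it is precisely the $-\ell_\nu^2(\Sigma_{\mathcal{H},2})_{jj}\lambda_j^2(X_t)_j^2$ term that injects the extra $\tfrac{\eta}{2N}\ell_\nu^2\Sigma_{\mathcal{H},2}H^2$ piece into the second-moment decay rate, making $M_t$ decay strictly faster than $(\E[X_t])^2 = e^{-2\ell_\nu\Sigma_{\mathcal{H},1}Ht}X_0^2$ and thereby producing the genuinely non-Ornstein--Uhlenbeck transient $(M_t - e^{-2\ell_\nu\Sigma_{\mathcal{H},1}Ht})X_0^2$; everything else is the routine bookkeeping of matching the scalar solutions to the compact matrix forms. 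As for DCSGD, I would also flag that this analysis is only formally valid within Phase~3 and for the quadratic surrogate of $f$.
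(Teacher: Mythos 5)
Your proposal is correct and follows essentially the same route as the paper, whose proof simply states that it mimics Proposition~\ref{prop:DCSGD_StaDistr_App}: take the Phase~3 SDE with $\nabla f(x)=Hx$, solve the linear ODE for the mean, apply Itô's Lemma to $(\pi_j(X_t))^2$ to get the decoupled second-moment ODEs, and subtract the squared mean. Your explicit verification that $(M_t)_{jj}=e^{-a_jt}$ and $\tfrac{\eta}{Na_j}$ matches the stated matrix expression is exactly the bookkeeping the paper leaves implicit.
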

\begin{proof}
    The proof mimics that of Prop. \ref{prop:DCSGD_StaDistr_App}.
\end{proof}

\section{Additional Related Works} \label{app:RelWorks}

In this section, we list some papers that derived or used SDEs to model optimizers. In particular, we focus on the aspect of empirically verifying the validity of such SDEs in the sense that they indeed track the respective optimizers. We divide these into three categories: Those that did not carry out any type of validation, those that did it on simple landscapes (quadratic functions et similia), and those that did small experiments on neural networks.

None of the following papers carried out any experimental validation of the approximating power of the SDEs they derived. Many of them did not even validate the insights derived from the SDEs: \citep{liu2018diffusion,hu2019global,bercher2020weak,zhu2020sharp,cui2020momentum,maulen2021continuous,wang2020asymptotic,lanconelli2022note,ayadi2021stochastic,soto2022sde,li2022uniform,wang2022two,bardi2022deep,chen2022approximation,kunin2023limiting,zhang2023stochastic,sun2023scheduling,li2023fast, gess2024stochastic,dambrine2024stochastic,maulen2024stochastic}.

The following ones carried out validation experiments on artificial landscapes, e.g. quadratic or quartic function, or easy regression tasks: \citep{li2017stochastic,li2019stochastic,zhou2020stochastic,an2020stochastic,fontaine2021convergence,gu2021adversarial,su2023accelerated,ankirchner2024comparison}.

The following papers carried out some experiments which include neural networks: \citep{paquette2021sgd,compagnoni2023sde}. In particular, they both simulate the SDEs with a numerical integrator and compare them with the respective optimizers: The first validates the SDE on a shallow MLP while the second does so on a shallow and a deep MLP. We also verify our SDEs on simple landscapes as well as on an MPL (on Breast Cancer) and, importantly, we verify our insights on ViTs (on MNIST) and ResNets (on CIFAR-10).

It would be great to extend the theoretical results to a more practical class of structural non-convex problems, e.g., under quasar convexity \citep{hardt2018gradient}, $\alpha$-$\beta$-condition \citep{islamov2024loss}, or Aiming condition \citep{liu2024aiming}.

\section{Numerical Integration of SDEs}
In this section, we only define the Euler-Maruyama Integration Method for SDEs: For a deeper introduction to SDEs and Itô Calculus, we refer the reader to Appendix B in \cite{compagnoni2025adaptive}. Let us consider an SDE of the form
\begin{equation*}
    dX_t = b(X_t,t)dt + \sigma(X_t,t) dW_t.
\end{equation*}

The simplest algorithm to provide a sample path $(\hat x_k)_{k\ge 0}$ for $X_t$, so that $X_{k\Delta t} \approxeq \hat{x}_k$ for some small $\Delta t$ and for all $k\Delta t\le M$ is called Euler-Maruyama (Algorithm~\ref{algo:EulerMaruryama_SDE}). For more details on this integration method and its approximation properties, the reader can check~\cite{mao2007stochastic}.
\begin{algorithm}
\caption{Euler-Maruyama Integration Method for SDEs}
    \label{algo:EulerMaruryama_SDE}
\begin{algorithmic}
    \INPUT{The drift $b$, the volatility $\sigma$, and the initial condition $x_0$}.
    \STATE Fix a stepsize $\Delta t$;
    \STATE Initialize $\hat x_0 = x_0$;
    \STATE $k=0$;
    \WHILE{$k \le \left\lfloor\frac{T}{\Delta t}\right\rfloor$}
        \STATE Sample some $d$-dimensional Gaussian noise $Z_k\sim\mathcal{N}(0,I_d)$;
        \STATE Compute $\hat x_{k+1} = \hat x_k + \Delta t \ b(\hat x_k,k \Delta t)+ \sqrt{\Delta t} \ \sigma(\hat x_k,k \Delta t) Z_k;$
        \STATE $k=k+1$;
    \ENDWHILE
    \OUTPUT The approximated sample path $(\hat x_k)_{0\le k\le\left\lfloor\frac{T}{\Delta t}\right\rfloor }$.
\end{algorithmic}
\end{algorithm}
%%%%%%%%%%%%%%%%%%%%%%%%%%%%%%%%%%%%%%%%%%%%%%%%%%%%%%%%%%%%%%%%%%%%%%%%%%%%

%%%%%%%%%%%%%%%%%%%%%%%%%%%%%%%%%%%%%%%%%%%%%%%%%%%%%%%%%%%%%%%%%%%%%%%%%%%%
\section{Experiments} \label{app:Exper}
%%%%%%%%%%%%%%%%%%%%%%%%%
In this section, we provide the modeling choices and instructions to replicate our experiments.

The code is implemented in Python 3 \citep{10.5555/1593511} mainly using Numpy \citep{harris2020array}, scikit-learn \citep{scikit-learn}, and JAX \citep{jax2018github}.

\subsection{SDE Validation (Figure \ref{fig:SDEs})}
In this subsection, we describe the experiments we run to produce Figure \ref{fig:SDEs}: The trajectories of the SDEs match those of the respective algorithms on average. Additionally, the SDEs and the algorithms move at the same speed.

\paragraph{DSGD - Rosenbrock}
This paragraph refers to the \textit{upper-left} of Figure \ref{fig:SDEs}. The loss function is the Rosenbrock function with parameters $a=1$ and $b=100$. We run DSGD for $10000$ epochs as we calculate the full gradient and inject it with Gaussian noise $Z \sim \mathcal{N}(0, \sigma^2 I_d)$ where $\sigma=100$. The learning rate is $\eta =0.001$ and $N=10$. Similarly, we integrate the DSGD SDE (Thm. \ref{thm:DSGD_Theorem}) with Euler-Maruyama  (Algorithm \ref{algo:EulerMaruryama_SDE}) with $\Delta t = \eta$. Results are averaged over $5000$. We plot the averaged trajectories and observe that they overlap to a great degree of agreement.

\paragraph{DCSGD - Embedded Saddle}
This paragraph refers to the \textit{upper-right} of Figure \ref{fig:SDEs}. We optimize the function $f(x) = \frac{x^{\top} H x}{2}  + \frac{1}{4}\lambda \sum_{i=1}^2 x_i^4 -  \frac{\xi}{3}\sum_{i=1}^2 x_i^3$ where $H = \diag(1,-2)$, $\lambda=1$, and $\xi = 1$. We run DCSGD with Rand-$k$ as $k=1$ for $1000$ epochs as we calculate the full gradient and inject it with Gaussian noise $Z \sim \mathcal{N}(0, \sigma^2 I_d)$ where $\sigma=10$. The learning rate is $\eta =0.1$ and $N=10$. Similarly, we integrate the DCSGD SDE (Thm. \ref{thm:DCSGD_Theorem}) with Euler-Maruyama  (Algorithm \ref{algo:EulerMaruryama_SDE}) with $\Delta t = \eta$. Results are averaged over $5000$. We plot the averaged trajectories and observe that they overlap to a great degree of agreement.

\paragraph{DSignSGD - Convex Quadratic Function}
This paragraph refers to the \textit{bottom-left} of Figure \ref{fig:SDEs}. We optimize the function $f(x) = \frac{x^{\top} H x}{2}$ where $H = \diag(5,5)$. We run DSignSGD for $3000$ epochs as we calculate the full gradient and inject it with Gaussian noise $Z \sim \mathcal{N}(0, \diag(\sigma_i^2))$ where $\sigma_i=0.01*(1+i)$, the learning rate is $\eta =0.001$ and $N=10$. Similarly, we integrate the DSignSGD SDE (Thm. \ref{thm:DSignSGD_Theorem}) with Euler-Maruyama  (Algorithm \ref{algo:EulerMaruryama_SDE}) with $\Delta t = \eta$. Results are averaged over $5000$. We plot the averaged trajectories and observe that they overlap to a great degree of agreement.

\paragraph{DNN on Breast Cancer Dataset \citep{Dua:2019}}
This paragraph refers to the \textit{bottom-right} of Figure \ref{fig:SDEs}. The DNN has $10$ dense layers with $20$ neurons each activated with a ReLu. We minimize the binary cross-entropy loss. We run DCSGD with Rand-$k$ and $k=1000$, $d=1502$, and for $30000$ epochs as we calculate the full gradient and inject it with Gaussian noise $Z \sim \mathcal{N}(0, \sigma^2 I_d)$ where $\sigma=0.0001$. The learning rate is $\eta =0.1$ and $N=3$. Similarly, we integrate the DCSGD SDE (Thm. \ref{thm:DCSGD_Theorem}) with Euler-Maruyama  (Algorithm \ref{algo:EulerMaruryama_SDE}) with $\Delta t = \eta$. Results are averaged over $3$ runs and the shaded areas are the average $\pm$ the standard deviation.

\subsection{DCSGD - Scaling Rules (Figure \ref{fig:DCSGD_ScalLaw})}

\paragraph{Transformer on MNIST \citep{deng2012mnist}}
This paragraph refers to the \textit{left} of Figure \ref{fig:DCSGD_ScalLaw}. The Architecture is a scaled-down version of \citep{dosovitskiy2020image}, where the hyperparameters are \textit{patch size}=$28$, \textit{out features}=$10$, \textit{width}=$48$, \textit{depth}=$3$, \textit{num heads}=$6$, and \textit{dim ffn}=$192$. We minimize the cross-entropy loss.
In this experiment, we run DSGD with some hyperparameters $(\eta, B, N)$ for $1000$ epochs. Then, we need to verify the scaling rules in Prop. \ref{prop:DCSGD_RecoverLaws_Main}, meaning that we run DCSGD with hyperparameters that follow the rules reported there and confirm that they indeed recover the performance of DSGD. Then, we also run DCSGD with combinations of hyperparameters that do not do so and indeed they do not recover the performance of DSGD. In all our experiments, we calculate the full gradient and inject it with Gaussian noise $Z \sim \mathcal{N}(0, \sigma^2 I_d)$ where $\sigma=0.01$ which corresponds to $B=1$. The learning rate is $\eta=0.01$ and the number of agents is $N=3$. DCSGD($\eta,B,\omega, N$) is with $\omega=1$ and indeed does not perform as DSGD($\eta,B,N$). DCSGD($\eta,B,\omega, (1 + \omega)N$) almost recovers the performance of DSGD($\eta,B,N$). The same with  DCSGD($\eta, B, \beta \omega, (1 + \beta \omega )N$),  DCSGD($\eta,(1+\omega) B, \omega, N$), and  DCSGD($\eta,(1+\beta \omega) B, \beta \omega, N$) for $\beta =2$.
On the contrary, neither DCSGD($\kappa \eta, B, \beta \omega, (1 + \beta \omega) N$) for $\kappa=3$ nor DCSGD($\eta, \delta B, \omega,  N$) for $\delta=1/3$ do so because they do not satisfy our scaling rules. See Figure \ref{fig:Boxplot_DCSGD} for a boxplot comparing the errors at the last iterate: Clearly, those hyperparameter combinations that do not follow our prescriptions behave much differently DSGD than those that do follow our rules. Results are averaged over $50$ runs.

\paragraph{ResNet on CIFAR-10 \citep{krizhevsky2009learning}}
This paragraph refers to the \textit{left} of Figure \ref{fig:DCSGD_ScalLaw}. The ResNet 
has a $(3,3,32)$ convolutional layer with stride $1$, followed by a ReLu activation, a second $(3,3,32)$ convolutional layer with stride $1$, followed by a residual connection from the first convolutional layer, then a $(2,2)$ max pool layer with stride $(2,2)$. Then the activations are flattened and passed through a dense layer that compresses them into $128$ dimensions, a final ReLu activation, and a final dense layer into the output dimension $10$. The output finally goes through a softmax as we minimize the cross-entropy loss. 
In this experiment, we run DSGD with some hyperparameters $(\eta, B, N)$. Then, we need to verify the scaling rules in Prop. \ref{prop:DCSGD_RecoverLaws_Main}, meaning that we run DCSGD with hyperparameters that follow the rules reported there and confirm that they indeed recover the performance of DSGD. Then, we also run it with a combination that does not do so and indeed it does not recover the performance of DSGD. In all our experiments, we calculate the full gradient and inject it with Gaussian noise $Z \sim \mathcal{N}(0, \sigma^2 I_d)$ where $\sigma=0.01$ which corresponds to $B=1$. The learning rate is $\eta=0.01$ and the number of agents is $N=3$. DCSGD($\eta,B,\omega, N$) is with $\omega=1$ and indeed does not perform as DSGD($\eta,B,N$). DCSGD($\eta,B,\omega, (1 + \omega)N$) almost recovers the performance of DSGD($\eta,B,N$). The same with  DCSGD($\eta, B, \beta \omega, (1 + \beta \omega )N$),  DCSGD($\eta,(1+\omega) B, \omega, N$), and  DCSGD($\eta,(1+\beta \omega) B, \beta \omega, N$) for $\beta =2$.
On the contrary, neither DCSGD($\kappa \eta, B, \beta \omega, (1 + \beta \omega) N$) for $\kappa=3$ nor DCSGD($\eta, \delta B, \omega,  N$) for $\delta=1/3$ do so because they do not satisfy our scaling rules. See Figure \ref{fig:Boxplot_DCSGD} for a boxplot comparing the errors at the last iterate: Clearly, those hyperparameter combinations that do not follow our prescriptions behave much differently DSGD than those that do follow our rules. Results are averaged over $10$ runs.

\begin{figure}%
\subfloat{{\includegraphics[width=0.50\linewidth]{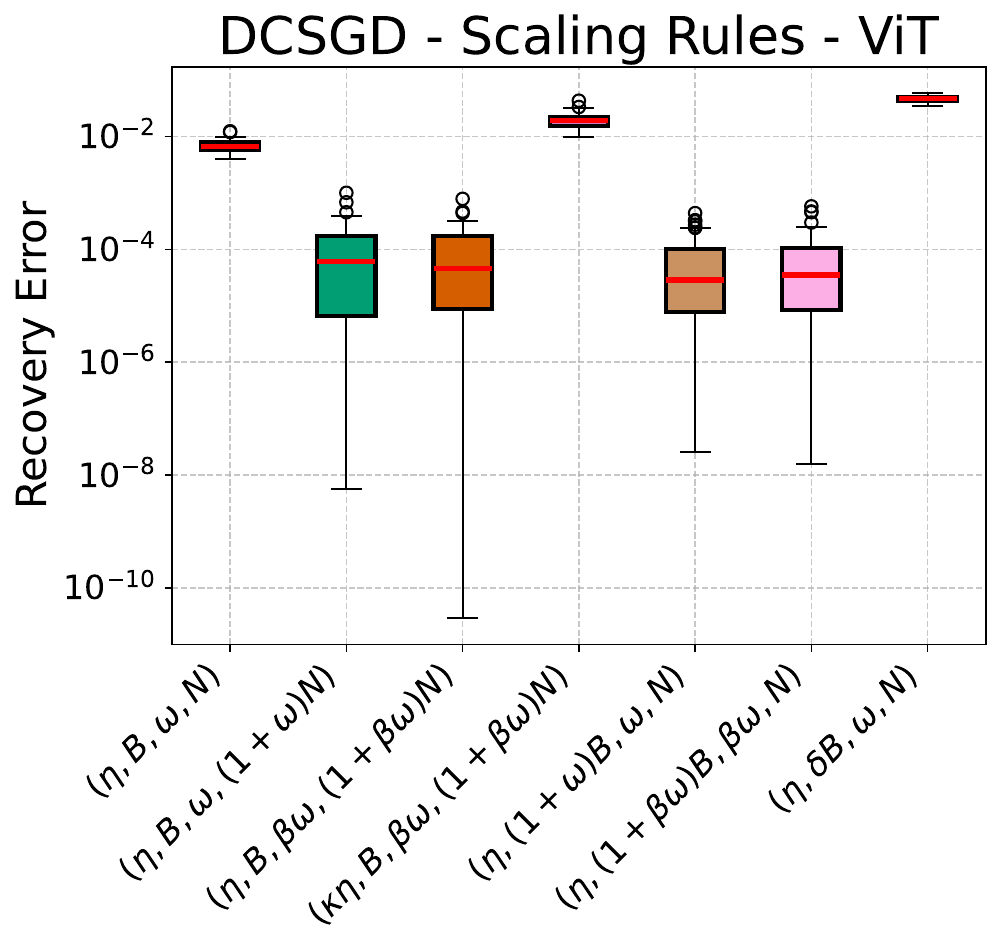} }}
    \subfloat{{\includegraphics[width=0.50\linewidth]{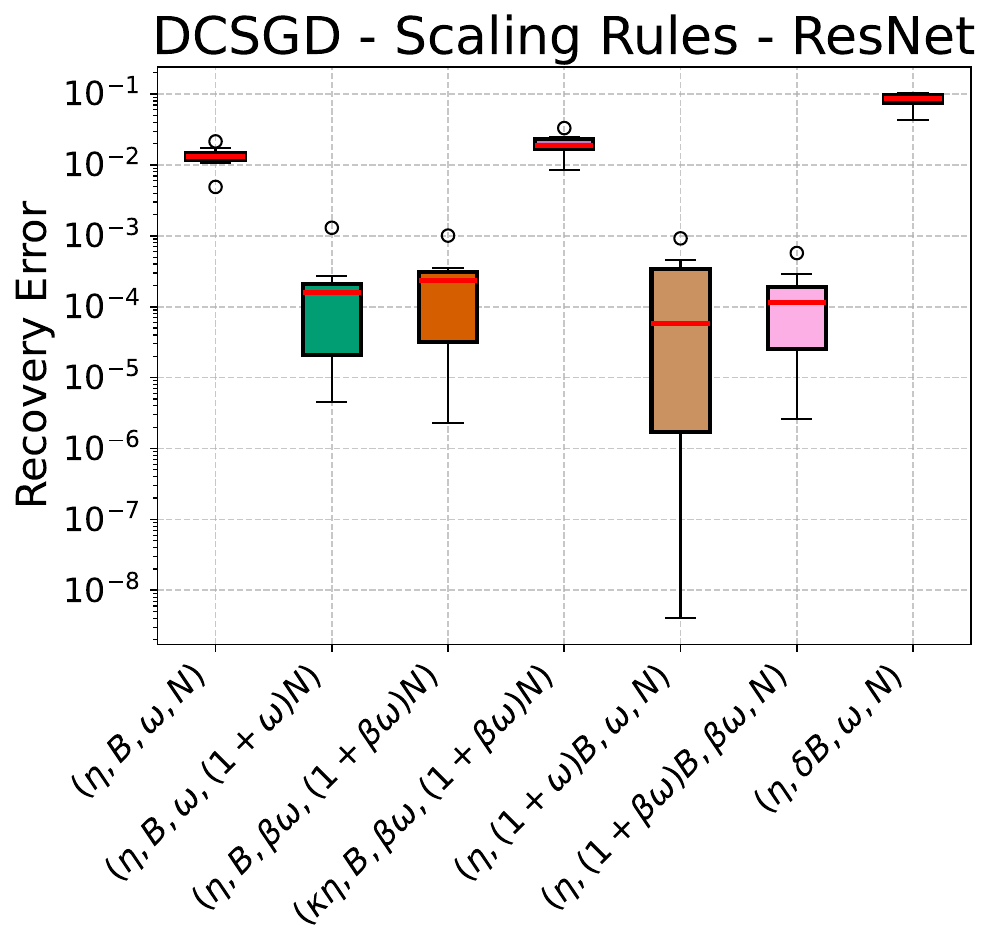} }} \\
    \caption{Box plot of the error between the last iterate of DSGD base run and the runs of DCSGD with the different combinations of hyperparameters: Those runs that follow our Scaling Rules achieve a much smaller error than those that do not.}
    \label{fig:Boxplot_DCSGD}%
\end{figure}

\subsection{DSignSGD - Bound and Linear Speedup (Figure \ref{fig:DSignSGD_Insights})}

\paragraph{Bound - Left}
In this paragraph, we describe how we validated the existence of the phases of DSignSGD as predicted in Thm. \ref{thm:DSignSGD_Convergence}.
We run DSignSGD with $\eta=0.001$ for $800$ epochs, $N=12$ as we optimize function is $f(x) = \frac{x^{\top}H x}{2}$ for $H = \diag(2)$, and inject Gaussian noise with covariance matrix $\Sigma = \sigma^2 I_d$ where $\sigma = 0.1$ on the full gradient. We plot the bounds as per Thm. \ref{thm:DSignSGD_Convergence} and confirm that they indeed match or bound the dynamics of the loss as prescribed. Results are averaged over $100$ runs.

\paragraph{Linear Speedup - Right}
In this paragraph, we describe how we validated the linear speedup of DSignSGD on the ViT described above. We run DSignSGD as we calculate the full gradient and inject it with Gaussian noise $Z \sim \mathcal{N}(0, \sigma^2 I_d)$ where $\sigma=1$, $\eta=0.01$ and $N \in \{1,2,4,8,16 \}$. Results are averaged over $3$ runs.

\subsection{DSignSGD - Scaling Rules (Figure \ref{fig:DSignSGD_ScalLaw})}

\paragraph{Transformer on MNIST \citep{deng2012mnist}}
This paragraph refers to the \textit{left} of Figure \ref{fig:DSignSGD_ScalLaw}. The ViT is the same as described above. In this experiment, we run DSignSGD with some hyperparameters $(\eta, B, N)$. Then, we need to verify the scaling rules in Prop. \ref{prop:DSignSGD_ScalingLaws_Main}, meaning that we run DSignSGD with hyperparameters that follow the rules reported there and confirm that they indeed preserve the performance. Then, we also run it with a combination that does not do so and indeed it does not preserve them. In all our experiments, we calculate the full gradient and inject it with Gaussian noise $Z \sim \mathcal{N}(0, \sigma^2 I_d)$ where $\sigma=0.2$ which corresponds to $B=1$. The learning rate is $\eta=0.01$ for $1000$ epochs and the number of agents is $N=4$. Since they follow our scaling rules, DSignSGD($\kappa \eta, \kappa^2 B, N$), DSignSGD($\kappa \eta, B, \kappa N$), and DSignSGD($ \eta, \kappa^2 B, N/\kappa$) with $\kappa=2$ indeed preserve the performance of DSignSGD($ \eta, B, N$), while DSignSGD($\kappa \eta, \kappa^2 B, N/\kappa$) does not.
See Figure \ref{fig:Boxplot_DSignSGD} for a boxplot comparing the errors at the last iterate: Clearly, those hyperparameter combinations that do not follow our prescriptions behave much differently than the base run than those that do follow our rules. Results are averaged over $5$ runs.

\paragraph{ResNet on CIFAR-10 \citep{krizhevsky2009learning}}
This paragraph refers to the \textit{left} of Figure \ref{fig:DSignSGD_ScalLaw}. The ResNet is the same as we described above. In this experiment, we run DSignSGD with some hyperparameters $(\eta, B, N)$. Then, we need to verify the scaling rules in Prop. \ref{prop:DSignSGD_ScalingLaws_Main}, meaning that we run DSignSGD with hyperparameters that follow the rules reported there and confirm that they indeed preserve the performance. Then, we also run it with a combination that does not do so and indeed it does not preserve them. In all our experiments, we calculate the full gradient and inject it with Gaussian noise $Z \sim \mathcal{N}(0, \sigma^2 I_d)$ where $\sigma=1$ which corresponds to $B=1$. The learning rate is $\eta=0.01$ for $2000$ epochs and the number of agents is $N=4$. Since they follow our scaling rules, DSignSGD($\kappa \eta, \kappa^2 B, N$), DSignSGD($\kappa \eta, B, \kappa N$), and DSignSGD($ \eta, \kappa^2 B, N/\kappa$) with $\kappa=2$ indeed preserve the performance of DSignSGD($ \eta, B, N$), while DSignSGD($\kappa \eta, \kappa^2 B, N/\kappa$) does not.
See Figure \ref{fig:Boxplot_DSignSGD} for a boxplot comparing the errors at the last iterate: Clearly, those hyperparameter combinations that do not follow our prescriptions behave much differently than the base run than those that do follow our rules. Results are averaged over $10$ runs.

\begin{figure}%
\subfloat{{\includegraphics[width=0.50\linewidth]{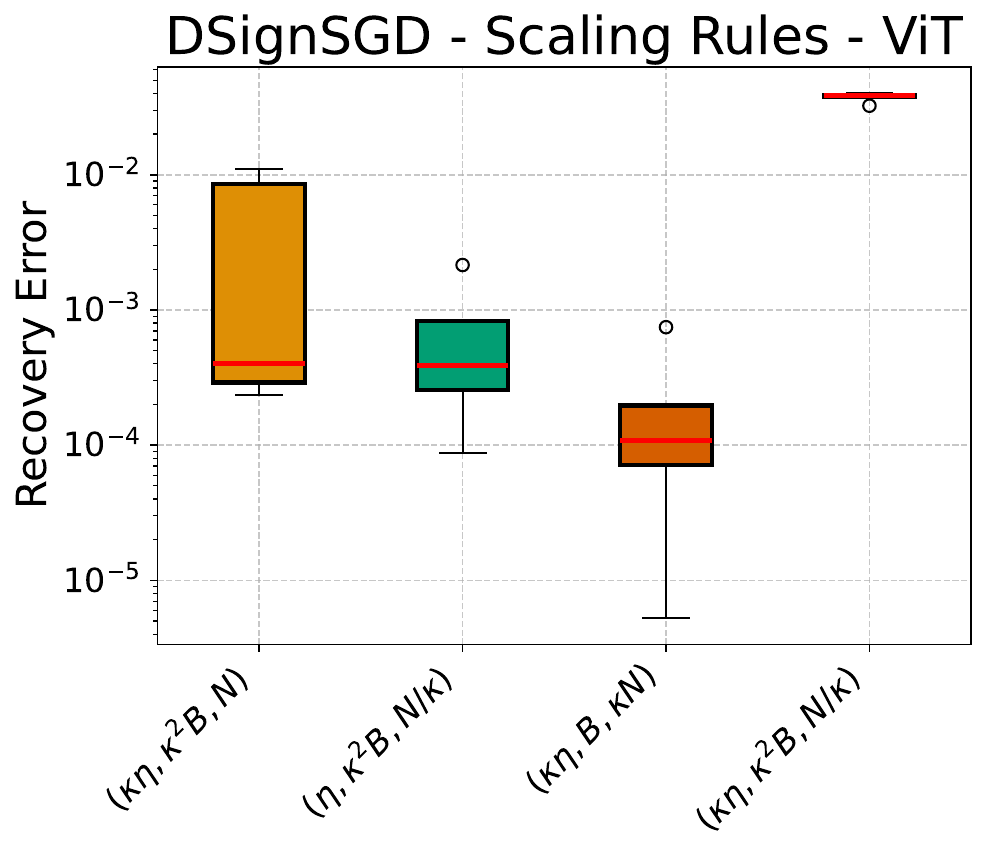} }}
    \subfloat{{\includegraphics[width=0.50\linewidth]{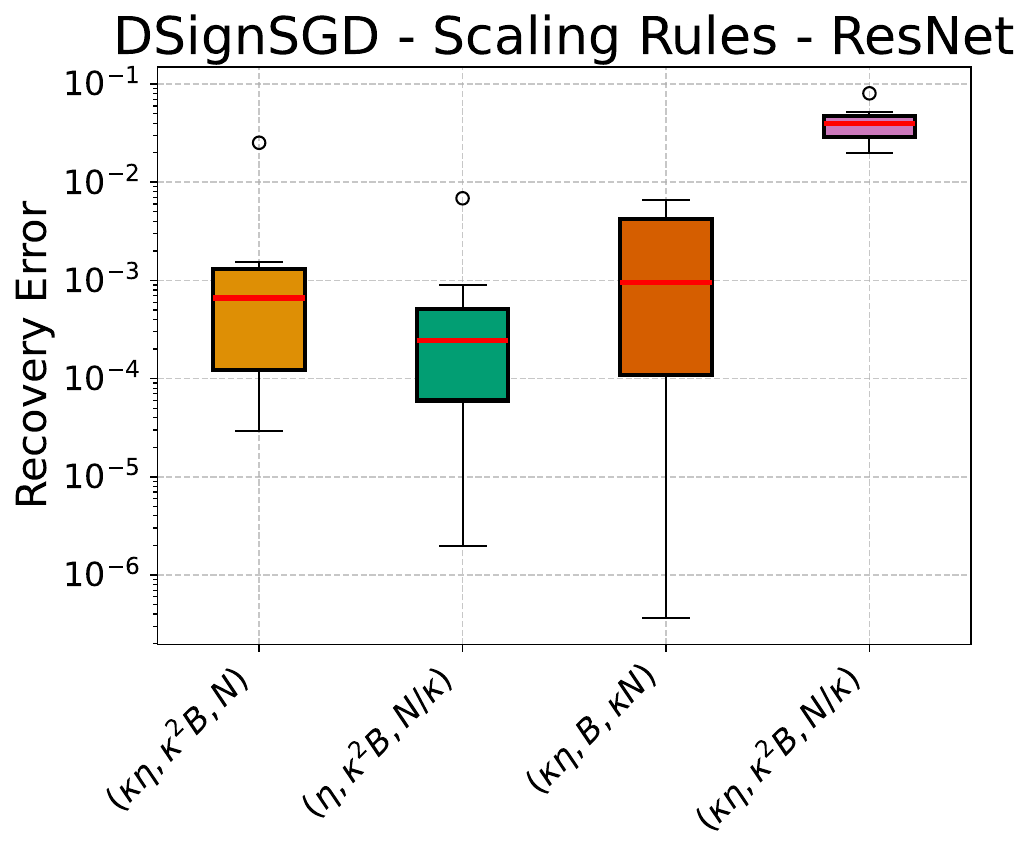} }} \\
    \caption{Box plot of the error between the last iterate of DSignSGD's base run and the runs with the different combinations of hyperparameters: Those runs that follow our Scaling Rules achieve a much smaller error than those that do not.}
    \label{fig:Boxplot_DSignSGD}%
\end{figure}

\subsection{Heavy Tailed and Large Noise (Figure \ref{fig:ComparisonLargeNoise_ViT})}
The ViT is the same as above for each sub-figure.

\paragraph{DCSGD - Heavy-Tailed Noise}
We train the ViT with DCSGD with Rand-$k$ where $k=100000$ out of $d=133930$, $\eta=0.01$ for $1000$ epochs, and as we inject noise distributed as a Student's t with scale $\Sigma = \sigma^2 I_d$ and $\nu \in \{1,2,3,8,64,\infty\}$, and $N=3$. Even if the scale is small ($\sigma^2=10^{-8}$): 1)  When $\nu=1$, the optimizer diverges; 2) When $\nu=2$, the loss is non-stationary; 3) The larger $\nu$, the more stable and optimal the loss. This confirms that indeed DCSGD cannot handle Heavy-Tailed noise. Results are averaged over $3$ runs.

\paragraph{DCSGD - Large Noise}
We train the ViT with DCSGD with Rand-$k$ where $k=100000$ out of $d=133930$, $\eta=0.01$ for $5000$ epochs, and as we inject noise distributed as a Gaussian with covariance matrix $\sigma^2 \in \{10^{-8}, 10^{-6}, 10^{-4}, 10^{-2}, 10^{-1}, 10^{0}\}$, and $N=3$. As the variance increases, the optimizer diverges more and more: This confirms that indeed DCSGD cannot handle large noise as its loss level scales quadratically in the noise level. Results are averaged over $3$ runs.

\paragraph{DSignSGD - Heavy-Tailed Noise}
We train the ViT with DSignSGD as we inject noise distributed as a Student's t with scale $\Sigma = \sigma^2 I_d$ and $\nu \in \{1,2,3,8,64,\infty \}$, and $N=3$, $\eta =0.01$ for $1000$ epochs. Even if the scale is large $(\sigma^2=1)$ and the noise is of unbounded expected value, DSignSGD never diverges. Of course, fatter tails imply larger loss: This confirms that indeed DSignSGD can handle Heavy-Tailed noise. Results are averaged over $3$ runs.

\paragraph{DSignSGD - Large Noise}
We train the ViT with DSignSGD as we inject noise distributed as a Gaussian with covariance matrix $\Sigma = \sigma^2 I_d$ and $\sigma^2 \in \{10^{-8}, 10^{-6}, 10^{-4}, 10^{-2}, 10^{-1}, 10^{0} \}$, and $N=3$, $\eta =0.01$ for $8000$ epochs. As the variance increases, the optimizer never diverges: This confirms that indeed DSignSGD can handle large noise as its loss level scales linearly in the noise level. Results are averaged over $3$ runs.

\subsection{DCSGD - Divergence, Bound, and Linear Speedup (Figure \ref{fig:DCSGD_Insights_App})}

\paragraph{Divergence}
We optimize the function $f(x) = \frac{x^{\top} H x}{2}$ where $H = \diag(100 I_{128})$. We run DCSGD with Rand-$k$ for $25$ epochs as we calculate the full gradient and inject it with Gaussian noise $Z \sim \mathcal{N}(0, \sigma^2 I_d)$ where $\sigma = 0.1$. The learning rate is $\eta =0.01$. As we decrease $k \in \{128,64,32,16,8,4,2,1 \}$, we see that the convergence slows down and reaches a larger and larger asymptotic loss value, up to diverging. Results are averaged over $5000$ runs.

\paragraph{Linear Speedup}
In this paragraph, we describe how we validated the linear speedup of DCSGD on the same ViT as above. We run DCSGD with Rand-$k$ with $k=100000$ as we calculate the full gradient and inject it with Gaussian noise $Z \sim \mathcal{N}(0, \sigma^2 I_d)$ where $\sigma=0.01$, $\eta=0.01$ and $N \in \{1,2,4,8,16 \}$. Averaged over $3$ runs.

\paragraph{Bound}
In this paragraph, we describe how we validated the bound DCSGD as predicted in Thm. \ref{thm:DCSGD_Convergence}.
We run DCSGD with Rand-$k$ for $2000$ epochs as we calculate the full gradient and inject it with Gaussian noise $Z \sim \mathcal{N}(0, \sigma^2 I_d)$ where $\sigma = 0.1$, $\eta=0.01$, $k=2$ and $N=12$ as we optimize function is $f(x) = \frac{x^{\top}H x}{2}$ for $H = I_{100}$. We plot the bounds as per Thm. \ref{thm:DCSGD_Convergence} and confirm that they indeed match loss as prescribed. Additionally, we also verify the Scaling Rules as per Prop. \ref{prop:DCSGD_RecoverLaws_Main}.

\begin{figure*}%
    \hspace{-0.1cm}
    \subfloat{{\includegraphics[width=0.245\linewidth]{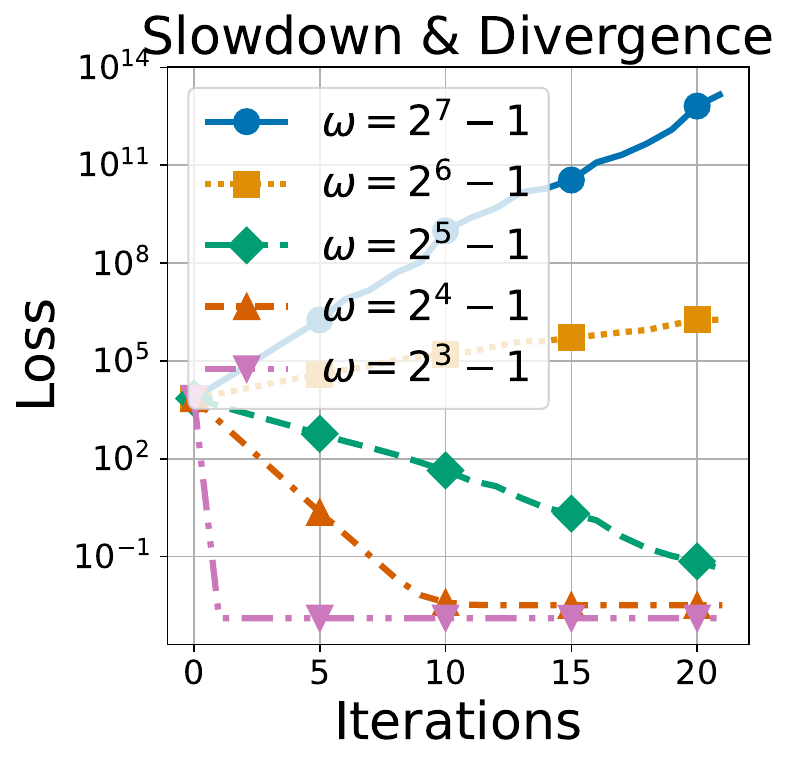} }}%
    \subfloat{{\includegraphics[width=0.27\linewidth]{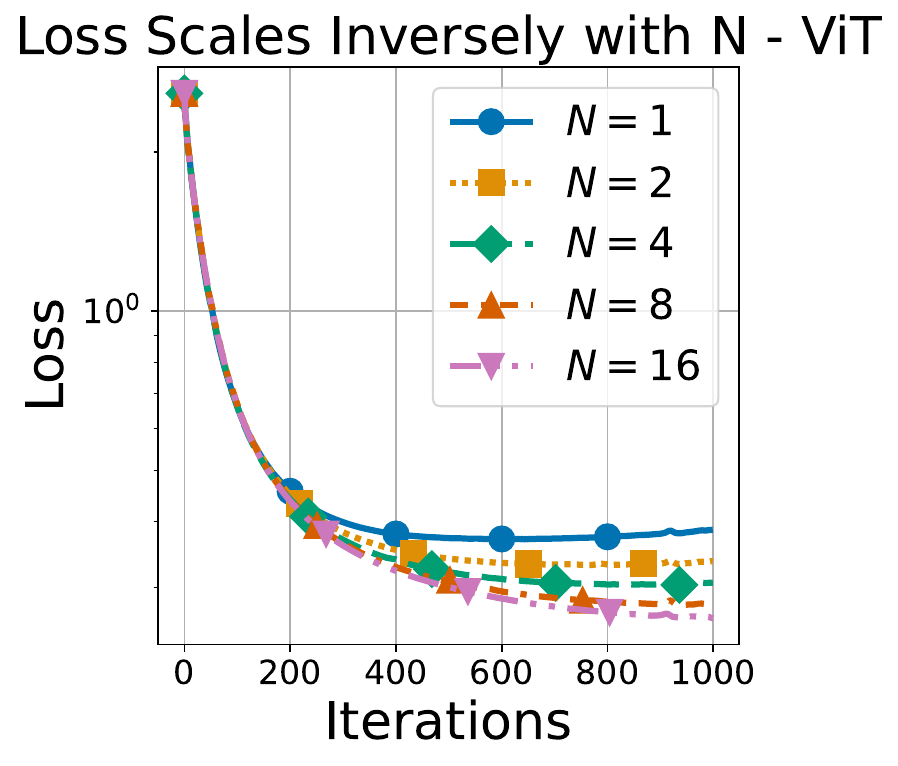} }}
    \subfloat{{\includegraphics[width=0.46\linewidth]{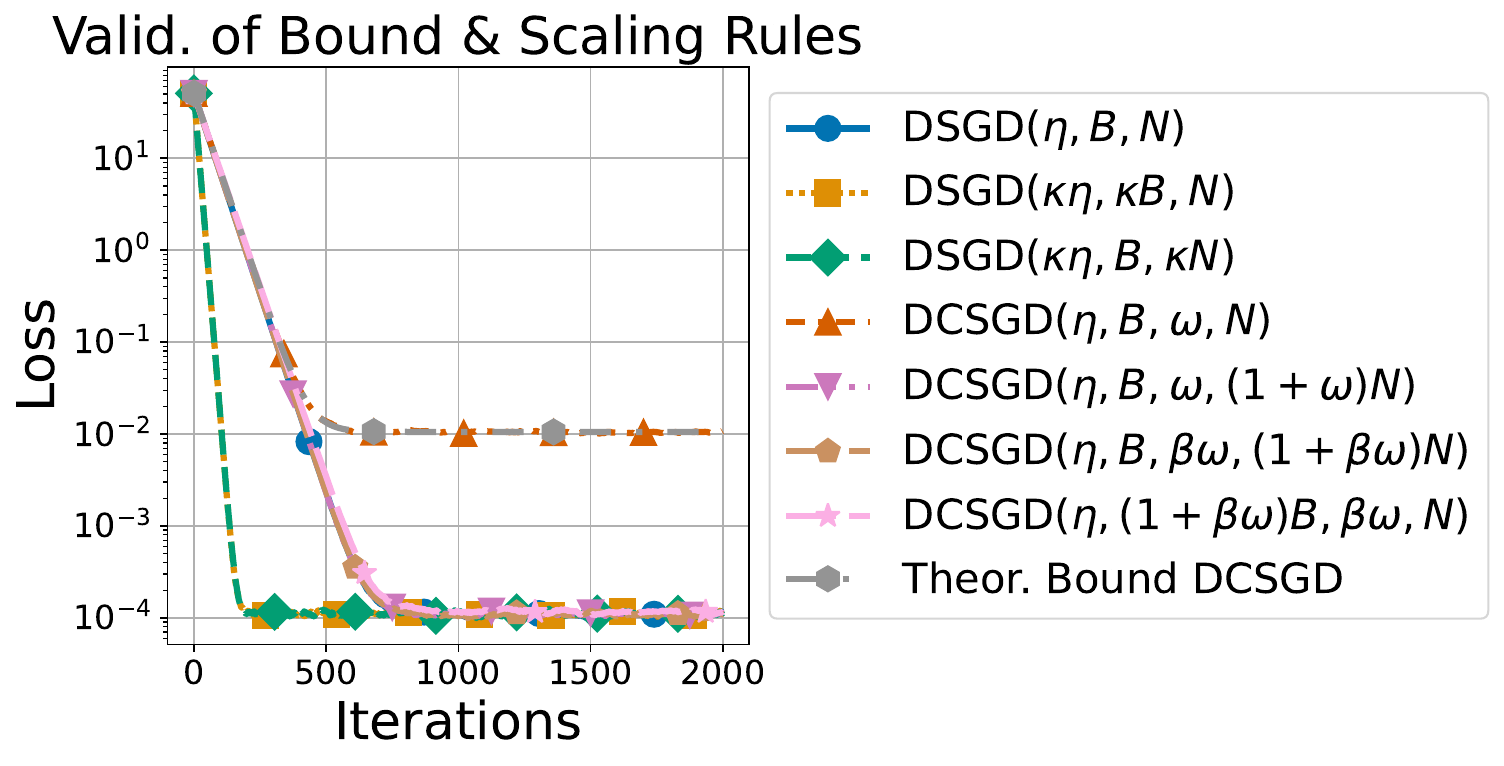} }}
    \caption{As per Thm.~\ref{thm:DCSGD_Convergence}: More compression implies slowdown, up to possible divergence (Left); As per Thm.~\ref{thm:DCSGD_Convergence}, more agents imply more optimality (Center); Validation of Bound and Scaling Rules: i) The bound derived in Thm.~\ref{thm:DCSGD_Convergence} matches the experimental loss of DCSGD$(\eta, B, \omega, N)$; ii) Consistently with Prop.~\ref{prop:DSGD_ScalingLaws}, DSGD$(\kappa\eta, \kappa B, N)$ recovers the asymptotic behavior of DSGD$(\eta, B, N)$; iii) DCSGD run with hyperparameters that follow Prop.~\ref{prop:DCSGD_RecoverLaws_Main} do recover the asymptotic performance of DSGD; iv) DCSGD$(\kappa \eta, \kappa B, \omega, N)$ fails to do so as it does not follow them (Right);
    }%
    \label{fig:DCSGD_Insights_App}%
\end{figure*}

\subsection{DCSGD \& DSignSGD - Stationary Distributions (Figure \ref{fig:StatDistr})}
On the left of Figure \ref{fig:StatDistr}, we validate the Stationary Distribution of DCSGD run with Rand-$k$ while on the right we do the same for DSignSGD.

\paragraph{DCSGD}
We optimize the function $f(x) = \frac{x^{\top} H x}{2}$ where $H = \diag(2,1,1,1,1,1,1,1,1,1)$. We run DCSGD with Rand-$k$ for $1000$ epochs as we calculate the full gradient and inject it with Gaussian noise $Z \sim \mathcal{N}(0, \sigma^2 I_d)$ where $\sigma = 0.1$. The learning rate is $\eta =0.01$,  $k=3$. We plot the evolution of the average variances with the theoretical predictions of Prop. \ref{prop:DCSGD_StaDistr_App}: Results are averaged over $50000$. The experimental moments match the theoretical predictions.

\paragraph{DSignSGD}
We optimize the function $f(x) = \frac{x^{\top} H x}{2}$ where $H = \diag(2,1,1,1,1,1,1,1,1,1)$. We run DSignSGD for $10000$ epochs as we calculate the full gradient and inject it with Gaussian noise $Z \sim \mathcal{N}(0, \sigma^2 I_d)$ where $\sigma = 0.1$. The learning rate is $\eta =0.001$. We plot the evolution of the average variances with the theoretical predictions of Prop. \ref{prop:HSignSGD_StaDistr_App}: Results are averaged over $5000$. The experimental moments match the theoretical predictions.

\begin{figure}%
\subfloat{{\includegraphics[width=0.50\linewidth]{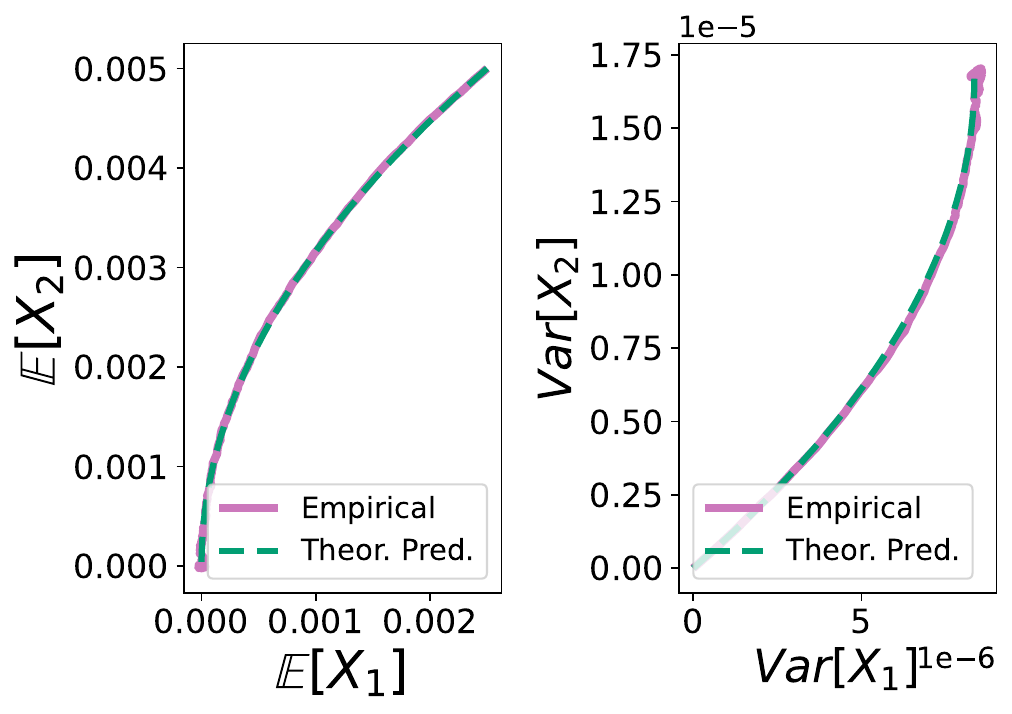} }}
    \subfloat{{\includegraphics[width=0.50\linewidth]{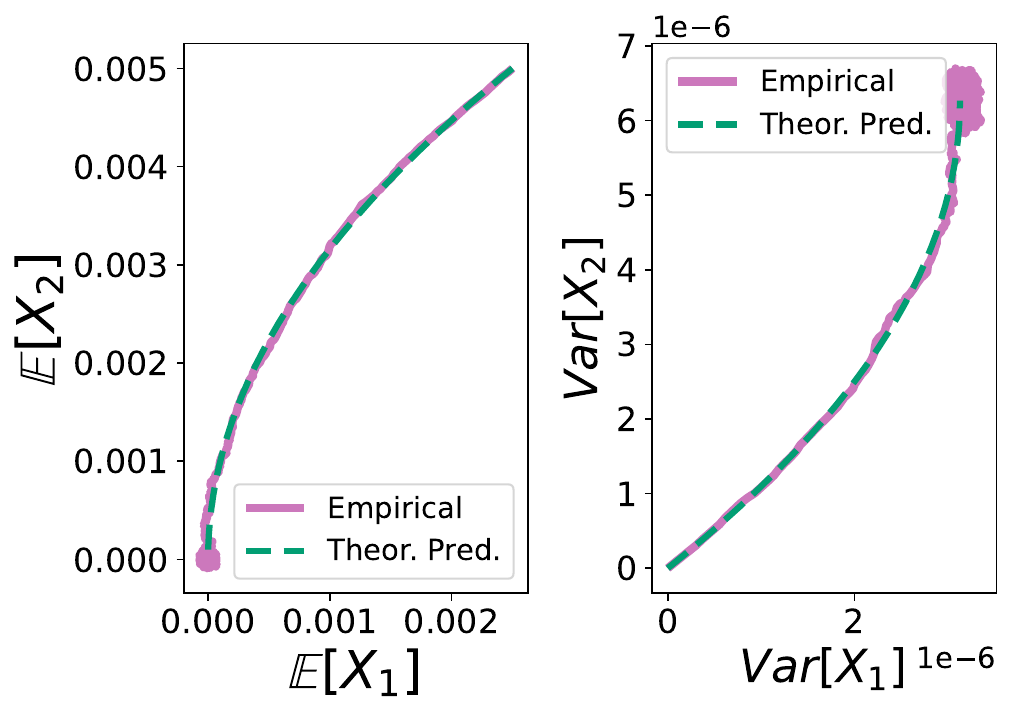} }}
    \caption{Verification of the Stationary Distribution of DCSGD and DSignSGD on a convex quadratic function.}
    \label{fig:StatDistr}%
\end{figure}

\subsection{Top-k and its Modification - Resilience to Heavy-Tailed Noise (Figure \ref{fig:Topk})}

To produce Figure \ref{fig:Topk}, we train the ResNet above on CIFAR-10. As above, we inject heavy-tailed noise onto the full gradients and observe that Top-$k$ cannot handle such noise. However, using Top-$k$ on top of Normalized SGD seems to mitigate this issue. Therefore, we confirm that sign compression is not the only one that can handle heavy-tailed noise and that there is room to develop alternative optimizers.

\begin{figure}%
\subfloat{{\includegraphics[width=0.50\linewidth]{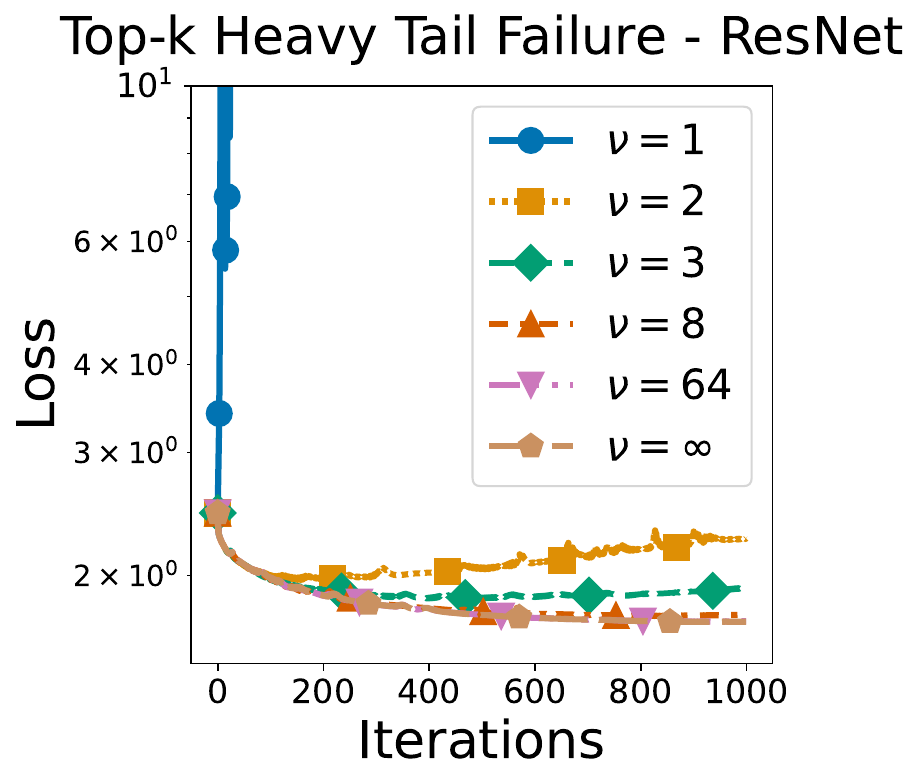} }}
    \subfloat{{\includegraphics[width=0.50\linewidth]{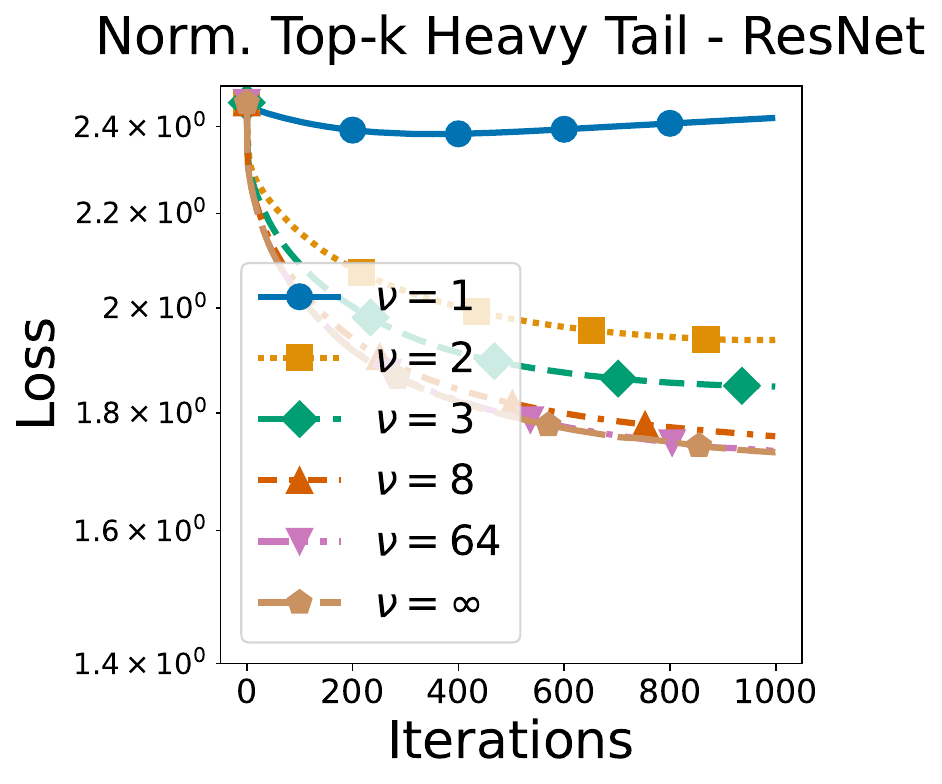} }}
    \caption{On the left, Top-$k$ fails at handling increasingly heavy-tailed noise, while on the right we see that combining Top-$k$ with Normalized SGD is promising.}
    \label{fig:Topk}%
\end{figure}

\subsection{Heavy Tailed and Large Noise (Figure \ref{fig:ComparisonLargeNoise_Quadratics})}

\paragraph{DCSGD - Large Noise - Top Left}
 We optimize the function $f(x) = \frac{x^{\top} H x}{2}$ where $H = I_{10}$ with DCSGD with Rand-$k$ where $k=1$, $\eta=0.01$, and as we inject noise distributed as a Gaussian with covariance matrix $\Sigma = \sigma^2 I_d$ and $\sigma^2 \in \{10^{-4},10^{-2},10^{0},10^{2},10^{4}\}$, and $N=3$. As the variance increases, the optimizer diverges more and more: This confirms that indeed DCSGD cannot handle large noise as its loss level scales quadratically in the noise level. The asymptotic loss level matches that predicted in Thm. \ref{thm:DCSGD_Convergence}. Results are averaged over $100$ runs.

\paragraph{DSignSGD - Large Noise - Top Right}
 We optimize the function $f(x) = \frac{x^{\top} H x}{2}$ where $H = I_{10}$ with DSignSGD as we inject noise distributed as a Gaussian with covariance matrix $\sigma^2 \in \{10^{-4},10^{-2},10^{0},10^{2},10^{4}\}$, $\eta = 0.001$, and $N=3$. As the variance increases, the optimizer never diverges: This confirms that indeed DSignSGD can handle large noise as its loss level scales linearly in the noise level. The asymptotic loss level matches that predicted in Thm. \ref{thm:DSignSGD_Convergence}. Results are averaged over $100$ runs.

\paragraph{DCSGD - Heavy-Tailed Noise - Bottom Left}
 We optimize the function $f(x) = \frac{x^{\top} H x}{2}$ where $H = I_{10}$ with DCSGD with Rand-$k$ where $k=1$, $\eta=0.01$, and as we inject noise distributed as a Gaussian with covariance matrix $\Sigma = \sigma^2 I_d$ and $\sigma=0.1$, $\nu \in \{1,2,3,8,64,\infty \}$, and $N=3$. Even if the scale is small: 1)  When $\nu=1$, the optimizer diverges; 2) When $\nu=2$, the loss is non-stationary; 3) The larger $\nu$, the more stable and optimal the loss. This confirms that indeed DCSGD cannot handle Heavy-Tailed noise. Results are averaged over $100$ runs.

\paragraph{DSignSGD - Heavy-Tailed Noise - Bottom Right}
  We optimize the function $f(x) = \frac{x^{\top} H x}{2}$ where $H = I_{10}$ with DSignSGD as we inject noise distributed as a Student's t with scale $\Sigma = \sigma^2 I_d$ and $\nu \in \{1,2,3,8,64,\infty \}$, and $N=3$. Even if the scale is large ($\sigma=1)$ noise is of unbounded expected value, DSignSGD never diverges. Of course, fatter tails imply larger loss: This confirms that indeed DSignSGD can handle Heavy-Tailed noise. Results are averaged over $100$ runs.

\begin{figure}[H]%
    \centering
    \subfloat{{\includegraphics[width=0.49\linewidth]{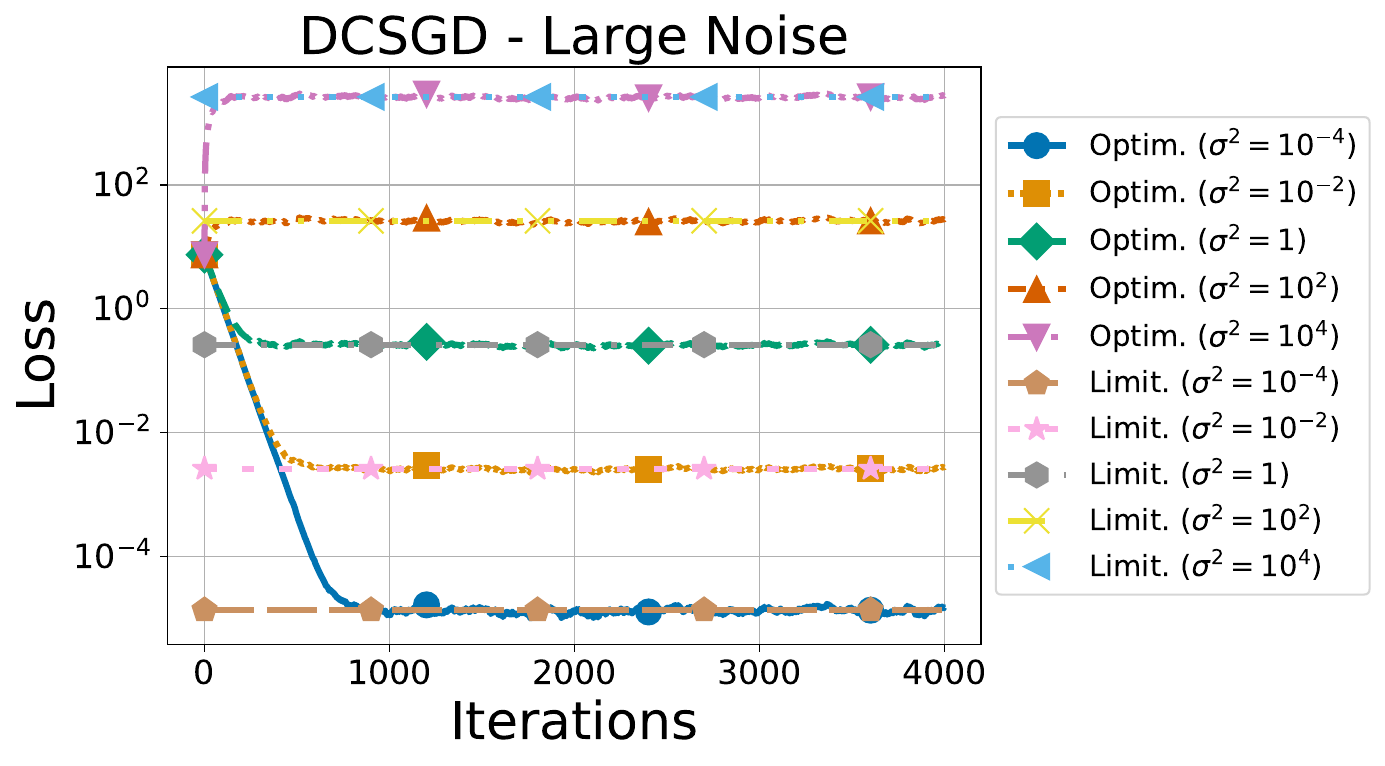} }}%
    \subfloat{{\includegraphics[width=0.49\linewidth]{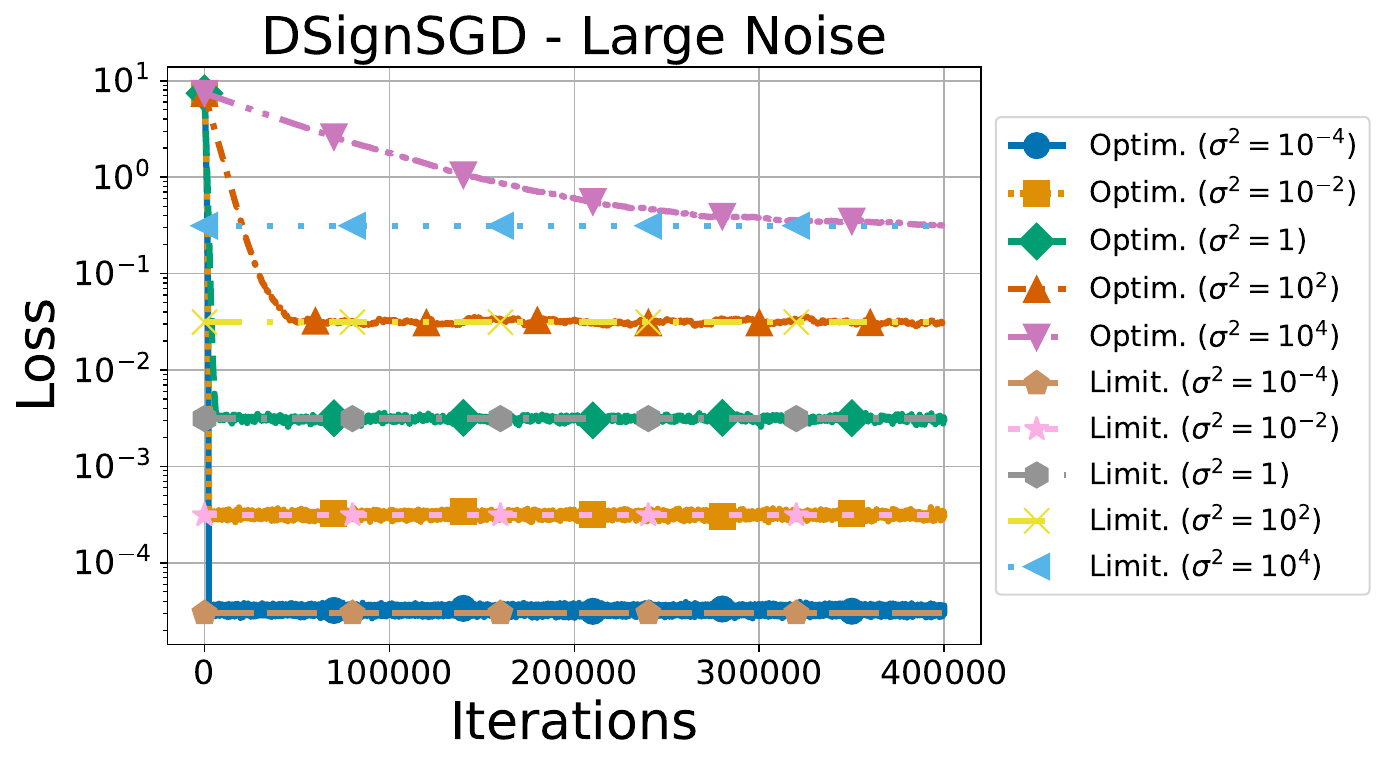} }} \\
    \subfloat{{\includegraphics[width=0.49\linewidth]{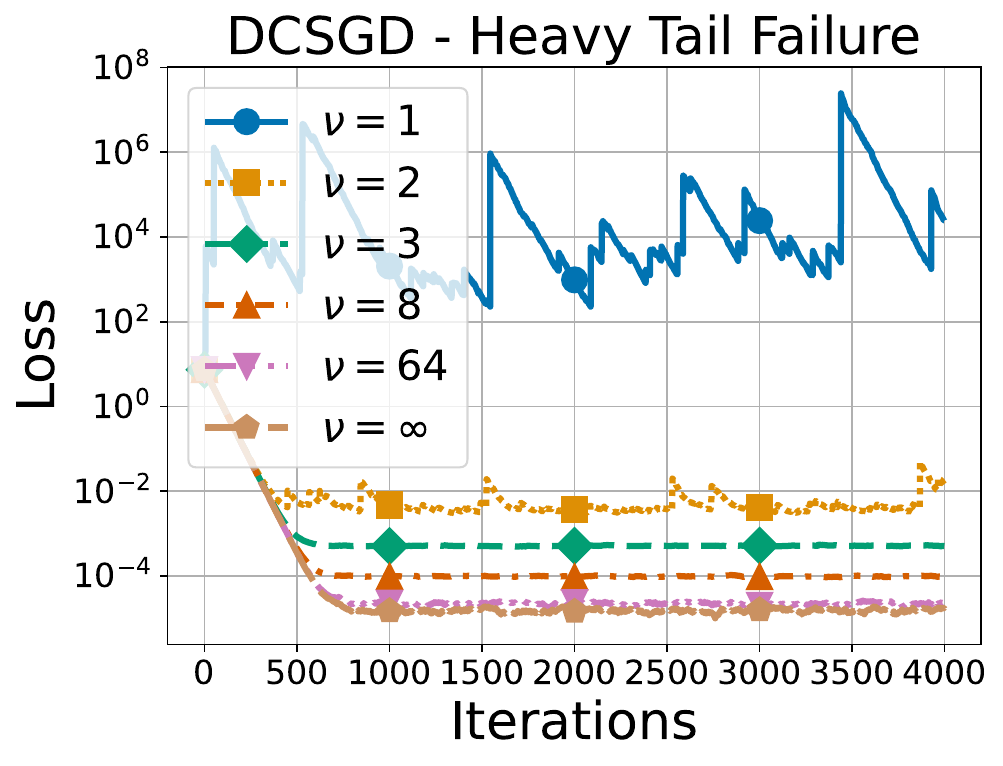} }}
    \subfloat{{\includegraphics[width=0.49\linewidth]{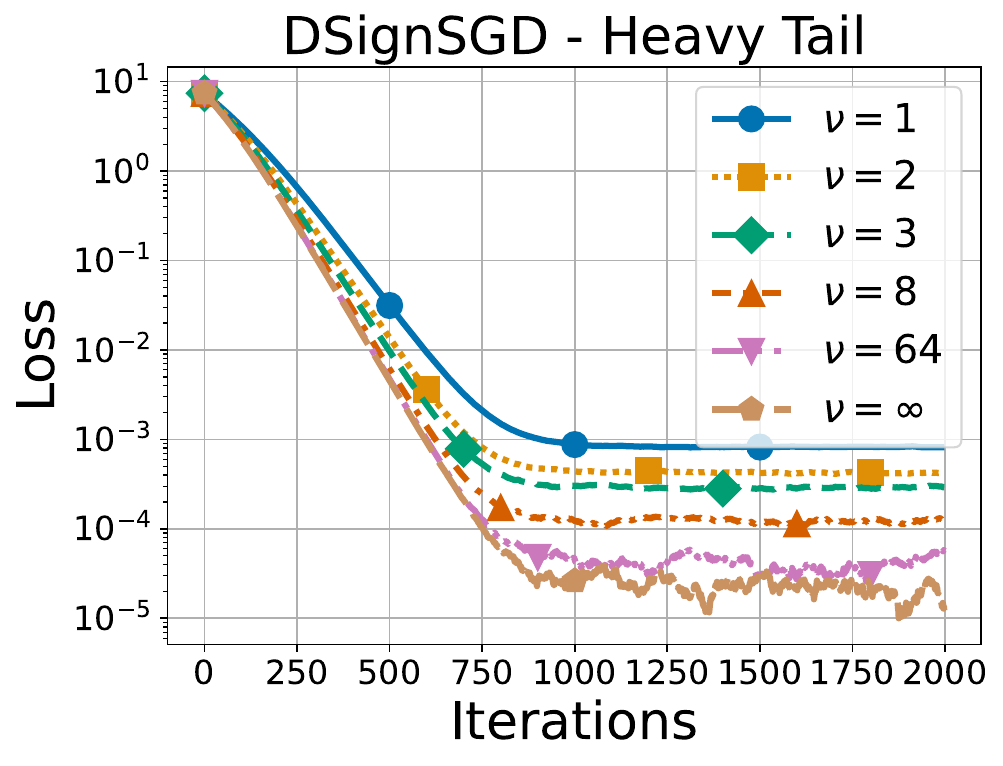} }}
    \caption{DCSGD cannot handle large noise as its asymptotic loss level scales quadratically in the noise level (Top Left); on the contrary,  DSignSGD can as its level scales linearly in the noise level (Top Right); DCSGD cannot handle heavy-tailed noise (Bottom Left) while DSignSGD can (Bottom Right).}%
    \label{fig:ComparisonLargeNoise_Quadratics}%
\end{figure}

\section{Scaling Rule Validation - GPT2}\label{ScalingRulesGPT2}

This experiment aims to validate some of the scaling rules derived for DCSGD (see Table \ref{tab:DCSGD_ScalingLaws} related to Proposition \ref{prop:DCSGD_RecoverLaws_Main}) and DSignSGD (see Proposition \ref{prop:DSignSGD_ScalingLaws_Main}) using a GPT-2-like model. To be precise, in these experiments, we fix a \textit{base run} optimizer: Optimizer$(\mathcal{H})$, where $\mathcal{H}$ is a configuration of hyperparameters (e.g., the learning rate $\eta$, the batch size $B$, or the number of agents $N$, i.e., GPUs). Then, we run the same optimizer with other hyperparameter configurations $(\tilde{\mathcal{H}})$. We verify that hyperparameter configurations $(\tilde{\mathcal{H}})$ that satisfy the functional relationships prescribed by our propositions achieve a performance much closer to the base run $(\mathcal{H})$ than those configurations that violate such prescriptions. This demonstrates that when adjustments to hyperparameters are necessary to accommodate new scenarios, one can follow our scaling rules to preserve the performance of DSignSGD/DCSGD without needing to repeat the fine-tuning process. For example, one might desire larger batch sizes to fully utilize newly available larger GPUs, or face a reduction in available GPUs due to budget cuts. We highlight that, in general, scaling rules are not meant to be exact prescriptions, but rather to give a principled approach to reduce the hyperparameter search space.

\subsection{Model Architecture and Dataset} 

The model architecture is provided in the popular GitHub repository nanoGPT by Andrej Karpathy: All details can, of course, be found on the repository --- We used the smallest configuration with 124M parameters. Regarding the dataset, we train our models on the FineWeb-Edu dataset. To do so, we minimize the \textit{Cross-entropy} loss for $10,000$ iterations. Note that we use no learning rate schedulers, as we do not aim for optimal performance but rather for clear and fair experimental validation of our theoretical insights. We encourage future work to explore the validity of our theory in larger and more realistic settings.

\subsubsection{DSignSGD - (Figure \ref{fig:DSignSGD_GPT})}

In this experiment, we fix the ``\textbf{base run}'' optimizer DSignSGD$(\eta, B, N)$ by selecting $\eta=0.001$, $B=4$, and $N=4$. We selected $6$ different hyperparameter configurations: $3$ that satisfy our scaling rules and $3$ that do not. We ran each configuration $5$ times and computed the average absolute percentage error of the last $100$ iterations with respect to the ``\textbf{base run}''. Figure \ref{fig:DSignSGD_GPT} shows the boxplots of these errors, and one can see that configurations that satisfy our scaling rules (marked in green in the figure) achieve a significantly lower error compared to those that do not follow them (marked in red).

\begin{figure}[H]
    \centering
    \includegraphics[width=0.49\linewidth]{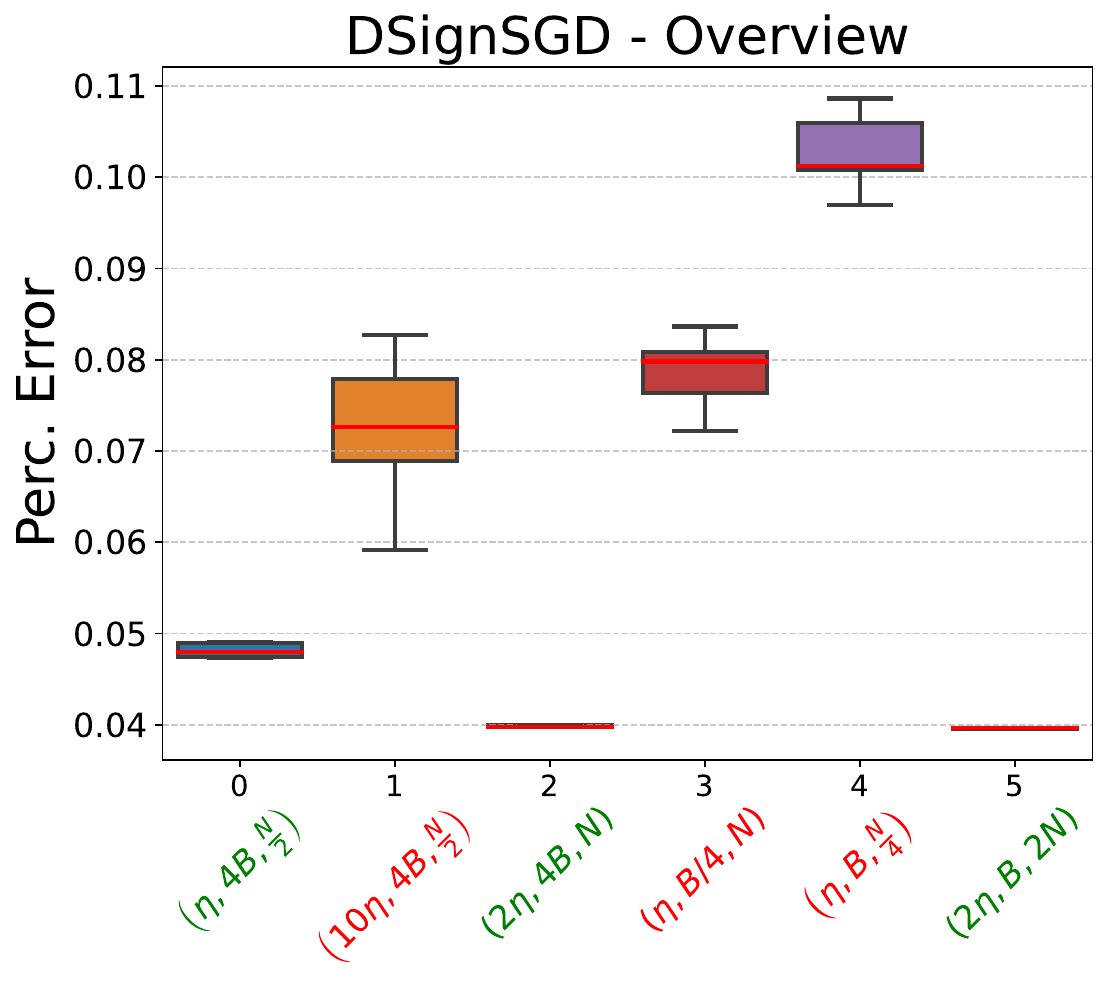}
    \caption{Boxplots of errors: Validation of Scaling Rules for DSignSGD on a 124M GPT2 model.}
    \label{fig:DSignSGD_GPT}
\end{figure}

\subsubsection{DCSGD - (Figure \ref{fig:DCSGD_GPT})}

In this experiment, we replicate the effort above by optimizing the model with DSGD$(\eta, B, N)$ as a ``\textbf{base run}'' (i.e., no compression). We select $\eta=0.1$ and $B=N=1$. Then, we run DCSGD with the Rand-$k$ compressor; in this case, the compression factor $\omega$ is equal to $\frac{d}{k}$, where $d$ is the total number of trainable parameters and $k$ is the number of parameters that at each iteration are randomly selected to be trained --- The remaining $d-k$ are left unchanged for that iteration. We simulated $12$ different configurations and computed the average absolute percentage error of the last $100$ iterations with respect to the ``\textbf{base run}''. Since Table \ref{tab:DCSGD_ScalingLaws} contains many more rules than DSignSGD, we opt for a different style of visualization that showcases the validity of multiple rules simultaneously. Since we had to simulate many more configurations, we only ran each configuration $3$ times.

Left of Figure \ref{fig:DCSGD_GPT}: In this figure, we show that: 1) For fixed ($\eta, B, \omega$), increasing the number of agents $N$ helps mitigate the performance loss due to compression (Rule 1 in green); 2) For fixed ($\eta, N, \omega$), increasing the batch size $B$ helps mitigate the performance loss due to compression (Rule 2 in orange); 3) Combining these two rules is even better (Rule 3).

Right of Figure \ref{fig:DCSGD_GPT}: In this figure, we ran DCSGD while doubling the learning rate ($\eta=0.2)$, thereby combining the effects of compression and increased learning rate on performance. Once again, we observe that increasing compression leads to worse performance, even more so when the learning rate has doubled. However, in accordance with our scaling rules, for any compression level, increasing $N$ and $B$ helps mitigate the performance loss.

\begin{figure}[H]
    \centering
    \subfloat{{\includegraphics[width=0.49\linewidth]{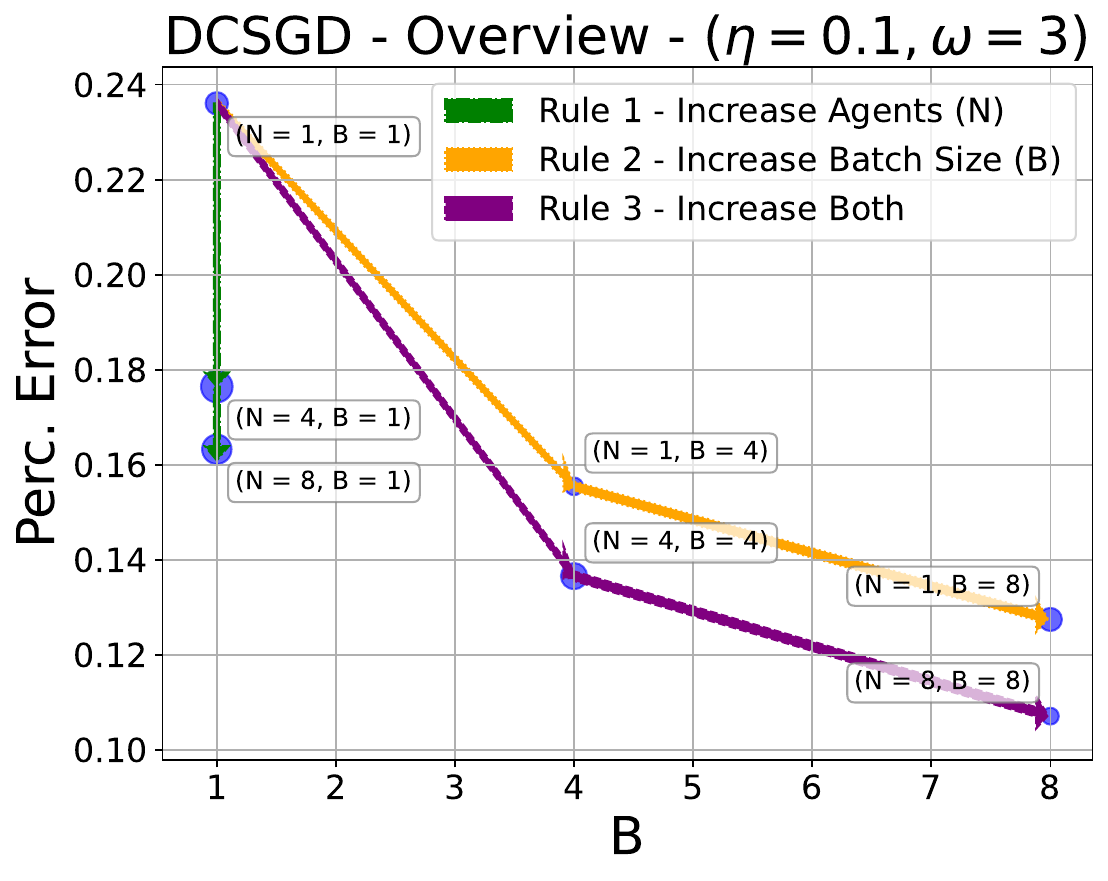} }}
    \subfloat{{\includegraphics[width=0.49\linewidth]{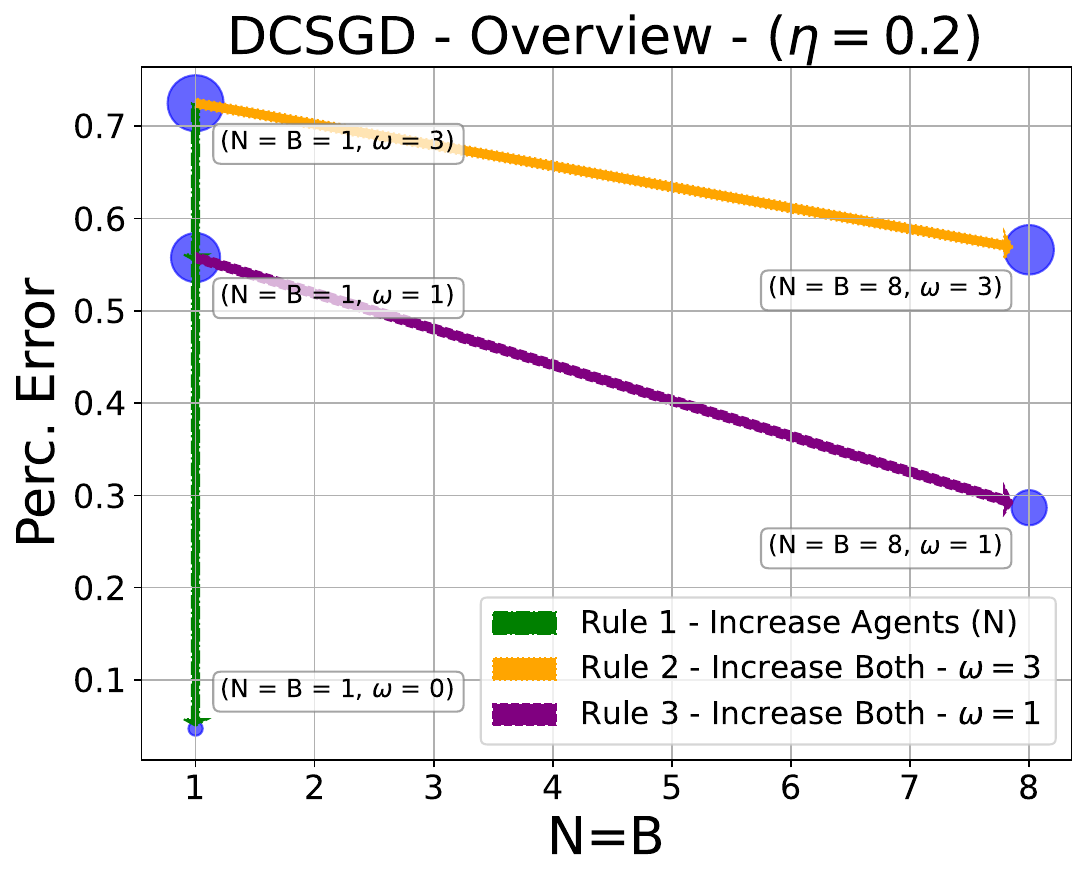} }}
    \caption{Validation of Scaling Rules for DCSGD on a 124M GPT2 model.}
    \label{fig:DCSGD_GPT}
\end{figure}

\end{document}